\pgfplotsset{compat=1.7}
\DeclareMathAlphabet\mathbfcal{OMS}{cmsy}{b}{n}
\theoremstyle{plain}
\newtheorem{thm}{Theorem}[section]
\newtheorem{lem}[thm]{Lemma}
\newtheorem{thmCond}{Theorem}
\newtheorem{cond}[thmCond]{Condition}
\newtheorem{problem}{Problem}
\theoremstyle{definition}
\newtheorem{thmDef}{Theorem}[section]
\newtheorem{assumption}[thmDef]{Assumption}
\theoremstyle{remark}
\newtheorem{thmrem}{Theorem}[section]
\newtheorem{rem}[thmrem]{Remark}
\newcommand{\bas}[1]{\begin{align*}#1\end{align*}}
\newcommand{\ba}[1]{\begin{align}#1\end{align}}
\newcommand{\trace}{\text{trace}}
\newcommand{\txtred}[1]{}
\newcommand{\var}{\mbox{var}}
\newcommand{\bbR}{\mathbb{R}}
\newcommand{\bzero}{{\bf 0}}
\newcommand{\bone}{{\bf 1}}
\newcommand{\bA}{{\bf A}}
\newcommand{\bB}{{\bf B}}
\newcommand{\bc}{{\bf c}}
\newcommand{\bD}{{\bf D}}
\newcommand{\be}{{\bf e}}
\newcommand{\bE}{{\bf E}}
\newcommand{\bF}{{\bf F}}
\newcommand{\bG}{{\bf G}}
\newcommand{\bH}{{\bf H}}
\newcommand{\bI}{{\bf I}}
\newcommand{\bL}{{\bf L}}
\newcommand{\bM}{{\bf M}}
\newcommand{\bO}{{\bf O}}
\newcommand{\bP}{{\bf P}}
\newcommand{\bR}{{\bf R}}
\newcommand{\bT}{{\bf T}}
\newcommand{\bu}{{\bf u}}
\newcommand{\bU}{{\bf U}}
\newcommand{\bv}{{\bf v}}
\newcommand{\bV}{{\bf V}}
\newcommand{\bw}{{\bf w}}
\newcommand{\bS}{{\bf S}}
\newcommand{\bx}{{\bf x}}
\newcommand{\bX}{{\bf X}}
\newcommand{\by}{{\bf y}}
\newcommand{\bY}{{\bf Y}}
\newcommand{\bz}{{\bf z}}
\newcommand{\bZ}{{\bf Z}}
\newcommand{\bbm}{\bm{m}}
\newcommand{\balpha}{{\boldsymbol \alpha}}
\newcommand{\bbeta}{{\boldsymbol \beta}}
\newcommand{\hbeta}{{\hat{\beta}}}
\newcommand{\hbbeta}{{\hat{\bbeta}}}
\newcommand{\btheta}{{\boldsymbol \theta}}
\newcommand{\bTheta}{{\boldsymbol \Theta}}
\newcommand{\bvarphi}{{\boldsymbol \varphi}}
\newcommand{\bSigma}{{\boldsymbol \Sigma}}
\newcommand{\bGamma}{{\boldsymbol \Gamma}}
\newcommand{\hbGamma}{{\boldsymbol\hat{\bGamma}}}
\newcommand{\hbTheta}{{\boldsymbol\hat{\bTheta}}}
\newcommand{\gikD}{\max(V,D)}
\newcommand{\supp}{the Appendix}
\newcommand{\suppNoThe}{Appendix}
\newcommand{\rowP}{\bz}
\newcommand{\mtxP}{\bZ}
\newcommand{\mtxPp}{\bZ_P}
\newcommand{\trowP}{{\by}}
\newcommand{\tmtxP}{{\bY}}
\newcommand{\tmtxPp}{{\bY}_P}
\newcommand{\rowS}{\hat{\rowP}}
\newcommand{\mtxS}{\hat{\mtxP}}
\newcommand{\trowS}{\hat{\trowP}}
\newcommand{\tmtxS}{\hat{\tmtxP}}
\newcommand{\tmtxSp}{\hat{\tmtxP}_P}
\newcommand{\elecvxP}{m}
\newcommand{\rowcvxP}{{\bf \elecvxP}}
\newcommand{\cvxP}{{\bf M}}
\newcommand{\cvxS}{{\hat{\cvxP}}}
\newcommand{\elecvxY}{\phi}
\newcommand{\rowcvxY}{{\boldsymbol \phi}}
\newcommand{\tmDiagP}{{\bf D}}
\newcommand{\tmDiagS}{{\hat{\tmDiagP}}}
\newcommand{\diag}{\text{diag}}
\newcommand{\bbb}[1]{\left(#1\right)}
\newcommand{\cS}{\mathcal{S}}
\newcommand{\R}{\mathbb{R}}
\newcommand{\rank}{\mathrm{rank}}
\newcommand{\cov}{\mathrm{Cov}}
\newcommand{\conv}{\mathrm{Conv}}
\newcommand{\proj}{\mathrm{Proj}}
\newcommand{\uE}{\mathrm{E}}
\newcommand{\uP}{\mathrm{P}}
\newcommand{\w}{\bA}
\newcommand{\W}{\mathbfcal{A}}
\newcommand{\hv}{\hat{\bf V}\xspace}
\newcommand{\hvv}{\hat{\bf v}\xspace}
\newcommand{\hww}{\hat{\bf w}\xspace}
\newcommand{\hb}{\hat{b}\xspace}
\newcommand{\hu}{\hat{\bf U}\xspace}
\newcommand{\hhu}{\hat{\bf u}\xspace}
\newcommand{\hV}{\hat{\bf V}\xspace}
\newcommand{\hE}{\hat{\bf E}\xspace}
\newcommand{\hT}{\hat{\bf T}\xspace}
\newcommand{\bN}{{\bf N}}
\newcommand{\bk}{\color{black}}
\newcommand{\bal}{\nu\xspace}
\newcommand{\bpi}{{\bf \Pi}}
\newcommand{\mypsi}{}
\newcommand{\ocsvm}{{One-class SVM}\xspace}
\newcommand{\tablefontsize}{\footnotesize}
\newcommand{\svmcone}{SVM-cone\xspace}
\newcommand{\yError}{\tilde{O}\bbb{\frac{\mypsi\gamma_{\max}\min\{K^2,(\kappa(\bP))^2\}\sqrt{\kappa(\bTheta^T\bGamma^2 \bTheta)}\sqrt{Kn}}{\gamma_{\min}\lambda^*(\bB)\lambda_K(\bTheta^T\bGamma^2\bTheta)\sqrt{\rho}}}}
\newcommand{\ThetaErrorDCMMSBtype}{\tilde{O}\bbb{\frac{\gamma_{\max} K^{2.5}\min\{K^2,(\kappa(\bP))^2\} n^{3/2}}{\gamma_{\min}\eta\lambda^*(\bB)\lambda_K^2(\bTheta^T\bGamma^2\bTheta)\sqrt{\rho}}}}
\newcommand{\topicTbounds}{O_P\left(\frac{K^4\max_j \|\be_j^T \bT\|_1}{\eta(\min_j \|\be_j^T\bT\|_1)^2}\sqrt{\frac{\log \gikD}{DN}}\right)}
\newcommand{\epsilonMMSB}{\tilde{O}\bbb{\frac{\mypsi\gamma_{\max}\min\{K^2,(\kappa(\bP))^2\}K^{0.5}\nu(1+\alpha_0)}{\gamma_{\min}^3\lambda^*(\bB)\sqrt{n\rho}}}}
\newcommand{\topicRowwiseEigenspaceBound}{\frac{\kappa(\bH\bH^T)\kappa(\bT^T\bT)}{\lambda_K(\bT^T\bT)}O_P\left(\sqrt{\frac{K\log \gikD}{DN}}\right)}
\newcommand{\topicRowwiseEigenspaceBoundSimplified}{\frac{ 1}{\min_j\|\be_i^T\bT\|_1\sqrt{\lambda_K(\bT^T\bT)}}O_P\left(\sqrt{\frac{K\log \gikD}{DN}}\right)}
\begin{document}
	\title{Overlapping Clustering Models, and One (class) SVM to Bind Them All
	}
	\author{Xueyu Mao\thanks{Department of Computer Science. Email: \href{mailto:xmao@cs.utexas.edu}{xmao@cs.utexas.edu}}, Purnamrita Sarkar\thanks{Department of Statistics and Data Sciences. Email: \href{mailto:purna.sarkar@austin.utexas.edu}{purna.sarkar@austin.utexas.edu}}, Deepayan Chakrabarti\thanks{Department of Information, Risk, and Operations Management. Email: \href{mailto:deepay@utexas.edu}{deepay@utexas.edu}}\\ The University of Texas at Austin}
	\date{}
	
	\maketitle

\begin{abstract}
	People belong to multiple communities, words belong to multiple topics, and books cover multiple genres; overlapping clusters are commonplace.
	Many existing overlapping clustering methods model each person (or word, or book) as a non-negative weighted combination of ``exemplars'' who belong solely to one community, with some small noise.
	Geometrically, each person is a point on a cone whose corners are these exemplars.
	This basic form encompasses the widely used Mixed Membership Stochastic Blockmodel of networks~\cite{airoldi2008mixed} and its degree-corrected variants~\cite{jin2017estimating}, as well as topic models such as LDA~\cite{Blei:2003:LDA}.
	We show that a simple one-class SVM yields provably consistent parameter inference for all such models, and scales to large datasets.
	Experimental results on several simulated and real datasets show our algorithm (called \svmcone) is both accurate and scalable.
\end{abstract}
\section{Introduction}
\label{sec:intro}
Clustering has many real-world applications: market segmentation, product recommendation, document clustering, finding protein complexes in gene networks, among others.
The simplest form of a clustering model assumes that every record or entity belongs to exactly one cluster. 
More general forms allow for overlapping clusters, where each entity may
belong to different clusters or communities to different degrees. 
For example, George Orwell's {\em 1984} belongs to both the dystopian fiction and political fiction genres, and {\em Pink Floyd's} music is both progressive and psychedelic.
In this paper, we show that many existing overlapping clustering models can be written in a general form, whose parameters can then be inferred using a one-class SVM.

In many clustering problems, overlapping or otherwise, we have access to a data matrix $\hat{\bZ}\in \bbR^{n\times m}$, which is a noisy version of an ideal matrix ${\bZ}$, i.e.  $\hat{\bZ}=\bZ+\bR$ where the norm of the rows of $\bR$ is small.
Also, $\bZ=\bG\bZ_P$, where 
$\bZ_P$ are ideal ``exemplars'' of the various communities,
and $\bG\in \bbR_{\geq 0}^{n\times K}$ gives the community memberships of each entity. 
We will now give some examples.

Consider the Stochastic Blockmodel (SBM)~\cite{holland_stochastic_1983} for networks. 
In this model, each node belongs to one of $K$ communities, and the probability $\bP_{ij}$ of an edge between nodes $i$ and $j$ is a function of their respective communities.
Recent results~\cite{mao2017estimating} show that the eigenvectors $\hat{\bV}$ of the adjacency matrix concentrate row-wise around the eigenvectors $\bV$ of $\bP$. 
The matrix $\bV$ is also blockwise constant, mapping all nodes in one cluster to one point~\cite{rohe2011spectral}.
Hence, $\hat{\bV}=\bG\bV_{P}+\bR$, where $\bG\in\{0,1\}^{n\times K}$ is a binary membership matrix where each row sums to one.
The Mixed Membership Stochastic Blockmodel (MMSB)~\cite{airoldi2008mixed} relaxes this by allowing the entries of $\bG$ to be in $[0, 1]$.
Since the rows of $\bG$ sum to one, the ideal matrix $\bZ$ arranges points in a simplex. The corners of this simplex represent the ``pure'' nodes, i.e. nodes belonging to exactly one community. 
Most algorithms first find the corners, and then estimate model parameters via regression~\cite{mao2017,mao2017estimating,jin2017estimating,panov2017consistent,rubin2017statistical}.  
Other notable methods include tensor based approaches ~\cite{hopkins2017bayesian, MMSBAnandkumar2014}, Bayesian inference~\cite{gopalan2013efficient}, etc. Related models and inference methods for overlapping networks have been presented in~\cite{BNMF2011,kaufmann2016spectral,ray2014overlap,latouche2011}, etc.

The MMSB model does not allow for degree heterogeneity, which can be achieved via the Degree-corrected Mixed Membership Stochastic Blockmodel (DCMMSB)
~\cite{jin2017estimating}. 
In DCMMSB, each node has an extra degree parameter, with a high parameter value leading to more edges for that node. 
Now, $\bG$ is non-negative, but its rows do not sum to one. 
Thus the points lie inside a cone, and the pure nodes lie on the corner rays of this cone. 
Other network models also give rise to such cones~\cite{zhang2014detecting,kaufmann2016spectral}.

Existing algorithms for degree corrected overlapping models use a range of different techniques. 
OCCAM~\cite{zhang2014detecting} uses  
a $k$-medians step on the {regularized eigenvectors of the}
adjacency matrix to get the corners. While the algorithm is computationally efficient, a key assumption is that the $k$-medians loss function attains its minimum at the locations of the pure nodes and there is a curvature around this minimum. This condition is typically hard to check.
In~\cite{jin2017estimating}, the authors show an interesting result that the second to $K$ eigenvectors, element-wise divided by the first eigenvector entries form a simplex. The authors provide an algorithm for finding this simplex with $K$ corners in $K-1$ dimensions. The algorithm requires a combinatorial search step, which is prohibitive for large $K$.

Topic models~\citep{Blei:2003:LDA} are another example of overlapping clustering models. Here the documents can be generated from a mixture of topics, which are the analog of communities in networks. 
The normalized word co-occurrence matrix forms a simplex structure, with the corners representing anchor words, i.e. words that belong to exactly one topic. 
While there are many existing inference methods, the ones that provide consistency guarantees are typically based on analyzing tensors or finding corners in simplexes~\cite{arora12computing,ke2017topic,ding2013topic,anandkumar2014tensor,huang2016anchor,bansal2014provable,pmlr-v84-awasthi18a,bing2018fast}.

In this paper, we provide an overarching framework which incorporates all the above problems, from Mixed membership models (with or without degree correction) to topic models. As discussed before, in all the above models, the ideal data matrix lies inside a cone (a simplex is a special type of a cone). The goal is to infer $\bG$, which depends on finding the correct corner rays.

\begin{figure}
	\centering
	\begin{subfigure}[c]{0.4\textwidth}
		\fbox{\includegraphics[width=\textwidth]{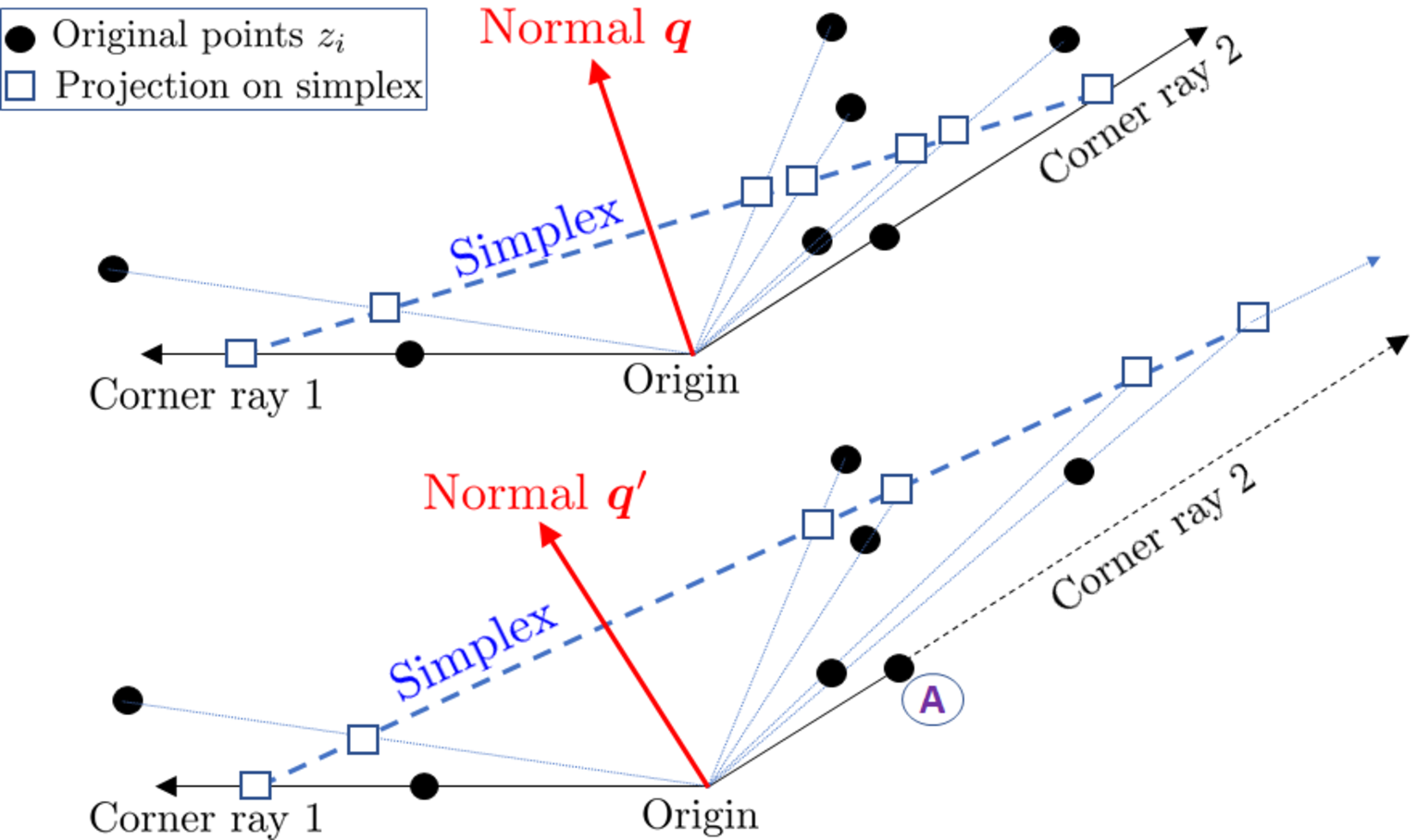}}
		\caption{Simplex methods can fail}
		\label{fig:simplexBad}
	\end{subfigure}
~
	\begin{subfigure}[c]{0.2625\textwidth}
		\fbox{\includegraphics[width=\textwidth]{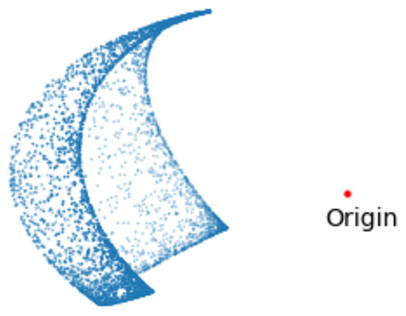}}
		\caption{Normalized points}
		\label{fig:pointsNorm}
	\end{subfigure}
~
	\begin{subfigure}[c]{0.25\textwidth}
		\fbox{\includegraphics[width=\textwidth]{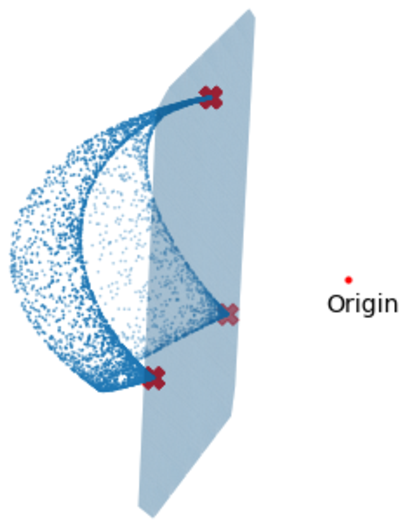}}
		\caption{Supporting hyperplane}
		\label{fig:svm}
	\end{subfigure}
	\caption{(a) Simplex-based corner-finding methods require points on a simplex, with uniformly small errors. Projecting points to a simplex with normal vector $\bm{q}$ works well, but a very similar $\bm{q}'$ does not. Some points (such as \textcircled{A}) get projected to far-off points, amplifying errors in their positions. (b) Instead, we normalize points to the unit sphere, and (c) find corners from the support vectors of a one-class SVM.}
	\label{fig1}
\end{figure}

Let us illustrate why seemingly obvious methods fail to obtain the corner rays. 
The simplest idea would be to generate a random plane (the ``simplex''), and project points to the intersection of the line  joining these points to this random plane as shown in Figure~\ref{fig:simplexBad}. 
Corners of this simplex correspond to the corner rays.
However, extending the idea to the sample or empirical cone is difficult, because if the simplex is not good, some points can get projected to arbitrarily far points, which will amplify their error.
As Figure~\ref{fig:simplexBad} shows, the set of  good simplexes may be quite limited, and finding a good simplex is difficult.

We will illustrate our idea with the ``ideal cone.''  
First, we row-normalize the ideal data matrix to have unit $\ell_2$ norm (similar to~\citet{ng2002spectral,qin2013regularized}). This projects all points inside the cone to the surface of the sphere, with the points on the corner rays being projected to the corners (Figure~\ref{fig:pointsNorm}).

Then, we show a rather fascinating result, namely, for all the above models, the corners can be obtained via the support vectors of a one class SVM~\cite{scholkopf2001estimating}, where all normalized points are in the positive class, and the origin is in the negative class (Figure~\ref{fig:svm}). Observe that a hyperplane through the corners separates all the points from the origin. 
We also show that if the row-wise error of $\bR$ is small, the SVM approach can be used to infer $\bG$ from empirical cones. Finally, we show that since the row-wise error of $\bR$ is indeed small for different degree-corrected overlapping network models and topic models, we can use our algorithm to infer the parameters consistently.
We provide error bounds for parameter estimates at the {\em per-node} and {\em per-word} level, in contrast to typical bounds for the entire parameter matrix.
We conclude with experimental results on simulated and real datasets.

\section{Proposed work}
\label{sec:proposed}

Consider a population matrix $\bP$ of the form $\bP=\rho\bGamma\bTheta \bB\bTheta^T\bGamma$, with $\bGamma\in\mathbb{R}_{>0}^{n\times n}$ being a positive diagonal matrix, $\bTheta\in\mathbb{R}_{\geq 0}^{n\times K}$ a community-membership matrix, and $\bB\in\mathbb{R}_{\geq0}^{K\times K}$ a cross-community connection matrix.
We will make the following assumptions which are common in the literature:
\begin{assumption}
(a) {\em Pure nodes:} Each community has at least one ``pure'' node, which belongs solely to that community.
(b) {\em Non-zero rows:} No row of $\bTheta$ is identically $0$.
(c) $\bB$ is full rank.
\label{assume1}
\end{assumption}
The form of the population matrix $\bP$, alongside Assumption~\ref{assume1}, induces a conic structure on the rows of the eigenvectors of $\bP$.
\begin{lem}\label{lem:vtheta}
  Let there be $K$ communities ($\rank(\bP)=K$), and let $\mathcal{I}$ be indices of $K$ pure nodes, one from each community.
	Let $\bP=\bV\bE\bV^T$ be the top-$K$ eigen-decomposition of $\bP${, where columns of $\bV\in\mathbb{R}^{n\times K}$ are the $K$ principal eigenvectors and $\bE\in\mathbb{R}^{K\times K}$ is a diagonal matrix of the $K$ principal eigenvalues}.
  Then,
	$\bV = \bGamma \bTheta \bGamma_P^{-1}\bV_P$, where $\bV_P = \bV(\mathcal{I}, :)$ is full rank and $\bGamma_P = \bGamma(\mathcal{I}, \mathcal{I})$.
\end{lem}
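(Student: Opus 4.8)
The plan is to read everything off the factored form $\bP=\rho\,\bGamma\bTheta\bB\bTheta^T\bGamma$. The guiding picture is that the column space of $\bP$ coincides with that of $\bGamma\bTheta$; once that is in hand, the top-$K$ eigenvectors $\bV$ must be of the form $\bV=\bGamma\bTheta\bM$ for some invertible $\bM\in\R^{K\times K}$, and the claimed identity is obtained by pinning down $\bM$ from the pure-node rows.

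First I would do the rank bookkeeping. Order the pure-node indices as $\mathcal{I}=(i_1,\dots,i_K)$ with $i_k$ pure for community $k$, so that $\bTheta(\mathcal{I},:)=\bI_K$ (a pure node of community $k$ has membership row $\be_k^T$); hence $\rank(\bTheta)=K$. Combined with $\bGamma$ invertible and $\bB$ full rank, this gives $\rank(\bP)=\rank(\bTheta\bB\bTheta^T)=K$, so the top-$K$ eigenvalue matrix $\bE$ is invertible.

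Next, from the eigen-equation $\bP\bV=\bV\bE$ I would write $\bV=\bP\bV\bE^{-1}=\rho\,\bGamma\bTheta\,(\bB\bTheta^T\bGamma\bV\bE^{-1})=:\bGamma\bTheta\bM$ with $\bM\in\R^{K\times K}$. Since $K=\rank(\bV)=\rank(\bGamma\bTheta\bM)\le\rank(\bM)\le K$, the matrix $\bM$ is invertible. Evaluating $\bV=\bGamma\bTheta\bM$ on the rows indexed by $\mathcal{I}$ and using $\bTheta(\mathcal{I},:)=\bI_K$ gives $\bV_P=\bGamma_P\bM$; as a product of invertible matrices, $\bV_P$ is full rank, and $\bM=\bGamma_P^{-1}\bV_P$. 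Substituting back yields $\bV=\bGamma\bTheta\bGamma_P^{-1}\bV_P$, which is the assertion.

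I do not expect a deep obstacle: the argument is essentially linear-algebra bookkeeping. The one step needing a word of care is $\bTheta(\mathcal{I},:)=\bI_K$, which uses the normalization convention that a pure node's membership vector is exactly a coordinate vector (automatic when the rows of $\bTheta$ lie in the simplex, as in MMSB, DCMMSB, and LDA); if one only assumes that a pure node's membership is supported on a single community, one first rescales $\bTheta\mapsto\bTheta\bT^{-1}$, $\bB\mapsto\bT\bB\bT$ for a suitable positive diagonal $\bT$, which leaves $\bP$ unchanged and restores the convention. The other place to be attentive is that invertibility of $\bE$ (equivalently $\rank(\bP)=K$) is genuinely used in passing from $\bP\bV=\bV\bE$ to $\bV=\bP\bV\bE^{-1}$, since that is precisely what ties $\bV$ to the column space of $\bGamma\bTheta$.
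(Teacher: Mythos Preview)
Your proof is correct and follows essentially the same route as the paper's: both use $\bP\bV=\bV\bE$ with $\bE$ invertible to write $\bV=\bGamma\bTheta\cdot(\text{some }K\times K\text{ matrix})$, then identify that matrix as $\bGamma_P^{-1}\bV_P$ via the pure-node rows $\bTheta(\mathcal{I},:)=\bI_K$. The only cosmetic difference is that the paper first extracts $\rho\bB\bTheta^T\bGamma=\bGamma_P^{-1}\bV_P\bE\bV^T$ from the pure rows of $\bP=\bV\bE\bV^T$ and substitutes (using $\bV^T\bV=\bI$), whereas you name the factor $\bM$ abstractly and pin it down afterward; your rank argument also cleanly yields that $\bV_P=\bGamma_P\bM$ is full rank, which the paper instead reads off from $\bV_P\bE\bV_P^T=\rho\bGamma_P\bB\bGamma_P$.
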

Since $(\bGamma\bTheta\bGamma_P^{-1})_{ij}\geq 0$ for all $(i,j)$, the rows of $\bV$ fall within a cone with corners $\bV_P$.
This suggests the following idealized problem:
\begin{problem}[Ideal cone problem]
We are given a matrix $\bZ \in \mathbb{R}^{n\times m}$ such that $\bZ = \bM \bY_P$, where $\bM\in\mathbb{R}_{\geq 0}^{n\times K}$, no row of $\bM$ is $0$, and 
$\bY_P\in\mathbb{R}^{K \times m}$
corresponds to $K$ (unknown) rows of $\bZ$, each scaled to unit $\ell_2$ norm.
Infer $\bM$ from $\bZ$.
\end{problem}
The rows of $\bY_P$ are unit vectors representing the corner rays of the cone.
Each row of $\bZ$ is constructed from a non-negative weighted combination of these unit vectors, with the weights being given by the corresponding rows of $\bM$.
Rows of $\bZ$ that lie on some corner correspond to rows of $\bM$ that have zero in all but one component.
Observe that $\bM$ is invariant to the choice of $K$ corner rows of $\bZ$ used to construct $\bY_P$.

Now consider solving the ideal cone problem with the eigenvector matrix, i.e., $\bZ=\bV$.
From Lemma~\ref{lem:vtheta}, the corner rows correspond to the pure nodes.
Choosing one such row from each corner gives us a set of pure node indices $\mathcal{I}$.
Hence, $\bM=\bGamma\bTheta\bGamma_P^{-1}\bN_P^{-1}$, where $\bN$ is a diagonal matrix with $\bN_{ii}=1/\|\be_i^T\bZ\|$ and $\bN_P=\bN(\mathcal{I},:)$.
We also have the identity $\rho\bGamma_P \bB \bGamma_P = \bV_P \bE \bV_P^T$.
Coupled with model-specific identifiability conditions ({details are provided in \supp}), these can be used to infer $\bTheta$ and $\rho\bB$ ($\bGamma$ are typically considered nuisance parameters).

In practice, we only have an observation matrix $\bA$ that is stochastically generated from the population matrix $\bP$.
Hence, we must actually solve:
\sloppy
\begin{problem}[Empirical cone problem]
We are given a matrix $\hat{\bZ} \in \mathbb{R}^{n\times m}$ such that $\max_{i\in[n]} \|\be_i^T(\bY - \hat{\bY}) \|_2 \leq \epsilon$, where $\bY = \bN\bZ$ is the row-normalized version of $\bZ$, and $\hat{\bY}$ is constructed similarly from $\hat{\bZ}$. 
Again, $\bZ = \bM \bY_P$, where $\bM\geq 0$, no row of $\bM$ is $0$, and $\bY_P=\bY(\mathcal{I},:)$ corresponds to $K$ (unknown) rows of $\bY$ with indices $\mathcal{I}$.
Infer $\bM$ from $\hat{\bZ}$.
\end{problem}
\fussy

We will first present the solution to the ideal cone problem.
We will then show that the same algorithm with some post-processing solves the empirical cone problem up to $O(\epsilon)$ error.
Finally, we apply our algorithm to infer parameters for a variety of models, and present error bounds for each.

\noindent
{\bf Notation:} We shall refer to the $i^{th}$ row of $\bZ$ as $\bz_i^T$ {\em expressed using a column vector}, i.e., $\bz_i = \bZ^T\be_i$.
The same pattern will be used for rows $\bbm_i^T$ of $\bM$, and other matrices as well.

\subsection{The Ideal Cone Problem}
Observe that given the corner indices $\mathcal{I}$ (i.e., given $\bY_P$), finding $\bM$ such that $\bZ=\bM \bY_P$ is a simple regression problem.
Thus, the only difficulty is in finding the corner indices.

Our key insight is that under certain conditions, the ideal cone problem can be solved easily by a {\bf one-class SVM applied to the rows of $\bY$}.
Figure~\ref{fig1} plots the normalized rows $\by_i$ of $\bY$ for an example cone.
Observe that a hyperplane through the corners separates all the points from the origin.
This suggests that the normalized corners are the support vectors found by a one-class SVM:
\begin{equation}
\text{maximize }\  b\quad \text{ s.t. } \bw^T \by_i \geq b \;(\text{for } i=1, \ldots, n) \text{ and } \|\bw\|_2 \leq 1.
\label{eq:SVM}
\end{equation}
We show next that this intuition is correct.
Define the following condition.
\begin{cond}
The matrix $\bY_P$ satisfies $(\bY_P\bY_P^T)^{-1}\bone > 0$.
\label{cond:pop}
\end{cond}
\begin{thm}
Each support vector selected by the one-class SVM (Eq.~\ref{eq:SVM}) is a corner of \bZ.
Also, if Condition~\ref{cond:pop} holds, there is at least one support vector for every corner.
\end{thm}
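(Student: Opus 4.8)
\emph{Proof proposal.} Rewrite the one-class SVM \eqref{eq:SVM} as: maximize $b^\star := \min_{i\in[n]} \bw^T\by_i$ over $\|\bw\|_2 \le 1$, where $\by_1,\dots,\by_n$ are the unit-norm rows of $\bY$; the support vectors are the $\by_i$ with $\bw^{\star T}\by_i = b^\star$ at an optimizer $\bw^\star$. Let $\bu_1,\dots,\bu_K$ denote the rows of $\bY_P$, i.e. the corner rays. The one observation that drives everything is that, because $\bY_P = \bY(\mathcal{I},:)$, each $\bu_k$ is itself one of the $\by_i$, so every feasible $\bw$ — in particular $\bw^\star$ — obeys $\bw^T\bu_k \ge b^\star$ for all $k$. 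I would also record two routine preliminaries: (i) $b^\star > 0$ — since $\bM \ge 0$ has no zero row and $\bY_P$ has full row rank, the cone generated by $\bu_1,\dots,\bu_K$ is pointed, so $0 \notin \conv\{\by_1,\dots,\by_n\}$ and a separating hyperplane yields some $\bw$ with $\min_i \bw^T\by_i > 0$; and (ii) any optimizer has $\|\bw^\star\|_2 = 1$ (otherwise rescale $\bw$ up).

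\emph{Step 1 (every support vector is a corner — the first assertion).} Fix a support vector $\by_j$, so $\bw^{\star T}\by_j = b^\star$, and write $\bz_j = \bY_P^T\bbm_j = \sum_k (\bbm_j)_k \bu_k$ with $\bbm_j \ge 0$. From $\bz_j = \|\bz_j\|_2\,\by_j$, the bound $\bw^{\star T}\bu_k \ge b^\star$, and $(\bbm_j)_k \ge 0$,
\[
b^\star \|\bz_j\|_2 \;=\; \bw^{\star T}\bz_j \;=\; \sum_k (\bbm_j)_k\, \bw^{\star T}\bu_k \;\ge\; b^\star \sum_k (\bbm_j)_k \;=\; b^\star \|\bbm_j\|_1 ,
\]
so $\|\bz_j\|_2 \ge \|\bbm_j\|_1$ (dividing by $b^\star > 0$). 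The triangle inequality gives the reverse, $\|\bz_j\|_2 = \|\sum_k (\bbm_j)_k \bu_k\|_2 \le \|\bbm_j\|_1$, so both hold with equality. Equality in the triangle inequality for the distinct unit vectors $\{\bu_k : (\bbm_j)_k > 0\}$ forces this index set to be a singleton, i.e. $\bbm_j$ has exactly one nonzero entry — precisely the statement that $\by_j$ is a corner of $\bZ$.

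\emph{Step 2 (under Condition~\ref{cond:pop}, every corner is a support vector — the second assertion).} Here I would guess the optimizer and verify it. Put $\lambda^\circ := (\bY_P\bY_P^T)^{-1}\bone / (\bone^T(\bY_P\bY_P^T)^{-1}\bone)$ (well-defined since $\bY_P$ has full row rank) and $\bw^\star := \bY_P^T\lambda^\circ / \|\bY_P^T\lambda^\circ\|_2$. Condition~\ref{cond:pop} says exactly that $\lambda^\circ > 0$, i.e. $\lambda^\circ$ lies in the \emph{interior} of the probability simplex $\Delta_K$. A one-line computation then shows $\bY_P\bY_P^T\lambda^\circ \propto \bone$, hence $\bw^{\star T}\bu_k = \gamma$ is the \emph{same} positive number for every $k$, with $\gamma = \|\bY_P^T\lambda^\circ\|_2 = (\bone^T(\bY_P\bY_P^T)^{-1}\bone)^{-1/2}$. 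For any row, $\bw^{\star T}\by_i = (\|\bbm_i\|_1/\|\bz_i\|_2)\,\gamma \ge \gamma$ (again $\|\bz_i\|_2 \le \|\bbm_i\|_1$), with equality iff $\by_i$ is a corner; so $\min_i \bw^{\star T}\by_i = \gamma$ and $b^\star \ge \gamma$. Conversely, for \emph{any} feasible $\bw$, $\min_i \bw^T\by_i \le \min_k \bw^T\bu_k \le (\lambda^\circ)^T\bY_P\bw \le \|\bY_P^T\lambda^\circ\|_2\|\bw\|_2 \le \gamma$ (averaging over the probability vector $\lambda^\circ$, then Cauchy–Schwarz), so $b^\star \le \gamma$, whence $b^\star = \gamma$ and $\bw^\star$ is optimal — in fact the unique optimizer, by tracking the equality cases. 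Since $\bw^{\star T}\bu_k = b^\star$ for every $k$ and each $\bu_k$ equals some $\by_{i_k}$, every corner contributes a support vector.

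\emph{Where the difficulty lies.} Step 1 and the two preliminaries are bookkeeping, and so is the Cauchy–Schwarz sandwich in Step 2; the real content is the guess in Step 2 — identifying the ``equal-margin'' direction $\bY_P^T(\bY_P\bY_P^T)^{-1}\bone$ and seeing that Condition~\ref{cond:pop} is precisely the statement that its barycentric weights $\lambda^\circ$ fall inside $\Delta_K$, which is simultaneously what makes $\bw^\star$ SVM-optimal and what makes all $K$ corner constraints active. The one other place to be careful is the triangle-inequality equality case in Step 1, where the genuine distinctness of the corner rays (a consequence of $\rank(\bY_P) = K$) gets used.
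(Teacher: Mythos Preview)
Your proof is correct. The paper's own argument (Lemma~\ref{lem:ocsvm_popul}) takes a somewhat different route: it passes to the dual \eqref{eq:primal_dual}, interprets the SVM as finding the point of $\conv(\tmtxPp^T)$ closest to the origin, and then shows geometrically that this projection is an \emph{interior} point of the simplex precisely when $(\tmtxPp\tmtxPp^T)^{-1}\bone>0$, which forces every dual variable $\beta_k^*>0$ and hence every corner to be active. Your argument stays entirely in the primal: you guess the equal-margin direction $\bY_P^T(\bY_P\bY_P^T)^{-1}\bone$ and certify its optimality by the Cauchy--Schwarz sandwich $\min_k \bw^T\bu_k \le (\lambda^\circ)^T\bY_P\bw \le \|\bY_P^T\lambda^\circ\|$, where Condition~\ref{cond:pop} is exactly what makes $\lambda^\circ$ a valid probability vector for the averaging step. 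Both approaches land on the same closed form (your $\lambda^\circ$ is the paper's $\bbeta$ in Eq.~\eqref{eq:bbeta}); yours is more self-contained and makes Step~1 (support vector $\Rightarrow$ corner, via the equality case of the triangle inequality with $r_i=1$) more explicit than the paper does, while the paper's dual/projection picture is what later drives the perturbation analysis of $\hat\bw$ in Lemma~\ref{lem:w_bound}.
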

Thus, under Condition~\ref{cond:pop}, we can get all the corners from the support vectors, and then find $\bM$ via regression of $\bZ$ on $\bY_P$.
Condition~\ref{cond:pop} is always satisfied for our problem setting, as shown next.
\begin{thm}\label{thm:dcmmsb_condition_true}
Let $\bP$ be a population matrix satisfying Assumption~\ref{assume1}.
Let $\bZ=\bV$, where $\bV$ is the rank-$K$ eigenvector matrix. 
Let $\bY=\bN\bZ$ as defined above.
Then, Condition~\ref{cond:pop} is true.
\label{thm:condpopHolds}
\end{thm}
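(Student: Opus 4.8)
\noindent
The plan is to turn the abstract requirement $(\bY_P\bY_P^T)^{-1}\bone>0$ into a transparent coordinatewise nonnegativity statement by exploiting the factorization $\bV=\bGamma\bTheta\bGamma_P^{-1}\bV_P$ from Lemma~\ref{lem:vtheta} together with the orthonormality $\bV^T\bV=\bI$ of the eigenvectors. Substituting the factorization into $\bV^T\bV=\bI$ and using that $\bGamma,\bGamma_P$ are positive diagonal yields $\bV_P^T\bS\bV_P=\bI$ with $\bS:=\bGamma_P^{-1}\bTheta^T\bGamma^2\bTheta\bGamma_P^{-1}=(\bGamma\bTheta\bGamma_P^{-1})^T(\bGamma\bTheta\bGamma_P^{-1})$. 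Assumption~\ref{assume1} makes $\bTheta$ full column rank (its rows at the pure indices $\mathcal{I}$ are positive multiples of distinct standard basis vectors), so $\bS\succ0$; hence $\bV_P$ is invertible and, crucially, $(\bV_P\bV_P^T)^{-1}=\bS=\bGamma_P^{-1}\bTheta^T\bGamma^2\bTheta\bGamma_P^{-1}$.

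With $\bZ=\bV$, the row normalization gives $\bY_P=\bD^{-1}\bV_P$, where $\bD=\diag(\|\bv_i\|:i\in\mathcal{I})$ is positive diagonal (no row of $\bV_P$ vanishes, since $\bV_P$ is invertible). Hence $(\bY_P\bY_P^T)^{-1}=\bD\,\bS\,\bD$ and $(\bY_P\bY_P^T)^{-1}\bone=\bD\bS\bd$ with $\bd:=\bD\bone>0$. Because $\bD$ and $\bGamma_P^{-1}$ are positive diagonal and therefore preserve the sign of a vector under left multiplication, the claim $(\bY_P\bY_P^T)^{-1}\bone>0$ is equivalent to $(\bTheta^T\bGamma^2\bTheta)\bt>0$ where $\bt:=\bGamma_P^{-1}\bd>0$.

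To finish I would note that $\bTheta^T\bGamma^2\bTheta$ is entrywise nonnegative, its $(k,l)$ entry being $\sum_i\gamma_i^2\Theta_{ik}\Theta_{il}\ge0$, so $(\bTheta^T\bGamma^2\bTheta)\bt\ge0$ coordinatewise; and the $k$-th coordinate is strictly positive because, by Assumption~\ref{assume1}, community $k$ has a pure node $i^*$ with $\Theta_{i^*k}>0$ and $\Theta_{i^*l}=0$ for $l\neq k$, so the $i=i^*$ term alone contributes $\gamma_{i^*}^2\Theta_{i^*k}^2\,t_k>0$. This establishes Condition~\ref{cond:pop}. One may also observe in passing that $\bY_P$ does not actually depend on which pure node is picked per community, since two pure nodes of one community have eigenvector rows that are positive scalar multiples of each other and hence the same normalized row.

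The one step that carries the argument is the identity $(\bV_P\bV_P^T)^{-1}=\bGamma_P^{-1}\bTheta^T\bGamma^2\bTheta\bGamma_P^{-1}$: once orthonormality of $\bV$ has been used to replace the opaque Gram-matrix inverse by this manifestly structured, entrywise nonnegative matrix, the remainder is a one-line sign chase invoking only Assumption~\ref{assume1}. I do not anticipate any genuine obstacle beyond spotting that reduction.
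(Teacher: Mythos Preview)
Your proposal is correct and follows essentially the same approach as the paper: both use the orthonormality $\bV^T\bV=\bI$ together with the factorization of Lemma~\ref{lem:vtheta} to obtain the key identity $(\bV_P\bV_P^T)^{-1}=\bGamma_P^{-1}\bTheta^T\bGamma^2\bTheta\,\bGamma_P^{-1}$, then conjugate by the positive diagonal of row norms to get $(\bY_P\bY_P^T)^{-1}=\bN_P^{-1}\bGamma_P^{-1}\bTheta^T\bGamma^2\bTheta\,\bGamma_P^{-1}\bN_P^{-1}$ (your $\bD$ is the paper's $\bN_P^{-1}$), and finish by noting this matrix is entrywise nonnegative with strictly positive diagonal. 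The paper phrases the last step as ``the $i$-th diagonal entry is proportional to $\|\bGamma\bTheta(:,i)\|^2>0$'', which is exactly your pure-node positivity observation.
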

Thus, the ideal cone problem is easily solved by a one-class SVM.
Next, we show that the same method suffices for the empirical cone problem too.

\subsection{The Empirical Cone Problem}
Now, instead of the normalized eigenvector rows $\bY$, we are given the empirical matrix $\hat{\bY}$ with rows $\hat{\bz}_i^T/\|\hat{\bz}_i\|$, where $\max_i \|\be_i^T (\hat{\bY} - \bY)\|\leq \epsilon$.
Once again, we focus on finding the corner indices, using which $\bM$ can be inferred by regression.
We will show that running a one-class SVM on the rows of $\hat{\bY}$ yields ``near-corners,'' after some post-processing.
We will need a stronger form of Condition~\ref{cond:pop}:
\begin{cond}
The matrix $\bY_P$ satisfies {${(\bY_P\bY_P^T)^{-1}\bone} \geq \eta\bone$ }
 for some constant $\eta>0$.
\label{cond:emp}
\end{cond}
It is easy to show that the solution $(\bw, b)$ of the population SVM under Condition~\ref{cond:pop} is given by 
\begin{align}
\bw &= b^{-1}\cdot \bY_P^T \bbeta &
b&=\left(\bone^T (\bY_P\bY_P^T)^{-1}\bone \right)^{-1/2} &
\bbeta &= \dfrac{(\bY_P\bY_P^T)^{-1}\bone}{\bone^T (\bY_P\bY_P^T)^{-1}\bone}.
\label{eq:bbeta}
\end{align}
Thus, Condition~\ref{cond:pop} implies that $\bw$ is a convex combination of the corners, while Condition~\ref{cond:emp} additionally requires a minimum contribution from each corner.

\begin{lem}[SVM solution is nearly ideal]
Let $(\hat{\bw}, \hat{b})$ be the solution for the one-class SVM (Eq.~\ref{eq:SVM}) applied to the rows of $\hat{\bY}$.
Under Condition~\ref{cond:emp},
we have $|\hat{b}-b|\leq\epsilon$ and $\|\hat{\bw}-\bw\|\leq \zeta\epsilon$, for $\zeta=\frac{4}{\eta b^2\sqrt{\lambda_K(\tmtxP_P\tmtxP_P^T)}}\leq \frac{4K}{\eta(\lambda_K(\tmtxP_P\tmtxP_P^T))^{1.5}}$.
\label{lem:bw}
\end{lem}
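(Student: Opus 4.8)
The plan is to compare the empirical SVM solution $(\hat\bw,\hat b)$ directly with the population solution $(\bw,b)$ from Eq.~\eqref{eq:bbeta}, exploiting the fact that the two optimization problems differ only in that the constraints use the perturbed points $\hat\by_i$ rather than $\by_i$, and that $\max_i\|\be_i^T(\hat\bY-\bY)\|\le\epsilon$. First I would handle $|\hat b-b|$. Since $b$ is the optimal value of the population problem, feasibility of a slightly shrunk version of $\bw$ for the empirical problem gives $\hat b\ge b-\epsilon$: indeed $\bw^T\hat\by_i=\bw^T\by_i+\bw^T(\hat\by_i-\by_i)\ge b-\|\bw\|\epsilon\ge b-\epsilon$ because $\|\bw\|\le 1$. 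Symmetrically, using $\hat\bw$ in the population constraints gives $b\ge \hat b-\epsilon$, so $|\hat b-b|\le\epsilon$. This part is routine.

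The substantive part is the bound $\|\hat\bw-\bw\|\le\zeta\epsilon$. Here the key structural fact is that, just as in Eq.~\eqref{eq:bbeta}, the empirical optimum $\hat\bw$ has its mass concentrated on (near-)corners: the active constraints at optimality are exactly the support vectors, and $\hat\bw=\hat b^{-1}\hat\bY_S^T\hat\bbeta$ for the submatrix $\hat\bY_S$ of active rows with a nonnegative weight vector $\hat\bbeta$ summing appropriately. I would argue that the active rows $\hat\by_i$ are within $\epsilon$ of genuine corners (using Theorem 2.5-type reasoning applied to the perturbed problem, or the characterization that a maximizer of $\bw^T\by$ over the cone must be attained at a corner ray up to the $\epsilon$ slack), so that $\hat\bY_S$ is within $\epsilon$ (row-wise) of a matrix with rows drawn from the true corner set $\bY_P$. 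Then $\hat\bw$ and $\bw$ are both of the form ``$(\text{value})^{-1}\cdot(\text{corner matrix})^T\cdot(\text{normalized }(\cdot\cdot)^{-1}\bone)$'', and I would bound their difference by propagating the $\epsilon$ perturbation through three pieces: (i) the normalization scalar $b$ vs.\ $\hat b$, already controlled by $|\hat b - b|\le\epsilon$ and $b$ bounded below via Condition \ref{cond:emp} (which gives $b^2=(\bone^T(\bY_P\bY_P^T)^{-1}\bone)^{-1}$ under control and, crucially, $\eta$-positivity so the combination is genuinely interior); (ii) the Gram-matrix inverse $(\bY_P\bY_P^T)^{-1}$ vs.\ its perturbed analogue, where a perturbation of size $O(\epsilon)$ in $\bY_P$ produces an $O(\epsilon/\lambda_K(\bY_P\bY_P^T))$ perturbation in the inverse (standard $\|A^{-1}-\hat A^{-1}\|\le\|A^{-1}\|\,\|\hat A^{-1}\|\,\|A-\hat A\|$ bound), using $\|\bY_P\|\le\sqrt K$ since rows are unit vectors; and (iii) the corner matrix $\bY_P^T$ itself vs.\ $\hat\bY_S^T$, contributing another $O(\epsilon)$ term scaled by the magnitude of the weight vector. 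Collecting these, each factor of $\lambda_K(\bY_P\bY_P^T)^{-1}$ and each factor of $\eta^{-1}$ and the $\sqrt K$ from $\|\bY_P\|$ combine to give $\zeta=\frac{4}{\eta b^2\sqrt{\lambda_K(\bY_P\bY_P^T)}}$; the cruder bound $\zeta\le\frac{4K}{\eta(\lambda_K(\bY_P\bY_P^T))^{1.5}}$ then follows from $b^{-2}=\bone^T(\bY_P\bY_P^T)^{-1}\bone\le K\lambda_K(\bY_P\bY_P^T)^{-1}$.

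I expect the main obstacle to be step (iii): establishing rigorously that the empirical support vectors really do lie within $O(\epsilon)$ of the \emph{true} corner rays, and that one can match them up one-to-one with a subset of rows of $\bY_P$ so that the two expressions for $\bw$ and $\hat\bw$ are structurally aligned. This requires knowing that no spurious far-from-corner point can become a support vector — which is where Condition \ref{cond:emp} with its uniform lower bound $\eta\bone$ does the work, ruling out near-degenerate configurations where a small perturbation could swap which points are active or push the optimal hyperplane far from the ideal one. Once that alignment is in hand, the rest is the perturbation bookkeeping sketched above, and the constant $4$ absorbs the handful of triangle-inequality splittings.
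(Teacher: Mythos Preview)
Your bound on $|\hat b-b|$ is exactly the paper's argument. The bound on $\|\hat\bw-\bw\|$, however, has a genuine gap: the plan is circular and the alignment step (iii) cannot be carried out as stated.

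You propose to show that the empirical support vectors are within $O(\epsilon)$ of true corners of $\bY_P$ and then match them one-to-one so that $\hat\bw=\hat b^{-1}\hat\bY_S^T\hat\bbeta$ and $\bw=b^{-1}\bY_P^T\bbeta$ are ``structurally aligned.'' But the statement that empirical support vectors are near-corners is precisely Lemma~\ref{lem:nearCorner}, whose proof already \emph{uses} the bound $\|\hat\bw-\bw\|\le\zeta\epsilon$ you are trying to establish. And even setting the circularity aside, there is no reason the empirical support set should be in bijection with the $K$ true corners: it may contain several points near the same corner and miss others entirely (indeed, the paper needs a separate Lemma~\ref{lem:cornerNearSV} just to show the true corners are \emph{close} to the empirical hyperplane, not that they are support vectors). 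So the two formulas cannot be compared term-by-term as you outline.

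The paper avoids both issues by never trying to identify empirical corners at this stage. Instead it decomposes $\hat\bw=\bY_P^T\bvarphi+\hat\bw_\perp$ with $\bY_P\hat\bw_\perp=0$, i.e.\ projects $\hat\bw$ onto the span of the \emph{population} corners. From the feasibility constraints one gets $\bY_P\hat\bw\ge(b-2\epsilon)\bone$, so $\bY_P\bY_P^T\bvarphi=(b-2\epsilon)\bone+\bc$ for some slack $\bc\ge 0$. Expanding $1=\|\hat\bw\|^2$ and subtracting $1=\|\bw\|^2=b^2\bone^T(\bY_P\bY_P^T)^{-1}\bone$ yields a relation that, together with Condition~\ref{cond:emp}, forces $\|\bc\|_1\le 2\epsilon/(\eta b^2)$. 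The orthogonal part $\|\hat\bw_\perp\|$ is bounded by $\epsilon/(b-\epsilon)$ using only that $\hat\bw$ is a convex combination (from the dual) of rows of $\hat\bY$, each of which is $\epsilon$-close to a row of $\bY\subset\mathrm{span}(\bY_P^T)$; no identification of which rows are corners is needed. Combining the bounds on $\bc$ and $\hat\bw_\perp$ gives $\zeta$.
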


Unlike the ideal cone scenario, the rows $\hat{\bY}_P$ corresponding to the corners need not be support vectors for the empirical cone.
However, they are not far off.
\begin{lem}[Corners are nearly support vectors]
The corners of the population cone are close to the supporting hyperplane:
$\mbox{$\hb\bone\leq\tmtxSp\hww\leq\hb\bone+(\zeta+2)\epsilon\bone$}$.
\label{lem:cornerNearSV}
\end{lem}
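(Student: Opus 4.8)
The plan is to measure everything relative to the \emph{population} supporting hyperplane $\{\bx : \bw^T\bx = b\}$ and to exploit that, in the ideal cone, the corners lie exactly on it. Concretely, either the first theorem on support vectors (each corner is a support vector when Condition~\ref{cond:pop} holds), or a direct computation plugging the closed form \eqref{eq:bbeta} into $\bY_P\bw = b^{-1}\bY_P\bY_P^T\bbeta$ and using $b^{-2} = \bone^T(\bY_P\bY_P^T)^{-1}\bone$, shows that $\bY_P\bw = b\bone$. I would also record the facts I will invoke: $|\hat b - b| \le \epsilon$ and $\|\hat\bw - \bw\| \le \zeta\epsilon$ from Lemma~\ref{lem:bw}, the row-wise bound $\max_i\|\be_i^T(\hat\bY-\bY)\| \le \epsilon$, and the observations that every row of $\bY$ and of $\hat\bY$ has unit $\ell_2$ norm while $\|\bw\|\le 1$ and $\|\hat\bw\|\le1$ by feasibility in \eqref{eq:SVM}.

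The lower bound $\hat b\bone \le \hat\bY_P\hat\bw$ is immediate: each row of $\hat\bY_P$ is one of the rows of $\hat\bY$ indexed by $\mathcal{I}$, hence a feasible point of the empirical SVM, so the constraint $\hat\bw^T\hat\by_i \ge \hat b$ applies to it.

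For the upper bound I would telescope
\[
\hat\bY_P\hat\bw - \bY_P\bw \;=\; \hat\bY_P(\hat\bw-\bw) \;+\; (\hat\bY_P-\bY_P)\bw ,
\]
and bound the $j$-th coordinate by Cauchy--Schwarz. The first term contributes at most $\|\be_j^T\hat\bY_P\|\,\|\hat\bw-\bw\| \le 1\cdot\zeta\epsilon$, since the $j$-th row of $\hat\bY_P$ is a unit vector; the second contributes at most $\|\be_j^T(\hat\bY_P-\bY_P)\|\,\|\bw\| \le \epsilon\cdot 1$, since that row is a row of $\hat\bY-\bY$. Combining with $\bY_P\bw = b\bone$ gives $\hat\bY_P\hat\bw \le b\bone + (\zeta+1)\epsilon\bone$, and then $b \le \hat b + \epsilon$ from Lemma~\ref{lem:bw} yields $\hat\bY_P\hat\bw \le \hat b\bone + (\zeta+2)\epsilon\bone$, which is the claim.

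There is no real obstacle here; the argument is a coordinate-wise triangle inequality. The only points requiring care are (a) having the population identity $\bY_P\bw = b\bone$ in hand before starting, since the entire bound is stated relative to that hyperplane, and (b) tracking the norms so that the unit-norm rows and $\|\bw\|\le1$ do not introduce spurious constants --- which is exactly why the final slack is the clean $(\zeta+2)\epsilon$.
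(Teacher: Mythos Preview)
Your proposal is correct and follows essentially the same approach as the paper. The paper's proof is the one-line telescope $\tmtxSp\hww=\tmtxPp\hww+(\tmtxSp-\tmtxPp)\hww\leq \tmtxPp\bw+\tmtxPp(\hww-\bw)+\epsilon\bone\leq b\bone+(\zeta+1)\epsilon\bone\leq \hb\bone+(\zeta+2)\epsilon\bone$; you merely group the difference as $\tmtxSp(\hww-\bw)+(\tmtxSp-\tmtxPp)\bw$ instead, which is an immaterial reordering using the same ingredients ($\bY_P\bw=b\bone$, unit-norm rows, $\|\hww-\bw\|\le\zeta\epsilon$, and $|b-\hb|\le\epsilon$).
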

This suggests that we should consider all points that are up to $(\zeta+2)\epsilon$ away from the supporting hyperplane when searching for corners.
The next Lemma shows that each such point is a ``near-corner.''

Recall that each row $\hat{\by}_i^T$ is a noisy version of a population row $\by_i^T = \bbm_i^T \bY_P/\|\bbm_i^T \bY_P\|$, which can be rewritten as a scaled convex combination of the normalized corners: ${\by_i^T = r_i \rowcvxY_i^T \bY_P}$, where $\rowcvxY_i^T\bone=1$.
Specifically, $r_i =  \frac{\bbm_i^T \bone}{\|\bbm_i^T\bY_P\|}$ and $\rowcvxY_i = \frac{\bbm_i}{\bbm_i^T \bone}$. 
For a corner, $r_i=1$ and $\rowcvxY_i=\be_j$ for some $j$.
We now show that every point $i$ that is close to the supporting hyperplane is nearly a corner of the ideal cone.
\begin{lem}[Points close to support vectors are near-corners]
	If $\hww^T\trowS_i\leq \hb + (\zeta+2)\epsilon$ for some point $i\in [n]$, then
  $\mbox{$1\leq r_i\leq 1+ \frac{(\zeta+4)\epsilon}{b-\epsilon}$}$
  and
  $\elecvxY_{ij}\geq 1- \frac{2\zeta\epsilon}{b\lambda_K(\tmtxPp\tmtxPp^T)}$ for some $j\in[K]$.
  \label{lem:nearCorner}
\end{lem}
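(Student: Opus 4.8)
The plan is to transport the empirical hypothesis about $\hat\by_i$ back to the noiseless row $\by_i = r_i\,\bY_P^T\rowcvxY_i$ and then read off both conclusions from two clean facts about the population cone: that the population SVM direction obeys $\bY_P\bw = b\bone$, so that $\bw^T\by_i = r_i b$ \emph{exactly}; and that a genuinely mixed convex combination of the $K$ distinct unit-norm corner rows is strictly shorter than a unit vector, quantitatively by an amount controlled by $\lambda_K(\bY_P\bY_P^T)$.

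First I would reduce to the population cone and get the $r_i$ bounds. Using $\|\hat\by_i\| = \|\by_i\| = 1$, $\|\hat\by_i - \by_i\|\le\epsilon$, $\|\bw\|\le 1$, and the bounds $|\hat b - b|\le\epsilon$, $\|\hat\bw - \bw\|\le\zeta\epsilon$ from Lemma~\ref{lem:bw}, I split $\hat\bw^T\hat\by_i - \bw^T\by_i = (\hat\bw - \bw)^T\hat\by_i + \bw^T(\hat\by_i - \by_i)$ to get $|\hat\bw^T\hat\by_i - \bw^T\by_i|\le(\zeta+1)\epsilon$, so the hypothesis becomes $\bw^T\by_i \le b + O(\zeta\epsilon)$ (tracking the $\epsilon$-terms carefully yields the exact constant). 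From the closed form \eqref{eq:bbeta}, $\bw = b^{-1}\bY_P^T\bbeta$ with $\bbeta = b^2(\bY_P\bY_P^T)^{-1}\bone$ and $\bone^T(\bY_P\bY_P^T)^{-1}\bone = b^{-2}$, hence $\bY_P\bw = b\bone$ and
\[
\bw^T\by_i = r_i\,(\bY_P\bw)^T\rowcvxY_i = r_i b\,(\bone^T\rowcvxY_i) = r_i b ,
\]
which combined with $\bw^T\by_i \le b + O(\zeta\epsilon)$ gives the claimed upper bound on $r_i$. For the lower bound, $1 = \|\by_i\| = r_i\|\bY_P^T\rowcvxY_i\| \le r_i\sum_j \elecvxY_{ij}\|\be_j^T\bY_P\| = r_i$, since each corner row is a unit vector and $\rowcvxY_i$ is a convex combination.

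Next, the bound on $\elecvxY_{ij}$. Because $\|\by_i\| = 1$, we have $\rowcvxY_i^T(\bY_P\bY_P^T)\rowcvxY_i = \|\bY_P^T\rowcvxY_i\|^2 = 1/r_i^2$, which is close to $1$ by the $r_i$ bound just obtained. The key inequality I would use is that for every $\rowcvxY$ on the simplex,
\[
1 - \rowcvxY^T(\bY_P\bY_P^T)\rowcvxY \ \ge\ \lambda_K(\bY_P\bY_P^T)\,\bigl(1 - \|\rowcvxY\|^2\bigr) ,
\]
which follows by rewriting the left side as $\bigl(1 - \lambda_K(\bY_P\bY_P^T)\bigr) - \rowcvxY^T\bigl(\bY_P\bY_P^T - \lambda_K(\bY_P\bY_P^T)\bI\bigr)\rowcvxY$ and noting that the convex quadratic $\rowcvxY\mapsto\rowcvxY^T\bigl(\bY_P\bY_P^T - \lambda_K(\bY_P\bY_P^T)\bI\bigr)\rowcvxY$ is maximized over the simplex at a vertex $\be_j$, where its value is $(\bY_P\bY_P^T)_{jj} - \lambda_K(\bY_P\bY_P^T) = 1 - \lambda_K(\bY_P\bY_P^T)$. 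Combining this with $\|\rowcvxY_i\|^2\le(\max_j\elecvxY_{ij})\,\bone^T\rowcvxY_i = \max_j\elecvxY_{ij}$ gives, for the maximizing index $j$,
\[
1 - \elecvxY_{ij} \ \le\ 1 - \|\rowcvxY_i\|^2 \ \le\ \frac{1 - 1/r_i^2}{\lambda_K(\bY_P\bY_P^T)} \ \le\ \frac{2\,(r_i - 1)}{\lambda_K(\bY_P\bY_P^T)} ,
\]
and substituting the bound on $r_i$ finishes the proof.

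I expect the $\elecvxY_{ij}$ step to be the crux. The hypothesis by itself only says $\by_i$ lies near the supporting hyperplane, which merely forces $r_i\approx 1$, i.e., that $\by_i$ is close to the simplex hull of the corners; on its own this does \emph{not} place $\by_i$ near a vertex (a midpoint of a face can satisfy it). The extra leverage comes entirely from the unit-norm normalization: a mixed combination of distinct unit corner vectors is strictly shorter than $1$, and the simplex-vertex argument above is exactly what converts the closeness $1/r_i^2\approx 1$ into quantitative closeness of $\rowcvxY_i$ to a coordinate vector, with the sharpness of the conversion governed by $\lambda_K(\bY_P\bY_P^T)$ — consistently, the bound degrades to vacuity precisely when the corners become collinear and $\lambda_K(\bY_P\bY_P^T)\to 0$.
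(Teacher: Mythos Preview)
Your argument is correct in substance but takes a different route from the paper on both halves, and your remark that ``tracking the $\epsilon$-terms carefully yields the exact constant'' is not quite right for the $r_i$ bound.

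For the bound on $r_i$, the paper does \emph{not} transport back to the population $(\bw,b)$. Instead it stays with the empirical solution and uses the feasibility constraint $\hat\bw^T\hat\by_{I(j)}\ge\hat b$ directly on the corner rows: writing $\hat\bw^T\hat\by_i = r_i\sum_j\phi_{ij}\hat\bw^T\hat\by_{I(j)} + (\text{two }\epsilon\text{-terms})$ gives $\hat b+(\zeta+2)\epsilon\ge r_i\hat b-r_i\epsilon-\epsilon$, hence $r_i-1\le\frac{(\zeta+4)\epsilon}{\hat b-\epsilon}$. Your route, splitting $\hat\bw^T\hat\by_i-\bw^T\by_i=(\hat\bw-\bw)^T\hat\by_i+\bw^T(\hat\by_i-\by_i)$ and invoking $\|\hat\bw-\bw\|\le\zeta\epsilon$, pays an extra $\zeta\epsilon$ and yields $r_i-1\le\frac{(2\zeta+4)\epsilon}{b}$ --- same order, but a factor of roughly two off the stated constant. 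What your path buys is the pretty identity $\bw^T\by_i=r_ib$ (from $\bY_P\bw=b\bone$), which makes the population picture crystal clear; what it costs is the extra $\zeta\epsilon$ you then cannot recover.

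For the $\phi_{ij}$ bound, both arguments start from $\|\bY_P^T\phi_i\|^2=1/r_i^2$ being close to $1$, but your convexity-over-the-simplex step is genuinely cleaner than the paper's. The paper fixes a coordinate $k$, writes $\phi_i^T\bY_P=\phi_{ik}\by_{I(k)}+(1-\phi_{ik})\by_{-k}$ with $\by_{-k}$ a convex combination of the other corners, expands the squared norm, and uses $\max_{i\ne l}\by_{I(i)}^T\by_{I(l)}\le 1-\lambda_K(\bY_P\bY_P^T)$ to obtain $\phi_{ik}(1-\phi_{ik})\le\epsilon_2$ for every $k$; it then argues that since $\sum_k\phi_{ik}=1$, some $\phi_{ij}\ge 1-\epsilon_2$. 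Your inequality $1-\phi^T(\bY_P\bY_P^T)\phi\ge\lambda_K(\bY_P\bY_P^T)\bigl(1-\|\phi\|^2\bigr)$, proved by maximizing the PSD quadratic $\phi^T(\bY_P\bY_P^T-\lambda_K I)\phi$ at a vertex of the simplex, combined with $\|\phi\|^2\le\max_j\phi_j$, gives $1-\max_j\phi_{ij}$ directly in one line and avoids the somewhat implicit last step in the paper's version. With the paper's $r_i$ bound plugged in (rather than yours), your chain
\[
1-\max_j\phi_{ij}\ \le\ \frac{1-1/r_i^2}{\lambda_K(\bY_P\bY_P^T)}\ \le\ \frac{2(r_i-1)}{\lambda_K(\bY_P\bY_P^T)}
\]
reproduces the stated $\frac{2\zeta\epsilon}{b\,\lambda_K(\bY_P\bY_P^T)}$ once one uses $\zeta\ge 4$.
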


Consider the set of points $S_c=\{i\mid \hww^T\trowS_i\leq \hb + (\zeta+2)\epsilon\}$ that are close to the supporting hyperplane.
Lemmas~\ref{lem:cornerNearSV} and~\ref{lem:nearCorner} show that $S_c$ contains all corners, and possibly other points that are all near-corners.
This suggests that we can cluster the vectors $\{\trowS_i\mid i\in S\}$ into $K$ clusters, each corresponding to one corner and possibly extra near-corners close to that corner.
Randomly selecting one point from each cluster gives us the set of inferred corners.
\begin{lem}[Each corner has its own cluster]
	There exist exactly $K$ clusters in $S_c$, as long as
  $\epsilon\leq c_{\epsilon}\frac{\eta(\lambda_K(\tmtxPp\tmtxPp^T))^3}{K^{1.5}\sqrt{\kappa(\tmtxPp\tmtxPp^T)}}$,
  for some global constant $c_{\epsilon}$.
\label{lem:Kclusters}
\end{lem}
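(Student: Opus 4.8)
The plan is to show that, under the stated bound on $\epsilon$, the points $\{\trowS_i : i\in S_c\}$ break into $K$ well-separated clumps, one per corner, so that clustering at any threshold in a nonempty interval recovers exactly $K$ clusters. The first step is to attach a \emph{corner label} to each $i\in S_c$. Lemma~\ref{lem:nearCorner} gives $1\le r_i\le 1+\frac{(\zeta+4)\epsilon}{b-\epsilon}=:1+\delta_r$ and $\elecvxY_{ij}\ge 1-\frac{2\zeta\epsilon}{b\lambda_K(\tmtxPp\tmtxPp^T)}=:1-\delta_\phi$ for some $j\in[K]$. Since $\rowcvxY_i$ has nonnegative entries summing to $1$, at most one coordinate can exceed $1/2$; hence once $\delta_\phi<1/2$ (guaranteed by the smallness of $\epsilon$) the index $j=j(i)$ is unique, and we use it as the label of $i$.

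Next I would bound the within-label diameter. Writing $\by_i^T=r_i\rowcvxY_i^T\tmtxPp$ and using that the rows of $\tmtxPp$ are unit vectors, $\|\by_i-\be_{j(i)}^T\tmtxPp\|_2\le\|r_i\rowcvxY_i-\be_{j(i)}\|_1\le|r_i\elecvxY_{i,j(i)}-1|+r_i(1-\elecvxY_{i,j(i)})\le\delta_r+3\delta_\phi$, so by the perturbation hypothesis $\|\trowS_i-\be_{j(i)}^T\tmtxPp\|_2\le\epsilon+\delta_r+3\delta_\phi=:\rho_0$. Thus any two points of $S_c$ sharing a label lie within $2\rho_0$ of each other. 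For the separation between labels, full rank of $\tmtxPp$ together with unit rows gives, for $j\ne j'$, $\|\be_j^T\tmtxPp-\be_{j'}^T\tmtxPp\|_2^2=(\be_j-\be_{j'})^T\tmtxPp\tmtxPp^T(\be_j-\be_{j'})\ge 2\lambda_K(\tmtxPp\tmtxPp^T)$, so two $S_c$-points with different labels are at distance at least $\sqrt{2\lambda_K(\tmtxPp\tmtxPp^T)}-2\rho_0$. Moreover Lemma~\ref{lem:cornerNearSV} puts every pure node $p_j$ in $S_c$ (it satisfies $\hww^T\trowS_{p_j}\le\hb+(\zeta+2)\epsilon$), so all $K$ labels actually occur. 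Hence, once $4\rho_0<\sqrt{2\lambda_K(\tmtxPp\tmtxPp^T)}$, any clustering threshold $\tau$ with $2\rho_0\le\tau<\sqrt{2\lambda_K(\tmtxPp\tmtxPp^T)}-2\rho_0$ merges each label class and keeps distinct classes apart, producing exactly $K$ clusters.

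It then remains to check that both $4\rho_0<\sqrt{2\lambda_K(\tmtxPp\tmtxPp^T)}$ and $\delta_\phi<1/2$ follow from the hypothesis on $\epsilon$. Using $\zeta\le\frac{4K}{\eta(\lambda_K(\tmtxPp\tmtxPp^T))^{1.5}}$ from Lemma~\ref{lem:bw} and $b=(\bone^T(\tmtxPp\tmtxPp^T)^{-1}\bone)^{-1/2}\ge\sqrt{\lambda_K(\tmtxPp\tmtxPp^T)/K}$ (since $\bone^T(\tmtxPp\tmtxPp^T)^{-1}\bone\le K/\lambda_K(\tmtxPp\tmtxPp^T)$), the term $\delta_\phi$ dominates $\rho_0$ and is of order $K^{1.5}\epsilon/(\eta(\lambda_K(\tmtxPp\tmtxPp^T))^{3})$, with a factor $\sqrt{\kappa(\tmtxPp\tmtxPp^T)}$ appearing when $\lambda_K$ is compared against $\lambda_1$ in these estimates; demanding $\rho_0$ be at most a small absolute constant times $\sqrt{\lambda_K(\tmtxPp\tmtxPp^T)}$ then yields exactly $\epsilon\le c_\epsilon\frac{\eta(\lambda_K(\tmtxPp\tmtxPp^T))^3}{K^{1.5}\sqrt{\kappa(\tmtxPp\tmtxPp^T)}}$ for a suitable global constant $c_\epsilon$, which simultaneously forces $\delta_\phi<1/2$.

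The separation argument itself is routine; the real work is the last step of bookkeeping, threading $K$, $\eta$, and the spectrum of $\tmtxPp\tmtxPp^T$ through $\zeta$, $b$, $\delta_r$ and $\delta_\phi$ so that the three requirements (well-defined labels, corners inside $S_c$, intra-diameter below inter-separation) all collapse to the single stated bound, and in particular recovering the precise power of $\lambda_K(\tmtxPp\tmtxPp^T)$ and the $\sqrt{\kappa(\tmtxPp\tmtxPp^T)}$ factor rather than a slightly weaker exponent. A secondary point to pin down is the algorithm's exact notion of ``cluster,'' to confirm that a threshold/single-linkage separation statement of the above form is what is actually needed.
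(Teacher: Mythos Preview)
Your argument is correct and follows the same overall skeleton as the paper: label every $i\in S_c$ by the index $j(i)$ for which $\elecvxY_{ij(i)}\ge 1-\delta_\phi$ (Lemma~\ref{lem:nearCorner}), show all $K$ labels occur because the corners themselves lie in $S_c$ (Lemma~\ref{lem:cornerNearSV}), bound the intra-label diameter from above and the inter-label separation from below, and finally reduce the gap condition to a bound on $\epsilon$ via the estimates on $\zeta$ and $b$ from Lemma~\ref{lem:bw}.

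Where you differ from the paper is in the intra-label estimate. The paper goes through an auxiliary ``distance lemma'' that sandwiches $\|\trowS_i-\trowS_j\|$ between $\sqrt{\lambda_K(\tmtxPp\tmtxPp^T)}\,\|\rowcvxY_i-\rowcvxY_j\|-2\epsilon_3$ and $\sqrt{\lambda_1(\tmtxPp\tmtxPp^T)}\,\|\rowcvxY_i-\rowcvxY_j\|+2\epsilon_3$; the upper bound therefore carries a $\sqrt{\lambda_1}$, and after dividing through by $\sqrt{\lambda_K}$ this is exactly where the $\sqrt{\kappa(\tmtxPp\tmtxPp^T)}$ in the final condition comes from. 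You instead bound each $\trowS_i$ directly to its corner $\be_{j(i)}^T\tmtxPp$ via $\|r_i\rowcvxY_i-\be_{j(i)}\|_1$ and the fact that $\tmtxPp$ has unit rows. That route never touches $\lambda_1$, and carrying your bookkeeping through honestly gives the sufficient condition $\epsilon\le c\,\eta\,(\lambda_K(\tmtxPp\tmtxPp^T))^{3.5}/K^{1.5}$, with no $\kappa$. Since $\tmtxPp$ has unit rows, $\lambda_1(\tmtxPp\tmtxPp^T)\ge 1$, so $(\lambda_K)^3/\sqrt{\kappa}=(\lambda_K)^{3.5}/\sqrt{\lambda_1}\le(\lambda_K)^{3.5}$; thus your condition is \emph{weaker} than the one stated in the lemma and you have in fact proved a slightly stronger result. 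The sentence in your last paragraph about $\sqrt{\kappa}$ ``appearing when $\lambda_K$ is compared against $\lambda_1$'' is therefore unnecessary and a little muddled: in your approach it simply does not appear, and you should say so rather than force it in.
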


Let $C$ be the indices of the near-corners picked by this clustering step.
Since $\bZ = \bM\bY_P$, this suggests $\bM$ can be obtained via regression: $\bM \approx \mtxS\tmtxS_C^T(\tmtxS_C\tmtxS_C^T)^{-1}\bpi$,
where ${\tmtxS_C:=\tmtxS(C,:)}$ and $\bpi$ is a permutation matrix that matches the ordering of ideal corners and the empirical near-corners.
\begin{thm}\label{thm:M_row_bound}
If Condition~\ref{cond:emp} and the condition on $\epsilon$ in Lemma~\ref{lem:Kclusters} holds, then for any $i\in[n]$,
	${
		\|\be_i^T(\cvxP - \mtxS\tmtxS_C^T(\tmtxS_C\tmtxS_C^T)^{-1}\bpi)\| \leq \frac{c_M\kappa(\tmtxPp\tmtxPp^T)\|\be_i^T\mtxP\|{K\zeta}}{(\lambda_K(\tmtxPp\tmtxPp^T))^{2.5}} \epsilon,
	}$
where $c_M$ is a global constant, {and $\kappa(.)$ is the ratio of the largest and smallest nonzero singular values of a matrix.}
  \label{thm:M}
\end{thm}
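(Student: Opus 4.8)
The plan is to write the estimator as $\cvxS:=\mtxS\tmtxS_C^T(\tmtxS_C\tmtxS_C^T)^{-1}\bpi$, split $\be_i^T(\cvxS-\cvxP)$ into a ``corner--location'' perturbation and a ``row-$i$'' perturbation, bound each with the lemmas already proved, and add them. First I would remove the permutation. Let $\tilde{\tmtxP}_C:=\bpi^{-1}\tmtxS_C$ be the $K$ near-corner rows reordered to match the $K$ true corners; using $\tmtxS_C=\bpi\tilde{\tmtxP}_C$ and $\bpi^T\bpi=\bI$ one gets $\tmtxS_C^T(\tmtxS_C\tmtxS_C^T)^{-1}\bpi=\tilde{\tmtxP}_C^T(\tilde{\tmtxP}_C\tilde{\tmtxP}_C^T)^{-1}=:\tilde{\tmtxP}_C^{+}$, so $\cvxS=\mtxS\,\tilde{\tmtxP}_C^{+}$. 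Since $\mtxP=\cvxP\tmtxPp$ \emph{exactly}, also $\be_i^T\cvxP=\be_i^T\mtxP\,\tmtxPp^{+}$ with $\tmtxPp^{+}:=\tmtxPp^T(\tmtxPp\tmtxPp^T)^{-1}$, and therefore
\[
\be_i^T(\cvxS-\cvxP)=\underbrace{\be_i^T\mtxS\,(\tilde{\tmtxP}_C^{+}-\tmtxPp^{+})}_{\text{design error}}\;+\;\underbrace{\be_i^T(\mtxS-\mtxP)\,\tmtxPp^{+}}_{\text{row error}}.
\]
Throughout I would use $\opnorm{\tmtxPp^{+}}=\lambda_K(\tmtxPp\tmtxPp^T)^{-1/2}$, $\opnorm{\tmtxPp}=\lambda_1(\tmtxPp\tmtxPp^T)^{1/2}$, $\lambda_K(\tmtxPp\tmtxPp^T)\le 1\le\kappa(\tmtxPp\tmtxPp^T)$ (the rows of $\tmtxPp$ have unit norm), and the un-normalized row bound $\|\be_i^T(\mtxS-\mtxP)\|=O(\epsilon\,\|\be_i^T\mtxP\|)$, which is the counterpart of the empirical-cone hypothesis used in every application.

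The second step is to control $\opnorm{\tilde{\tmtxP}_C-\tmtxPp}$. By Lemma~\ref{lem:cornerNearSV} every true corner lies in $S_c$, and by Lemma~\ref{lem:Kclusters} $S_c$ splits into exactly $K$ clusters with one true corner per cluster; this is precisely what makes the matching permutation $\bpi$, and hence the reordering $\tilde{\tmtxP}_C$, well-defined (the clustering returns one near-corner $j_k$ per true corner $k$). Applying Lemma~\ref{lem:nearCorner} to each $j_k\in S_c$ gives $1\le r_{j_k}\le 1+(\zeta+4)\epsilon/(b-\epsilon)$ and a coordinate with $\elecvxY_{j_k,k}\ge 1-2\zeta\epsilon/(b\,\lambda_K(\tmtxPp\tmtxPp^T))$; since $\rowcvxY_{j_k}\ge 0$ and $\rowcvxY_{j_k}^T\bone=1$ this forces $\|\rowcvxY_{j_k}-\be_k\|_1\le 4\zeta\epsilon/(b\,\lambda_K(\tmtxPp\tmtxPp^T))$. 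Writing $\by_{j_k}^T=r_{j_k}\rowcvxY_{j_k}^T\tmtxPp$ and $\be_k^T\tmtxPp$ for the $k$-th true corner,
\[
\|\trowS_{j_k}-\be_k^T\tmtxPp\|\le\epsilon+\opnorm{\tmtxPp}\big(|r_{j_k}-1|+\|\rowcvxY_{j_k}-\be_k\|_2\big)=O\!\bigg(\frac{\sqrt{K}\,\zeta\epsilon\,\sqrt{\kappa(\tmtxPp\tmtxPp^T)}}{\lambda_K(\tmtxPp\tmtxPp^T)}\bigg),
\]
where the $\rowcvxY_{j_k}$ term dominates and I used $b^{2}\ge\lambda_K(\tmtxPp\tmtxPp^T)/K$. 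Passing from row norms to the operator norm costs another $\sqrt{K}$, so $\opnorm{\tilde{\tmtxP}_C-\tmtxPp}=O\big(K\zeta\epsilon\,\sqrt{\kappa(\tmtxPp\tmtxPp^T)}/\lambda_K(\tmtxPp\tmtxPp^T)\big)$; for $\epsilon$ below the threshold of Lemma~\ref{lem:Kclusters} this is a small multiple of $\lambda_K(\tmtxPp\tmtxPp^T)^{1/2}$, so $\tilde{\tmtxP}_C$ has full row rank and $\opnorm{\tilde{\tmtxP}_C^{+}}\le 2\,\opnorm{\tmtxPp^{+}}$.

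The third step is to combine. For the design error I would invoke the standard pseudo-inverse perturbation inequality $\opnorm{\bA^{+}-\bB^{+}}\le c\,\opnorm{\bA^{+}}\opnorm{\bB^{+}}\opnorm{\bA-\bB}$ with $\bA=\tilde{\tmtxP}_C$, $\bB=\tmtxPp$, which gives $\opnorm{\tilde{\tmtxP}_C^{+}-\tmtxPp^{+}}=O\big(\opnorm{\tmtxPp^{+}}^{2}\opnorm{\tilde{\tmtxP}_C-\tmtxPp}\big)=O\big(K\zeta\epsilon\,\sqrt{\kappa(\tmtxPp\tmtxPp^T)}/\lambda_K(\tmtxPp\tmtxPp^T)^{2}\big)$; multiplying by $\|\be_i^T\mtxS\|\le(1+O(\epsilon))\|\be_i^T\mtxP\|$ and using $\sqrt{\kappa(\tmtxPp\tmtxPp^T)}\le\kappa(\tmtxPp\tmtxPp^T)$ together with $\lambda_K(\tmtxPp\tmtxPp^T)^{-2}\le\lambda_K(\tmtxPp\tmtxPp^T)^{-5/2}$ bounds this term by a constant times $\kappa(\tmtxPp\tmtxPp^T)\|\be_i^T\mtxP\|K\zeta\,\epsilon/\lambda_K(\tmtxPp\tmtxPp^T)^{5/2}$. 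For the row error, $\|\be_i^T(\mtxS-\mtxP)\tmtxPp^{+}\|\le\opnorm{\tmtxPp^{+}}\|\be_i^T(\mtxS-\mtxP)\|=O(\epsilon\,\|\be_i^T\mtxP\|/\sqrt{\lambda_K(\tmtxPp\tmtxPp^T)})$, which is of strictly lower order. Adding the two contributions and absorbing constants into $c_M$ yields the theorem.

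The main difficulty will be Step~2: upgrading the \emph{per-point}, coordinate-level ``near-corner'' guarantee of Lemma~\ref{lem:nearCorner} to an $\ell_2$ operator-norm bound on $\tilde{\tmtxP}_C-\tmtxPp$ and then threading it through the pseudo-inverse perturbation so that the powers of $\lambda_K(\tmtxPp\tmtxPp^T)$, $\kappa(\tmtxPp\tmtxPp^T)$, $K$ and $\zeta$ land exactly as claimed (the slack above is what produces the clean exponent $5/2$). Two bookkeeping points are equally delicate: verifying that $\bpi$ is a genuine bijection between the $K$ clusters and the $K$ true corners --- which is exactly where Lemmas~\ref{lem:cornerNearSV} and~\ref{lem:Kclusters} enter and is what lets $\tilde{\tmtxP}_C$ be defined at all --- and carrying the scalar row-norm factors $\|\be_i^T\mtxS\|$ and $\|\be_i^T(\mtxS-\mtxP)\|$ so that they collapse onto the single $\|\be_i^T\mtxP\|$ appearing on the right-hand side.
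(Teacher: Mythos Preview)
Your proposal is correct and follows essentially the same approach as the paper. Both proofs write $\cvxP=\mtxP\tmtxPp^T(\tmtxPp\tmtxPp^T)^{-1}$ and split the error into a ``row-$i$'' piece $\be_i^T(\mtxS-\mtxP)\tmtxPp^+$ and a ``corner-location'' piece; both control the latter by first bounding $\|\tmtxS_C-\bpi\tmtxPp\|$ via the near-corner guarantee of Lemma~\ref{lem:nearCorner} (the paper packages this as a separate lemma yielding $\|\tmtxS_C-\bpi\tmtxPp\|_F\le c_YK\zeta\epsilon/\lambda_K(\tmtxPp\tmtxPp^T)^{1.5}$, which is exactly what your Step~2 re-derives) and then propagating through the inverse. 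The only cosmetic difference is that you invoke the pseudo-inverse perturbation inequality $\|A^+-B^+\|\le c\,\|A^+\|\|B^+\|\|A-B\|$ as a black box, whereas the paper expands $\tmtxPp^T(\tmtxPp\tmtxPp^T)^{-1}-\tmtxS_C^T(\tmtxS_C\tmtxS_C^T)^{-1}\bpi$ by hand into three terms; your Step~2 also carries an extra $\sqrt{\lambda_1(\tmtxPp\tmtxPp^T)}$ because you bound $\|\rowcvxY^T\tmtxPp\|$ by $\opnorm{\tmtxPp}\|\rowcvxY\|$ rather than using that each row of $\tmtxPp$ has unit norm, but as you note this washes out after coarsening to $\kappa(\tmtxPp\tmtxPp^T)/\lambda_K(\tmtxPp\tmtxPp^T)^{5/2}$. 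The implicit assumption $\|\be_i^T(\mtxS-\mtxP)\|=O(\epsilon\,\|\be_i^T\mtxP\|)$ that you flag is likewise used (as ``$\epsilon_0<\|\be_i^T\mtxP\|\epsilon/2$'') in the paper's proof without further comment.
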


Algorithm~\ref{algo:inferCone} shows all the steps of our method (\svmcone).
The algorithm requires an estimate of $\delta:=(\zeta+2)\epsilon$, and returns the inferred $\bM$ and near-corners $C$.
When the row-wise error bound $\epsilon$ is unknown, we can start with $\delta=0$ and incrementally increase it until $K$ distinct clusters are found.

\begin{algorithm}[H] 
	\caption{SVM-cone}
	\label{algo:inferCone}
	\begin{algorithmic}[1]
		\REQUIRE $\mtxS \in \R^{n\times m}$, number of corners $K$, estimated distance of corners from hyperplane $\delta$ 
		\ENSURE  Estimated conic combination matrix $\hat{\bM}$ and near-corner set $C$
		\STATE Normalize rows of $\mtxS$ by $\ell_2$ norm to get $\tmtxS$ with rows $\hat{\by}_i^T$
		\STATE Run one-class SVM on $\hat{\by}_i$ to get the normal $\hww$ and distance $\hb$ of the supporting hyperplane
    \STATE Cluster points $\{\hat{\by}_i\mid \hww^T\trowS_i\leq \hb + \delta\}$ that are close to the hyperplane into $K$ clusters
    \STATE Pick one point from each cluster to get near-corner set $C$
    \STATE $\hat{\bM} =\mtxS\tmtxS_C^T(\tmtxS_C\tmtxS_C^T)^{-1}$
	\end{algorithmic}
\end{algorithm}

\section{Applications}
Many network models and topic models have population matrices of the form $\bP=\rho\bGamma \bTheta \bB \bTheta^T \bGamma$.
We have already shown that in such cases, the eigenvector matrix $\bV$ forms an ideal cone (Lemma~\ref{lem:vtheta}), and that Condition~\ref{cond:pop} holds.
It is easy to see that the same holds for $\bV\bV^T$ as well.
This suggests that \svmcone can be applied to the matrix $\hat{\bV}\hat{\bV}^T$, where $\hat{\bV}$ is the empirical top-$K$ eigenvector matrix.
We shall show that this yields {\em per-node error bounds} in estimating community memberships and {\em per-word error bounds} for word-topic distributions.

\subsection{Network models}
Define a ``DCMMSB-type'' model as a model with population matrix $\bP=\rho\bGamma\bTheta\bB\bTheta^T\bGamma$ and an empirical adjacency matrix $\bA$ with $\bA_{ji}=\bA_{ij} \sim \text{Bernoulli}(\bP_{ij})$ for all $i>j$. Assume that rows of $\bTheta$ have unit $\ell_p$ norm, for $p=1$ (DCMMSB) or $p=2$ (OCCAM~\citep{zhang2014detecting}).
Let $\bv_i=\bV^T\be_i$, $\hvv_i=\hv^T\be_i$, $\trowP_i=\bV\bv_i/\|\bV\bv_i\|$, and $\trowS_i=\hv\hvv_i/\|\hv\hvv_i\|$. Denote $\gamma_{\max}=\max_i\bGamma_{ii}$ and $\gamma_{\min}=\min_i\bGamma_{ii}$. 
\sloppy
\begin{thm}[Small row-wise error in Network Models]
	Consider a DCMMSB-type model with $\btheta_i\sim\mathrm{Dirichlet}(\balpha)$, and $\alpha_0:=\balpha^T\bone$.
	If $\bal:= \dfrac{\alpha_0}{\alpha_{\min}}\leq   \dfrac{\min( \sqrt{\frac{n}{27\log n}}, {\frac{\gamma_{\min}^2}{\gamma_{\max}^2}n\rho})}{2(1+\alpha_0)}$, ${\dfrac{\lambda^*(\bB)}{\nu}\geq \dfrac{8(1+\alpha_0)(\log n)^{\xi}}{\gamma_{\min}^2\sqrt{n\rho}}}$ for some constant $\xi>1$, $\kappa(\bTheta^T\bGamma^2 \bTheta)=\Theta(1)$,   and $\alpha_0=O(1)$, then
	\bas{
		\epsilon=\max_i \|\trowP_i-\trowS_i\| = \epsilonMMSB
	}
	with probability at least $1-O(Kn^{-2})$. 
  Here $\lambda^*(\bB)$ is the smallest singular value of $\bB$. 
  \label{thm:row_wise_vvt1}
\end{thm}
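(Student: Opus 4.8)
The plan is to reduce the row-wise error on the normalized rows $\trowS_i$ of $\hv\hv^T$ to a row-wise perturbation bound on the eigenvector matrix itself, and then track how row-normalization amplifies that error. First I would invoke the row-wise eigenvector concentration result of~\cite{mao2017estimating} (adapted to the DCMMSB-type model, which is what the macro-laden conditions on $\bal$, $\lambda^*(\bB)/\nu$, $\kappa(\bTheta^T\bGamma^2\bTheta)$, and $\alpha_0$ are tuned for): under the stated assumptions one gets, with probability $1-O(Kn^{-2})$, a bound of the form $\max_i\|\be_i^T(\hv\oh-\bV)\|=\eigenspacerowwise$ for some $K\times K$ orthogonal $\oh$ (and a corresponding, slightly cruder Frobenius/operator bound $\opnorm{\hv\hv^T-\bV\bV^T}=\RE$ type estimate). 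The orthogonal nuisance $\oh$ disappears once we pass to $\hv\hv^T$, which is exactly why the algorithm is applied to $\hv\hv^T$ rather than $\hv$: we have $\be_i^T(\hv\hv^T-\bV\bV^T)=\be_i^T(\hv\oh-\bV)\oh^T\hv^T+\be_i^T\bV(\oh^T\hv^T-\bV^T)$, and each term is controlled by the row-wise eigenvector bound together with $\|\hv\|_{\text{op}}=1$ and a bound on $\|\be_i^T\bV\|$.

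Next I would convert the row-wise error on the \emph{unnormalized} rows $\bz_i=\bV\bv_i$ (equivalently $\be_i^T\bV\bV^T$) into a bound on the normalized rows $\trowP_i=\bz_i/\|\bz_i\|$. The standard inequality is $\|\trowP_i-\trowS_i\|\le \frac{2}{\|\bz_i\|}\|\bz_i-\hat\bz_i\|$ whenever $\|\bz_i-\hat\bz_i\|\le\|\bz_i\|/2$, so the key quantity to lower-bound is $\min_i\|\be_i^T\bV\bV^T\|=\min_i\|\be_i^T\bV\|$. By Lemma~\ref{lem:vtheta}, $\bV=\bGamma\bTheta\bGamma_P^{-1}\bV_P$, so $\|\be_i^T\bV\|$ is a positive combination (with weights $\gamma_i\btheta_i$) of the corner magnitudes, and one lower-bounds it using $\gamma_{\min}$, the non-zero-rows assumption on $\bTheta$, and $\lambda_K(\bV_P\bV_P^T)$; under $\kappa(\bTheta^T\bGamma^2\bTheta)=\Theta(1)$ this comes out to order $1/\sqrt{n}$ up to $\bal$-dependent factors, which is exactly the $\sqrt{n}$ that the final bound divides by after combining with the $\eigenspacerowwise$-type numerator. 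Dividing the row-wise eigenspace error by this minimum row norm, and folding in the $\min\{K^2,\kappa(\bP)^2\}$ factor that comes from the $\hv\hv^T$ cross-terms and the $\psi$ factor from~\cite{mao2017estimating}, yields the claimed $\epsilon=\epsilonMMSB$.

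The main obstacle is the row-wise eigenvector concentration itself: everything else is algebra, but getting $\max_i\|\be_i^T(\hv\oh-\bV)\|$ (rather than the easy $\opnorm{\hv\oh-\bV}$) under a sparse Bernoulli adjacency matrix with a degree-corrected, mixed-membership mean requires the Dirichlet membership structure to control $\|\be_i^T\bV\|$ uniformly and to verify the eigengap/incoherence hypotheses of the row-wise Davis–Kahan machinery — in particular translating $\balpha\sim\dir$ into the high-probability bounds $\lambda_K(\bTheta^T\bGamma^2\bTheta)=\Omega(n/\nu)$ and the stated control on $\gamma_{\max}/\gamma_{\min}$, which is where the conditions $\bal\le\frac{\min(\sqrt{n/(27\log n)},\,(\gamma_{\min}^2/\gamma_{\max}^2)n\rho)}{2(1+\alpha_0)}$ and $\lambda^*(\bB)/\nu\ge 8(1+\alpha_0)(\log n)^{\xi}/(\gamma_{\min}^2\sqrt{n\rho})$ enter. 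I would quarantine that estimate as a cited/appendix lemma and treat its statement as a black box, then spend the proof on the deterministic reduction (eigenspace $\to$ $\hv\hv^T$ $\to$ normalized rows) described above, checking that the probability budget stays at $1-O(Kn^{-2})$ after a union bound over the $n$ rows and the concentration event for $\bTheta$.
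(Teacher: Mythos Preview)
Your proposal is essentially correct and follows the same architecture as the paper's proof: cite the row-wise eigenspace concentration from \cite{mao2017estimating} as a black box, verify its hypotheses via Dirichlet concentration on $\lambda_K(\bTheta^T\bGamma^2\bTheta)$ and $\lambda^*(\bP)$, then convert to normalized rows by dividing by $\min_i\|\bv_i\|$, which is lower-bounded via $\bV=\bGamma\bTheta\bGamma_P^{-1}\bV_P$.

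One small discrepancy worth noting: the black box the paper actually invokes (its Theorem~\ref{thm:row_wise_vvt}, restated from \cite{mao2017estimating}) already bounds $\|\be_i^T(\hv\hv^T-\bV\bV^T)\|$ directly, not $\|\be_i^T(\hv\oh-\bV)\|$ for an aligning orthogonal $\oh$. So your intermediate decomposition $\be_i^T(\hv\hv^T-\bV\bV^T)=\be_i^T(\hv\oh-\bV)\oh^T\hv^T+\be_i^T\bV(\oh^T\hv^T-\bV^T)$ is superfluous here --- it is effectively already folded into the cited lemma. This does not break anything (your route would also work if you had a row-wise $\hv\oh-\bV$ bound), but it means you are re-deriving a step that the paper simply imports. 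Everything else --- the $\|\trowP_i-\trowS_i\|\le 2\epsilon_0/\|\bv_i\|$ normalization lemma, the lower bound $\|\bv_i\|\ge\gamma_i/\sqrt{\lambda_1(\bTheta^T\bGamma^2\bTheta)}$, the Dirichlet tail bounds giving $\lambda_K(\bTheta^T\bGamma^2\bTheta)=\Omega(n/(\nu(1+\alpha_0)))$, and the union bound over $n$ rows to land at $1-O(Kn^{-2})$ --- matches the paper's argument.
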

\fussy
Similar results for the non-Dirichlet case follow easily as long as $n\rho=\Omega((\log n)^{2\xi})$, $\lambda_K(\bP)=\Omega(\sqrt{n\rho}(\log n)^\xi)$, and $\max_i \|\bV(:,i)\|=O(\sqrt{\rho})$ with high probability.
This shows that the rows of $\hat{\bV}\hat{\bV}^T$ are close to those of $\bV\bV^T$, and the latter forms an ideal cone satisfying Condition~\ref{cond:pop}.
Hence, the conic combination for each node can be recovered by Algorithm~\ref{algo:inferCone} applied to $\hat{\bV}\hat{\bV}^T$.
In fact, we can run the algorithm on $\hat{\bV}$ itself; the output depends only on the SVM dual variables $\bbeta$ (Eq.~\ref{eq:bbeta}), which are the same whether the input is $\hat{\bV}$ or $\hat{\bV}\hat{\bV}^T$.
The output is the same conic combination matrix $\hat{\bM}$ and the same set $C$ of nearly-pure nodes.

For \textbf{identifiability} of $\bTheta$, we need another condition.
We will assume that $\sum\bGamma_{ii}=n$ and all diagonal entries of $\bB$ are equal ({details are provided in \supp}). 
The next theorem shows that \svmcone can be used to consistently infer the parameters of DCMMSB as well as OCCAM~\cite{zhang2014detecting}.
\begin{thm}[Consistent inference of community memberships for each node]\label{thm:theta_bound}
  Consider DCMMSB-type models
  where the conditions of Theorem~\ref{thm:row_wise_vvt1} are satisfied and ${\kappa(\bTheta^T\bGamma^2 \bTheta)=\Theta(1)}$.
  Let $\hat{\bD}$ be a diagonal matrix with entries $\hat{\bD}_{ii} = \sqrt{\be_i^T\tmtxS_C\hv\hat{\bE}\hv^T\tmtxS_C^T\be_i}$.
  Let $\hat{\bTheta} = \hat{\bF}^{-1}\hat{\bM}\hat{\bD}$, where $\hat{\bF}$ is a diagonal matrix with entries $\hat{\bF}_{ii}=\|\be_i^T\hat{\bM}\hat{\bD}\|_1$ (for DCMMSB) and ${\hat{\bF}_{ii}=\|\be_i^T\hat{\bM}\hat{\bD}\|_2}$ (for OCCAM~\cite{zhang2014detecting}).
  Then there exists a permutation matrix $\bpi$ such that
	\bas{
		{\|\be_i^T(\bTheta-\hat{\bTheta}\bpi)\|}
    &= \ThetaErrorDCMMSBtype
	}
	with probability at least $1-O(Kn^{-2})$. 
\end{thm}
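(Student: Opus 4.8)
The plan is to treat the estimator $\hat{\bTheta}=\hat{\bF}^{-1}\hat{\bM}\hat{\bD}$ as a perturbation of an exact population factorization $\bTheta=(\bF^{*})^{-1}\bM\bD^{*}$, control the three factors separately, and combine by a telescoping estimate; throughout, the input to Algorithm~\ref{algo:inferCone} is taken to be $\hat{\bV}\hat{\bV}^{T}$, whose rows are the $\hat{\bZ}$ used by the $\hat{\bD}$ step. First I would pin down that population identity. Let $b_{0}$ be the common diagonal entry of $\bB$. Running the ideal cone procedure on $\bZ=\bV\bV^{T}$ gives, via Lemma~\ref{lem:vtheta} and the identity $\rho\bGamma_{P}\bB\bGamma_{P}=\bV_{P}\bE\bV_{P}^{T}$, that $\bM=\bGamma\bTheta\bGamma_{P}^{-1}\bN_{P}^{-1}$ and hence $\bM\bD^{*}=\sqrt{\rho b_{0}}\,\bGamma\bTheta$, where $\bD^{*}$ is diagonal with $\bD^{*}_{kk}=(\bN_{P})_{kk}(\bGamma_{P})_{kk}\sqrt{\rho b_{0}}$; using $(\bV_{P}\bV_{P}^{T})^{-1}=\bGamma_{P}^{-1}\bTheta^{T}\bGamma^{2}\bTheta\bGamma_{P}^{-1}$ this simplifies to $\bD^{*}_{kk}=\sqrt{\rho b_{0}/((\bTheta^{T}\bGamma^{2}\bTheta)^{-1})_{kk}}$. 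With the $\ell_{p}$-normalization of the rows of $\bTheta$ and $\sum_{i}\bGamma_{ii}=n$ this yields $\bF^{*}_{ii}=\sqrt{\rho b_{0}}\,\bGamma_{ii}$ and $\bTheta=(\bF^{*})^{-1}\bM\bD^{*}$, so it suffices to show $\hat{\bF}^{-1}\hat{\bM}\hat{\bD}$ is row-wise close to $(\bF^{*})^{-1}\bM\bD^{*}$ after a permutation $\bpi$ (the $\bpi$ of Theorem~\ref{thm:M_row_bound}).

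Next I would establish three block estimates. (i) $\hat{\bM}\approx\bM$: this is exactly Theorem~\ref{thm:M_row_bound}; I would then replace $b=(\bone^{T}(\bY_{P}\bY_{P}^{T})^{-1}\bone)^{-1/2}$, $\zeta$, $\lambda_{K}(\bY_{P}\bY_{P}^{T})$, $\kappa(\bY_{P}\bY_{P}^{T})$ and $\|\be_{i}^{T}\bM\|$ by explicit expressions in $\gamma_{\max},\gamma_{\min},K,n,\lambda_{K}(\bTheta^{T}\bGamma^{2}\bTheta),\kappa(\bP)$, again using $(\bV_{P}\bV_{P}^{T})^{-1}=\bGamma_{P}^{-1}\bTheta^{T}\bGamma^{2}\bTheta\bGamma_{P}^{-1}$, the $\min\{K^{2},\kappa(\bP)^{2}\}$ factor entering through $\epsilon$ from Theorem~\ref{thm:row_wise_vvt1}. (ii) $\hat{\bD}\approx\bD^{*}$ after reindexing by $\bpi$: since the rows of $\hat{\bY}$ are the normalized rows of $\hat{\bV}\hat{\bV}^{T}$, one has $\hat{\bY}_{C}\hat{\bV}=\hat{\bN}_{C}\hat{\bV}_{C}$ with $\hat{\bN}_{C}$ the diagonal of reciprocal row-norms and $\hat{\bV}_{C}=\hat{\bV}(C,:)$, so $\hat{\bD}_{ii}^{2}=(\hat{\bN}_{C})_{ii}^{2}\,(\hat{\bV}\hat{\bE}\hat{\bV}^{T})_{C_{i},C_{i}}$. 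Using that $\hat{\bV}$ is row-wise close to $\bV$ (the bound underlying Theorem~\ref{thm:row_wise_vvt1}) and that near-corners are nearly pure (Lemmas~\ref{lem:nearCorner} and~\ref{lem:Kclusters}), both $(\hat{\bV}\hat{\bE}\hat{\bV}^{T})_{C_{i},C_{i}}\approx\rho\bGamma_{C_{i}}^{2}b_{0}$ and $\|\hat{\bv}_{C_{i}}\|^{2}\approx\bGamma_{C_{i}}^{2}((\bTheta^{T}\bGamma^{2}\bTheta)^{-1})_{kk}$ (with $k$ the community of $C_{i}$), so the factor $\bGamma_{C_{i}}$ cancels and $\hat{\bD}_{ii}\approx\bD^{*}_{k}$; the residual here must be bounded \emph{entrywise}, via the row-wise eigenvector concentration plus $\opnorm{\bA-\bP}=O_{P}(\sqrt{n\rho})$ for the $\hat{\bE}$ piece, since the crude $\opnorm{\hat{\bV}\hat{\bE}\hat{\bV}^{T}-\bP}=O_{P}(\sqrt{n\rho})$ becomes too large after multiplication by $(\hat{\bN}_{C})_{ii}^{2}$, which is of order $n/K$ under the stated assumptions. (iii) $\hat{\bF}\approx\bF^{*}$: immediate from $\hat{\bM}\hat{\bD}\approx\bM\bD^{*}$ row-wise, $|\,\|x\|-\|y\|\,|\le\|x-y\|$, and $\bF^{*}_{ii}\ge\sqrt{\rho b_{0}}\,\gamma_{\min}>0$.

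Finally I would telescope: writing $\be_{i}^{T}(\hat{\bTheta}\bpi-\bTheta)=\be_{i}^{T}\hat{\bF}^{-1}\hat{\bM}\hat{\bD}\bpi-\be_{i}^{T}(\bF^{*})^{-1}\bM\bD^{*}$ and swapping $\hat{\bF}^{-1}\!\to\!(\bF^{*})^{-1}$, $\hat{\bM}\!\to\!\bM$, $\hat{\bD}\!\to\!\bD^{*}$ one factor at a time, I would bound each term using the row-wise estimates above together with operator-norm control on the population factors $(\bF^{*})^{-1}$, $\bM$, $\bD^{*}$ (which scale with $\gamma_{\max}/\gamma_{\min}$, $\sqrt{\rho}$, $K$, $1/\lambda_{K}(\bTheta^{T}\bGamma^{2}\bTheta)$). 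Collecting the dominant term (the $\hat{\bM}$ term, which carries $\epsilon$), substituting $\epsilon$ from Theorem~\ref{thm:row_wise_vvt1} and $\opnorm{\bA-\bP}=O_{P}(\sqrt{n\rho})$, and simplifying with $\kappa(\bTheta^{T}\bGamma^{2}\bTheta)=\Theta(1)$ and $\alpha_{0}=O(1)$ should give $\ThetaErrorDCMMSBtype$, on the intersection of the high-probability events of Theorems~\ref{thm:row_wise_vvt1} and~\ref{thm:M_row_bound} and the spectral bound, of probability $1-O(Kn^{-2})$.

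I expect the main obstacle to be step (ii): because Algorithm~\ref{algo:inferCone} returns near-corners $C$ rather than the true pure nodes $\mathcal{I}$, one must show the near-corner degree $\bGamma_{C_{i}}$ cancels \emph{exactly} between $\hat{\bN}_{C}$ and $(\hat{\bV}\hat{\bE}\hat{\bV}^{T})_{C_{i},C_{i}}$, and that the two residuals — the ``not-exactly-pure'' error of Lemma~\ref{lem:nearCorner} and the eigen-perturbation error, the latter requiring an entrywise rather than operator-norm bound on $\hat{\bV}\hat{\bE}\hat{\bV}^{T}-\bP$ — stay small relative to $\bD^{*}_{ii}$. Propagating these through the square roots, letting the orthogonal alignment between $\hat{\bV}$ and $\bV$ drop out of the quadratic forms, and tracking all $K$- and $\gamma$-powers so as to land exactly on the stated rate is the delicate part; the remaining steps are routine matrix perturbation.
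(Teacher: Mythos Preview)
Your overall plan is correct and matches the paper: population identity $\bTheta=(\bF^*)^{-1}\bM\bD^*$, three block estimates for $\hat\bM$, $\hat\bD$, $\hat\bF$, then a row-wise telescoping. The difference is entirely in how you handle step (ii), and your self-identified obstacle there is real but self-inflicted.

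You factor $\hat\bD_{ii}^2=(\hat\bN_C)_{ii}^2\,(\hat\bV\hat\bE\hat\bV^T)_{C_i,C_i}$ and then try to control each factor separately, which forces you to worry about an \emph{entrywise} bound on $\hat\bV\hat\bE\hat\bV^T-\bP$ (since $(\hat\bN_C)_{ii}^2$ is of order $n$) and about the index mismatch $C_i\neq\mathcal I_{\pi(i)}$. The paper avoids both issues by not separating the normalization: it writes
\[
d_i^2=\be_i^T\bY_P\,\bV\bE\bV^T\,\bY_P^T\be_i,\qquad
\hat d_{\pi(i)}^2=\be_i^T\bpi^T\hat\bY_C\,\hat\bV\hat\bE\hat\bV^T\,\hat\bY_C^T\bpi\be_i,
\]
and telescopes directly on $\bY_P\to\bpi^T\hat\bY_C$ and $\bV\bE\bV^T\to\hat\bV\hat\bE\hat\bV^T$. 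The sandwiching vectors $\be_i^T\bY_P$ and $\be_i^T\bpi^T\hat\bY_C$ have unit norm, so the crude operator-norm bound $\|\hat\bV\hat\bE\hat\bV^T-\bV\bE\bV^T\|\le 2\|\bA-\bP\|=O(\sqrt{\rho n})$ is \emph{not} amplified, and the near-corner versus true-corner discrepancy is handled in one stroke by Lemma~\ref{lem:yp_error}, $\|\hat\bY_C-\bpi\bY_P\|_F\le\epsilon_4$, rather than via Lemma~\ref{lem:nearCorner}. This gives $|d_i^2-\hat d_{\pi(i)}^2|\le O(\rho n)\epsilon_4/\sqrt K+O(\sqrt{\rho n})$, then divide by $d_i+\hat d_{\pi(i)}\ge c\sqrt{\rho\lambda_K(\bTheta^T\bGamma^2\bTheta)}$.

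So: no entrywise bound is needed, no cancellation of $\bGamma_{C_i}$ has to be argued, and step (ii) becomes routine. Also, the $\hat\bD$ term is not dominated by the $\hat\bM$ term as you conjecture; both carry $\epsilon$ (through $\epsilon_4$) and both survive to the final rate. Everything else in your plan goes through as written.
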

\begin{rem} 
	The error bound is small when the clusters are well separated (large $\lambda^*(\bB)$), the network is dense (large $\rho$), there are few blocks (small $K$), and the membership vectors $\bTheta$ are drawn from a balanced Dirichlet distribution (small $\nu$, and hence small $\kappa(P)$), which leads to balanced block sizes.
\end{rem}
\begin{rem}
	For DCMMSB-type models,  $\eta\geq\frac{\gamma_{\min}^2\min_i(\be_i^T\bTheta^T\bone)}{{\lambda_1(\bTheta^T\bGamma^2 \bTheta)}}$. Also, under the conditions of Theorem~\ref{thm:row_wise_vvt1}, $\eta\geq\frac{\gamma_{\min}^2}{3\nu\gamma_{\max}^2}$ with high probability. Proofs are in \supp.
\end{rem}
Observe that these are {\bf per-node error bounds}, as against a simpler bound on $\|\bTheta-\hat{\bTheta}\|$.
Clearly, the same results extend to the special case of 
the Mixed Membership Stochastic Blockmodel~\cite{airoldi2008mixed} and the Stochastic Blockmodel~\cite{holland_stochastic_1983} 
as well (the assumption of equal diagonal entries of $\bB$ is no longer needed, since $\bGamma_{ii}=1$ is enough for parameter identifiability~\cite{mao2017estimating}).

\subsection{Topic Models}
Let $\bT\in\mathbb{R}_{\geq 0}^{V\times K}$ be a matrix of the word to topic probabilities with unit column sum, and let $\bH\in\mathbb{R}_{\geq 0}^{K\times D}$ be the topic to document matrix. 
Then $\W:=\bT\bH$ is the probability matrix for words appearing in documents.
The actual counts of words in documents are assumed to be generated iid as $\w_{ij}\sim \text{Binomial}(N,\W_{ij})$ for $i\in [V], j\in [D]$.

The word co-occurrence probability matrix is given by $\W\W^T/D = \bT (\bH\bH^T/D) \bT^T$.
Setting $\bGamma_{ii}=\|\bT(i,:)\|_1$, $\bTheta=\bGamma^{-1}\bT$, and $\bB=\bH\bH^T/D$, we find that $\W\W^T/D = \bGamma\bTheta\bB\bTheta^T\bGamma$ with $\bTheta\bone=\bone$.
This clearly matches the form of $\bP$ in the DCMMSB model.
Hence, its eigenvector matrix has the desired conic structure with weight matrix $\bM = \bT\bGamma_P^{-1}\bN_P^{-1}$, with the ``pure nodes'' being {\em anchor words} that only occur in a single topic.
We now show that the row-wise error between the empirical and population eigenvector matrices decays with increasing number of documents $D$ and number of words in a document $N$.

\begin{assumption}\label{assumption:topic}
	Let $g_{ik}=\be_i^T\W\W^Te_k$. We assume that when it is not zero, it goes to infinity, in particular, $g_{ik}\geq N\log \gikD$, which gives $D/N\rightarrow \infty$. We also assume that $\lambda_i(\bH\bH^T)=\Theta(D)$,  for $i\in [K]$, and $\kappa(\bT\bT^T)=\Theta(1)$.
\end{assumption}
These assumptions are similar to ones made in other theoretical literature on topic models~\cite{ke2017topic}.

We will construct a matrix $\w_1\w_2^T$, where $\w_1$ and $\w_2$ are obtained by dividing the words in each document uniformly randomly in two equal parts. This ensures that $\uE[\w_1\w_2^T]=\W\W^T$, which in turn helps establishing concentration of empirical singular vectors as shown in the following lemma. For simplicity denote $N_1=N/2$.
\begin{lem}[Small row-wise error in Topic Models]\label{lem:topic_eigen_bound}
\label{lem:topic_eigen_bound1}
Let $\hat{\bV}$ denote the matrix of the top-$K$ singular vectors of $\bU=\w_1\w_2^T/N_1^2$, 
and let the population counterpart of this be $\bV$. 
Let $\bv_i=\bV^T\be_i$, $\hvv_i=\hv^T\be_i$, $\trowP_i=\bV\bv_i/\|\bV\bv_i\|$, and $\trowS_i=\hv\hvv_i/\|\hv\hvv_i\|$. 
Under Assumption~\ref{assumption:topic}, we have:
\bas{
	\epsilon=\max_i \|\trowP_i-\trowS_i\| = \topicRowwiseEigenspaceBoundSimplified
}
with probability at least $1-O(1/D^2)$.
\end{lem}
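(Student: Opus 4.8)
The plan is to reduce the normalized row-wise error to an operator-norm plus a row-wise ($\ell_{2,\infty}$) singular-subspace perturbation bound for $\bU=\w_1\w_2^T/N_1^2$, and to obtain both from matrix/vector Bernstein inequalities that use the independence of the $D$ documents. First, record the structure. Since $\uE[\bU]=\W\W^T=\bT\bH\bH^T\bT^T$ is symmetric, positive semidefinite, and of rank $K$, its top-$K$ eigenvectors are exactly the population $\bV$ of the statement, and $\W\W^T=D\,\bGamma\bTheta\bB\bTheta^T\bGamma$ with $\bGamma_{ii}=\|\be_i^T\bT\|_1$, $\bTheta=\bGamma^{-1}\bT$, $\bB=\bH\bH^T/D$, so Lemma~\ref{lem:vtheta} gives $\bV=\bGamma\bTheta\bGamma_P^{-1}\bV_P$. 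Under Assumption~\ref{assumption:topic} ($\lambda_i(\bH\bH^T)=\Theta(D)$, $\kappa(\bT\bT^T)=\Theta(1)$) this yields $\lambda_i(\W\W^T)=\Theta(D\,\lambda_i(\bT^T\bT))$, hence a spectral gap of order $D\,\lambda_K(\bT^T\bT)$, and (using $\|\be_i^T\bTheta\|_1=1$) the row-norm lower bound $\|\be_i^T\bV\|=\Omega(\|\be_i^T\bT\|_1\sqrt{\lambda_K(\bT^T\bT)/\lambda_1(\bT^T\bT)})$; it is the smallest such norm over $i$ that supplies the $1/(\min_i\|\be_i^T\bT\|_1\sqrt{\lambda_K(\bT^T\bT)})$ prefactor in the target.

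Second, let $\oh$ be the orthogonal matrix aligning $\hat\bV$ with $\bV$. Using $\hat\bV\hat\bV^T=(\hat\bV\oh)(\hat\bV\oh)^T$, $\|\bV\bV^T\be_i\|=\|\be_i^T\bV\|$, and the reverse triangle inequality for $\bx\mapsto\bx/\|\bx\|$, one gets
\[
\|\trowP_i-\trowS_i\|\ \le\ \frac{2}{\|\be_i^T\bV\|}\Big(\|\be_i^T(\hat\bV\oh-\bV)\|+\opnorm{\hat\bV\oh-\bV}\,\|\be_i^T\bV\|\Big),
\]
so it remains to bound $\opnorm{\hat\bV\oh-\bV}$ and the row-wise error $\|\be_i^T(\hat\bV\oh-\bV)\|$. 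Writing $\Delta_j=\w_j-N_1\W$, a direct expansion gives $\bU-\W\W^T=\tfrac1{N_1}\W\Delta_2^T+\tfrac1{N_1}\Delta_1\W^T+\tfrac1{N_1^2}\Delta_1\Delta_2^T$, where each $\Delta_j$ has independent columns (one per document), each column being a centered multinomial-type vector. Matrix Bernstein applied column-by-column bounds the spectral norm of each piece (the two linear pieces dominating) and gives $\opnorm{\bU-\W\W^T}=\tilde O_P(\sqrt{\opnorm{\W\W^T}\log\gikD/N})$, and then Wedin's $\sin\Theta$ theorem yields $\opnorm{\hat\bV\oh-\bV}\lesssim\opnorm{\bU-\W\W^T}/(D\lambda_K(\bT^T\bT))$. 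For the row-wise error I will use the eigenvector/leave-one-out machinery behind the row-wise perturbation results cited in the introduction (in the spirit of \cite{mao2017estimating,ke2017topic}): from $\hat\bV=\bU\hat\bW\hat\bSigma^{-1}$ (SVD of $\bU$) one derives an inequality of the form
\[
\|\be_i^T(\hat\bV\oh-\bV)\|\ \lesssim\ \frac{\|\be_i^T(\bU-\W\W^T)\bV\|}{D\lambda_K(\bT^T\bT)}+\frac{\opnorm{\bU-\W\W^T}}{D\lambda_K(\bT^T\bT)}\Big(\|\be_i^T\bV\|+\max_j\|\be_j^T\bV\|\cdot\tfrac{\opnorm{\bU-\W\W^T}}{D\lambda_K(\bT^T\bT)}\Big),
\]
reducing matters to a sharp bound on the $K$-vector $\be_i^T(\bU-\W\W^T)\bV=\sum_d(\text{contribution of doc }d)$; vector Bernstein applies, with per-coordinate variance controlled by the $g_{ik}$'s (weighted by $\|\bV\|_{\max}^2$, which Lemma~\ref{lem:vtheta} and Assumption~\ref{assumption:topic} control) and the $\sqrt K$ in the final rate accounting for the vector dimension, giving $\|\be_i^T(\bU-\W\W^T)\bV\|=\tilde O_P(\sqrt{\opnorm{\W\W^T}K\log\gikD/N})$.

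The quantitative crux is Assumption~\ref{assumption:topic}'s requirement $g_{ik}\ge N\log\gikD$ whenever $g_{ik}\neq 0$: it forces the ratio of the relevant higher moments of $\w_{1,id}\w_{2,kd}$ to their variance to be $O(1)$, so the Bernstein tails are governed by their sub-Gaussian part at scale $\log\gikD$, the quadratic piece $\tfrac1{N_1^2}\Delta_1\Delta_2^T$ and all higher-order corrections are negligible against the leading terms, and a union bound over the $V\le\gikD$ rows costs only a constant in the exponent, giving probability $1-O(1/D^2)$. Plugging $\opnorm{\W\W^T}=\Theta(D\lambda_1(\bT^T\bT))$ and the two concentration bounds into the two displays, dividing by the spectral gap and by the row-norm lower bound, and simplifying with $\kappa(\bH\bH^T)=\Theta(1)$, $\kappa(\bT^T\bT)=\Theta(1)$, all terms collapse to $\frac{1}{\min_i\|\be_i^T\bT\|_1\sqrt{\lambda_K(\bT^T\bT)}}O_P(\sqrt{K\log\gikD/(DN)})$, which is the claimed bound (and equals $\frac{\kappa(\bH\bH^T)\kappa(\bT^T\bT)}{\lambda_K(\bT^T\bT)}O_P(\sqrt{K\log\gikD/(DN)})$ up to constants).

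I expect the row-wise part --- the second display together with the variance bookkeeping --- to be the main obstacle. The entries of $\w_1\w_2^T$ are products of count variables that are only \emph{conditionally} (given the per-document totals) uncorrelated rather than independent, and are moderately heavy-tailed, so one must condition on the document totals, control cross-moments $\uE[\w_{1,id}^a\w_{2,kd}^b]$ in terms of $g_{ik}$ and the $\W_{id}$'s, and verify that $g_{ik}\ge N\log\gikD$ is strong enough to drive every error term below the target rate; packaging this entrywise control into a row-wise (rather than merely operator-norm) singular-subspace perturbation bound is where essentially all the technical effort lies.
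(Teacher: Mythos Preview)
Your proposal is sound and would succeed, but it takes a substantially harder route than the paper. The paper does \emph{not} use any row-wise ($\ell_{2,\infty}$) singular-subspace perturbation or leave-one-out machinery. Instead it simply bounds
\[
\epsilon_0=\max_i\|\be_i^T(\hat{\bV}\hat{\bV}^T-\bV\bV^T)\|\ \le\ \|\hat{\bV}\hat{\bV}^T-\bV\bV^T\|\ \le\ 2\|\hat{\bV}\bO-\bV\|_F,
\]
and controls the right-hand side by Davis--Kahan applied to $\bR=\bU-\W\W^T$, after obtaining $\|\bR\|$ and $\|\bR\|_F$ via (matrix and scalar) Bernstein. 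The normalization step $\|\trowP_i-\trowS_i\|\le 2\epsilon_0/\|\bv_i\|$ then produces the $1/(\min_j\|\be_j^T\bT\|_1\sqrt{\lambda_K(\bT^T\bT)})$ prefactor, and Assumption~\ref{assumption:topic} absorbs the condition numbers. In other words, the stated bound already carries $1/\min_j\|\be_j^T\bT\|_1$ precisely because the paper is content to replace the per-row error by the uniform operator-norm bound; nothing sharper is needed.

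Thus the part you flag as the ``main obstacle'' --- the second display and the vector-Bernstein/cross-moment bookkeeping --- is avoidable here. Your approach would in principle yield a genuinely sharper per-word bound (one that scales with $\|\be_i^T\bT\|_1$ instead of $\min_j\|\be_j^T\bT\|_1$, i.e., moving the $1/\min_j\|\be_j^T\bT\|_1$ inside the per-row statement), analogous to what the paper \emph{does} carry out for the network case via Theorem~\ref{thm:row_wise_vvt}. But for the topic-model lemma as stated, the crude operator-norm argument suffices and is considerably shorter.
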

Thus, Algorithm~\ref{algo:inferCone} run on $\hat{\bV}\hat{\bV}^T$ (or equivalently, just $\hat{\bV}$) can be used to find the conic combination weights $\hat{\bM}\approx\bM$.
Since $\bM$ being the product of $\bT$ with a diagonal matrix where $\bT$ has unit column sum, we can extract $\hat{\bT}=\hat{\bM}\hat{\bD}^{-1}$, where $\hat{\bD}$ is a diagonal matrix with $\hat{\bD}_{ii}=\|\hat{\bM}(:,i)\|_1$.
\begin{thm}[Consistent inference of word-topic probabilities for each word]\label{thm:topic_error_bound}
Under Assumption~\ref{assumption:topic}, there exists a permutation matrix $\bpi$ such that 
\bas{
	 \frac{\|\be_i^T(\hT-\bT\bpi^T)\|}{\|\be_i^T\bT\|}=
		\topicTbounds
}with probability at least $1-O(1/D^2)$.
\end{thm}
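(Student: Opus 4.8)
The argument is a chain of three ingredients: the row-wise concentration of the empirical singular vectors (Lemma~\ref{lem:topic_eigen_bound1}), the guarantee for Algorithm~\ref{algo:inferCone} in terms of the input perturbation $\epsilon$ (Theorem~\ref{thm:M}), and a deterministic passage from the conic-weight estimate $\hat{\bM}$ to the word--topic estimate $\hat{\bT}$. Since $\W\W^T/D=\bGamma\bTheta\bB\bTheta^T\bGamma$ with $\bGamma_{ii}=\|\be_i^T\bT\|_1$, $\bTheta=\bGamma^{-1}\bT$, $\bB=\bH\bH^T/D$ has exactly the DCMMSB form (and Assumption~\ref{assume1} holds: anchor words exist, $\bT$ has no zero row, $\bH\bH^T$ is full rank by Assumption~\ref{assumption:topic}), Lemma~\ref{lem:vtheta} and Theorem~\ref{thm:condpopHolds} apply to its rank-$K$ eigenvector matrix $\bV$ --- which is the population counterpart of the top-$K$ singular vectors of $\bU=\w_1\w_2^T/N_1^2$ --- to its row-normalized version $\bY$, and to the resulting cone, whose weight matrix is $\bM=\bT\bGamma_P^{-1}\bN_P^{-1}$. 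In particular $\bT(:,j)=\bM(:,j)/\|\bM(:,j)\|_1$ because $\bT$ has unit column sums, and this is exactly the identity that the post-processing of Algorithm~\ref{algo:inferCone} inverts when it sets $\hat{\bT}=\hat{\bM}\hat{\bD}^{-1}$ with $\hat{\bD}_{jj}=\|\hat{\bM}(:,j)\|_1$; running the algorithm on $\hat{\bV}$ is equivalent to running it on $\hat{\bV}\hat{\bV}^T$.

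Next I would turn the qualitative Condition~\ref{cond:pop} (guaranteed by Theorem~\ref{thm:condpopHolds}) into the quantitative Condition~\ref{cond:emp} required by Lemmas~\ref{lem:bw}--\ref{lem:Kclusters} and Theorem~\ref{thm:M}. Writing $\bV_P=\bN_P^{-1}\bY_P$ and using $\be_{a_j}^T\bV=\be_j^T\bV_P$ for the anchor word $a_j$ of topic $j$ (a consequence of Lemma~\ref{lem:vtheta} and $\bGamma_{a_ja_j}=\bGamma_{P,jj}$), Assumption~\ref{assumption:topic} ($\kappa(\bT\bT^T)=\Theta(1)$ and $\lambda_i(\bH\bH^T)=\Theta(D)$) forces $\lambda_K(\tmtxPp\tmtxPp^T)$, $\kappa(\tmtxPp\tmtxPp^T)$ and each product $\bGamma_{P,jj}\bN_{P,jj}=\|\bM(:,j)\|_1^{-1}$ to be $\Theta(1)$ up to factors of $\|\be_j^T\bT\|_1$; this yields a positive lower bound on $\eta$ (exactly as in the DCMMSB remark) and the bound $\zeta=O\!\big(K/(\lambda_K(\tmtxPp\tmtxPp^T))^{1.5}\big)$. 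Feeding in $\epsilon=\topicRowwiseEigenspaceBoundSimplified$ from Lemma~\ref{lem:topic_eigen_bound1} and using $g_{ik}\geq N\log\gikD$ (hence $D/N\to\infty$), I would verify that $\epsilon$ satisfies the smallness requirement of Lemma~\ref{lem:Kclusters}, so that Algorithm~\ref{algo:inferCone} returns exactly $K$ clusters and Theorem~\ref{thm:M} applies.

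Theorem~\ref{thm:M} (with $\bZ=\bV$) then gives, for every word $i$, $\|\be_i^T(\bM-\hat{\bM}\bpi)\|\leq \dfrac{c_M\,\kappa(\tmtxPp\tmtxPp^T)\,K\zeta}{(\lambda_K(\tmtxPp\tmtxPp^T))^{2.5}}\,\|\be_i^T\bM\|\,\epsilon$. To reach $\bT$, I would first sum this over all words to control the column $\ell_1$ norms, $\big|\|\hat{\bM}(:,\pi(j))\|_1-\|\bM(:,j)\|_1\big|\leq\sum_i\|\be_i^T(\hat{\bM}\bpi-\bM)\|$, which shows $\hat{\bD}_{\pi(j)\pi(j)}=(1+o(1))\|\bM(:,j)\|_1$ --- in particular the columns of $\hat{\bM}$ stay bounded away from $0$, so $\hat{\bD}^{-1}$ is well defined. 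Then, expanding the difference of the $\ell_1$-normalized columns of $\hat{\bM}$ and $\bM$ as a numerator-perturbation term plus an (original vector)$\times$(relative denominator change) term, taking the row-$\ell_2$ norm over $j$, and bounding the first piece by $\|\be_i^T(\hat{\bM}\bpi-\bM)\|/\min_j\|\bM(:,j)\|_1$ and the second by $\|\be_i^T\bT\|$ times the uniform relative column-norm error, I obtain $\|\be_i^T(\hat{\bT}\bpi^T-\bT)\|$ in terms of $\|\be_i^T\bM\|$, $\min_j\|\bM(:,j)\|_1$, $\epsilon$, $\zeta$, $\kappa$ and $\lambda_K$. Finally I substitute $\|\be_i^T\bM\|\leq\|\be_i^T\bT\|\max_j\|\bM(:,j)\|_1$, re-express $\max_j\|\bM(:,j)\|_1$ and $\min_j\|\bM(:,j)\|_1$ through $\max_j\|\be_j^T\bT\|_1$ and $\min_j\|\be_j^T\bT\|_1$ via the anchor-word identities above, absorb the $\Theta(1)$ spectral factors, divide by $\|\be_i^T\bT\|$, and collect the powers of $K$ and the $\epsilon$ bound to arrive at $\topicTbounds$; the failure probability $O(1/D^2)$ is inherited from Lemma~\ref{lem:topic_eigen_bound1}.

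\textbf{Main obstacle.} The delicate part is the last reduction. The error is measured relative to the word's true topic mass $\|\be_i^T\bT\|$, so every auxiliary quantity --- the diagonal $\bGamma_P\bN_P$, the column $\ell_1$ norms of $\bM$, and the factor $\|\be_i^T\bM\|$ produced by Theorem~\ref{thm:M} --- must be converted, uniformly over all words and topics, into bounds involving only $\max_j\|\be_j^T\bT\|_1$, $\min_j\|\be_j^T\bT\|_1$, $K$ and $\eta$, and one must keep careful track of which powers of $\min_j\|\be_j^T\bT\|_1$ accumulate: one power enters through $\epsilon$ in Lemma~\ref{lem:topic_eigen_bound1} and a second through the column-renormalization step, and it is exactly this square that appears in the denominator of the stated bound. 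A secondary point is that the stability of $\hat{\bD}^{-1}$ (columns of $\hat{\bM}$ bounded away from $0$) must itself be derived from the same per-word estimate, which is why the smallness of $\epsilon$ guaranteed by Lemma~\ref{lem:Kclusters} has to be in place before the renormalization argument can even begin.
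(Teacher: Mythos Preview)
Your proposal is correct and follows the same route as the paper: row-wise eigenvector concentration (Lemma~\ref{lem:topic_eigen_bound1}) feeds into the general cone-recovery guarantee (Theorem~\ref{thm:M}), and then the column renormalization $\hat{\bT}=\hat{\bM}\hat{\bD}^{-1}$ is handled by the standard ``numerator perturbation plus denominator perturbation'' decomposition, exactly as the paper does.

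Two small bookkeeping remarks. First, Theorem~\ref{thm:M} as stated bounds the row error in terms of $\|\be_i^T\bZ\|=\|\bv_i\|$, not $\|\be_i^T\bM\|$; the two are equivalent up to $\Theta(1)$ factors under Assumption~\ref{assumption:topic} via Lemma~\ref{lem:v_norm_bound}, but you should make that substitution explicit. Second, and this is the one place a careless execution can lose a factor of $V$: when you sum the row errors to control $|\hat{\bD}_{\pi(j)\pi(j)}-\bD_{jj}|$, the reason the sum is $O(K)$ rather than $O(V)$ is precisely the identity $\sum_i \|\be_i^T\bT\|_1 = K$ (columns of $\bT$ sum to one). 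The paper exploits this by writing $\sum_i\|\be_i^T(\bM-\hat{\bM}\bpi)\|_1 = \sum_i \|\be_i^T\bT\|_1\cdot\tfrac{\|\be_i^T(\bM-\hat{\bM}\bpi)\|_1}{\|\be_i^T\bT\|_1}\leq K\max_i\tfrac{\|\be_i^T(\bM-\hat{\bM}\bpi)\|_1}{\|\be_i^T\bT\|_1}$ and then cancelling the $\gamma_i=\|\be_i^T\bT\|_1$ hidden in $\epsilon_{M,i}$; your sketch implies the same cancellation but does not state it, and it is worth naming since it is exactly what turns the per-word bound into something independent of the vocabulary size.
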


\begin{rem}
	For topic models,
	$\eta\geq {\min_i\|\be_i^T\bT\|_1}/{{\lambda_1(\bT^T \bT)}} \geq {\min_i\|\be_i^T\bT\|_1}/{{K}}$. Proofs are in \supp.
\end{rem}

\section{Experiments}
\label{sec:exp}
We ran experiments on simulated and real-world datasets to verify the accuracy and scalability of \svmcone.
We compared \svmcone against several competing baselines.
For {network models}, 
{\bf GeoNMF} detects the corners of a simplex formed by the MMSB model by constructing the graph Laplacian and picking nodes that have large norms in the Laplacian~\cite{mao2017}.
It assumes balanced communities (i.e., the rows of $\bTheta$ are drawn from a Dirichlet with identical community weights).
{\bf SVI} uses stochastic variational inference for MMSB~\cite{gopalan2013efficient}.
{\bf BSNMF}~\cite{BNMF2011} presents a Bayesian approach to Symmetric Nonnegative Matrix Factorization; it can be applied to do inference for MMSB models with $\bB=c\bI$ where $c\in[0,1]$.
{{\bf OCCAM} works on a variant of MMSB where each row of $\bTheta$ has unit $\ell_2$ norm, and the model allows for degree heterogeneity~\cite{zhang2014detecting}.}
{\bf SAAC}~\cite{kaufmann2016spectral} uses alternating optimization on a version of the stochastic blockmodel where each node can be a member of multiple communities, but the membership weight is binary. 
{For topic models, \textbf{RecoverL2} \cite{arora2013practical} uses a combinatorial algorithm to pick anchor words from the word co-occurrence matrix and then recovers the word-topic vectors by optimizing a quadratic loss function. \textbf{TSVD} \cite{bansal2014provable} uses a thresholded SVD based procedure to recover the topics. 
\textbf{GDM} \cite{yurochkin2016geometric} is a geometric algorithm that involves a weighted clustering procedure augmented with geometric corrections.}
We could not obtain the code for~\cite{jin2017estimating,ke2017topic}.

\subsection{Networks with overlapping communities}
In this section, we present experiments on simulated and large real networks.
\subsubsection{Simulations}
We test the recovery of population parameters $(\bTheta, \bB)$ given adjacency matrices $\bA$ generated from the corresponding population matrices $\bP$ ($\bGamma$ are nuisance parameters).
We generate networks with $n=5000$ nodes and $K=3$ communities.
The rows of $\bTheta$ are drawn from Dirichlet$(\bm{\alpha})$ for DCMMSB and OCCAM; for DCMMSB, $\balpha=(1/3,1/3,1/3)$; for OCCAM, $\balpha=(1/6,1/6,1/6)$ and the rows are normalized to have unit $\ell_2$ norm.
We set $\bB_{ii}=1$ and $\bB_{ij}=0.1$ for all $i\neq j$.
The default degree parameters for DCMMSB are as follows: 
for all nodes $i$ that are predominantly in the $j$-th community ($\theta_{ij}>0.5$), 
we set $\bGamma_{ii}$ to $0.3$, $0.5$, and $0.7$ for the 3 respective communities; all other nodes have $\bGamma_{ii}=1$.
For OCCAM, we draw degree parameters from a $\mathrm{Beta}(1,3)$ distribution.
\begin{figure}[!t]
  \centering
	  \begin{subfigure}[b]{0.45\textwidth}
	    \includegraphics[width=\textwidth]{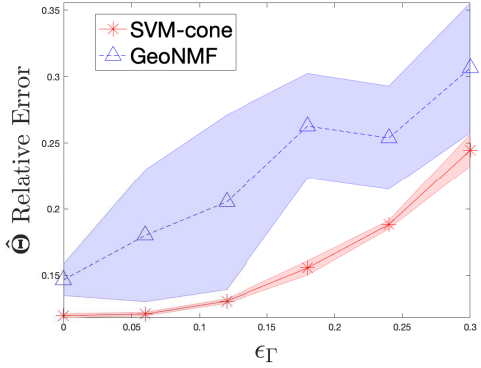}
	    \caption{\scriptsize Varying degree heterogeneity for DCMMSB}
	    \label{sim:gamma}
	  \end{subfigure}
	\begin{subfigure}[b]{0.45\textwidth}
		\includegraphics[width=\textwidth]{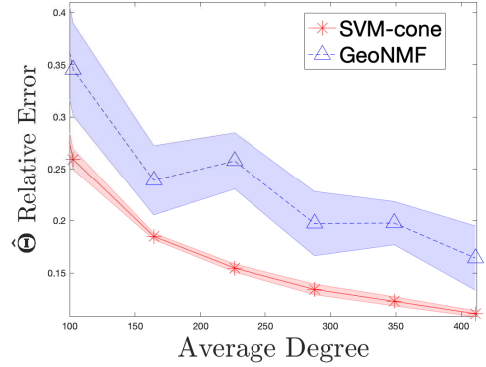}
		\caption{\scriptsize Varying sparsity for DCMMSB}
		\label{sim:rho}
	\end{subfigure}
	\\
	\begin{subfigure}[b]{0.45\textwidth}
		\includegraphics[width=\textwidth]{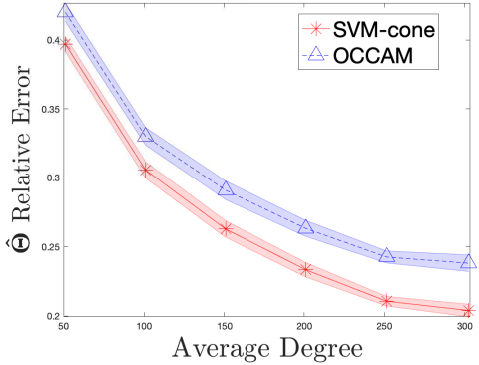}
		\caption{\scriptsize Varying sparsity for OCCAM model}
		\label{sim:occam}
	\end{subfigure}
	\begin{subfigure}[b]{0.45\textwidth}
		\includegraphics[width=\textwidth]{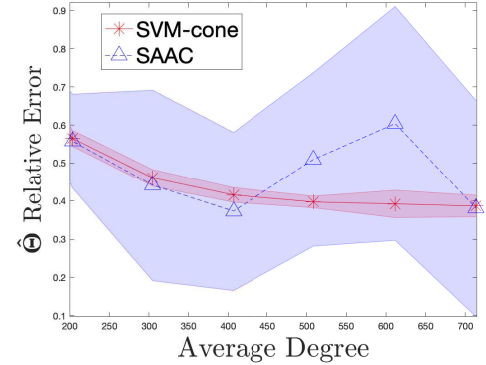}
		\caption{\scriptsize Varying sparsity for OSBM}
		\label{sim:saac}
	\end{subfigure}
  \caption{Relative error in estimation of community memberships: Plots (a) and (b) compare \svmcone against the closest baseline (GeoNMF) on the degree-corrected MMSB model. We then compare against (c) OCCAM and (d) SAAC on networks drawn from their generative models.}
  \label{fig:sim}
\end{figure}

\noindent

\noindent
{\bf Varying degree parameters $\bGamma$:} We set the degree parameters for predominant nodes in the 3 communities as $0.5+\epsilon_\Gamma$, $0.5$, and $0.5+\epsilon_\Gamma$ respectively. Figure~\ref{sim:gamma} shows \svmcone outperforms GeoNMF consistently for all choices of $\epsilon_\Gamma$.

\noindent
{\bf Varying network sparsity $\rho$:}
Figure~\ref{sim:rho} shows the relative error in estimating $\bTheta$ as a function of the network sparsity $\rho$.
Increasing $\rho$ increases the average degree of nodes in the network without affecting the skew induced by their degree parameters $\bGamma$.
As expected, all methods tend to improve with increasing degree.
Our method dominates GeoNMF over the entire range of average degrees.
Figures~\ref{sim:occam} and \ref{sim:saac} show results for networks generated under the models used by OCCAM and SAAC respectively.
\svmcone is comparable or better than these methods even on their generative models.
The smaller error bars on \svmcone show that it is more stable than SAAC. 

\subsubsection{Real-world experiments}
\begin{table*}[!t]
	\centering
	\tablefontsize
	\caption{Network statistics}
	\begin{subtable}{\textwidth}
		\centering
		\caption{DBLP coauthorship networks.}
	\begin{tabular}{|c|c|c|c|c|c|c|}

		\hline
		{\tablefontsize Dataset}  & {\tablefontsize DBLP1}     & {\tablefontsize DBLP2}    & {\tablefontsize DBLP3}   & {\tablefontsize DBLP4}    & {\tablefontsize DBLP5}  \\
		\hline
		\hline
		\tablefontsize $\#$ nodes $n$        & \tablefontsize 30,566    & \tablefontsize 16,817    & \tablefontsize 13,315    & \tablefontsize 25,481    & \tablefontsize  42,351       \\
		\hline
		\tablefontsize $\#$ communities $K$     & \tablefontsize 6         & \tablefontsize 3        & \tablefontsize 3        & \tablefontsize 3      & \tablefontsize   4        \\
		\hline
		\tablefontsize Average Degree                          & \tablefontsize 8.9 &  \tablefontsize 7.6 & \tablefontsize 8.5 &  \tablefontsize 5.2  &  \tablefontsize   6.8       \\
		\hline
		Overlap $\%$       & \tablefontsize 18.2 &  \tablefontsize 14.9 &\tablefontsize 21.1 &\tablefontsize 14.4 &\tablefontsize   18.5    \\
		\hline

	\end{tabular} 
	\label{tab:normal_DBLP}
	\end{subtable}%
	\\
	\vspace{1em}
	\begin{subtable}{\textwidth}
	\centering
	\caption{DBLP bipartite author-paper networks.}

	\begin{tabular}{|c|c|c|c|c|c|c|}

		\hline
		{\tablefontsize Dataset}  & {\tablefontsize DBLP1}     & {\tablefontsize DBLP2}    & {\tablefontsize DBLP3}   & {\tablefontsize DBLP4}    & {\tablefontsize DBLP5}  \\
		\hline
		\hline
		\tablefontsize $\#$ nodes $n$        & \tablefontsize 103,660    & \tablefontsize 50,699    & \tablefontsize 42,288    & \tablefontsize 53,369    & \tablefontsize    81,245       \\
		\hline
		\tablefontsize $\#$ communities $K$     & \tablefontsize 12         & \tablefontsize 
		6        & \tablefontsize 6        & \tablefontsize 6    & \tablefontsize  8        \\
		\hline
		\tablefontsize Average Degree                          & \tablefontsize 3.4 &  \tablefontsize 3.4 & \tablefontsize 3.6 &  \tablefontsize 2.6  & \tablefontsize  3.0       \\
		\hline
		Overlap $\%$       & \tablefontsize 6.3 &  \tablefontsize 5.6 &\tablefontsize 5.7 &\tablefontsize 6.9   & \tablefontsize   9.7    \\
		\hline

	\end{tabular} 
	\label{tab:bipartite_DBLP}
	\end{subtable}%
	\label{table:net_stats}
	\vspace{0.5em}
\end{table*}
\sloppy
We tested \svmcone on large network datasets and word-document datasets.
For networks, we used the $5$ DBLP coauthorship networks\footnote{\url{http://www.cs.utexas.edu/˜xmao/coauthorship}} (used in~\cite{mao2017}, where each ground truth community corresponds to a group of conferences on the same topic. 
We also use bipartite author-paper variants for these $5$ networks. 
See Table~\ref{table:net_stats} for network statistics. 
Following~\cite{mao2017}, we evaluate results by the rank correlation between the predicted vector for {community $i$ against the true vector, averaged over all communities:}
${RC_{avg}(\hat{\bTheta}, \bTheta) = \frac{1}{K}\max_{\sigma}\sum_{i=1}^K RC(\hat{\bTheta}(:,i), \bTheta(:,\sigma(i)))}$, 
where $\sigma$ is a permutation over the $K$ communities.
We have $-1\leq RC_{avg}(\hat{\bTheta}, \bTheta)\leq 1$, with higher numbers implying a better match between $\hat{\bTheta}$ and $\bTheta$.
{We do not use metrics like NMI~\citep{strehl2002cluster} or ExNVI~\citep{zhang2014detecting} that require binary overlapping membership vectors to avoid thresholding issues on real-valued membership vectors.}

\begin{figure}[!t]
	\centering
	\begin{subfigure}[b]{0.49\textwidth}
		\includegraphics[width=\textwidth]{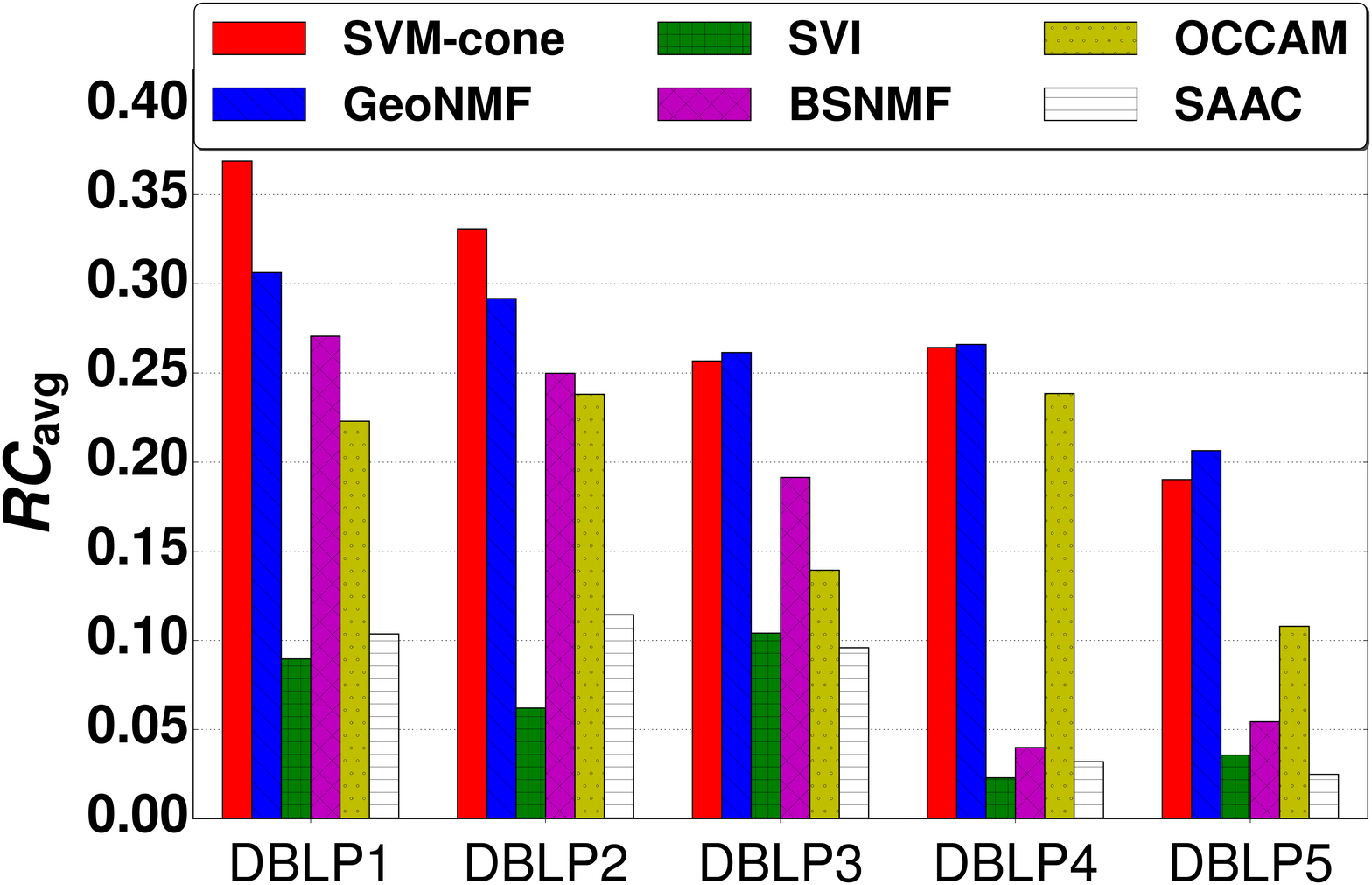}
		\caption{DBLP coauthorship}
		\label{dblp:coauth}
	\end{subfigure}
	\begin{subfigure}[b]{0.49\textwidth}
		\includegraphics[width=\textwidth]{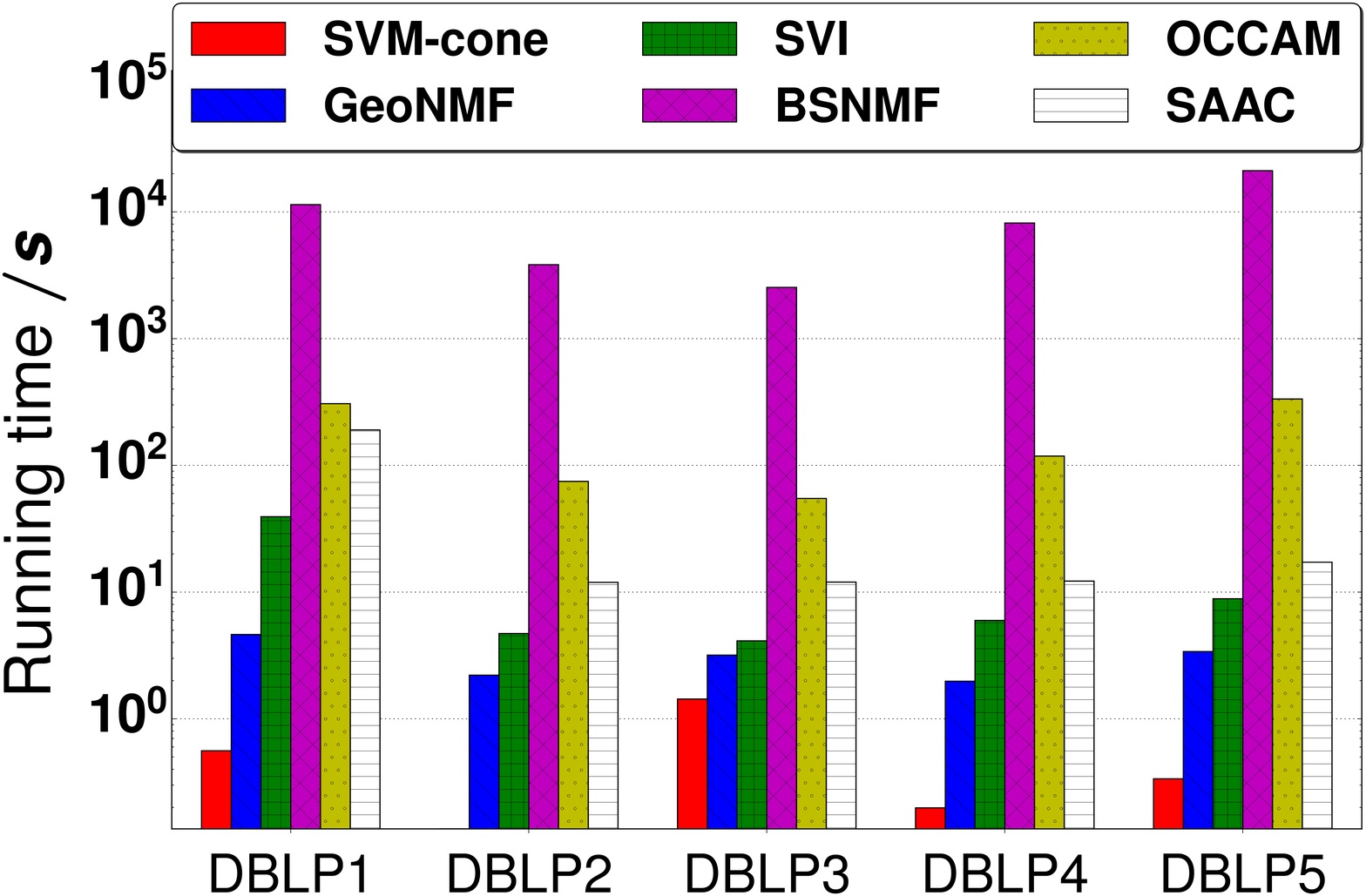}
		\caption{DBLP coauthorship wall-clock time}
		\label{dblp:time}
	\end{subfigure}
	\\
	\begin{subfigure}[b]{0.49\textwidth}
		\includegraphics[width=\textwidth]{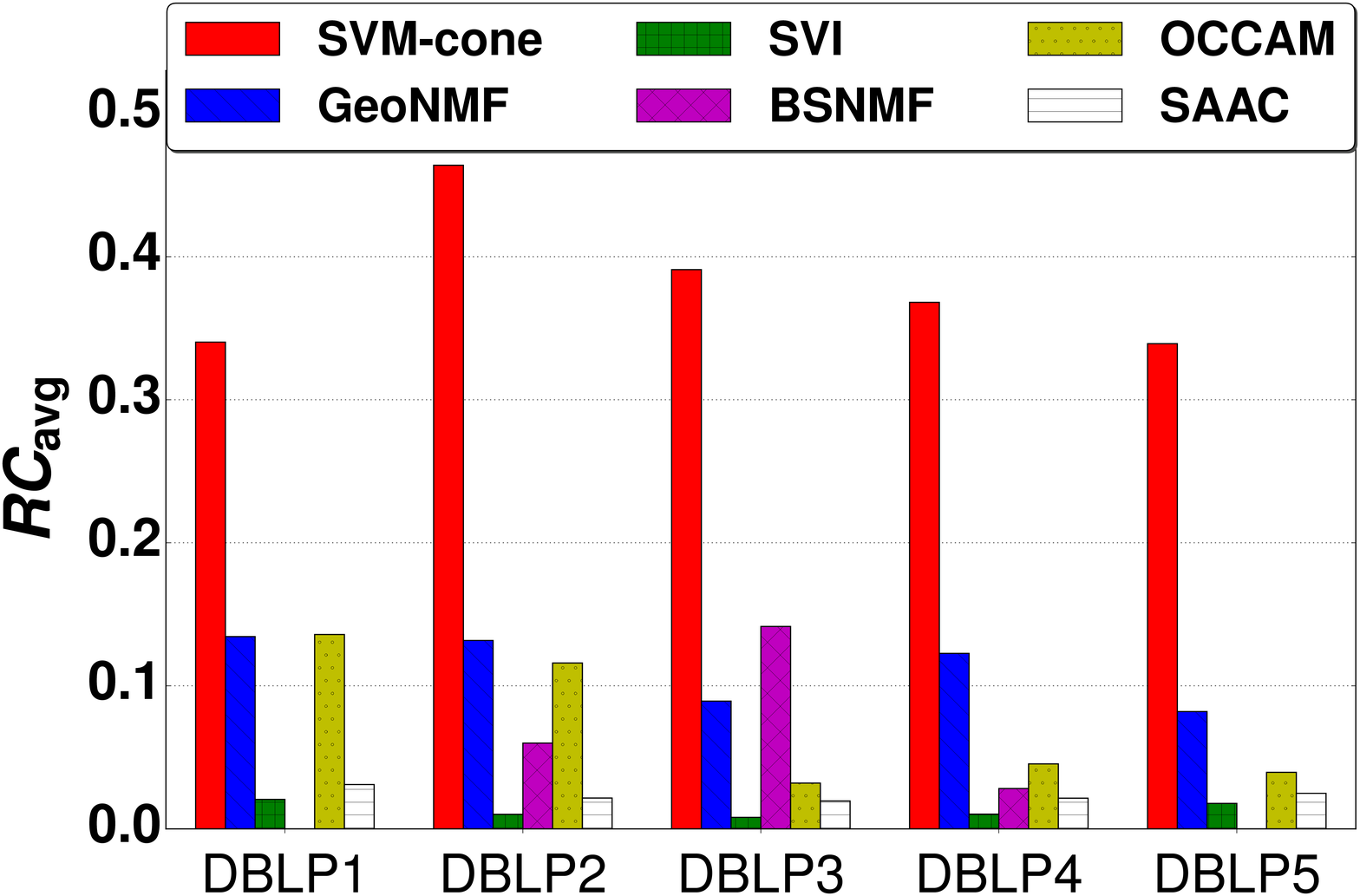}
		\caption{DBLP bipartite}
		\label{dblp:bip}
	\end{subfigure}
	\begin{subfigure}[b]{0.49\textwidth}
		\includegraphics[width=\textwidth]{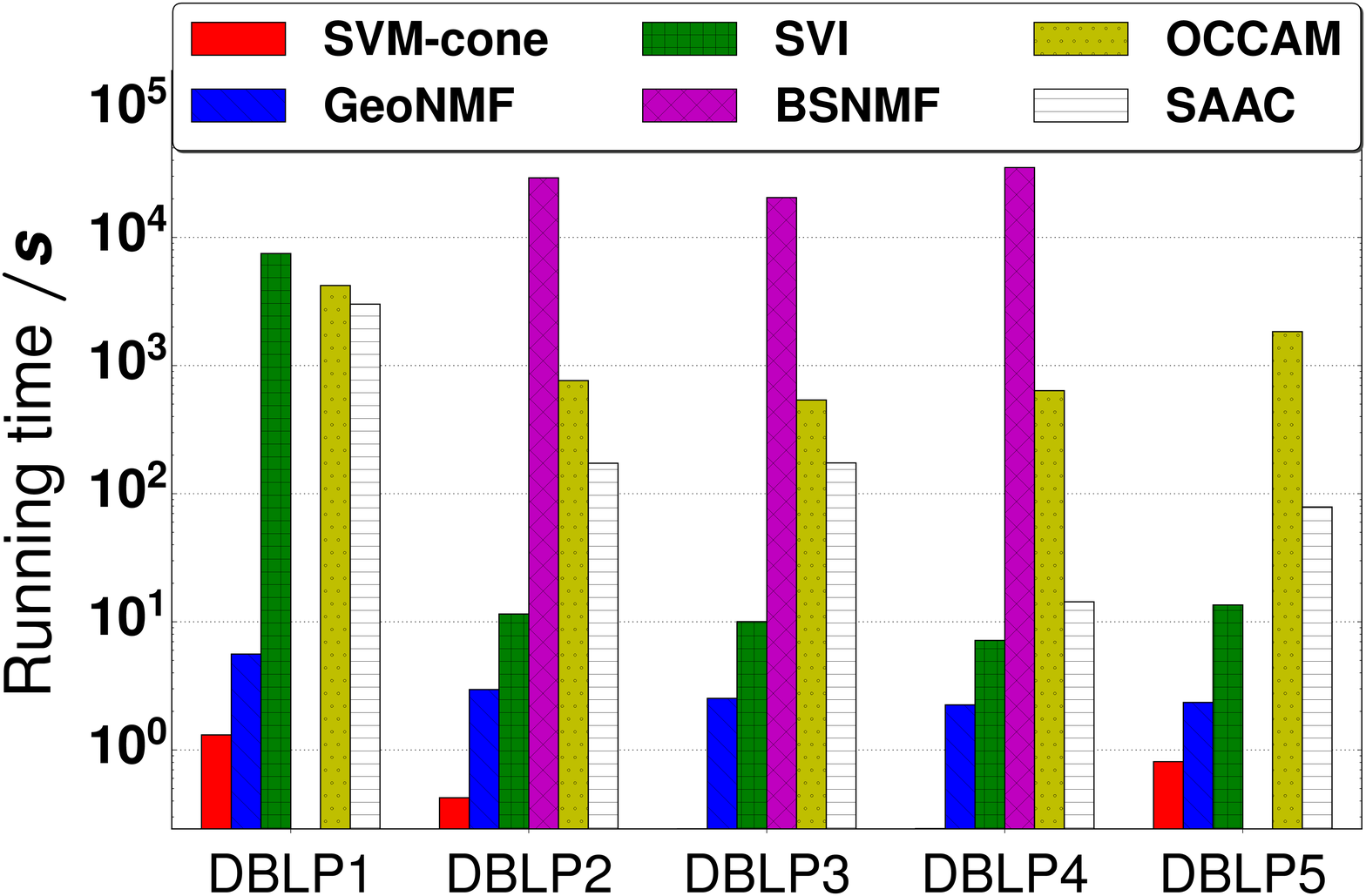}
		\caption{DBLP bipartite wall-clock time}
		\label{dblp:bi_time}
	\end{subfigure}
	\caption{Accuracy of estimated community memberships for (a) the DBLP coauthorship network and (c) the biparite author-paper DBLP network. 
		(b) and (d) The wall-clock time of the competing methods respectively.}
	\label{fig:realworld}
\end{figure}
\fussy
We find that \svmcone outperforms competing baselines on $2$ of the $5$ DBLP coauthorship datasets, and is similar on the remaining three (Figure~\ref{dblp:coauth}).
The closest competitor is GeoNMF~\cite{mao2017}, which assumes that all nodes have the same degree parameter, and the community sizes are balanced.
Both assumptions are reasonable for the dataset, since the number of coauthors (the degree) does not vary significantly among authors, and the communities are formed from conferences where no one conference dominates the others.
The differences between \svmcone and the competition is starker on the bipartite dataset (Figure~\ref{dblp:bip}).
There is severe degree heterogeneity: an author can be connected to many papers, while each paper only has a few authors at best.
Our method is able to accommodate such differences between the nodes, and hence yields much better accuracy than others.

Finally, Figure~\ref{dblp:time} and \ref{dblp:bi_time} shows the wall-clock time for running the various methods on DBLP coauthorship networks and DBLP bipartite author-paper networks respectively. 
Our method is among the fastest.
This is expected; the only computationally intensive step is the one-class SVM and top-$K$ eigen-decomposition (or SVD), for which off-the-shelf efficient and scalable implementations already exist \citep{chang2011libsvm}.
\subsection{Topic Models}
We generate semi-synthetic data following \cite{arora2013practical} and \cite{bansal2014provable} using {\bf NIPS}\footnotemark[1]\footnotetext[1]{\url{https://archive.ics.uci.edu/ml/datasets/Bag+of+Words}}, {\bf New York Times}\footnotemark[1] (NYT), {\bf  PubMed}\footnotemark[1], and {\bf 20NewsGroup}\footnotemark[2]\footnotetext[2]{\url{http://qwone.com/~jason/20Newsgroups/}} (20NG). {Dataset statistics are included in \supp.}
We use Matlab R2018a built-in Gibbs Sampling function for learning topic models to learn the word by topic matrix, which should retain the characteristics of real data distributions. Then we draw the topic-document matrix from Dirichlet with symmetric hyper-parameter 0.01. We set $K=40$ for the first 3 datasets and $K=20$ for 20NG. The word counts matrix is sampled with $N=1000, 300, 100, 200$ respectively, which matches the mean document length of the real datasets. We evaluate the performance of different algorithms using $\ell_1$ reconstruction error $\frac{1}{K}\sum_{i,j}|\bT\bbb{i,j}-\hat{\bT}\bbb{i,\pi(j)}|$, where $\pi(.)$ is a permutation function that matches the topics. 
Table \ref{table:semi_results} shows the $\ell_1$ reconstruction error and wall-clock running time of different algorithms with datasets generated from different number of documents. {Each setting is repeated 5 times, and we report the mean and standard deviation of the results. SVM-cone is much faster than the other methods. Its accuracy is comparable to RecoverL2, and significantly better than TSVD and GDM.}
The \suppNoThe \ also shows the top-10 words of 5 topics learned from SVM-cone for each dataset.
\begin{table}[!ht]
	\caption{$\ell_1$ reconstruction error and wall-clock time on semi-synthetic datasets}
	\label{table:semi_results}
	\begin{center}
		\resizebox{\linewidth}{!}{
		\begin{tabular}{|c|c|c|c|c|c|c|}
			\hline
			Corpus & Documents & & RecoverL2 & TSVD & GDM & SVM-cone\\
			\hline
			\hline
			\multirow{6}{*}{NIPS}&\multirow{2}{*}{20000}&$\ell_1$ Error& {\bf 0.059 \textbf{ ($\pm$} 0.000)}& 0.237  ($\pm$ 0.017)& 0.081 ($\pm$ 0.057)&{ 0.071  ($\pm$ 0.004)}\\
			\cline{3-7} 
			& & Time/$s$ &100.11 ($\pm$ 8.81) & 18.54 ($\pm$ 2.04) & 119.66 ($\pm$ 4.41)& {\bf 5.33 ($\pm$ 0.39)}\\
			\cline{2-7}
			&\multirow{2}{*}{40000}&$\ell_1$ Error& {\bf 0.043 ($\pm$ 0.000)} &  0.250 ($\pm$ 0.045) &0.061 ($\pm$ 0.038)& 0.051 ($\pm$ 0.002)\\
			\cline{3-7}
			& & Time/$s$ & 143.34 ($\pm$ 0.53) & 21.97 ($\pm$ 1.49) & 220.92  ($\pm$ 3.10)& {\bf 9.07 ($\pm$ 0.00)}\\
			\cline{2-7}
			&\multirow{2}{*}{60000}&$\ell_1$ Error& {\bf 0.036 ($\pm$ 0.000)}& 0.269 ($\pm$ 0.064)& 0.059 ($\pm$ 0.038)&{ 0.041 ($\pm$ 0.002)}\\
			\cline{3-7}
			& & Time/$s$ & 247.34 ($\pm$ 20.84) & 35.77 ($\pm$ 3.28) & 406.87  ($\pm$ 36.57) &{\bf 17.63 ($\pm$ 5.29)}\\
			\hline \hline
			\multirow{6}{*}{NYT}&\multirow{2}{*}{20000}&$\ell_1$ Error &{\bf 0.125 ($\pm$ 0.000)}& 0.207 ($\pm$ 0.025)& 0.223($\pm$ 0.008)& 0.131 ($\pm$ 0.003)\\
			\cline{3-7}
			& & Time/$s$ & 78.15 ($\pm$ 7.14) &  25.11 ($\pm$ 6.39) &193.43 ($\pm$ 12.02)&  {\bf 4.51 ($\pm$ 0.70)}\\
			\cline{2-7}
			&\multirow{2}{*}{40000}&$\ell_1$ Error& {\bf 0.103  ($\pm$ 0.000)}& 0.197  ($\pm$ 0.045)&0.216 ($\pm$ 0.010)& 0.106  ($\pm$ 0.001)\\
			\cline{3-7}
			& & Time/$s$ &  140.84  ($\pm$ 15.50)& 50.18  ($\pm$ 13.14)&394.16  ($\pm$ 30.42)&{\bf 8.04 ($\pm$ 1.15)}\\
			\cline{2-7}
			&\multirow{2}{*}{60000}&$\ell_1$ Error& {\bf 0.095 ($\pm$ 0.000)}& 0.166 ($\pm$ 0.028)&0.210 ($\pm$ 0.010)& 0.096 ($\pm$ 0.002)\\
			\cline{3-7}
			& & Time/$s$ & 184.69 ($\pm$ 20.65) & 42.96 ($\pm$ 7.95) & 595.54 ($\pm$ 91.57)&{\bf 11.82 ($\pm$ 1.91)}\\
			\hline \hline
			
			\multirow{6}{*}{PubMed}&\multirow{2}{*}{20000}&$\ell_1$ Error& {\bf 0.163 ($\pm$ 0.000)}& 0.239 ($\pm$ 0.032)&0.277 ($\pm$ 0.051)& 0.181 ($\pm$ 0.002)\\
			\cline{3-7}
			& & Time/$s$ & 54.32 ($\pm$ 5.94) & 15.75 ($\pm$ 2.34) &205.95  ($\pm$ 11.27)&{\bf 2.06 ($\pm$ 0.46)}\\
			\cline{2-7}
			&\multirow{2}{*}{40000}&$\ell_1$ Error& {\bf 0.122 ($\pm$ 0.000)}& 0.255  ($\pm$ 0.018)&0.251  ($\pm$ 0.041)&0.138 ($\pm$ 0.001)\\
			\cline{3-7}
			& & Time/$s$ & 78.99 ($\pm$ 9.99)& 26.44  ($\pm$ 4.49)&459.17 ($\pm$ 30.71)& {\bf 3.73 ($\pm$ 0.37)}\\
			\cline{2-7}
			&\multirow{2}{*}{60000}&$\ell_1$ Error& {\bf 0.098 ($\pm$ 0.000)}& 0.275 ($\pm$ 0.041)&0.269 ($\pm$ 0.052)& 0.114 ($\pm$ 0.001)\\
			\cline{3-7}
			& & Time/$s$ & 98.19 ($\pm$ 15.06) & 24.57 ($\pm$ 4.59) &649.97 ($\pm$ 26.48)& {\bf 5.44 ($\pm$ 0.38)}\\
			\hline \hline
			
			\multirow{6}{*}{20NG}&\multirow{2}{*}{20000}&$\ell_1$ Error& 0.100 ($\pm$ 0.000)& 0.111 ($\pm$ 0.051)&0.137 ($\pm$ 0.071)& {\bf 0.090 ($\pm$ 0.003)}\\
			\cline{3-7}
			& & Time/$s$ & 40.74  ($\pm$ 0.64)&7.51  ($\pm$ 0.42)&102.86 ($\pm$ 4.05)& {\bf 1.85 ($\pm$ 0.26)}\\
			\cline{2-7}
			&\multirow{2}{*}{40000}&$\ell_1$ Error& 0.074 ($\pm$ 0.000)& 0.081 ($\pm$ 0.043)&0.131 ($\pm$ 0.072)& {\bf 0.064 ($\pm$ 0.001)}\\
			\cline{3-7}
			& & Time/$s$ & 94.42 ($\pm$ 9.92)&  16.04 ($\pm$ 2.28)&273.51 ($\pm$ 16.45)& {\bf 4.33 ($\pm$ 0.71)}\\
			\cline{2-7}
			&\multirow{2}{*}{60000}&$\ell_1$ Error& 0.058  ($\pm$ 0.000)& 0.133 ($\pm$ 0.045) & 0.096 ($\pm$ 0.063)&{\bf 0.052 ($\pm$ 0.002)}\\
			\cline{3-7}
			& & Time/$s$ & 142.34  ($\pm$ 20.31)& 23.36  ($\pm$ 5.85)&388.47 ($\pm$ 43.22)& {\bf 5.89 ($\pm$ 0.67)}\\
			\hline
		\end{tabular}
	}
	\end{center}
	\label{tab:multicol}
\end{table}

\section{Conclusions}
\label{sec:conc}

We showed that many distinct models for overlapping clustering can be placed under one general framework, where the data matrix is a noisy version of an ideal matrix and each row is a non-negative weighted sum of ``exemplars.''
In other words, the connection probabilities of one node to others in a network is a non-negative combination of the connection probabilities of $K$ ``pure'' nodes to others in the network.
Each pure node is an examplar of a single community, and we require one pure node from each of the $K$ communities.
This geometrically corresponds to a cone, with the pure nodes being its corners.
This subsumes Mixed-Membership Stochastic Blockmodels and their degree-corrected variants, as well as commonly used topic models.
We showed that a one-class SVM applied to the normalized rows of the data matrix can find both the corners and the weight matrix.
We proved the consistency of our \svmcone algorithm, and used it to develop consistent parameter inference methods for several widely used network and topic models.
Experiments on simulated and large real-world datasets show both the accuracy and the scalability of \svmcone.

\section*{Acknowledgments}
X.M. and P.S. were partially supported by NSF grant DMS 1713082. D.C. was partially supported by a Facebook Faculty Research Award.
\small{
	\bibliographystyle{plainnat}
	\bibliography{references}
}

\newpage
\section*{Appendix}
\appendix
\section{Geometric structure of normalized points from a cone}
\begin{lem}\label{lem:piup}
	Let $\trowP_i=\rowP_i/\|\rowP_i\|$, then $\trowP_i^T=r_i\rowcvxY_i^T\tmtxPp$ for $r_i=\frac{\bbm_i^T \bone}{\|\rowcvxP_i^T\tmtxPp\|}\geq 1$, and $\rowcvxY_i=(\elecvxY_{i1},\elecvxY_{i2},\cdots,\elecvxY_{iK})^T$, $\elecvxY_{ij}=\frac{\elecvxP_{ij}}{\sum_j\elecvxP_{ij}}$. 
	\begin{proof}
		$\trowP_i^T=\frac{\rowP_i^T}{\|\rowP_i\|}
		=\frac{\rowcvxP_i^T\tmtxPp}{\|\rowcvxP_i^T\tmtxPp\|}
		=\frac{\bbm_i^T \bone}{\|\rowcvxP_i^T\tmtxPp\|}\frac{\rowcvxP_i^T}{\bbm_i^T \bone}\tmtxPp
		=r_i\rowcvxY_i^T\tmtxPp$. Clearly $\|\rowcvxP_i^T\tmtxPp\|=\|\sum_j\elecvxP_{ij}\trowP_{I(j)}\|\leq\sum_j\elecvxP_{ij}\|\trowP_{I(j)}\|=\sum_j\elecvxP_{ij}=\bbm_i^T \bone$, so $r_i\geq 1$. 
	\end{proof}
\end{lem}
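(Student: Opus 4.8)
The plan is to unwind the defining factorization $\bZ=\bM\bY_P$ (from the cone-problem setup) one row at a time and then rescale to unit length. First I would write the $i$-th row of $\bZ$ as $\bz_i^T=\bbm_i^T\bY_P$ and check well-definedness: since $\bbm_i\geq 0$ and $\bbm_i\neq 0$ (no row of $\bM$ vanishes), $\bbm_i^T\bone=\sum_j \elecvxP_{ij}>0$, so dividing by $\bbm_i^T\bone$ is legitimate; and since the rows of $\bY_P$ are unit vectors combined with nonnegative, not-all-zero weights, $\bz_i\neq 0$, so $\trowP_i=\bz_i/\|\bz_i\|$ is well defined.

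The identity itself is then pure algebra:
\[
\trowP_i^T=\frac{\bz_i^T}{\|\bz_i\|}=\frac{\bbm_i^T\bY_P}{\|\bbm_i^T\bY_P\|}=\frac{\bbm_i^T\bone}{\|\bbm_i^T\bY_P\|}\cdot\frac{\bbm_i^T}{\bbm_i^T\bone}\,\bY_P= r_i\,\rowcvxY_i^T\bY_P,
\]
and reading off the two factors gives $r_i=\bbm_i^T\bone/\|\bbm_i^T\bY_P\|$ and $\rowcvxY_i=\bbm_i/(\bbm_i^T\bone)$, i.e.\ $\elecvxY_{ij}=\elecvxP_{ij}/\sum_j\elecvxP_{ij}$. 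From this form it is immediate that $\rowcvxY_i\geq 0$ and $\rowcvxY_i^T\bone=1$, so $\rowcvxY_i$ is a probability vector; geometrically $\trowP_i$ is a nonnegatively scaled convex combination of the unit-norm corner rows.

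The last claim, $r_i\geq 1$, I would get from the triangle inequality together with the fact that each corner row $\trowP_{I(j)}$ (an $\ell_2$-normalized row of $\bZ$) has unit norm: $\|\bbm_i^T\bY_P\|=\big\|\sum_j \elecvxP_{ij}\,\trowP_{I(j)}\big\|\leq\sum_j \elecvxP_{ij}\,\|\trowP_{I(j)}\|=\sum_j \elecvxP_{ij}=\bbm_i^T\bone$, hence $r_i\geq 1$, with equality exactly when $\bbm_i$ is supported on a single corner (the pure-node case). There is no real obstacle here --- the lemma is essentially a normalization bookkeeping statement --- and the only detail worth care is the well-definedness above (positivity of $\bbm_i^T\bone$ and $\|\bz_i\|$) and using that the rows forming $\bY_P$ really are the unit-normalized rows of $\bZ$, so that $\|\trowP_{I(j)}\|=1$ may be invoked.
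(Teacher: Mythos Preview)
Your proposal is correct and follows essentially the same approach as the paper: write $\bz_i^T=\bbm_i^T\bY_P$, normalize and factor out $\bbm_i^T\bone$ to obtain $r_i\rowcvxY_i^T\bY_P$, then use the triangle inequality together with $\|\trowP_{I(j)}\|=1$ to get $r_i\geq 1$. Your additional remarks on well-definedness and the equality case are fine but go slightly beyond what the paper records.
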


	\begin{proof}[Proof of Lemma~\ref{lem:vtheta}]
		Since $\rank(\bP)=K$, we have $\bV\bE\bV^T=\bP=\rho\bGamma\bTheta\bB\bTheta^T\bGamma$. W.L.O.G, let $\bTheta(I,:)=\bI$, then 
		$\bV_P\bE\bV^T=\rho\bGamma_P\bB\bTheta^T\bGamma$. Now $\bV\bE=\bP\bV=\rho\bGamma\bTheta\bB\bTheta^T\bGamma\bV=\bGamma\bTheta(\rho\bB\bTheta^T\bGamma)\bV=\bGamma\bTheta(\bGamma_P^{-1}\bV_P\bE\bV^T)\bV=\bGamma \bTheta \bGamma_P^{-1}\bV_P\bE$, right multiplying $\bE^{-1}$ gives $\bV = \bGamma \bTheta \bGamma_P^{-1}\bV_P$. Also consider that  $\bV_P\bE\bV_P^T=\rho\bGamma_P\bB\bGamma_P$, $\bV_P$ is full rank.
	\end{proof}

\section{Identifiability of DCMMSB-type Models}
\begin{lem}
For DCMMSB-type models such that $f(\btheta_i)=1$, $\forall i \in [n]$ for some degree 1 homogeneous function $f$ (e.g., $f(\btheta)=\|\btheta\|_p$), the sufficient conditions for $(\bTheta, \bB, \bGamma)$ to be identifiable up to a permutation of the communities are
(a) there is at least one pure node in each community, (b) $\sum_i \gamma_i=n$, (c) $\bB$ has unit diagonal.
\begin{proof}
	From Lema~\ref{lem:vtheta} we have $\bV = \bGamma \bTheta \bGamma_P^{-1}\bV_P$ and $\bV_P$ is full rank. Suppose two set of parameters $\{\bGamma^{(1)},\bTheta^{(1)},\bB^{(1)}\}$ and $\{\bGamma^{(2)},\bTheta^{(2)},\bB^{(2)}\}$ yield the same $\bP$ (W.L.O.G., we abort $\rho$ in $\bB$) and each has a pure node set $P_1$ and $P_2$ and W.L.O.G., assume the permutation of the communities is fixed, i.e., $\bTheta_{P_1}^{(1)}=\bTheta_{P_2}^{(2)}=\bI$. Then,
	\ba{\label{eq:iden}
		\bGamma^{(1)} \bTheta^{(1)} (\bGamma_{P_1}^{(1)})^{-1}\bV_{P_1} = \bV = \bGamma^{(2)} \bTheta^{(2)} (\bGamma_{P_2}^{(2)})^{-1}\bV_{P_2}. 
	}
	Taking indices $P_1$ and $P_2$ respectively on $\bV$, we have,
	\ba{\label{eq:vp12}
		\bV_{P_1} = \bGamma_{P_1}^{(2)} \bTheta_{P_1}^{(2)} (\bGamma_{P_2}^{(2)})^{-1}\bV_{P_2}\quad\mathrm{and}\quad \bV_{P_2} = \bGamma_{P_2}^{(1)} \bTheta_{P_2}^{(1)} (\bGamma_{P_1}^{(1)})^{-1}\bV_{P_1}.
	}
	Then,
	\ba{\label{eq:identity}
		\bV_{P_1} = &\bGamma_{P_1}^{(2)} \bTheta_{P_1}^{(2)} (\bGamma_{P_2}^{(2)})^{-1}\bGamma_{P_2}^{(1)} \bTheta_{P_2}^{(1)} (\bGamma_{P_1}^{(1)})^{-1}\bV_{P_1}\nonumber\\
		\Longrightarrow\quad\quad \bI=&\bGamma_{P_1}^{(2)} \bTheta_{P_1}^{(2)} (\bGamma_{P_2}^{(2)})^{-1}\bGamma_{P_2}^{(1)} \bTheta_{P_2}^{(1)} (\bGamma_{P_1}^{(1)})^{-1},\quad\text{as $\bV_{P_1}$ is full rank.}
	}
	As $\bGamma_{P_1}^{(2)} \bTheta_{P_1}^{(2)} (\bGamma_{P_2}^{(2)})^{-1}$ and $\bGamma_{P_2}^{(1)} \bTheta_{P_2}^{(1)} (\bGamma_{P_1}^{(1)})^{-1}$ are all nonnegative, using Lemma~1.1 of \citep{minc1988nonnegative}, they are both generalized permutation matrices. Also since $\bGamma_{P_1}^{(2)}$, $(\bGamma_{P_2}^{(2)})^{-1}$ are diagonal matrix, $\bTheta_{P_1}^{(2)}$ must be a permutation matrix as $f(\btheta^{(2)}_i)=1$, $\forall i \in [n]$, and $f$ is homogeneous with degree 1. 
	So nodes in $P_1$ are also pure nodes in $\bTheta^{(2)}$. With same arguments, nodes in $P_2$ are also pure nodes in $\bTheta^{(1)}$. So the pure nodes match up.
	
	Now since $\bV_P\bE\bV_P^T=\bGamma_P\bB\bGamma_P$, we have $\bGamma_{P_1}^{(1)}\bB^{(1)}\bGamma_{P_1}^{(1)}=\bV_{P_1}\bE\bV_{P_1}=\bGamma_{P_1}^{(2)}\bB^{(2)}\bGamma_{P_1}^{(2)}$. As $\bB^{(1)}$ and $\bB^{(2)}$ both have unit diagonal, we must have $\bGamma_{P_1}^{(1)}=c\bGamma_{P_1}^{(2)}$ for $c=\sqrt{\bB^{(2)}_{11}/\bB^{(1)}_{11}}$. Now substituting $P_2$ with $P_1$ in Eq.~\eqref{eq:iden}, and using $\bV_{P_1}$ has full rank, we have,
	\bas{
		\bGamma^{(2)} \bTheta^{(2)} (\bGamma_{P_1}^{(2)})^{-1} = \bGamma^{(1)} \bTheta^{(1)} (\bGamma_{P_1}^{(1)})^{-1} = \bGamma^{(1)} \bTheta^{(1)} (\bGamma_{P_1}^{(2)})^{-1}/c,
	}
	which gives $\bGamma^{(1)} \bTheta^{(1)}=c\bGamma^{(2)} \bTheta^{(2)}$, applying $f(\cdot)$ to rows' transpose on both side,  since $f(\btheta^{(1)}_i)=f(\btheta^{(2)}_i)=1$, $\forall i \in [n]$, and $f$ is homogeneous with degree 1, we have $\bGamma^{(1)} =c\bGamma^{(2)}$.   
	Now as $\bone_n^T\bGamma^{(1)}\bone_n=\bone_n^T\bGamma^{(2)}\bone_n=n$ from condition (b), we must have $c=1$, then 
	$\bGamma^{(1)} =\bGamma^{(2)}$,
	and this immediately gives $\bTheta^{(1)}=\bTheta^{(2)}$. Finally,
	$\bGamma_{P_1}^{(1)}\bB^{(1)}\bGamma_{P_1}^{(1)}=\bGamma_{P_1}^{(2)}\bB^{(2)}\bGamma_{P_1}^{(2)}=\bGamma_{P_1}^{(1)}\bB^{(2)}\bGamma_{P_1}^{(1)}$, and this gives $\bB^{(1)}=\bB^{(2)}$.	
\end{proof}
\end{lem}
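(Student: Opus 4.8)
The plan is a uniqueness argument. Suppose two parameter triples $(\bGamma^{(1)},\bTheta^{(1)},\bB^{(1)})$ and $(\bGamma^{(2)},\bTheta^{(2)},\bB^{(2)})$ (with $\rho$ absorbed into $\bB$) generate the same population matrix $\bP$; I want to show that after a suitable relabelling of the $K$ communities they must coincide. The first goal is to pin down the pure nodes. By Lemma~\ref{lem:vtheta}, for each $k$ the top-$K$ eigenvector matrix factors as $\bV=\bGamma^{(k)}\bTheta^{(k)}(\bGamma^{(k)}_{P_k})^{-1}\bV_{P_k}$ with $\bV_{P_k}$ full rank, where $P_k$ indexes one pure node per community (such nodes exist by (a)) and, without loss of generality, $\bTheta^{(k)}(P_k,:)=\bI$. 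Restricting the identity $\bGamma^{(1)}\bTheta^{(1)}(\bGamma^{(1)}_{P_1})^{-1}\bV_{P_1}=\bGamma^{(2)}\bTheta^{(2)}(\bGamma^{(2)}_{P_2})^{-1}\bV_{P_2}$ to the rows in $P_1$ and then to the rows in $P_2$ writes $\bV_{P_1}$ as a nonnegative matrix times $\bV_{P_2}$ and vice versa; composing these two relations and cancelling the full-rank $\bV_{P_1}$ yields $\bI = A_1 A_2$ with $A_1, A_2 \geq 0$ entrywise, where $A_1$ and $A_2$ are of the form $\bGamma\bTheta_P\bGamma_P^{-1}$ for the two parametrizations.

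The crux of the argument is this ``identity = product of nonnegative matrices'' step. I would invoke the standard fact that if a product of two entrywise-nonnegative square matrices equals the identity, then each factor is a generalized permutation matrix (a permutation matrix with positive row scalings). Since each factor here sandwiches the corresponding $\bTheta^{(\cdot)}(P_\cdot,:)$ between diagonal $\bGamma$-blocks, this forces $\bTheta^{(2)}(P_1,:)$ (and symmetrically $\bTheta^{(1)}(P_2,:)$) to be a scaled permutation matrix; and because every row $\btheta$ of $\bTheta$ satisfies $f(\btheta)=1$ with $f$ homogeneous of degree one, the scalings must all equal $1$, so it is an honest permutation matrix. Hence the nodes in $P_1$ are also pure under parametrization $2$ and vice versa, and after fixing the community labelling we may assume $P_1=P_2=:P$. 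I expect this nonnegative-matrix step, together with the careful bookkeeping of the two ``without loss of generality'' reductions on community permutations, to be the only real obstacle; everything after it is algebra.

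With the pure sets aligned, I would next use $\bV_P\bE\bV_P^T=\bGamma_P\bB\bGamma_P$ (valid for either parametrization, from Lemma~\ref{lem:vtheta}), giving $\bGamma^{(1)}_P\bB^{(1)}\bGamma^{(1)}_P=\bGamma^{(2)}_P\bB^{(2)}\bGamma^{(2)}_P$. Reading off a diagonal entry and using condition (c) that both $\bB^{(1)},\bB^{(2)}$ have unit diagonal forces $\bGamma^{(1)}_P=c\,\bGamma^{(2)}_P$ for a single scalar $c=\sqrt{\bB^{(2)}_{11}/\bB^{(1)}_{11}}$. Substituting back into the common-$P$ eigenvector identity and cancelling the full-rank $\bV_P$ yields $\bGamma^{(1)}\bTheta^{(1)}=c\,\bGamma^{(2)}\bTheta^{(2)}$; applying $f$ to each row's transpose and using $f(\btheta^{(1)}_i)=f(\btheta^{(2)}_i)=1$ with degree-one homogeneity gives $\bGamma^{(1)}=c\,\bGamma^{(2)}$, hence $\bTheta^{(1)}=\bTheta^{(2)}$. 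Finally the normalization (b), $\sum_i \gamma_i = n$ for both parametrizations, forces $c=1$, so $\bGamma^{(1)}=\bGamma^{(2)}$, and then $\bB^{(1)}=\bB^{(2)}$ follows from the sandwiched identity $\bGamma^{(1)}_P\bB^{(1)}\bGamma^{(1)}_P=\bGamma^{(1)}_P\bB^{(2)}\bGamma^{(1)}_P$.
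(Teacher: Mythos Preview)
Your proposal is correct and follows essentially the same route as the paper's proof: both use Lemma~\ref{lem:vtheta} to factor $\bV$, restrict to the two pure-node index sets to obtain $\bI$ as a product of two nonnegative matrices (hence generalized permutations, forcing the pure nodes to match), then use $\bV_P\bE\bV_P^T=\bGamma_P\bB\bGamma_P$ with the unit-diagonal condition to get $\bGamma_P^{(1)}=c\,\bGamma_P^{(2)}$, back-substitute to obtain $\bGamma^{(1)}\bTheta^{(1)}=c\,\bGamma^{(2)}\bTheta^{(2)}$, apply $f$ rowwise for $\bGamma^{(1)}=c\,\bGamma^{(2)}$, and finish with condition~(b) to force $c=1$. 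The only cosmetic differences are that the paper cites Lemma~1.1 of \cite{minc1988nonnegative} for the generalized-permutation step and deduces $c=1$ before stating $\bTheta^{(1)}=\bTheta^{(2)}$, whereas you invoke it as a ``standard fact'' and swap that order.
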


\section{Algorithms}
In this section we provide the detailed algorithms for parameter estimations of DCMMSB, OCCAM (Algorithm~\ref{algo:dcmmsb_ocsvm}) and Topic Models (Algorithm~\ref{algo:corner_svm}). These algorithms both reply on the one class SVM (Algorithm~\ref{algo:inferCone}) for finding the corner rays and then use those for parameter estimation, the details of which vary from model to model. Note for Algorithm~\ref{algo:dcmmsb_ocsvm}, step~\ref{step:l1_norm} is to normalize rows of $\bTheta$ by $\ell_1$ norm, if we normalize by $\ell_2$ norm, then it can be used for estimation of OCCAM.

		\begin{algorithm}[H] 
			\caption{{ SVM-cone-DCMMSB}}
			\label{algo:dcmmsb_ocsvm}
			\begin{algorithmic}[1]
				\REQUIRE  Adjacency matrix $\bA \in \R^{n\times n}$, number of communities $K$
				\ENSURE  Estimated degree parameters $\hbGamma$, community memberships $\hbTheta$, and community interaction matrix $\hat{\bB}$
				\STATE Get top-$K$ eigen-decomposition of $\bA$ as $\hv\hE\hv^T$
				\STATE Normalize rows of $\hv$ by $\ell_2$ norm 
				\STATE Use SVM-cone to get pure node set $C$ and estimated $\cvxS$
				\STATE $\hv_C=\hV(C,:)$, get $\hat{\bN}_C$ from row norms of $\hv_C$
				\STATE $\hat{\bD}=\sqrt{\diag(\hat{\bN}_C\hv_C\hE\hv_C^T\hat{\bN}_C)}$
				\STATE $\hat{\bF}=\diag(\cvxS\hat{\bD}\bone_K)$
				\STATE $\hbTheta= \hat{\bF}^{-1}\cvxS\hat{\bD}$ \label{step:l1_norm}
				\STATE $\hbGamma=n\hat{\bF}/(\bone_n^T\hat{\bF}\bone)$, $\hbGamma_C=\hbGamma(C,C)$
				\STATE $\bB=\hbGamma_C^{-1}\hv_C\hE\hv_C\hbGamma_C^{-1}$
				\STATE $\bB=\bB/\max_{i,j}\bB_{ij}$
			\end{algorithmic}
		\end{algorithm}

\begin{algorithm}[H] 
	\caption{{SVM-cone-topic}}
	\label{algo:corner_svm}
	\begin{algorithmic}[1]
		\REQUIRE  Word-document count matrix $\w \in \R^{V\times D}$, number of topics $K$
		\ENSURE  Estimated word-topic matrix $\hat{\bT}$
		\STATE Randomly splitting the words in each document to two halves to get $\w_1$ and $\w_2$
		{\STATE Normalize columns of $\w_1$ and $\w_2$ by $\ell_1$ norm to get $\hat{\bA}_1$ and $\hat{\bA}_2$
		\STATE Get top-$K$ SVD of $\bU={\hat{\bA}_1\hat{\bA}_2^T}$ as $\hv\hE\hv^T$}
		\STATE Normalize rows of $\hv$ by $\ell_2$ row norm
		\STATE Use SVM-cone to get pure node set $C$ and estimated $\cvxS$
		\STATE Normalizing columns of $\hat{\bM}$ by $\ell_1$ norm to get $\hat{\bT}$
	\end{algorithmic}
\end{algorithm}

\newpage
\section{Corner finding with \ocsvm with population inputs}
\begin{lem}\label{lem:ocsvm_popul}
	If $\proj_{\conv(\tmtxPp^T)}(\bzero)$ is an interior point in $\conv(\tmtxPp^T)$, then \ocsvm can find all the $K$ corners with $\elecvxP_{ij}=1$ 
	as support vectors given $\trowP_i$, $i\in[n]$ as inputs. And a sufficient condition for this to hold is $(\tmtxPp\tmtxPp^T)^{-1}\bone>\bzero$. 
	\begin{proof}
		The primal problem of \ocsvm in \citep{scholkopf2001estimating} is
		\bas{
			\min\quad &\frac{1}{2}\|\bw\|^2-b \quad \quad s.t.\quad \bw^T\trowP_i\geq b,\ i\in[n].
		}
		First of all note that $b\geq 0$ because if $b<0$, we can always make $b=0$ to satisfy the condition and decrees the value of the object function. From Lemma~\ref{lem:piup}, we have $\trowP_i^T=r_i\rowcvxY_i^T\tmtxP_P$. As $r_i\geq 1$, 
		if there exists $(\bw,b)$ that $\bw^T\trowP_i\geq b,\ i\in I$, we have $\bw^T\trowP_i=r_i\rowcvxY_i^T\tmtxP_P\bw = r_i\sum_{j}\elecvxY_{ij}\bw^T\trowP_{I(j)} \geq r_i b \geq b,\ i\in[n]$. So we can reduce the problem to using points $i\in I$ as inputs. Furthermore, we consider an equivalent primal problem and its dual:
		\ba{\label{eq:primal_dual}
		\mathrm{Primal:}\quad 	&\max\quad b &\quad \mathrm{Dual:}\quad &\min \quad\frac{1}{2}\sum_{i,j}\beta_i\beta_j\trowP_i^T\trowP_j\\
			&s.t.\quad \|\bw\|\leq 1,\  \bw^T\trowP_i\geq b,\ i\in I &\quad &s.t.\quad  \sum_i\beta_i=1,\ \beta_i\geq0,\ i\in I \nonumber
		}
	The dual problem is basically to find a point in $\conv(\tmtxPp^T)$ that has the minimum norm (closest to origin). Now denote the optimal function value for the dual problem as $L_{\tmtxPp}$ and for any subset $\cS\subset I$, let $L_{\tmtxP_{P(\cS,:)}}$ be the optimal value when we want to find a point in $\conv(\tmtxP_{P(\cS,:)}^T)$ that has the minimum norm.
	
	Let {$\bN\in\R^{n\times n}$ be a diagonal matrix such that $\bN_{ii}=1/\|\rowP_i\|$}, then $\tmtxPp=\bN_P\mtxPp$ is also full rank. 
	If for $\bbeta^*=\arg \min_{\bbeta} L_{\tmtxPp}(\bbeta)$, each coordinate is strictly larger than 0, it is easy to see that $L_{\tmtxPp}>L_{\tmtxP_{P(\cS,:)}}$
	since $\tmtxPp$ is full rank. 
	So a sufficient condition for \ocsvm to find all $K$ corners of $L_{\tmtxPp}$ is $\bbeta^*>\bzero$, which means the closet point to origin in $\conv(\tmtxPp^T)$ is an interior point (also the projection of origin to $\conv(\tmtxPp^T)$ ). Now we will show a sufficient condition for this. 
	
	Suppose the $\bbeta^*>\bzero$. First let us find a hyperplane $(\bw, d)$ that is through columns of $\tmtxPp^T$ with $d< 0$ (since $\tmtxPp$ is full rank, we must have $d\neq 0$). We have $\tmtxPp\bw=d\bone$. 
	Since the distance from origin to hyperplane $(\bw, d)$ is $\frac{|d|}{\|\bw\|}$, $\proj_{\conv(\tmtxPp^T)}(\bzero)$ is an interior point in $\conv(\tmtxPp^T)$, we have
	\ba{\label{eq:ypbeta}
		\tmtxPp^T\bbeta^*=\proj_{\conv(\tmtxPp^T)}(\bzero)=\frac{d}{\|\bw\|}\frac{\bw}{\|\bw\|}
	}
	Then,
	\bas{
		\bw^T\tmtxPp^T\bbeta^*=\frac{d\bw^T\bw}{\|\bw\|^2}=d.
	}
	As $\bw^T\tmtxPp^T=d\bone^T$, we have $d\bone^T\bbeta^*=d$, so $\bone^T\bbeta^*=1$.
	So the only condition left to be satisfied is that $\bbeta^*>\bzero$, using Eq.~\eqref{eq:ypbeta},
	\bas{
		\tmtxPp\tmtxPp^T\bbeta^*=\frac{d\tmtxPp\bw}{\|\bw\|^2}=\frac{d(d\bone)}{\|\bw\|^2},
	}
	so $\bbeta^*=\frac{d^2}{\|\bw\|^2}(\tmtxPp\tmtxPp^T)^{-1}\bone>\bzero$ and all we require is:
	\bas{
		(\tmtxPp\tmtxPp^T)^{-1}\bone>\bzero.
	}
	\end{proof}
\end{lem}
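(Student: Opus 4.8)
The plan is to recast the one-class SVM of Eq.~\eqref{eq:SVM} as a minimum-norm-point problem over $\conv(\tmtxPp^T)$, and then read off the support vectors from the optimal dual weights. I would start from the primal $\min \tfrac12\|\bw\|^2 - b$ subject to $\bw^T\trowP_i \geq b$ for $i\in[n]$, noting that at an optimum $b\geq 0$ (if $b<0$, resetting $b=0$ stays feasible and decreases the objective). By Lemma~\ref{lem:piup} each row obeys $\trowP_i^T = r_i\rowcvxY_i^T\tmtxPp$ with $\rowcvxY_i^T\bone=1$ and $r_i\geq 1$ (and in fact $r_i=1$ exactly for the corners, via the equality case of the triangle inequality). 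Hence if $\bw^T\trowP_{I(j)}\geq b$ holds at the $K$ corners, then $\bw^T\trowP_i = r_i\sum_j\elecvxY_{ij}\,\bw^T\trowP_{I(j)}\geq r_ib\geq b$ holds for every $i$, so the feasible set is cut out by the $K$ corner constraints alone and only corners can be support vectors. Passing to the scale-normalized primal $\max b$ subject to $\|\bw\|\leq 1$ and $\bw^T\trowP_{I(j)}\geq b$, its Lagrangian dual is $\min \tfrac12\,\bbeta^T\tmtxPp\tmtxPp^T\bbeta$ over $\{\bbeta\geq\bzero,\ \bone^T\bbeta=1\}$ --- that is, find the point of $\conv(\tmtxPp^T)$ closest to the origin --- with support vectors being exactly the corners $j$ for which $\beta_j>0$.

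Next I would show that the hypothesis ``$\proj_{\conv(\tmtxPp^T)}(\bzero)$ is interior'' forces every $\beta_j>0$. Write $L_{\tmtxPp}$ for the optimal dual value and, for $\cS\subsetneq[K]$, write $L_{\tmtxP_{P(\cS,:)}}$ for the analogous value over $\conv(\tmtxP_{P(\cS,:)}^T)$. Since $\tmtxPp$ is full rank (Lemma~\ref{lem:vtheta} plus row-normalization), the columns of $\tmtxPp^T$ are affinely independent, so $\conv(\tmtxP_{P(\cS,:)}^T)$ is a proper face of the simplex $\conv(\tmtxPp^T)$ and cannot contain the interior minimum-norm point; by strict convexity of $\|\cdot\|^2$ this gives $L_{\tmtxPp}<L_{\tmtxP_{P(\cS,:)}}$ for every proper $\cS$. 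If some $\beta_j^*=0$, then $\bbeta^*$ restricted to $\cS=[K]\setminus\{j\}$ would be feasible for the smaller problem and attain value $L_{\tmtxPp}$, contradicting $L_{\tmtxPp}<L_{\tmtxP_{P(\cS,:)}}$. So all $K$ corners are support vectors, as claimed.

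Finally, for the algebraic sufficient condition I would exhibit $\bbeta^*$ in closed form. Take a hyperplane $(\bw,d)$ through all columns of $\tmtxPp^T$, i.e.\ $\tmtxPp\bw=d\bone$; full rank forces $d\neq 0$, and since the origin is strictly separated we may take $d<0$. The projection of $\bzero$ onto the hull lies along $\bw$, so $\tmtxPp^T\bbeta^* = \proj_{\conv(\tmtxPp^T)}(\bzero) = \frac{d}{\|\bw\|^2}\bw$; left-multiplying by $\bw^T$ recovers $\bone^T\bbeta^*=1$, and left-multiplying by $\tmtxPp$ gives $\tmtxPp\tmtxPp^T\bbeta^* = \frac{d^2}{\|\bw\|^2}\bone$, hence $\bbeta^* = \frac{d^2}{\|\bw\|^2}(\tmtxPp\tmtxPp^T)^{-1}\bone$. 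Thus $\bbeta^*>\bzero$ --- equivalently, $\proj_{\conv(\tmtxPp^T)}(\bzero)$ is interior and all corners are recovered --- precisely when $(\tmtxPp\tmtxPp^T)^{-1}\bone>\bzero$. The step I expect to be the main obstacle is the middle one: pinning down that a dual minimizer with a zero coordinate must sit on a proper face of the simplex (hence cannot be the interior projection point), which needs the affine independence of the corners together with uniqueness of the minimum-norm point; once that is in place, the rest is routine linear algebra on the identified dual.
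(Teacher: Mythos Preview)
Your proposal is correct and follows essentially the same route as the paper: reduce the constraints to the $K$ corners via Lemma~\ref{lem:piup}, identify the dual with the minimum-norm-point problem on $\conv(\tmtxPp^T)$, argue that an interior minimizer forces $\bbeta^*>\bzero$, and then compute $\bbeta^*=\frac{d^2}{\|\bw\|^2}(\tmtxPp\tmtxPp^T)^{-1}\bone$ from the affine hyperplane through the corners. Your treatment of the middle step is in fact cleaner than the paper's: you correctly orient the inequality as $L_{\tmtxPp}<L_{\tmtxP_{P(\cS,:)}}$ for proper $\cS$ (the paper writes the reverse, which is a slip) and give the contrapositive argument explicitly via affine independence and uniqueness of the minimum-norm point.
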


	\begin{proof}[Proof of Theorem~\ref{thm:dcmmsb_condition_true}]
		Using Lemma~\ref{lem:vtheta}, we have:
		\ba{\label{eq:original_vpvpt}
			\bI=\bV^T\bV=\bV_P^T\bGamma_P^{-1}\bTheta^T\bGamma^2 \bTheta \bGamma_P^{-1}\bV_P
			\quad \Longrightarrow \quad
			(\bV_P\bV_P^T)^{-1}=\bGamma_P^{-1}\bTheta^T\bGamma^2 \bTheta \bGamma_P^{-1}.
		}
		Since $\tmtxPp=\bN_P\bV_P$, we have:
		\ba{\label{eq:vpvpT}
			(\tmtxPp\tmtxPp^T)^{-1}=\bN_P^{-1}\bGamma_P^{-1}\bTheta^T\bGamma^2 \bTheta \bGamma_P^{-1}\bN_P^{-1}.
		}
		On the RHS of Eq.~\eqref{eq:vpvpT}, as $\bN_P^{-1}$, $\bGamma_P^{-1}$ and $\bGamma$ are all diagonal matrix with strictly positive diagonal elements, then diagonal of $(\tmtxPp\tmtxPp^T)^{-1}$ must be strictly positive, as the $i$-th element on its diagonal is proportional to $\|\bGamma\bTheta(:,i)\|^2$, and since $\bTheta$ is nonnegative, we can easily get that $(\tmtxPp\tmtxPp^T)^{-1}\bone>0$. So for DCMMSB-type models, it is always true that the closet point in $\conv(\tmtxPp^T)$ to origin is an interior point of $\conv(\tmtxPp^T)$.
	\end{proof}

\section{Corner finding with \ocsvm with empirical inputs}
\begin{lem}\label{lem:b_error}
	Let $\epsilon=\max_i \|\trowP_i-\trowS_i\|$. Denote $(\bw,b)$ and $(\hww,\hb)$ be the optimal solution for the primal problem of \ocsvm in \eqref{eq:primal_dual} with population ($\trowP_1,\trowP_2,\cdots,\trowP_n$) and empirical inputs ($\trowS_1,\trowS_2,\cdots,\trowS_n$) respectively, then $|\hb-b|\leq \epsilon$.
	\begin{proof}
		First we have $\bw^T\trowP_i\geq b$, $\forall i\in[n]$, and $\|\bw^T(\trowS_i-\trowP_i)\|\leq\epsilon$. Then $\bw^T\trowS_i=\bw^T\trowP_i+\bw^T(\trowS_i-\trowP_i)\geq b-\epsilon$. As $(\bw, b-\epsilon)$ is a feasible solution of the primal problem with empirical inputs, by optimality of $\hb$, we have $\hb\geq b-\epsilon$. Similarly we can get $b\geq \hb-\epsilon$, so $|\hb-b|\leq \epsilon$.
	\end{proof}
\end{lem}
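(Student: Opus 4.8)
The plan is to prove the two one-sided bounds $\hb \geq b-\epsilon$ and $b \geq \hb-\epsilon$ separately, each by taking the optimizer of one of the two SVM instances in~\eqref{eq:primal_dual} and exhibiting it (with a slightly shifted margin) as a feasible point for the other instance. The single quantitative ingredient is a Cauchy--Schwarz estimate: because both primal problems share the norm constraint $\|\cdot\|\leq 1$, for every index $i\in[n]$ we have $|\bw^T(\trowS_i-\trowP_i)| \leq \|\bw\|\,\|\trowS_i-\trowP_i\| \leq \epsilon$, and likewise with $\hww$ in place of $\bw$. This is exactly what converts the uniform row-wise data perturbation $\epsilon = \max_i\|\trowP_i-\trowS_i\|$ into a perturbation of at most $\epsilon$ in the optimal margin.

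First I would establish $\hb \geq b-\epsilon$. From population feasibility, $\bw^T\trowP_i \geq b$ for all $i$, so $\bw^T\trowS_i = \bw^T\trowP_i + \bw^T(\trowS_i-\trowP_i) \geq b-\epsilon$; since the norm constraint is unchanged, $(\bw, b-\epsilon)$ is feasible for the empirical primal, and optimality of $\hb$ gives $\hb \geq b-\epsilon$. The reverse inequality is the mirror image: from $\hww^T\trowS_i \geq \hb$ and the same Cauchy--Schwarz bound we get $\hww^T\trowP_i \geq \hb-\epsilon$, so $(\hww, \hb-\epsilon)$ is feasible for the population primal and optimality of $b$ yields $b \geq \hb-\epsilon$. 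Combining the two gives $|\hb-b| \leq \epsilon$.

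There is no genuinely hard step here. The only things to be careful about are that the two SVM problems use the \emph{same} feasible region for the normal vector, so that a feasible point for one (after shifting $b$) is literally feasible for the other, and that the Cauchy--Schwarz bound exploits $\|\bw\|\leq 1$ rather than $\|\bw\|=1$, so no extra constant appears. In particular, unlike Lemma~\ref{lem:ocsvm_popul}, this argument needs neither a reduction to the pure-node rows nor Condition~\ref{cond:pop}; it is purely a stability statement about the value of the two linear programs under a uniformly bounded perturbation of their constraint data.
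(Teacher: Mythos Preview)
Your proposal is correct and matches the paper's proof essentially line for line: both directions are obtained by transferring the optimizer of one primal problem to a feasible point of the other via the Cauchy--Schwarz bound $|\bw^T(\trowS_i-\trowP_i)|\leq\|\bw\|\,\|\trowS_i-\trowP_i\|\leq\epsilon$. Your write-up is in fact slightly more explicit than the paper's about where the constraint $\|\bw\|\leq 1$ enters.
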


\begin{lem}\label{lem:w_bound}
	Let $(\bw,b)$, $(\hww,\hb)$ be the hyperplane of the optimal solution of \ocsvm with population and empirical inputs respectively, then $\|\hww-\bw\|\leq \zeta\epsilon$, for $\zeta=\frac{4}{\eta b^2\sqrt{\lambda_K(\tmtxP_P\tmtxP_P^T)}}\leq \frac{4K}{\eta(\lambda_K(\tmtxP_P\tmtxP_P^T))^{1.5}}$. 
	\begin{proof}
		Let $\beta_l$, $l\in I$ be the solution of the dual problem in Eq.~\eqref{eq:primal_dual} with population inputs, from the construction of this dual problem, we know $\bw=\frac{\sum_{l\in I}\beta_l\trowP_l}{\|\sum_{l\in I}\beta_l\trowP_l\|}$, $\|\sum_{l\in I}\beta_l\trowP_l\| =b$, and $\bbeta:=(\beta_{I(1)},\beta_{I(2)},\cdots,\beta_{I(p)})=b^2(\tmtxPp\tmtxPp^T)^{-1}\bone$, as shown in Lemma~\ref{lem:ocsvm_popul}. So $\bw=\tmtxP_P^T\bbeta/b=b\tmtxP_P^T(\tmtxPp\tmtxPp^T)^{-1}\bone$. From the condition of the primal problem, $\tmtxS_P\hww\geq\hb\bone$, then we have $\tmtxP_P\hww=\tmtxS_P\hww-(\tmtxS_P-\tmtxPp)\hww\geq(\hb-\epsilon)\bone\geq(b-2\epsilon)\bone$. Then there exists a vector $\bc\geq\bzero$ such that $\tmtxP_P\hww=(b-2\epsilon)\bone+\bc$. Now let $\hww=\tmtxP_P^T\bvarphi+\hww_{\bot}$, where $\tmtxP_P\hww_{\bot}=\bzero$. So $\tmtxP_P\hww=\tmtxP_P\tmtxP_P^T\bvarphi=(b-2\epsilon)\bone+\bc$, which gives $\hww=\tmtxP_P^T(\tmtxP_P\tmtxP_P^T)^{-1}((b-2\epsilon)\bone+\bc)+\hww_{\bot}$. Since $\|\hww\|=1$, we have
		\bas{
			1&=\|\hww\|^2=((b-2\epsilon)\bone+\bc)^T(\tmtxP_P\tmtxP_P^T)^{-1}((b-2\epsilon)\bone+\bc)+\|\hww_{\bot}\|^2\\
			&=b^2\bone^T(\tmtxP_P\tmtxP_P^T)^{-1}\bone+2b\bone^T(\tmtxP_P\tmtxP_P^T)^{-1}(\bc-2\epsilon\bone)+(\bc-2\epsilon\bone)^T(\tmtxP_P\tmtxP_P^T)^{-1}(\bc-2\epsilon\bone)+\|\hww_{\bot}\|^2.
		}
		Since $1=\|\bw\|^2=b^2\bone^T(\tmtxP_P\tmtxP_P^T)^{-1}\bone$, we have
		\ba{\label{eq:cross_positive}
			0\leq(\bc-2\epsilon\bone)^T(\tmtxP_P\tmtxP_P^T)^{-1}(\bc-2\epsilon\bone)+\|\hww_{\bot}\|^2
			&=-2b\bone^T(\tmtxP_P\tmtxP_P^T)^{-1}(\bc-2\epsilon\bone)\\
			&=-2b\bone^T(\tmtxP_P\tmtxP_P^T)^{-1}\bc+4b\epsilon\bone^T(\tmtxP_P\tmtxP_P^T)^{-1}\bone\nonumber,
		}
		which uses that $(\tmtxPp\tmtxPp^T)^{-1}$ is positive definite. This gives
		\bas{
			&2b\bone^T(\tmtxP_P\tmtxP_P^T)^{-1}\bc\leq 4b\epsilon\bone^T(\tmtxP_P\tmtxP_P^T)^{-1}\bone=4b\epsilon/b^2  \\
			\Longrightarrow \quad &(\min_i \bone^T(\tmtxP_P\tmtxP_P^T)^{-1}\be_i) \|\bc\|_1\leq \bone^T(\tmtxP_P\tmtxP_P^T)^{-1}\bc \leq 2\epsilon/b^2, 		
		}
		and by Condition~\ref{cond:emp} we know $(\min_i \bone^T(\tmtxP_P\tmtxP_P^T)^{-1}\be_i)\geq\eta$, so $\|\bc\|\leq\|\bc\|_1\leq 2\epsilon/(\eta b^2)$. 
		
		Let $\hat{P}$ be the set of support vectors returned by empirical \ocsvm, and $\hbbeta$ as the optimal solution for the dual problem, then $\hww = \tmtxS_{\hat{P}} \hbbeta/\hb$ and $\sum_{j\in\hat{P}}\hbeta_j=1$. 
		Now we will give an upper bound on $\|\hww_{\bot}\|$. 
		For any ${\boldsymbol v} \in \mathrm{span}(\tmtxPp)$, we have 
		$\|{\hww}_{\bot}\|\leq \|{\hww}-{\boldsymbol v} \|$. Now take ${\boldsymbol v} =\tmtxP_{\hat{P}}^T\hbbeta/\hat{b}$, 
		since all rows of $\tmtxP$ lie in the span of $\tmtxPp$, this choice of ${\boldsymbol v} $ also lies in the span of $\tmtxPp$.
		Thus, 
		\bas{
			\|{\hww}_{\bot}\|\leq \|{\hww}-{\boldsymbol v} \|=\|\tmtxS_{\hat{P}}^T\hbbeta-\tmtxP_{\hat{P}}^T\hbbeta\|/\hb
			=\|\sum_{j\in\hat{P}}\hbeta_j(\trowP_j-\trowS_j)\|/\hb  \leq \epsilon/(b-\epsilon).
}
		Now, we have
		\bas{
			\hww-\bw&=\tmtxP_P^T(\tmtxP_P\tmtxP_P^T)^{-1}((b-2\epsilon)\bone+\bc)+\hww_{\bot}-b\tmtxP_P^T(\tmtxPp\tmtxPp^T)^{-1}\bone=\tmtxP_P^T(\tmtxP_P\tmtxP_P^T)^{-1}(\bc-2\epsilon\bone)+\hww_{\bot},\\
			\|\hww-\bw\|^2&=(\bc-2\epsilon\bone)^T(\tmtxP_P\tmtxP_P^T)^{-1}(\bc-2\epsilon\bone)+\|\hww_{\bot}\|^2
			\leq\bc^T(\tmtxP_P\tmtxP_P^T)^{-1}\bc + 4\epsilon^2/b^2+\epsilon^2/(b-\epsilon)^2\\
			&\leq \|\bc\|^2\lambda_1((\tmtxP_P\tmtxP_P^T)^{-1})+ 4\epsilon^2/b^2+\epsilon^2/(b-\epsilon)^2
			\leq \bbb{\frac{4}{\eta^2b^4\lambda_K(\tmtxP_P\tmtxP_P^T)}+\frac{4}{b^2}+\frac{1}{(b-\epsilon)^2}}\epsilon^2,
		}
		where we use Eq.~\eqref{eq:cross_positive} to get that the cross terms are non-negative for the first inequality.
		First $\frac{4}{\eta^2b^4\lambda_K(\tmtxP_P\tmtxP_P^T)}+\frac{4}{b^2}+\frac{1}{(b-\epsilon)^2}<\frac{4}{\eta^2b^4\lambda_K(\tmtxP_P\tmtxP_P^T)}+\frac{8}{b^2}<\frac{12}{\eta^2b^4\lambda_K(\tmtxP_P\tmtxP_P^T)}$, using $\epsilon<b/2$, $\eta<1$, $b\leq 1$, and $\lambda_K(\tmtxP_P\tmtxP_P^T)<1$. Then by taking $\zeta=\frac{4}{\eta b^2\sqrt{\lambda_K(\tmtxP_P\tmtxP_P^T)}}$, we have $\|\hww-\bw\|\leq \zeta\epsilon$. Furthermore, $\zeta\leq \frac{4K}{\eta(\lambda_K(\tmtxP_P\tmtxP_P^T))^{1.5}}$ by using
		\bas{
			1/b^2=\bone^T(\tmtxP_P\tmtxP_P^T)^{-1}\bone\leq K\lambda_1((\tmtxP_P\tmtxP_P^T)^{-1})=K/\lambda_K(\tmtxP_P\tmtxP_P^T).
		}
	\end{proof}
\end{lem}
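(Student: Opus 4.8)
The plan is to write both the population normal $\bw$ and the empirical normal $\hww$ in coordinates adapted to the row space $\Span(\tmtxPp^T)$, and then squeeze a bound on their difference out of the unit-norm constraints. First I would record the closed form of the population solution from Lemma~\ref{lem:ocsvm_popul} (equivalently Eq.~\eqref{eq:bbeta}): the population dual optimum is $\bbeta=b^2(\tmtxPp\tmtxPp^T)^{-1}\bone$, so $\bw=\tmtxPp^T\bbeta/b=b\,\tmtxPp^T(\tmtxPp\tmtxPp^T)^{-1}\bone$, and unit norm forces the scalar identity $b^2\bone^T(\tmtxPp\tmtxPp^T)^{-1}\bone=1$. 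On the empirical side, feasibility of $(\hww,\hb)$ means $\hww^T\trowS_i\ge\hb$ for every $i$, hence $\tmtxSp\hww\ge\hb\bone$; combining this with $\|\trowP_i-\trowS_i\|\le\epsilon$, $\|\hww\|\le1$, and $|\hb-b|\le\epsilon$ from Lemma~\ref{lem:b_error} gives $\tmtxPp\hww\ge(b-2\epsilon)\bone$, so I can write $\tmtxPp\hww=(b-2\epsilon)\bone+\bc$ with $\bc\ge\bzero$.

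Next I would decompose $\hww=\tmtxPp^T\bvarphi+\hww_\bot$ with $\tmtxPp\hww_\bot=\bzero$; since $\tmtxPp$ is full rank this gives $\bvarphi=(\tmtxPp\tmtxPp^T)^{-1}((b-2\epsilon)\bone+\bc)$. Plugging into $1=\|\hww\|^2=\bvarphi^T\tmtxPp\tmtxPp^T\bvarphi+\|\hww_\bot\|^2$, writing $(b-2\epsilon)\bone+\bc=b\bone+(\bc-2\epsilon\bone)$, and subtracting the population identity $b^2\bone^T(\tmtxPp\tmtxPp^T)^{-1}\bone=1$ kills the leading term and leaves $(\bc-2\epsilon\bone)^T(\tmtxPp\tmtxPp^T)^{-1}(\bc-2\epsilon\bone)+\|\hww_\bot\|^2=-2b\,\bone^T(\tmtxPp\tmtxPp^T)^{-1}(\bc-2\epsilon\bone)$. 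The left side is manifestly nonnegative and $(\tmtxPp\tmtxPp^T)^{-1}\succ0$, so this forces $\bone^T(\tmtxPp\tmtxPp^T)^{-1}\bc\le2\epsilon\,\bone^T(\tmtxPp\tmtxPp^T)^{-1}\bone=2\epsilon/b^2$; then Condition~\ref{cond:emp} ($(\tmtxPp\tmtxPp^T)^{-1}\bone\ge\eta\bone$ entrywise) together with $\bc\ge\bzero$ gives $\eta\|\bc\|_1\le\bone^T(\tmtxPp\tmtxPp^T)^{-1}\bc$, hence $\|\bc\|\le\|\bc\|_1\le2\epsilon/(\eta b^2)$. For $\hww_\bot$ I would use $\|\hww_\bot\|\le\|\hww-\bv\|$ for any $\bv\in\Span(\tmtxPp^T)$: writing $\hww=\tmtxS_{\hat P}^T\hbbeta/\hb$ with $\bone^T\hbbeta=1$ for the empirical support set $\hat P$, and noting that every row of $\tmtxP$ (hence $\tmtxP_{\hat P}^T\hbbeta/\hb$) lies in $\Span(\tmtxPp^T)$, I get $\|\hww_\bot\|\le\|(\tmtxS_{\hat P}^T-\tmtxP_{\hat P}^T)\hbbeta\|/\hb\le\epsilon/(b-\epsilon)$.

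Finally I would assemble: subtracting the two closed forms gives $\hww-\bw=\tmtxPp^T(\tmtxPp\tmtxPp^T)^{-1}(\bc-2\epsilon\bone)+\hww_\bot$, so by orthogonality $\|\hww-\bw\|^2=(\bc-2\epsilon\bone)^T(\tmtxPp\tmtxPp^T)^{-1}(\bc-2\epsilon\bone)+\|\hww_\bot\|^2$. Expanding the quadratic form, the cross term $-4\epsilon\,\bone^T(\tmtxPp\tmtxPp^T)^{-1}\bc$ is nonpositive (again by Condition~\ref{cond:emp} and $\bc\ge\bzero$), so $\|\hww-\bw\|^2\le\bc^T(\tmtxPp\tmtxPp^T)^{-1}\bc+4\epsilon^2/b^2+\epsilon^2/(b-\epsilon)^2$; bounding $\bc^T(\tmtxPp\tmtxPp^T)^{-1}\bc\le\|\bc\|^2/\lambda_K(\tmtxPp\tmtxPp^T)\le4\epsilon^2/(\eta^2b^4\lambda_K(\tmtxPp\tmtxPp^T))$ gives $\|\hww-\bw\|^2\le\big(\frac{4}{\eta^2b^4\lambda_K(\tmtxPp\tmtxPp^T)}+\frac{4}{b^2}+\frac{1}{(b-\epsilon)^2}\big)\epsilon^2$. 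Using $\epsilon<b/2$, $\eta<1$, $b\le1$, $\lambda_K(\tmtxPp\tmtxPp^T)<1$ the bracket is at most $16/(\eta^2b^4\lambda_K(\tmtxPp\tmtxPp^T))$, so $\|\hww-\bw\|\le\zeta\epsilon$ with $\zeta=\frac{4}{\eta b^2\sqrt{\lambda_K(\tmtxPp\tmtxPp^T)}}$; and $b^{-2}=\bone^T(\tmtxPp\tmtxPp^T)^{-1}\bone\le K/\lambda_K(\tmtxPp\tmtxPp^T)$ converts this to $\zeta\le4K/(\eta(\lambda_K(\tmtxPp\tmtxPp^T))^{1.5})$. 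The hard part will be the middle step: there is only one scalar equation ($\|\hww\|=1$) to exploit, and the trick is to subtract the analogous population identity so that a sum of two manifestly nonnegative terms is forced to equal a linear functional of $\bc$, which — thanks to the positivity in Condition~\ref{cond:emp} — can only hold if $\bc=O(\epsilon)$; controlling $\hww_\bot$ additionally requires noticing that all rows of $\tmtxP$, not just those of $\tmtxPp$, lie in the span being projected onto.
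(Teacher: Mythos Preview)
Your proposal is correct and follows essentially the same route as the paper: you derive the closed form $\bw=b\,\tmtxPp^T(\tmtxPp\tmtxPp^T)^{-1}\bone$, use feasibility plus Lemma~\ref{lem:b_error} to write $\tmtxPp\hww=(b-2\epsilon)\bone+\bc$ with $\bc\ge\bzero$, decompose $\hww$ orthogonally, and then subtract the population identity $b^2\bone^T(\tmtxPp\tmtxPp^T)^{-1}\bone=1$ from $\|\hww\|^2=1$ to bound $\|\bc\|$ via Condition~\ref{cond:emp} and $\|\hww_\bot\|$ via the span argument, exactly as the paper does. The only cosmetic difference is that you bound the final bracket by $16/(\eta^2b^4\lambda_K(\tmtxPp\tmtxPp^T))$ rather than the paper's $12/(\eta^2b^4\lambda_K(\tmtxPp\tmtxPp^T))$, but both are at most $\zeta^2$, so the conclusion is unchanged.
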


\begin{lem}\label{lem:pure_b_bound}
	Let $(\hww,\hb)$ be the hyperplane of the optimal solution of \ocsvm with empirical inputs, then
	$\hb\bone\leq\tmtxSp\hww\leq\hb\bone+(\zeta+2)\epsilon\bone$.
	\begin{proof}
		Using Lemma~\ref{lem:w_bound},
		\bas{
			\tmtxSp\hww=\tmtxPp\hww+(\tmtxSp-\tmtxPp)\hww&\leq \tmtxPp\bw+\tmtxPp(\hww-\bw)+\epsilon\bone\leq b\bone+(\zeta\epsilon+\epsilon)\bone
			\leq \hb\bone+(\zeta+2)\epsilon\bone.
		}
	\end{proof}
\end{lem}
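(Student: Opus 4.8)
The plan is to prove the two bounds in $\hb\bone\leq\tmtxSp\hww\leq\hb\bone+(\zeta+2)\epsilon\bone$ separately. The lower bound is immediate: $(\hww,\hb)$ is the optimizer, hence in particular a feasible point, of the empirical one-class SVM~\eqref{eq:SVM}, so $\hww^T\trowS_i\geq\hb$ for all $i\in[n]$; restricting to the $K$ pure-node indices in $\mathcal{I}$, whose rows are collected in $\tmtxSp$, gives $\tmtxSp\hww\geq\hb\bone$ componentwise.

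For the upper bound I would split $\tmtxSp\hww=\tmtxPp\hww+(\tmtxSp-\tmtxPp)\hww$ and bound each summand by a perturbation argument. The rows of $(\tmtxSp-\tmtxPp)\hww$ are the scalars $(\trowS_i-\trowP_i)^T\hww$ for $i\in\mathcal{I}$, and since $\|\hww\|\leq1$ and $\|\trowS_i-\trowP_i\|\leq\epsilon$ by hypothesis, Cauchy--Schwarz gives $(\tmtxSp-\tmtxPp)\hww\leq\epsilon\bone$. For $\tmtxPp\hww$, write it as $\tmtxPp\bw+\tmtxPp(\hww-\bw)$. Using the analysis of the population SVM (Lemma~\ref{lem:ocsvm_popul}, together with the identity $\bw=b\,\tmtxPp^T(\tmtxPp\tmtxPp^T)^{-1}\bone$ established in the proof of Lemma~\ref{lem:w_bound}), every population corner lies exactly on the population supporting hyperplane, i.e. $\tmtxPp\bw=b\bone$. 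Finally, the rows of $\tmtxPp(\hww-\bw)$ are $\trowP_i^T(\hww-\bw)$ with $\|\trowP_i\|=1$, so each is at most $\|\hww-\bw\|\leq\zeta\epsilon$ by Lemma~\ref{lem:w_bound}, giving $\tmtxPp(\hww-\bw)\leq\zeta\epsilon\bone$.

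Assembling these three estimates yields $\tmtxSp\hww\leq b\bone+\zeta\epsilon\bone+\epsilon\bone$, and one last application of Lemma~\ref{lem:b_error} ($|b-\hb|\leq\epsilon$, hence $b\leq\hb+\epsilon$) turns this into $\tmtxSp\hww\leq\hb\bone+(\zeta+2)\epsilon\bone$, completing the argument.

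I do not expect any real obstacle inside this derivation; the substance is all pushed into the two lemmas being invoked --- Lemma~\ref{lem:w_bound}, which controls $\|\hww-\bw\|$, and Lemma~\ref{lem:ocsvm_popul}, which identifies $\bw$ and shows each corner is a support vector with $\tmtxPp\bw=b\bone$. Once those are granted, the present claim is just careful accounting of three $O(\epsilon)$ contributions --- the row-wise error carried in $\tmtxSp-\tmtxPp$, the hyperplane-normal error $\hww-\bw$, and the offset error $\hb-b$ --- and the only additional tools needed are Cauchy--Schwarz and the normalizations $\|\trowP_i\|=\|\trowS_i\|=1$.
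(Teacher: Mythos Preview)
Your proposal is correct and follows essentially the same approach as the paper: the same splitting $\tmtxSp\hww=\tmtxPp\bw+\tmtxPp(\hww-\bw)+(\tmtxSp-\tmtxPp)\hww$, the same three bounds $\tmtxPp\bw=b\bone$, $\tmtxPp(\hww-\bw)\leq\zeta\epsilon\bone$, $(\tmtxSp-\tmtxPp)\hww\leq\epsilon\bone$, and the same final use of $b\leq\hb+\epsilon$ from Lemma~\ref{lem:b_error}. You have simply spelled out more carefully why each step holds (and noted explicitly that the lower bound is immediate from feasibility, which the paper leaves implicit).
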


\begin{lem}\label{lem:r_i_bound}
	Let $(\bw,b)$, $(\hww,\hb)$ be the hyperplane of the optimal solution of \ocsvm with population and empirical inputs respectively, 
	and $S$ be the set of nodes selected as support vectors in the optimal solution of the dual problem with empirical inputs. 
	Then for $r_i$ defined in Lemma~\ref{lem:piup}, $r_i-1\leq\frac{1}{b/(2\epsilon)-1}$, $\forall i \in S$. Furthermore, $\forall i\in[n]$, if $\hww^T\trowS_i\leq \hb + (\zeta+2)\epsilon$, then $r_i-1\leq \frac{(\zeta+4)\epsilon}{b-2\epsilon}$.
	\begin{proof}
		First $\forall i \in S$, we have,
		\bas{
			\hb&=\hww^T\trowS_i=\hww^T\trowP_i+\hww^T(\trowS_i-\trowP_i)=r_i\sum_j\elecvxY_{ij}\hww^T\trowP_{I(j)}+\hww^T(\trowS_i-\trowP_i)\\
			&=r_i\sum_j\rowcvxY_{ij}\hww^T\trowS_{I(j)}+r_i\sum_j\rowcvxY_{ij}\hww^T(\trowP_{I(j)}-\trowS_{I(j)})+\hww^T(\trowS_i-\trowP_i)\\
			&\geq r_i\hb - r_i\epsilon -\epsilon.
		}
	This gives
	\bas{
		r_i\leq\frac{\hb+\epsilon}{\hb-\epsilon}\quad\Longrightarrow\quad r_i-1\leq\frac{2\epsilon}{\hb-\epsilon}\leq \frac{2\epsilon}{b-\epsilon-\epsilon}=\frac{1}{b/(2\epsilon)-1},
	}
	where the last step uses $b\geq\hb-\epsilon$ from Lemma~\ref{lem:b_error}. Similarly, for $i\in[n]$ such that $\hww^T\trowS_i\leq \hb + (\zeta+2)\epsilon$, we have $\hb + (\zeta+2)\epsilon \geq r_i\hb - r_i\epsilon -\epsilon$ and this gives $r_i-1\leq \frac{(\zeta+4)\epsilon}{b-2\epsilon}$.
	\end{proof}
\end{lem}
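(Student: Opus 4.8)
The plan is to read off $r_i$ by testing the empirical normal $\hww$ against $\trowS_i$, exploiting the representation $\trowP_i^T = r_i\rowcvxY_i^T\tmtxPp$ from Lemma~\ref{lem:piup}, where $\rowcvxY_i\geq\bzero$, $\rowcvxY_i^T\bone=1$, and $r_i\geq 1$. The governing facts are that every empirical input satisfies $\hww^T\trowS_j\geq\hb$, that support vectors attain equality $\hww^T\trowS_j=\hb$, that $\|\hww\|\leq 1$, and that $\max_j\|\trowP_j-\trowS_j\|\leq\epsilon$. These let me transfer between a population point $\trowP_i$ and its empirical image $\trowS_i$, and between each corner $\trowP_{I(j)}$ and its empirical image $\trowS_{I(j)}$, at a cost of $\epsilon$ per swap.

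First I would lower-bound $\hww$ against each corner. Since $\trowS_{I(j)}$ is one of the empirical inputs, feasibility gives $\hww^T\trowS_{I(j)}\geq\hb$, and the perturbation bound yields $\hww^T\trowP_{I(j)}\geq\hww^T\trowS_{I(j)}-\epsilon\geq\hb-\epsilon$. Substituting the Lemma~\ref{lem:piup} expansion into $\hww^T\trowP_i=r_i\sum_j\elecvxY_{ij}\,\hww^T\trowP_{I(j)}$ and using $\elecvxY_{ij}\geq 0$, $\sum_j\elecvxY_{ij}=1$, together with $\hb-\epsilon\geq 0$ for small $\epsilon$, gives $\hww^T\trowP_i\geq r_i(\hb-\epsilon)$. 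One further perturbation swap produces the master inequality $\hww^T\trowS_i\geq\hww^T\trowP_i-\epsilon\geq r_i(\hb-\epsilon)-\epsilon$, valid for every $i\in[n]$.

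The two claims then follow by bounding $\hww^T\trowS_i$ from above and solving for $r_i$. For a support vector $i\in S$, equality holds, so $\hb\geq r_i(\hb-\epsilon)-\epsilon$, whence $r_i\leq(\hb+\epsilon)/(\hb-\epsilon)$ and $r_i-1\leq 2\epsilon/(\hb-\epsilon)$; replacing $\hb$ by its lower bound $b-\epsilon$ from Lemma~\ref{lem:b_error} gives $r_i-1\leq 2\epsilon/(b-2\epsilon)=1/(b/(2\epsilon)-1)$. For the second part, the hypothesis $\hww^T\trowS_i\leq\hb+(\zeta+2)\epsilon$ replaces the left side, so $\hb+(\zeta+2)\epsilon\geq r_i(\hb-\epsilon)-\epsilon$, giving $r_i-1\leq(\zeta+4)\epsilon/(\hb-\epsilon)\leq(\zeta+4)\epsilon/(b-2\epsilon)$, exactly the stated bound.

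The main things to watch are the positivity conditions that keep every inequality pointing the right way: I need $\hb-\epsilon\geq 0$ (equivalently $b-2\epsilon>0$) so that dividing by $\hb-\epsilon$ and multiplying by $r_i\geq 1$ are legitimate, which holds in the small-$\epsilon$ regime. The only genuinely load-bearing step is the corner lower bound $\hww^T\trowP_{I(j)}\geq\hb-\epsilon$: it rests on empirical feasibility of $\hww$ at the \emph{empirical} corner $\trowS_{I(j)}$, so the $\epsilon$ is spent converting $\trowS_{I(j)}$ back to $\trowP_{I(j)}$. Everything else is the convexity bookkeeping of Lemma~\ref{lem:piup} together with the already-established $|\hb-b|\leq\epsilon$.
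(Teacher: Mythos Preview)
Your proof is correct and follows essentially the same approach as the paper: use feasibility of the empirical corners $\hww^T\trowS_{I(j)}\geq\hb$, transfer to population corners at cost $\epsilon$, feed this through the convex combination $\trowP_i=r_i\sum_j\elecvxY_{ij}\trowP_{I(j)}$, and transfer back to $\trowS_i$ at another cost $\epsilon$. The only difference is organizational---you first derive a master lower bound $\hww^T\trowS_i\geq r_i(\hb-\epsilon)-\epsilon$ valid for all $i$ and then specialize, whereas the paper starts from the support-vector equality $\hww^T\trowS_i=\hb$ and expands---but the arithmetic is identical.
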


\begin{lem}\label{lem:nearly_pure}
	For $S$ defined in Lemma~\ref{lem:r_i_bound}, $\forall i \in S$, $\exists j\in[K]$ such that for $\elecvxY_{ij}$ defined in Lemma~\ref{lem:piup},  $\rowcvxY_{ij}\geq 1-\epsilon_1$, for $\epsilon_1=\frac{2\epsilon}{b\lambda_K(\tmtxPp\tmtxPp^T)}$. Furthermore, $\forall i\in[n]$, if $\hww^T\trowS_i\leq \hb + (\zeta+2)\epsilon$, then $\exists j\in[K]$, $\elecvxY_{ij}\geq 1-\epsilon_2$, for $\epsilon_2=\frac{(\zeta+4)\epsilon}{(b+(\zeta+2)\epsilon)\lambda_K(\tmtxPp\tmtxPp^T)}< \frac{2\zeta\epsilon}{b\lambda_K(\tmtxPp\tmtxPp^T)}$.
	\begin{proof}
		By Lemma~\ref{lem:r_i_bound} we have 
		$r_i\leq1+\frac{1}{b/(2\epsilon)-1}=\frac{1}{1-2\epsilon/b}$.
		As $\trowP_i=r_i\rowcvxY_i^T\tmtxPp$, we have $1=\|\trowP_i\|=r_i\|\rowcvxY_i^T\tmtxPp\|$, so $\|\rowcvxY_i^T\tmtxPp\|\geq 1-2\epsilon/b$. 
		Let $\trowP_{-k}=\sum_{j\neq k}\frac{\elecvxY_{ij}}{1-\elecvxY_{ik}}\trowP_{I(j)}$, $\forall k\in[K]$. then $\rowcvxY_i^T\tmtxPp=\elecvxY_{ik}\trowP_{I(k)}+(1-\elecvxY_{ik})\trowP_{-k}$. It is easy to see that $\|\trowP_{-k}\|\leq 1$, then
		\bas{
			\|\rowcvxY_i^T\tmtxPp\|^2 &\leq \elecvxY_{ik}^2+(1-\elecvxY_{ik})^2+2\elecvxY_{ik}(1-\elecvxY_{ik})\trowP_{I(k)}^T\trowP_{-k},\\
			\trowP_{I(k)}^T\trowP_{-k}&=\sum_{j\neq k}\frac{\elecvxY_{ij}}{1-\elecvxY_{ik}}\trowP_{I(k)}^T\trowP_{I(j)}\leq \max_{j\neq k} \trowP_{I(k)}^T\trowP_{I(j)} \leq \max_{i\neq l} \trowP_{I(i)}^T\trowP_{I(l)}.
		}
		Using $2\bx_1^T\bx_2=\|\bx_1\|^2+\|\bx_2\|^2-\|\bx_1-\bx_2\|^2$ for any same length vectors $\bx_1$ and $\bx_2$, and
		\bas{
			\|\trowP_{I(i)}-\trowP_{I(l)}\|^2&=\|(\be_i-\be_l)^T\tmtxPp\|^2=(\be_l-\be_j)^T\tmtxPp\tmtxPp^T(\be_i-\be_l)\\
				    	&\geq {2}\min_{\|\bx\|=1}\bx^T\tmtxPp\tmtxPp^T\bx={2}\lambda_K(\tmtxPp\tmtxPp^T),
		}
		we have $\max_{i\neq l} \trowP_{I(i)}^T\trowP_{I(l)}\leq 1-\lambda_K(\tmtxPp\tmtxPp^T)$. Then,
		\bas{
			(1-2\epsilon/b)^2 &\leq \|\rowcvxY_i^T\tmtxPp\|^2 \leq \elecvxY_{ik}^2+(1-\elecvxY_{ik})^2+2\elecvxY_{ik}(1-\elecvxY_{ik})(1-\lambda_K(\tmtxPp\tmtxPp^T))\\
			&= 1-2\elecvxY_{ik}(1-\elecvxY_{ik})\lambda_K(\tmtxPp\tmtxPp^T),
		}
		which gives $\elecvxY_{ik}(1-\elecvxY_{ik})\leq\frac{2\epsilon}{b\lambda_K(\tmtxPp\tmtxPp^T)}:=\epsilon_1$, $\forall k\in[K]$. Since $\sum_k\elecvxY_{ik}=1$, we must have $\exists j\in[K]$, $\rowcvxY_{ij}\geq 1-\epsilon_1$. Similarly, for $i\in[n]$ such that $\hww^T\trowS_i\leq \hb + (\zeta+2)\epsilon$, we have $r_i-1\leq \frac{(\zeta+4)\epsilon}{b-2\epsilon}$ from Lemma~\ref{lem:r_i_bound}, then $\rowcvxY_i^T\tmtxPp=\frac{1}{r_i}\geq 1-\frac{(\zeta+4)\epsilon}{b+(\zeta+2)\epsilon}$, and this gives that $\elecvxY_{ik}(1-\elecvxY_{ik})\leq \frac{(\zeta+4)\epsilon}{(b+(\zeta+2)\epsilon)\lambda_K(\tmtxPp\tmtxPp^T)}:=\epsilon_2< \frac{2\zeta\epsilon}{b\lambda_K(\tmtxPp\tmtxPp^T)}$, using $\zeta\geq4$ and $(\zeta+2)\epsilon\geq 0$.  
		Also since $\sum_k\elecvxY_{ik}=1$, we must have $\exists j\in[K]$, $\rowcvxY_{ij}\geq 1-\epsilon_2$.
	\end{proof}
\end{lem}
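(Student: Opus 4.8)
The plan is to convert the bound on the dilation factor $r_i$ from Lemma~\ref{lem:r_i_bound} into a lower bound on $\|\rowcvxY_i^T\tmtxPp\|$, and then exploit the strict convexity of the unit sphere — made quantitative through $\lambda_K(\tmtxPp\tmtxPp^T)$ — to force the mixture weights $\rowcvxY_i$ to concentrate on a single corner. First, by Lemma~\ref{lem:piup} we have $\trowP_i^T = r_i\rowcvxY_i^T\tmtxPp$ with $\rowcvxY_i^T\bone=1$, and $\|\trowP_i\|=1$, so $\|\rowcvxY_i^T\tmtxPp\| = 1/r_i$. For $i\in S$, Lemma~\ref{lem:r_i_bound} gives $r_i\le 1/(1-2\epsilon/b)$, hence $\|\rowcvxY_i^T\tmtxPp\|\ge 1-2\epsilon/b$; for $i$ with $\hww^T\trowS_i\le\hb+(\zeta+2)\epsilon$ it gives $r_i-1\le (\zeta+4)\epsilon/(b-2\epsilon)$, hence $\|\rowcvxY_i^T\tmtxPp\|\ge 1-(\zeta+4)\epsilon/(b+(\zeta+2)\epsilon)$. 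So the entire statement reduces to showing that a convex combination of the (distinct, unit-norm) corner rows can have norm close to $1$ only when the combination is nearly a single vertex of the simplex.

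For the geometric core I would first control the pairwise angles of the corners: for pure-node rows $\trowP_{I(i)},\trowP_{I(l)}$, polarization gives $2\trowP_{I(i)}^T\trowP_{I(l)} = 2 - \|(\be_i-\be_l)^T\tmtxPp\|^2 = 2 - (\be_i-\be_l)^T\tmtxPp\tmtxPp^T(\be_i-\be_l)$, and bounding the quadratic form below by $\|\be_i-\be_l\|^2\,\lambda_K(\tmtxPp\tmtxPp^T) = 2\lambda_K(\tmtxPp\tmtxPp^T)$ yields $\max_{i\ne l}\trowP_{I(i)}^T\trowP_{I(l)}\le 1-\lambda_K(\tmtxPp\tmtxPp^T)$. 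Then, fixing any corner index $k$, I split $\rowcvxY_i^T\tmtxPp = \elecvxY_{ik}\trowP_{I(k)} + (1-\elecvxY_{ik})\trowP_{-k}$, where $\trowP_{-k}:=\sum_{j\ne k}\frac{\elecvxY_{ij}}{1-\elecvxY_{ik}}\trowP_{I(j)}$ is itself a convex combination of unit vectors, so $\|\trowP_{-k}\|\le 1$. Expanding $\|\rowcvxY_i^T\tmtxPp\|^2$ and inserting $\trowP_{I(k)}^T\trowP_{-k}\le 1-\lambda_K(\tmtxPp\tmtxPp^T)$ gives $\|\rowcvxY_i^T\tmtxPp\|^2\le 1-2\elecvxY_{ik}(1-\elecvxY_{ik})\lambda_K(\tmtxPp\tmtxPp^T)$. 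Comparing with the lower bound above and using $(1-x)^2\ge 1-2x$ yields $\elecvxY_{ik}(1-\elecvxY_{ik})\le\epsilon_1$ for every $k$ with $\epsilon_1=\tfrac{2\epsilon}{b\lambda_K(\tmtxPp\tmtxPp^T)}$, and the analogous bound with $\epsilon_2=\tfrac{(\zeta+4)\epsilon}{(b+(\zeta+2)\epsilon)\lambda_K(\tmtxPp\tmtxPp^T)}$ in the second case; the estimate $\epsilon_2<\tfrac{2\zeta\epsilon}{b\lambda_K(\tmtxPp\tmtxPp^T)}$ is then immediate from $\zeta\ge 4$. Finally, since $\sum_k\elecvxY_{ik}=1$ and $\epsilon$ is small enough (under the paper's standing assumptions) that each $\epsilon_\ell$ is well below $1/2$, each coordinate is either near $0$ or near $1$; at most one can exceed $1/2$ (two would sum past $1$) and at least one must, so some $j$ has $\elecvxY_{ij}\ge 1-\epsilon_1$ (resp.\ $1-\epsilon_2$).

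I expect the geometric core of the second paragraph to be the main obstacle: one must expand $\|\rowcvxY_i^T\tmtxPp\|^2$ so that the cross term carries the corner-separation gap with the correct sign, which hinges on the clean two-term split around each corner together with the bound $\|\trowP_{-k}\|\le 1$ on the residual convex combination, and one must extract the separation of corners in terms of $\lambda_K(\tmtxPp\tmtxPp^T)$ rather than a weaker spectral quantity. Once that gap is in hand, the passage from ``$r_i\approx 1$'' to ``$\rowcvxY_i$ nearly a vertex'' is a short convexity argument, and the two cases ($i\in S$ versus $i$ merely close to the empirical hyperplane) differ only in which bound on $r_i$ is fed in.
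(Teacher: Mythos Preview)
Your proposal is correct and matches the paper's proof essentially step for step: the same conversion of the $r_i$ bound into a lower bound on $\|\rowcvxY_i^T\tmtxPp\|$, the same two-term split around a fixed corner with $\|\trowP_{-k}\|\le 1$, the same polarization bound $\max_{i\ne l}\trowP_{I(i)}^T\trowP_{I(l)}\le 1-\lambda_K(\tmtxPp\tmtxPp^T)$, and the same passage from $\elecvxY_{ik}(1-\elecvxY_{ik})\le\epsilon_\ell$ for all $k$ to a single dominant coordinate. Your final dichotomy argument (each $\elecvxY_{ik}$ is near $0$ or near $1$, at most one can be large, and the sum constraint forces exactly one) is in fact a bit more explicit than the paper's one-line ``since $\sum_k\elecvxY_{ik}=1$, we must have \ldots''.
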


\begin{rem}
	Lemma~\ref{lem:nearly_pure} shows that for \ocsvm with empirical inputs, the support vectors selected are all nearly corner points. Lemma~\ref{lem:pure_b_bound} shows that each corner point is closed to the hyperplane $(\hww,\hb)$ selected by \ocsvm by $(\zeta+2)\epsilon$, and then Lemma~\ref{lem:nearly_pure} shows that points close to hyperplane $(\hww,\hb)$ by $(\zeta+2)\epsilon$ are all nearly corner points. So choosing points that are $(\zeta+2)\epsilon$ close to $(\hww,\hb)$ will guarantee us all the $K$ corner points and some nearly corner points.
\end{rem}

\begin{lem}\label{lem:distance}
	Let $S_c=\{i:\hww^T\trowS_i\leq \hb + (\zeta+2)\epsilon\}$, then
	$\forall i,j\in S_c$, for $\epsilon_3=\epsilon+\frac{(\zeta+4)\epsilon}{b-2\epsilon}$, we have $\|\rowcvxY_i-\rowcvxY_j\|\sqrt{\lambda_K(\tmtxPp\tmtxPp^T)}-2\epsilon_3\leq\|\trowS_i-\trowS_j\|\leq \|\rowcvxY_i-\rowcvxY_j\|\sqrt{\lambda_1(\tmtxPp\tmtxPp^T)}+2\epsilon_3$.
	\begin{proof}
		\sloppy
		First we have, 
		${
			\|\trowS_i-\rowcvxY_i^T\tmtxPp\|=\|\trowS_i-r_i\rowcvxY_i^T\tmtxPp+(r_i-1)\rowcvxY_i^T\tmtxPp\|\leq \epsilon+\frac{(\zeta+4)\epsilon}{b-2\epsilon}:=\epsilon_3,
		}$ 
		where last step is by Lemma~\ref{lem:r_i_bound}. This gives $\|(\trowS_i-\trowS_j)-(\rowcvxY_i\tmtxPp-\rowcvxY_j\tmtxPp)\|\leq 2\epsilon_3$, then we have
		${
			\|\rowcvxY_i^T\tmtxPp-\rowcvxY_j^T\tmtxPp\|-2\epsilon_3\leq\|\trowS_i-\trowS_j\|\leq \|\rowcvxY_i^T\tmtxPp-\rowcvxY_j^T\tmtxPp\|+2\epsilon_3
		}$. 
		Combing with
		\bas{
			\|\rowcvxY_i-\rowcvxY_j\|\sqrt{\lambda_K(\tmtxPp\tmtxPp^T)}\leq\|\rowcvxY_i^T\tmtxPp-\rowcvxY_j^T\tmtxPp\|\leq\|\rowcvxY_i-\rowcvxY_j\|\sqrt{\lambda_1(\tmtxPp\tmtxPp^T)},
		}
		we have the result.
		\fussy
	\end{proof}
\end{lem}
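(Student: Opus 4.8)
The plan is to route the comparison through the ``ideal'' conic points $\rowcvxY_i^T\tmtxPp$: first I would bound $\|\trowS_i-\rowcvxY_i^T\tmtxPp\|$ uniformly over $i\in S_c$, then compare two such points with the triangle inequality, and finally convert a bound on $\|(\rowcvxY_i-\rowcvxY_j)^T\tmtxPp\|$ into one on $\|\rowcvxY_i-\rowcvxY_j\|$ using the extreme singular values of $\tmtxPp$.

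For the first step I would invoke Lemma~\ref{lem:piup}, which gives $\trowP_i^T=r_i\rowcvxY_i^T\tmtxPp$ with $\rowcvxY_i^T\bone=1$ and $r_i\geq 1$. Since $\rowcvxY_i^T\tmtxPp$ is then a convex combination of the unit-norm rows of $\tmtxPp$, we have $\|\rowcvxY_i^T\tmtxPp\|\leq 1$, hence $\|\trowP_i-\rowcvxY_i^T\tmtxPp\|=(r_i-1)\|\rowcvxY_i^T\tmtxPp\|\leq r_i-1$. Combining this with $\|\trowS_i-\trowP_i\|\leq\epsilon$ and the ``furthermore'' bound $r_i-1\leq\frac{(\zeta+4)\epsilon}{b-2\epsilon}$ of Lemma~\ref{lem:r_i_bound} --- which is precisely the estimate that holds for every index with $\hww^T\trowS_i\leq\hb+(\zeta+2)\epsilon$, i.e.\ for all of $S_c$ --- gives $\|\trowS_i-\rowcvxY_i^T\tmtxPp\|\leq\epsilon+\frac{(\zeta+4)\epsilon}{b-2\epsilon}=\epsilon_3$.

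For the remaining steps, for $i,j\in S_c$ the triangle inequality yields $\left|\,\|\trowS_i-\trowS_j\|-\|\rowcvxY_i^T\tmtxPp-\rowcvxY_j^T\tmtxPp\|\,\right|\leq\|\trowS_i-\rowcvxY_i^T\tmtxPp\|+\|\trowS_j-\rowcvxY_j^T\tmtxPp\|\leq 2\epsilon_3$. Then, writing $\rowcvxY_i^T\tmtxPp-\rowcvxY_j^T\tmtxPp=(\rowcvxY_i-\rowcvxY_j)^T\tmtxPp$ and using the identity $\|\bx^T\tmtxPp\|^2=\bx^T(\tmtxPp\tmtxPp^T)\bx$, I would apply the two-sided Rayleigh-quotient bound for the positive-definite $K\times K$ matrix $\tmtxPp\tmtxPp^T$ to obtain $\sqrt{\lambda_K(\tmtxPp\tmtxPp^T)}\,\|\rowcvxY_i-\rowcvxY_j\|\leq\|(\rowcvxY_i-\rowcvxY_j)^T\tmtxPp\|\leq\sqrt{\lambda_1(\tmtxPp\tmtxPp^T)}\,\|\rowcvxY_i-\rowcvxY_j\|$. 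Substituting this into the previous inequality gives both claimed bounds.

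The argument is essentially a routine chain of triangle inequalities, so I do not expect a genuine obstacle; the one place to be careful is to invoke the version of Lemma~\ref{lem:r_i_bound} valid for \emph{all} points near the empirical hyperplane (the whole set $S_c$), not merely for the empirical support vectors, so that $\epsilon_3$ is a uniform bound over $S_c$. The rest --- the convexity estimate $\|\rowcvxY_i^T\tmtxPp\|\leq 1$ and the singular-value pinching $\sqrt{\lambda_K(\tmtxPp\tmtxPp^T)}\,\|\bx\|\leq\|\bx^T\tmtxPp\|\leq\sqrt{\lambda_1(\tmtxPp\tmtxPp^T)}\,\|\bx\|$ --- is standard linear algebra.
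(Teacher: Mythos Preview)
Your proposal is correct and follows essentially the same route as the paper's proof: both bound $\|\trowS_i-\rowcvxY_i^T\tmtxPp\|$ by splitting off $\trowS_i-\trowP_i$ and $(r_i-1)\rowcvxY_i^T\tmtxPp$, invoke the ``furthermore'' clause of Lemma~\ref{lem:r_i_bound} for the $r_i-1$ term, and then sandwich $\|(\rowcvxY_i-\rowcvxY_j)^T\tmtxPp\|$ by the extreme singular values of $\tmtxPp$. Your write-up is in fact slightly more explicit than the paper's in justifying $\|\rowcvxY_i^T\tmtxPp\|\leq 1$ and in noting which version of Lemma~\ref{lem:r_i_bound} is needed.
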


\newpage
\begin{lem}\label{lem:p_clusters}
	Let $S_c=\{i:\hww^T\trowS_i\leq \hb + (\zeta+2)\epsilon)\}$, then there exists exact $K$ clusters in $S_c$, given $\epsilon\leq c_{\epsilon}\frac{\eta(\lambda_K(\tmtxPp\tmtxPp^T))^3}{K^{1.5}\sqrt{\kappa(\tmtxPp\tmtxPp^T)}}$, for some constant $c_{\epsilon}$.
	\begin{proof}
		First because $I\in S_c$ from Lemma~\ref{lem:pure_b_bound}, there exists at least $K$ clusters in $S_c$. By Lemma~\ref{lem:nearly_pure}, $\forall i\in S_c$, $\exists k_i\in[K]$, $\elecvxY_{ik_i}\geq 1-\epsilon_2$. If $k_i=k_j$, by Lemma~\ref{lem:distance},
		\bas{
			\|\trowS_i-\trowS_j\|\leq \|\rowcvxY_i-\rowcvxY_j\|\sqrt{\lambda_1(\tmtxPp\tmtxPp^T)}+2\epsilon_3\leq \sqrt{3}\epsilon_2\sqrt{\lambda_1(\tmtxPp\tmtxPp^T)}+2\epsilon_3.
		}
		This means if $j$ is a corner point, $i$ will be close to it, and will be in the same cluster as long as there is enough separation between different clusters. Now we will prove this is true.
		Similarly, if $k_i\neq k_j$,
		\bas{
			\|\trowS_i-\trowS_j\|\geq \|\rowcvxY_i-\rowcvxY_j\|\sqrt{\lambda_K(\tmtxPp\tmtxPp^T)}-2\epsilon_3\geq \sqrt{2}(1-2\epsilon_2)\sqrt{\lambda_K(\tmtxPp\tmtxPp^T)}-2\epsilon_3.
		}
		In order to have enough separation between $p$ clusters, we need 
		\bas{
			\sqrt{2}(1-2\epsilon_2)\sqrt{\lambda_K(\tmtxPp\tmtxPp^T)}-2\epsilon_3&=\sqrt{2}\sqrt{\lambda_K(\tmtxPp\tmtxPp^T)}-2\sqrt{2}\epsilon_2\sqrt{\lambda_K(\tmtxPp\tmtxPp^T)}-2\epsilon_3\\
			&> c'(\sqrt{3}\epsilon_2\sqrt{\lambda_1(\tmtxPp\tmtxPp^T)}+2\epsilon_3),
		} 
		for some constant $c'>2$. This is equivalent to show
		\bas{
			\sqrt{2} > (2\sqrt{2}+\sqrt{3}c'\sqrt{\kappa(\tmtxPp\tmtxPp^T)})\epsilon_2+\frac{2+2c'}{\sqrt{\lambda_K(\tmtxPp\tmtxPp^T)}}\epsilon_3.
		}
		As
		\bas{
			&(2\sqrt{2}+\sqrt{3}c'\sqrt{\kappa(\tmtxPp\tmtxPp^T)})\epsilon_2+\frac{2+2c'}{\sqrt{\lambda_K(\tmtxPp\tmtxPp^T)}}\epsilon_3\\
			\leq& (2\sqrt{2}+\sqrt{3}c'\sqrt{\kappa(\tmtxPp\tmtxPp^T)})\frac{2\zeta\epsilon}{b\lambda_K(\tmtxPp\tmtxPp^T)}+\frac{2+2c'}{\sqrt{\lambda_K(\tmtxPp\tmtxPp^T)}}\left(\epsilon+\frac{(\zeta+4)\epsilon}{b-2\epsilon}\right)\\
			\leq & c_1\frac{\sqrt{\kappa(\tmtxPp\tmtxPp^T)}\zeta\epsilon}{b\lambda_K(\tmtxPp\tmtxPp^T)}+\frac{c_2}{\lambda_K(\tmtxPp\tmtxPp^T)}\frac{\zeta\epsilon}{b}
			\leq c_3\frac{\sqrt{\kappa(\tmtxPp\tmtxPp^T)}\epsilon}{\lambda_K(\tmtxPp\tmtxPp^T)}\frac{4K}{\eta(\lambda_K(\tmtxP_P\tmtxP_P^T))^{1.5}}\frac{\sqrt{K}}{\sqrt{\lambda_K(\tmtxPp\tmtxPp^T)}}\\
			\leq & c_4\frac{K^{1.5}\sqrt{\kappa(\tmtxPp\tmtxPp^T)}}{\eta(\lambda_K(\tmtxPp\tmtxPp^T))^3}\epsilon,
		}
		where $c_i$, $i\in[4]$ are some constants we do not specify and we use $1/b^2\leq K/\lambda_K(\tmtxP_P\tmtxP_P^T)$ in the second last inequality.
		So a sufficient condition for separated clusters is $c_4\frac{K^{1.5}\sqrt{\kappa(\tmtxPp\tmtxPp^T)}}{\eta(\lambda_K(\tmtxPp\tmtxPp^T))^3}\epsilon<\sqrt{2}$, which is
		\bas{
			\epsilon\leq c_{\epsilon}\frac{\eta(\lambda_K(\tmtxPp\tmtxPp^T))^3}{K^{1.5}\sqrt{\kappa(\tmtxPp\tmtxPp^T)}},
		}
		for some constant $c_{\epsilon}$. 
	\end{proof}
\end{lem}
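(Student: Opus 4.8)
The plan is to establish two complementary facts about $S_c$: that it contains all $K$ pure-node indices $\I$ (so it breaks into at least $K$ clusters), and that every member of $S_c$ lies, in the geometry of the normalized rows $\trowS_i$, very close to exactly one of the $K$ corners, with the distance between points assigned to different corners dominating the spread of points assigned to the same corner by a fixed multiplicative constant. Together these force any reasonable clustering rule --- e.g.\ a single-linkage cut at a threshold strictly between the two scales --- to return exactly $K$ groups, one per corner.

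First I would invoke Lemma~\ref{lem:pure_b_bound}, whose left inequality $\hb\bone\le\tmtxSp\hww$ shows that each of the $K$ pure nodes satisfies the defining condition of $S_c$; hence $\I\subseteq S_c$ and there are at least $K$ clusters. For the matching upper bound I would use Lemma~\ref{lem:nearly_pure}: every $i\in S_c$ has a coordinate $k_i\in[K]$ with $\elecvxY_{ik_i}\ge 1-\epsilon_2$, where $\epsilon_2<\frac{2\zeta\epsilon}{b\,\lambda_K(\tmtxPp\tmtxPp^T)}$, which assigns point $i$ to corner $k_i$. Lemma~\ref{lem:distance} then turns coordinate-space distances into embedding distances: for $i,j\in S_c$, $\|\trowS_i-\trowS_j\|$ is sandwiched between $\|\rowcvxY_i-\rowcvxY_j\|\sqrt{\lambda_K(\tmtxPp\tmtxPp^T)}-2\epsilon_3$ and $\|\rowcvxY_i-\rowcvxY_j\|\sqrt{\lambda_1(\tmtxPp\tmtxPp^T)}+2\epsilon_3$ with $\epsilon_3=\epsilon+\frac{(\zeta+4)\epsilon}{b-2\epsilon}$. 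When $k_i=k_j$, both $\rowcvxY_i$ and $\rowcvxY_j$ lie within $O(\epsilon_2)$ of $\be_{k_i}$, so $\|\trowS_i-\trowS_j\|=O\big(\epsilon_2\sqrt{\lambda_1(\tmtxPp\tmtxPp^T)}+\epsilon_3\big)$; when $k_i\neq k_j$, the dominant coordinates force $\|\rowcvxY_i-\rowcvxY_j\|\ge\sqrt{2}(1-2\epsilon_2)$, so $\|\trowS_i-\trowS_j\|\ge\sqrt{2}(1-2\epsilon_2)\sqrt{\lambda_K(\tmtxPp\tmtxPp^T)}-2\epsilon_3$.

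To finish I would require the inter-corner lower bound to exceed a constant multiple (strictly larger than $2$) of the intra-corner upper bound. After dividing by $\sqrt{\lambda_K(\tmtxPp\tmtxPp^T)}$ this reads $\sqrt{2}>c_1\sqrt{\kappa(\tmtxPp\tmtxPp^T)}\,\epsilon_2+c_2\,\epsilon_3/\sqrt{\lambda_K(\tmtxPp\tmtxPp^T)}$ for absolute constants $c_1,c_2$. Substituting the crude bounds $\zeta\le\frac{4K}{\eta(\lambda_K(\tmtxPp\tmtxPp^T))^{1.5}}$ (Lemma~\ref{lem:w_bound}), $1/b^2\le K/\lambda_K(\tmtxPp\tmtxPp^T)$, and the displayed forms of $\epsilon_2$ and $\epsilon_3$ collapses the right-hand side to $c\,\frac{K^{1.5}\sqrt{\kappa(\tmtxPp\tmtxPp^T)}}{\eta(\lambda_K(\tmtxPp\tmtxPp^T))^3}\,\epsilon$, and asking that this be $<\sqrt{2}$ yields exactly the claimed condition $\epsilon\le c_\epsilon\,\frac{\eta(\lambda_K(\tmtxPp\tmtxPp^T))^3}{K^{1.5}\sqrt{\kappa(\tmtxPp\tmtxPp^T)}}$. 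Under this condition, cutting a single-linkage dendrogram at any threshold strictly between the two scales returns precisely $K$ clusters.

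The main obstacle is not the geometry --- which is routine once Lemmas~\ref{lem:pure_b_bound}--\ref{lem:distance} are in hand --- but the bookkeeping: collapsing the nested bounds on $\zeta$, $b$, $\epsilon_2$ and $\epsilon_3$ into a single clean monomial in $K$, $\eta$, $\lambda_K(\tmtxPp\tmtxPp^T)$ and $\kappa(\tmtxPp\tmtxPp^T)$ while keeping every hidden constant absolute, and checking that the auxiliary inequalities used along the way (such as $\epsilon<b/2$, $\eta<1$, $\zeta\ge4$, $\lambda_K(\tmtxPp\tmtxPp^T)\le1$) are all implied by the final bound on $\epsilon$.
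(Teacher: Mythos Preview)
Your proposal is correct and follows essentially the same route as the paper's proof: you invoke Lemma~\ref{lem:pure_b_bound} to get $\I\subseteq S_c$, Lemma~\ref{lem:nearly_pure} to assign each $i\in S_c$ a dominant corner $k_i$ with $\elecvxY_{ik_i}\ge 1-\epsilon_2$, and Lemma~\ref{lem:distance} to convert $\|\rowcvxY_i-\rowcvxY_j\|$ into bounds on $\|\trowS_i-\trowS_j\|$, obtaining the same intra-corner upper bound and inter-corner lower bound (the paper's explicit constant is $\sqrt{3}\epsilon_2$ where you write $O(\epsilon_2)$), then require the latter to exceed a constant multiple $>2$ of the former and collapse the resulting inequality using $\zeta\le 4K/(\eta\lambda_K^{1.5})$ and $1/b^2\le K/\lambda_K$. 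The bookkeeping caveats you flag (e.g.\ $\epsilon<b/2$, $\zeta\ge 4$) are exactly the ones the paper uses implicitly.
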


\newpage
\section{Consistency of inferred parameters}
\begin{lem}\label{lem:yp_error}
	For set $C$ returned by Algorithm~\ref{algo:inferCone}, there exits a permutation matrix $\bpi\in \R ^{K\times K}$ that
	${
		\|\tmtxS_{C}-\bpi\tmtxPp\|_F	\leq \epsilon_4, 
	}$ 
	for $\epsilon_4=\frac{c_Y{K}\zeta}{(\lambda_K(\tmtxPp\tmtxPp^T))^{1.5}}\epsilon$ and $c_Y$ is some constant.
	\begin{proof}
		By Lemma~\ref{lem:nearly_pure}, we know that $\forall i \in S_c$, $\exists j\in[K]$ such that $\elecvxY_{ij}\geq 1-\epsilon_2$. Then we have:
		\bas{
			\|\trowS_i-\trowP_{I(j)}\|&\leq \|\trowS_i-\trowP_i\| + \|\trowP_i-\trowP_{I(j)}\|\leq \epsilon + \|r_i\sum_l \elecvxY_{il} \trowP_{I(l)}-r_i\trowP_{I(j)}\|+\|(r_i-1)\trowP_{I(j)}\|\\
			&\leq \epsilon + r_i((1-\elecvxY_{ij})+\|\sum_{l\neq j} \elecvxY_{il} \trowP_{I(l)}\|)+(r_i-1)\\
			&\leq \epsilon + \left(1+ \frac{(\zeta+4)\epsilon}{b-2\epsilon}\right)(2\epsilon_2) + \frac{(\zeta+4)\epsilon}{b-2\epsilon}\tag{by Lemma~\ref{lem:r_i_bound}}\\
			&\leq \left(1+\frac{4\zeta}{b}\right)\epsilon+4\epsilon_2<\frac{c_Y\zeta}{b\lambda_K(\tmtxPp\tmtxPp^T)}\epsilon\leq \frac{c_Y\sqrt{K}\zeta}{(\lambda_K(\tmtxPp\tmtxPp^T))^{1.5}}\epsilon,
		}
		where we use $\epsilon\leq b/(4\zeta)$ and $\zeta\geq4$. And $c_Y$ is a constant. Then $\|\tmtxS_{C}-\bpi\tmtxPp\|_F	\leq \frac{c_Y{K}\zeta}{(\lambda_K(\tmtxPp\tmtxPp^T))^{1.5}}\epsilon$.
	\end{proof}
\end{lem}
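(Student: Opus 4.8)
The plan is to match each representative index in $C$ to a distinct ideal corner, to bound the deviation of each selected empirical row from its matched ideal corner, and then to aggregate these row bounds into the Frobenius estimate. Recall that $C$ is produced by Algorithm~\ref{algo:inferCone} by clustering the points of $S_c=\{i:\hww^T\trowS_i\leq\hb+(\zeta+2)\epsilon\}$ into $K$ groups and choosing one representative per group; by Lemma~\ref{lem:p_clusters} there are exactly $K$ such clusters, so $|C|=K$.

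First I would construct the permutation $\bpi$. For every $i\in S_c$, Lemma~\ref{lem:nearly_pure} supplies a coordinate $j(i)\in[K]$ with $\elecvxY_{i,j(i)}\geq 1-\epsilon_2$; since $\epsilon_2<1/2$ under the $\epsilon$-condition of Lemma~\ref{lem:p_clusters}, this $j(i)$ is unique. The separation estimates of Lemma~\ref{lem:distance} (as already exploited in the proof of Lemma~\ref{lem:p_clusters}) show that two points of $S_c$ sharing the same value $j(\cdot)$ lie in a common cluster, while points with distinct $j(\cdot)$ lie in different clusters. Hence the $K$ representatives in $C$ realize all $K$ distinct values of $j(\cdot)$, and the assignment $i\mapsto I(j(i))$ is a bijection from $C$ onto the pure-node set $\mathcal{I}$; this bijection is encoded by $\bpi$.

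Next I would establish the per-row bound on $\|\trowS_i-\trowP_{I(j)}\|$ for each $i\in C$ with matched corner $j=j(i)$. The route is the triangle inequality $\|\trowS_i-\trowP_{I(j)}\|\leq\|\trowS_i-\trowP_i\|+\|\trowP_i-\trowP_{I(j)}\|$, where the first term is at most $\epsilon$ by the definition of $\epsilon$. For the second term I would invoke Lemma~\ref{lem:piup} to write $\trowP_i=r_i\rowcvxY_i^T\tmtxPp$ and split $\trowP_i-\trowP_{I(j)}$ into a scaling part $(r_i-1)\trowP_{I(j)}$ and an off-corner-mass part $r_i\big(\sum_{l\neq j}\elecvxY_{il}\trowP_{I(l)}-(1-\elecvxY_{ij})\trowP_{I(j)}\big)$. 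The off-corner mass is controlled because $1-\elecvxY_{ij}\leq\epsilon_2$ and $\sum_{l\neq j}\elecvxY_{il}=1-\elecvxY_{ij}\leq\epsilon_2$ (Lemma~\ref{lem:nearly_pure}), while $r_i-1$ is bounded by Lemma~\ref{lem:r_i_bound}. Collecting terms yields $\|\trowP_i-\trowP_{I(j)}\|\leq 2r_i\epsilon_2+(r_i-1)$, and substituting $\epsilon_2<\tfrac{2\zeta\epsilon}{b\lambda_K(\tmtxPp\tmtxPp^T)}$ and $r_i-1\leq\tfrac{(\zeta+4)\epsilon}{b-2\epsilon}$, then simplifying with $\zeta\geq4$, $\epsilon\leq b/(4\zeta)$, and $1/b^2\leq K/\lambda_K(\tmtxPp\tmtxPp^T)$, produces a per-row bound of order $\tfrac{\sqrt{K}\,\zeta}{(\lambda_K(\tmtxPp\tmtxPp^T))^{1.5}}\epsilon$.

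Finally I would aggregate: since $\tmtxS_C-\bpi\tmtxPp$ has exactly $K$ rows, each of squared norm at most the per-row bound squared, the Frobenius norm picks up a further factor $\sqrt{K}$, producing $\epsilon_4=\tfrac{c_Y K\zeta}{(\lambda_K(\tmtxPp\tmtxPp^T))^{1.5}}\epsilon$ after absorbing constants into $c_Y$. I expect the main obstacle to be the bijection argument rather than the algebra: one must verify that the clustering step cannot collapse two genuine corners into one cluster or split a single corner across two clusters, i.e.\ that each cluster corresponds to exactly one $j$-value. This is precisely what the separation gap of Lemma~\ref{lem:distance} together with the cluster count of Lemma~\ref{lem:p_clusters} guarantees. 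Once the matching $\bpi$ is fixed, the remaining estimates are the routine triangle-inequality bookkeeping sketched above.
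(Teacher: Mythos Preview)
Your proposal is correct and follows essentially the same approach as the paper: both bound each row $\|\trowS_i-\trowP_{I(j)}\|$ via the triangle inequality, split $\trowP_i-\trowP_{I(j)}$ into the scaling part $(r_i-1)\trowP_{I(j)}$ and the off-corner-mass part controlled by $\epsilon_2$, invoke Lemmas~\ref{lem:r_i_bound} and~\ref{lem:nearly_pure} for the respective bounds, simplify with $\epsilon\leq b/(4\zeta)$, $\zeta\geq4$, and $1/b^2\leq K/\lambda_K(\tmtxPp\tmtxPp^T)$, and then aggregate over $K$ rows. Your treatment is in fact slightly more explicit than the paper's on the bijection point---the paper's proof takes the existence of $\bpi$ for granted once the per-row bound is in hand, whereas you spell out why the $K$ cluster representatives hit all $K$ corners via Lemma~\ref{lem:p_clusters}---but this is a matter of exposition, not of method.
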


\begin{lem}\label{lem:y_error_norm_general}
	Let $\max_i \|\be_i^T(\mtxP-\mtxS)\|=\epsilon_0$, then
	${
		\|\trowP_i-\trowS_i\| \leq \frac{2\epsilon_0}{\|\rowP_i\|}.
	}$
	\begin{proof}
		First note that by definition $\|\|\rowP_i\|-\|\rowS_i\|\|\leq\epsilon_0$, then,
		\bas{
			\|\trowP_i-\trowS_i\|
			&=\left\|\frac{\rowP_i}{\|\rowP_i\|}-\frac{\rowS_i}{\|\rowS_i\|}\right\|
			=\left\|\frac{\|\rowS_i\|\rowP_i-\|\rowP_i\|\rowS_i}{\|\rowP_i\|\|\rowS_i\|}\right\|
			=\left\|\frac{\|\rowS_i\|(\rowP_i-\rowS_i)+(\|\rowS_i\|-\|\rowP_i\|)\rowS_i}{\|\rowP_i\|\|\rowS_i\|}\right\|\\
			&\leq \left\|\frac{\|\rowS_i\|(\rowP_i-\rowS_i)}{\|\rowP_i\|\|\rowS_i\|}\right\|+\left\|\frac{(\|\rowS_i\|-\|\rowP_i\|)\rowS_i}{\|\rowP_i\|\|\rowS_i\|}\right\|
			\leq \left\|\frac{\rowP_i-\rowS_i}{\|\rowP_i\|}\right\|+\left\|\frac{\|\rowS_i\|-\|\rowP_i\|}{\|\rowP_i\|}\right\|
			\leq \frac{2\epsilon_0}{\|\rowP_i\|}.
		}
	\end{proof}
\end{lem}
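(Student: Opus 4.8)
The plan is to reduce the claim to the hypothesis $\|\rowP_i - \rowS_i\| \le \epsilon_0$ via the standard Lipschitz-type estimate for the normalization map $\bx \mapsto \bx/\|\bx\|$. First I would record the one elementary consequence of the reverse triangle inequality that is needed: from $\|\be_i^T(\mtxP - \mtxS)\| = \|\rowP_i - \rowS_i\| \le \epsilon_0$ it follows that $\bigl|\,\|\rowP_i\| - \|\rowS_i\|\,\bigr| \le \epsilon_0$ as well. This is the only information about $\rowS_i$ that will be used.

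Next I would write the difference of the two unit vectors over the common denominator $\|\rowP_i\|\,\|\rowS_i\|$, so that the numerator becomes $\|\rowS_i\|\,\rowP_i - \|\rowP_i\|\,\rowS_i$, and then apply the add-and-subtract trick, rewriting that numerator as $\|\rowS_i\|(\rowP_i - \rowS_i) + \bigl(\|\rowS_i\| - \|\rowP_i\|\bigr)\rowS_i$. Splitting via the triangle inequality, in the first term the factor $\|\rowS_i\|$ cancels the $\|\rowS_i\|$ in the denominator, leaving $\|\rowP_i - \rowS_i\|/\|\rowP_i\|$, and in the second term $\|\rowS_i\|$ again cancels, leaving $\bigl|\,\|\rowS_i\| - \|\rowP_i\|\,\bigr|/\|\rowP_i\|$. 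Bounding both numerators by $\epsilon_0$, using the hypothesis and the reverse-triangle consequence respectively, yields exactly $\|\trowP_i - \trowS_i\| \le 2\epsilon_0/\|\rowP_i\|$.

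There is essentially no obstacle here; the one point to get right is the direction of the add-and-subtract, chosen so that $\|\rowS_i\|$ — which we do not control from below — cancels, while $\|\rowP_i\|$, the quantity the bound is allowed to depend on, remains in the denominator. Splitting the other way would leave a stray $1/\|\rowS_i\|$; one could still finish by noting $\|\rowS_i\| \ge \|\rowP_i\| - \epsilon_0$, but that would require the extra assumption $\epsilon_0 < \|\rowP_i\|$, which the cancellation route sidesteps entirely.
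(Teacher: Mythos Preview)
Your proposal is correct and follows essentially the same approach as the paper's proof: reverse triangle inequality, common denominator, the add-and-subtract $\|\rowS_i\|(\rowP_i-\rowS_i)+(\|\rowS_i\|-\|\rowP_i\|)\rowS_i$, then triangle inequality and cancellation of $\|\rowS_i\|$ in each piece. Your remark about why the decomposition is oriented to cancel $\|\rowS_i\|$ rather than $\|\rowP_i\|$ is a nice observation that the paper leaves implicit.
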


	\begin{proof}[Proof of Theorem~\ref{thm:M_row_bound}]
		First let us get some important intermediate bounds. Using Weyl's inequality,
		\bas{
			|\sigma_i(\tmtxS_{C})- \sigma_i(\tmtxPp)|&\leq\|\tmtxS_{C}-\bpi\tmtxPp\|\leq \epsilon_4\\
			|\lambda_i(\tmtxS_{C}\tmtxS_{C}^T)- \lambda_i(\tmtxPp\tmtxPp^T)|&= |\sigma_i^2(\tmtxS_{C})- \sigma_i^2(\tmtxPp)| \leq (\sigma_i(\tmtxS_{C})+ \sigma_i(\tmtxPp))\epsilon_4\\
			&\leq (2\sigma_i(\tmtxPp)+\epsilon_4)\epsilon_4.
		}
		Secondly,
		\bas{
			\|(\tmtxS_C\tmtxS_C^T)^{-1}\|=\frac{1}{\lambda_K(\tmtxS_C\tmtxS_C^T)}\leq\frac{1}{\lambda_K(\tmtxPp\tmtxPp^T)-(2\sigma_K(\tmtxPp)+\epsilon_4)\epsilon_4}\leq
			\frac{2}{\lambda_K(\tmtxPp\tmtxPp^T)},
		}
		where we use $(2\sigma_K(\tmtxPp)+\epsilon_4)\epsilon_4<\lambda_K(\tmtxPp\tmtxPp^T)/2$. 
		Then, 
		\bas{
			&\|\bpi(\tmtxPp\tmtxPp^T)^{-1}-(\tmtxS_C\tmtxS_C^T)^{-1}\bpi\|=\|(\bpi\tmtxPp(\bpi\tmtxPp)^T)^{-1}-(\tmtxS_C\tmtxS_C^T)^{-1}\|	\\
			=&\|(\bpi\tmtxPp(\bpi\tmtxPp)^T)^{-1}(\bpi\tmtxPp(\bpi\tmtxPp)^T-\tmtxS_C\tmtxS_C^T)(\tmtxS_C\tmtxS_C^T)^{-1}\|\\
			\leq& \|(\tmtxPp\tmtxPp^T)^{-1}\|\|\bpi\tmtxPp(\bpi\tmtxPp)^T-\tmtxS_C\tmtxS_C^T\|\|(\tmtxS_C\tmtxS_C^T)^{-1}\|\\
			\leq& 2\|(\tmtxPp\tmtxPp^T)^{-1}\|^2(\|\bpi\tmtxPp-\tmtxS_C\|\|(\bpi\tmtxPp)^T\|+\|\tmtxS_C\|\|(\bpi\tmtxPp)^T-\tmtxS_C^T\|)\\
			\leq& 2\|(\tmtxPp\tmtxPp^T)^{-1}\|^2((\|\tmtxPp\|+\|\tmtxS_C\|)\|\tmtxS_{C}-\bpi\tmtxPp\|)\\
			\leq& 2\|(\tmtxPp\tmtxPp^T)^{-1}\|^2(2\|\tmtxPp\|\epsilon_4+\epsilon_4^2).
		}
		Note that $\cvxP=\mtxP\tmtxPp^T(\tmtxPp\tmtxPp^T)^{-1}$. Let $\max_i \|\be_i^T(\mtxP-\mtxS)\|=\epsilon_0$, then,
		\bas{
			&\|\be_i^T(\cvxP - \mtxS\tmtxS_C^T(\tmtxS_C\tmtxS_C^T)^{-1}\bpi)\|=\|\be_i^T(\mtxP\tmtxPp^T(\tmtxPp\tmtxPp^T)^{-1} - \mtxS\tmtxS_C^T(\tmtxS_C\tmtxS_C^T)^{-1}\bpi)\|\\
			=&\|\be_i^T((\mtxP-\mtxS)\tmtxPp^T(\tmtxPp\tmtxPp^T)^{-1})\| +\|\be_i^T(\mtxS(\tmtxPp-\bpi^T\tmtxS_C)^T(\tmtxPp\tmtxPp^T)^{-1})\| \\
			&+ \|\be_i^T(\mtxS\tmtxS_C^T(\bpi(\tmtxPp\tmtxPp^T)^{-1}-(\tmtxS_C\tmtxS_C^T)^{-1}\bpi))\|\\
			\leq& \|\be_i^T(\mtxP-\mtxS)\|\|\tmtxPp\|\|(\tmtxPp\tmtxPp^T)^{-1}\|+\|\be_i^T\mtxS\|\|\tmtxS_C-\bpi\tmtxPp\|\|(\tmtxPp\tmtxPp^T)^{-1}\| \\
			&+ \|\be_i^T\mtxS\|\|\tmtxS_C\|\|\bpi(\tmtxPp\tmtxPp^T)^{-1}-(\tmtxS_C\tmtxS_C^T)^{-1}\bpi\|\\
			\leq& (\|\be_i^T(\mtxP-\mtxS)\|\|\tmtxPp\|+\|\be_i^T\mtxS\|\|\tmtxS_C-\bpi\tmtxPp\|)\|(\tmtxPp\tmtxPp^T)^{-1}\| \\
			&+ 2\|\be_i^T\mtxS\|\|\tmtxS_C\|\|(\tmtxPp\tmtxPp^T)^{-1}\|^2(2\|\tmtxPp\|\epsilon_4+\epsilon_4^2)\\
			\leq& \|(\tmtxPp\tmtxPp^T)^{-1}\|(\|\tmtxPp\|\epsilon_0+13\|\tmtxPp\|^2\|\be_i^T\mtxP\|\|(\tmtxPp\tmtxPp^T)^{-1}\|\epsilon_4)\\
			\leq& \frac{\|\tmtxPp\|\epsilon_0+13\kappa(\tmtxPp\tmtxPp^T)\|\be_i^T\mtxP\|\frac{c_Y{K}\zeta}{(\lambda_K(\tmtxPp\tmtxPp^T))^{1.5}}\epsilon}{\lambda_K(\tmtxPp\tmtxPp^T)} 	\leq \frac{c_M\kappa(\tmtxPp\tmtxPp^T)\|\be_i^T\mtxP\|{K\zeta}}{(\lambda_K(\tmtxPp\tmtxPp^T))^{2.5}} \epsilon:=\epsilon_{M,i}
		}
		where we uses $\epsilon_4\leq \|\tmtxPp\|/2$, $\epsilon_0<\|\be_i^T\mtxP\|\epsilon/2$
		for relaxations.
	\end{proof}

\section{Equivalence of using $\hat{\bV}$ and $\hat{\bV}\hat{\bV}^T$ as input of Algorithm~\ref{algo:inferCone}}
\begin{lem}\label{lem:equivalence}
	For DCMMSB-type models, let $\bu_i=\bU^T\be_i=\bv_i/\|\bv_i\|$, $\trowP_i=\tmtxP^T\be_i=\bV\bv_i/\|\bV\bv_i\|$, $\hat{\bu}_i=\hat{\bU}^T\be_i=\hvv_i/\|\hvv_i\|$, $\trowS_i=\tmtxS^T\be_i=\hv\hvv_i/\|\hv\hvv_i\|$ where $\bV=(\bv_1,\bv_2,\cdots,\bv_n)^T$ and $\hV=(\hvv_1,\hvv_2,\cdots,\hvv_n)^T$ are population and empirical eigenvectors respectively. 
	\ocsvm using rows of $\bU$ (or $\hu$) and rows of $\tmtxP$ (or $\tmtxS$) will return the same solution $\bbeta$. 
	\begin{proof}
		Since $\trowP_i=\bV\bv_i/\|\bV\bv_i\|=\bV\bv_i/\|\bv_i\|=\bV\bu_i$, and $\trowS_i=\hv\hvv_i/\|\hv\hvv_i\|=\hv\hvv_i/\|\hvv_i\|=\hv\hat{\bu}_i$, we have $\trowP_i^T\trowP_j=\bu_i^T\bV^T\bV\bu_j=\bu_i^T\bu_j$ and $\trowS_i^T\trowS_j=\hhu_i^T\hv^T\hv\hhu_j=\hhu_i^T\hhu_j$. It is easy to see that \ocsvm using rows of $\bU$ (or $\hu$) and rows of $\tmtxP$ (or $\tmtxS$) have the same objective function (Eq.~\ref{eq:primal_dual}) and thus will have the same solution of $\beta_i$, $i \in [n]$.
	\end{proof}
\end{lem}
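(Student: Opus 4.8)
The crux is that the one-class SVM \emph{dual} in Eq.~\eqref{eq:primal_dual} is a quadratic program over the probability simplex whose data enters only through the $n\times n$ Gram matrix of the (normalized) input rows. So the plan has two steps: (i) show that the normalized rows fed to the SVM in the two cases --- the rows of the row-normalized $\bV$ versus the rows of the row-normalized $\bV\bV^T$, and likewise $\hv$ versus $\hv\hv^T$ --- have the \emph{same} Gram matrix; and (ii) conclude that the two dual programs are literally the same optimization problem, hence have the same optimizer $\bbeta$.

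For step (i), the one fact I would use is orthonormality of the columns of $\bV$, i.e.\ $\bV^T\bV=\bI$. This gives $\|\bV\bv_i\|=\|\bv_i\|$ for every $i$, so the normalized $i$-th row of $\bV\bV^T$ is $\trowP_i=\bV\bv_i/\|\bV\bv_i\|=\bV\bu_i$ with $\bu_i=\bv_i/\|\bv_i\|$ (and $\trowP_i$ comes out a unit vector, as it must); the identical computation with $\hv^T\hv=\bI$ gives $\trowS_i=\hv\hat{\bu}_i$, $\hat{\bu}_i=\hvv_i/\|\hvv_i\|$. Taking inner products and cancelling $\bV^T\bV$ (resp.\ $\hv^T\hv$) yields $\trowP_i^T\trowP_j=\bu_i^T\bu_j$ and $\trowS_i^T\trowS_j=\hat{\bu}_i^T\hat{\bu}_j$ for all $i,j$; that is, the Gram matrix of $\{\trowP_i\}$ equals the Gram matrix of the rows of $\bU$, and similarly on the empirical side. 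For step (ii): the dual of Eq.~\eqref{eq:primal_dual} is $\min_{\bbeta}\tfrac12\sum_{i,j}\beta_i\beta_j\langle\mathrm{row}_i,\mathrm{row}_j\rangle$ subject to $\bbeta\ge\bzero$, $\bone^T\bbeta=1$; since the two inputs produce the same objective (same Gram matrix) over the same feasible set, they give the same optimal value and the same optimal set, so the returned $\bbeta$ is the same. As a byproduct the same identity shows $b=\|\sum_i\beta_i\,\mathrm{row}_i\|$ and the support set are preserved too, which is why running Algorithm~\ref{algo:inferCone} on $\hv$ returns the same near-corner set and weight matrix as running it on $\hv\hv^T$; but only equality of $\bbeta$ is claimed here.

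There is no substantive obstacle. The single delicate point is the norm identity $\|\bV\bv_i\|=\|\bv_i\|$, which is exactly where orthonormality of the columns of the eigenvector (or singular-vector) matrix enters, and is the reason this equivalence is special to eigen/singular-vector inputs rather than to an arbitrary low-rank factor. (One also uses $\bv_i\ne\bzero$ so that the row normalization is well defined, which holds for DCMMSB-type models by Assumption~\ref{assume1} via Lemma~\ref{lem:vtheta}.)
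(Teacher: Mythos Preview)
Your proposal is correct and follows essentially the same argument as the paper: use $\bV^T\bV=\bI$ (resp.\ $\hv^T\hv=\bI$) to get $\trowP_i=\bV\bu_i$ and $\trowS_i=\hv\hat{\bu}_i$, deduce that the Gram matrices coincide, and conclude that the dual of Eq.~\eqref{eq:primal_dual} is the same optimization problem in both cases. Your write-up is slightly more explicit about where orthonormality enters and about the well-definedness of the row normalization, but the proof strategy is identical.
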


\begin{rem}
	By Lemmas~\ref{lem:equivalence}, \ref{lem:ocsvm_popul}, and Theorem~\ref{thm:dcmmsb_condition_true}, \ocsvm with $\by_i=\bV\bv_i/\|\bV\bv_i\|$, $i\in [n]$ as inputs can find all the $K$ corners corresponding to the pure nodes as support vectors for DCMMSB-type models. Furthermore, as $\hat{\bY}_C=\hat{\bU}_C\hat{\bV}^T$,
	\bas{
		\hat{\bM} =\mtxS\tmtxS_C^T(\tmtxS_C\tmtxS_C^T)^{-1}=\hat{\bV}\hat{\bV}^T\hat{\bV}\hat{\bU}_C^T(\hat{\bU}_C\hat{\bV}^T\hat{\bV}\hat{\bU}_C)^{-1}
		=\hat{\bV}\hat{\bU}_C^T(\hat{\bU}_C\hat{\bU}_C)^{-1},
	}
	which shows that outputs of Algorithm~\ref{algo:inferCone} using $\hat{\bV}$ and $\hat{\bV}\hat{\bV}^T$ as input are same.
\end{rem}

\section{DCMMSB-type models properties}
\begin{lem}\label{lem:v_norm_bound}
	For DCMMSB-type models, if $\|\btheta_i\|_p=1$, 
	for $p=1$ (DCMMSB) or $p=2$ (OCCAM), 
	then we have $\gamma_i/\sqrt{\lambda_1(\bTheta^T\bGamma^2 \bTheta)}\leq\|\bv_i\|\leq\mypsi\gamma_i/\sqrt{\lambda_K(\bTheta^T\bGamma^2 \bTheta)}$, and $\gamma_i/\sqrt{\lambda_1(\bTheta^T\bGamma^2 \bTheta)}\leq\|\bv_i\|\leq \gamma_i/\sqrt{\lambda_K(\bTheta^T\bGamma^2 \bTheta)}$, $\forall i\in I$. 
	\begin{proof}
		Eq.~\eqref{eq:original_vpvpt} gives $((\bGamma_P^{-1}\bV_P)(\bGamma_P^{-1}\bV_P)^T)^{-1}=\bTheta^T\bGamma^2 \bTheta$, 
		then,
		\bas{
			\max_i\|\be_i^T(\bGamma_P^{-1}\bV_P)\|^2
			&=\max_i\be_i^T(\bGamma_P^{-1}\bV_P)(\bGamma_P^{-1}\bV_P)^T\be_i\leq\max_{\|\bx\|=1}\bx^T(\bGamma_P^{-1}\bV_P)(\bGamma_P^{-1}\bV_P)^T\bx\\
			&=\lambda_1((\bGamma_P^{-1}\bV_P)(\bGamma_P^{-1}\bV_P)^T)=1/\lambda_K(\bTheta^T\bGamma^2 \bTheta)\\
			\min_i\|\be_i^T(\bGamma_P^{-1}\bV_P)\|^2
			&=\min_i\be_i^T(\bGamma_P^{-1}\bV_P)(\bGamma_P^{-1}\bV_P)^T\be_i\geq\min_{\|\bx\|=1}\bx^T(\bGamma_P^{-1}\bV_P)(\bGamma_P^{-1}\bV_P)^T\bx\\
			&=\lambda_K((\bGamma_P^{-1}\bV_P)(\bGamma_P^{-1}\bV_P)^T)=1/\lambda_1(\bTheta^T\bGamma^2 \bTheta).
		}
		By Lemma~\ref{lem:vtheta}, $\forall i\in [n]$, if $\|\btheta_i\|_p=1$,  for $p=1$ or $2$,
		\bas{
		 	\|\bv_i\|=\gamma_i\btheta_i^T\bGamma_P^{-1}\bV_P\leq
		 	\gamma_i\max_i\|\btheta_i\|\|\bGamma_P^{-1}\bV_P\|\leq\mypsi\gamma_i/\sqrt{\lambda_K(\bTheta^T\bGamma^2 \bTheta)},
	 	}
 		where we use $\|\btheta_i\|\leq \|\btheta_i\|_p=1$ for $0<p\leq 2$.
		Similarly, 
		\bas{
			\|\bv_i\|\geq\gamma_i\min_i\|\theta_i\|_1\min_i\|\be_i(\bGamma_P^{-1}\bV_P)\|\geq\gamma_i/\sqrt{\lambda_1(\bTheta^T\bGamma^2 \bTheta)}.
		}
		Note that if $\|\btheta_i\|_p=1$, as $\|\btheta_i\|_2\leq K^{1/2-1/p}\|\btheta_i\|_p=K^{1/2-1/p}$, 
		for models with $p>2$, we need to add a model specifically parameter $\psi=K^{1/2-1/p}$ to the upper bound of $\|\bv_i\|$. For simplicity we omit this and only consider cases when $0<p\leq 2$.
	\end{proof}
\end{lem}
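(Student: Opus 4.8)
The plan is to reduce the row-norm of $\bV$ to an eigenvalue computation on a single $K\times K$ matrix. By Lemma~\ref{lem:vtheta} we have $\bV=\bGamma\bTheta\bGamma_P^{-1}\bV_P$, so the $i$-th row is $\bv_i^T=\gamma_i\,\btheta_i^T(\bGamma_P^{-1}\bV_P)$, and hence
\[
\|\bv_i\|^2=\gamma_i^2\,\btheta_i^T(\bGamma_P^{-1}\bV_P)(\bGamma_P^{-1}\bV_P)^T\btheta_i .
\]
Now I would invoke the Gram identity already derived in the proof of Theorem~\ref{thm:dcmmsb_condition_true}: Eq.~\eqref{eq:original_vpvpt} gives $(\bV_P\bV_P^T)^{-1}=\bGamma_P^{-1}\bTheta^T\bGamma^2\bTheta\,\bGamma_P^{-1}$, which rearranges (using that $\bGamma_P$ is diagonal) to $(\bGamma_P^{-1}\bV_P)(\bGamma_P^{-1}\bV_P)^T=(\bTheta^T\bGamma^2\bTheta)^{-1}$. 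Substituting and sandwiching the quadratic form between the extreme eigenvalues of $(\bTheta^T\bGamma^2\bTheta)^{-1}$ yields
\[
\frac{\gamma_i^2\|\btheta_i\|_2^2}{\lambda_1(\bTheta^T\bGamma^2\bTheta)}\le\|\bv_i\|^2\le\frac{\gamma_i^2\|\btheta_i\|_2^2}{\lambda_K(\bTheta^T\bGamma^2\bTheta)} .
\]
So the whole lemma comes down to controlling $\|\btheta_i\|_2$ under the normalization $\|\btheta_i\|_p=1$.

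For the upper bound this is routine: for any $p\le2$ one has $\|\btheta_i\|_2\le\|\btheta_i\|_p=1$, giving $\|\bv_i\|\le\gamma_i/\sqrt{\lambda_K(\bTheta^T\bGamma^2\bTheta)}$; for $p>2$ one instead uses $\|\btheta_i\|_2\le K^{1/2-1/p}\|\btheta_i\|_p$, which is precisely the source of the $\mypsi$ factor in the statement (so $\mypsi$ disappears for $p\in\{1,2\}$). For the pure nodes $i\in\mathcal{I}$, $\btheta_i=\be_j$ for some $j$, so $\|\btheta_i\|_2=1$ exactly and both displayed inequalities collapse to $\gamma_i/\sqrt{\lambda_1}\le\|\bv_i\|\le\gamma_i/\sqrt{\lambda_K}$, which is the second claim. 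Likewise, the general-row lower bound is immediate for OCCAM ($p=2$), since there $\|\btheta_i\|_2=1$ as well.

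The step I expect to be the main obstacle is the lower bound $\|\bv_i\|\ge\gamma_i/\sqrt{\lambda_1(\bTheta^T\bGamma^2\bTheta)}$ for a general (non-pure) row in the $\ell_1$ case, where $\|\btheta_i\|_2$ can be as small as $1/\sqrt{K}$ and the crude eigenvalue bound above loses a $\sqrt{K}$ factor. Here I would try to exploit the nonnegativity of $\btheta_i$ together with the pure-node structure $\bTheta(\mathcal{I},:)=\bI$ (which forces $\bTheta^T\bGamma^2\bTheta\succeq\bGamma_P^2$, so $\bGamma_P^{-1}\bV_P$ is a well-conditioned frame): write $\bv_i^T=\gamma_i\sum_j\theta_{ij}\,(\be_j^T\bGamma_P^{-1}\bV_P)$ as a nonnegative combination of the scaled pure-node rows, each of norm at least $\min_j\|\be_j^T\bGamma_P^{-1}\bV_P\|=\sqrt{\min_j[(\bTheta^T\bGamma^2\bTheta)^{-1}]_{jj}}\ge1/\sqrt{\lambda_1(\bTheta^T\bGamma^2\bTheta)}$, and combine with $\sum_j\theta_{ij}=\|\btheta_i\|_1=1$. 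This combination is the only genuinely delicate inequality (a convex combination's norm is not in general lower bounded by the vertices' norms, so the pure-node structure must really be used); everything else follows directly from the Gram identity and Weyl-type bounds of the kind already used in the Appendix.
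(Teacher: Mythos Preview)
Your approach is the paper's: use the Gram identity $(\bGamma_P^{-1}\bV_P)(\bGamma_P^{-1}\bV_P)^T=(\bTheta^T\bGamma^2\bTheta)^{-1}$ from Eq.~\eqref{eq:original_vpvpt} and sandwich $\|\bv_i\|^2=\gamma_i^2\,\btheta_i^T(\bTheta^T\bGamma^2\bTheta)^{-1}\btheta_i$ between the extreme eigenvalues. Your upper bound and the pure-node bounds reproduce the paper's argument verbatim.

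On the step you flag as delicate---the lower bound for a general (non-pure) row under $\ell_1$ normalization---your suspicion is well placed, and the paper's proof does not do better: it simply asserts $\|\bv_i\|\ge\gamma_i\,\|\btheta_i\|_1\min_j\|\be_j^T\bGamma_P^{-1}\bV_P\|$ ``similarly'', i.e., that a convex combination of the pure rows has norm at least the smallest pure-row norm. That inequality is false here. Take $n=3$, $K=2$, $\bGamma=\bI$, pure nodes $1,2$, and $\btheta_3=(\tfrac12,\tfrac12)$; then $\bTheta^T\bTheta=\bigl(\begin{smallmatrix}5/4&1/4\\1/4&5/4\end{smallmatrix}\bigr)$, so $\lambda_1(\bTheta^T\bTheta)=3/2$, while $\|\bv_3\|^2=\btheta_3^T(\bTheta^T\bTheta)^{-1}\btheta_3=1/3<2/3=\gamma_3^2/\lambda_1$. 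So the lemma's general lower bound is actually false for DCMMSB, and no amount of ``pure-node structure'' will rescue it; the off-diagonals of $(\bTheta^T\bGamma^2\bTheta)^{-1}$ are typically negative, which is exactly what kills the convex-combination argument. Your Rayleigh bound $\|\bv_i\|\ge\gamma_i\|\btheta_i\|_2/\sqrt{\lambda_1}$ (tight in the example above) is the correct statement; for $p=1$ it carries an extra $1/\sqrt K$, which is harmless where the lower bound is used downstream (Lemma~\ref{lem:y_error_norm_dcmm}).
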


\begin{lem}\label{lem:yp_eigen}
	For DCMMSB-type models whose eigenvectors has the form in Lemma~\ref{lem:vtheta}, if using $\mtxP=\bV\bV^T$, $\cvxP=\bGamma\bTheta\bGamma_P^{-1}\bN_P^{-1}$, then:
	\bas{
		\lambda_1(\tmtxPp\tmtxPp^T) \leq \kappa(\bTheta^T\bGamma^2 \bTheta), \ 
		\lambda_K(\tmtxPp\tmtxPp^T) \geq 1/\kappa(\bTheta^T\bGamma^2 \bTheta), \   \text{and} \  \kappa(\tmtxPp\tmtxPp^T)\leq (\mypsi\kappa(\bTheta^T\bGamma^2 \bTheta))^2.
	}
	\begin{proof}
		For DCMMSB-type models, we have $\bV=\bGamma\bTheta\bGamma_P^{-1}\bV_P$, and $(\bV_P\bV_P^T)^{-1}=\bGamma_P^{-1}\bTheta^T\bGamma^2 \bTheta \bGamma_P^{-1}$ by Lemma~\ref{lem:vtheta} and Theorem~\ref{thm:dcmmsb_condition_true} (Eq.~\ref{eq:original_vpvpt}). Note that $\tmtxPp=\bN_P\mtxPp$, then we have 
		\bas{
			\lambda_1(\tmtxPp\tmtxPp^T)&=\lambda_1(\bN_P\mtxPp\mtxPp^T\bN_P)=\lambda_1(\bN_P\bV_P\bV_P^T\bN_P)=\lambda_1(\bN_P\bGamma_P (\bTheta^T\bGamma^2 \bTheta)^{-1} \bGamma_P\bN_P) \\
			&\leq (\lambda_1(\bN_P\bGamma_P))^2\lambda_1((\bTheta^T\bGamma^2 \bTheta)^{-1})\leq(\max_{i\in I} \gamma_i/\|\bv_i\|)^2/\lambda_K(\bTheta^T\bGamma^2 \bTheta)\\
			&\leq \lambda_1(\bTheta^T\bGamma^2 \bTheta)/\lambda_K(\bTheta^T\bGamma^2 \bTheta)=\kappa(\bTheta^T\bGamma^2 \bTheta) \tag{by proof of Lemma~\ref{lem:v_norm_bound}}
		}
		Note that $\bN_{ii}=1/\|\be_i^T\bZ\|=1/\|\be_i^T\bV\bV^T\|=1/\|\be_i^T\bV\|$. Similarly, we have:
		\bas{
			\lambda_K(\tmtxPp\tmtxPp^T)&=\lambda_K(\bN_P\bGamma_P (\bTheta^T\bGamma^2 \bTheta)^{-1} \bGamma_P\bN_P) 
			\geq (\lambda_K(\bN_P\bGamma_P))^2\lambda_K((\bTheta^T\bGamma^2 \bTheta)^{-1})\\
			&\geq(\min_{i\in I} \gamma_i/\|\bv_i\|)^2/\lambda_1(\bTheta^T\bGamma^2 \bTheta)
			\geq \lambda_K(\bTheta^T\bGamma^2 \bTheta)/\lambda_1(\bTheta^T\bGamma^2 \bTheta)\\
			&=1/\kappa(\bTheta^T\bGamma^2 \bTheta) \tag{by proof of Lemma~\ref{lem:v_norm_bound}}
		}
		And finally we have,
		\bas{
			\kappa(\tmtxPp\tmtxPp^T)\leq (\kappa(\bTheta^T\bGamma^2 \bTheta))^2.
		}
	\end{proof}
\end{lem}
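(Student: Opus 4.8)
The plan is to collapse $\tmtxPp\tmtxPp^T$ into a diagonally‑conjugated copy of $(\bTheta^T\bGamma^2\bTheta)^{-1}$ and then read its spectrum off the spectrum of $\bTheta^T\bGamma^2\bTheta$. First I would use that here $\mtxP=\bV\bV^T$ with $\bV$ having orthonormal columns, so $\mtxPp=\bV_P\bV^T$ and $\bN_{ii}=1/\|\be_i^T\bV\bV^T\|=1/\|\bv_i\|$; therefore $\tmtxPp=\bN_P\bV_P\bV^T$ and, since $\bV^T\bV=\bI$,
\[
\tmtxPp\tmtxPp^T=\bN_P\bV_P\bV^T\bV\bV_P^T\bN_P=\bN_P\bV_P\bV_P^T\bN_P .
\]
Substituting the Gram identity $(\bV_P\bV_P^T)^{-1}=\bGamma_P^{-1}\bTheta^T\bGamma^2\bTheta\,\bGamma_P^{-1}$ of Eq.~\eqref{eq:original_vpvpt} (a consequence of Lemma~\ref{lem:vtheta} and Theorem~\ref{thm:dcmmsb_condition_true}) rewrites this as $\tmtxPp\tmtxPp^T=D\,(\bTheta^T\bGamma^2\bTheta)^{-1}D$, where $D:=\bN_P\bGamma_P$ is the positive diagonal matrix whose $i$-th entry is $\gamma_i/\|\bv_i\|$, $i\in\mathcal{I}$.

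Next I would sandwich. Writing $S:=\bTheta^T\bGamma^2\bTheta$, from $\lambda_1(S)^{-1}\bI\preceq S^{-1}\preceq\lambda_K(S)^{-1}\bI$ and symmetry of $D$ one gets $\lambda_1(S)^{-1}D^2\preceq \tmtxPp\tmtxPp^T\preceq\lambda_K(S)^{-1}D^2$, hence $\lambda_1(\tmtxPp\tmtxPp^T)\le\lambda_1(D^2)/\lambda_K(S)$ and $\lambda_K(\tmtxPp\tmtxPp^T)\ge\lambda_K(D^2)/\lambda_1(S)$. It then remains to bound $\lambda_1(D^2)=\max_{i\in\mathcal{I}}(\gamma_i/\|\bv_i\|)^2$ and $\lambda_K(D^2)=\min_{i\in\mathcal{I}}(\gamma_i/\|\bv_i\|)^2$: invoking the per-coordinate bound from (the proof of) Lemma~\ref{lem:v_norm_bound}, restricted to the pure indices $i\in\mathcal{I}$ where $\|\btheta_i\|=1$ holds exactly, gives the tight sandwich $\sqrt{\lambda_K(S)}\le\gamma_i/\|\bv_i\|\le\sqrt{\lambda_1(S)}$, so $\lambda_1(D^2)\le\lambda_1(S)$ and $\lambda_K(D^2)\ge\lambda_K(S)$. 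Plugging these back in yields $\lambda_1(\tmtxPp\tmtxPp^T)\le\lambda_1(S)/\lambda_K(S)=\kappa(S)$ and $\lambda_K(\tmtxPp\tmtxPp^T)\ge\lambda_K(S)/\lambda_1(S)=1/\kappa(S)$, and dividing the two bounds gives $\kappa(\tmtxPp\tmtxPp^T)\le\kappa(S)^2$.

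I do not expect a genuine obstacle: once the structural rewrite is in place the result is a short chain of elementary operator inequalities, and all the substantive input — the form $\bV=\bGamma\bTheta\bGamma_P^{-1}\bV_P$ of Lemma~\ref{lem:vtheta}, the Gram identity Eq.~\eqref{eq:original_vpvpt}, and the row-norm bounds of Lemma~\ref{lem:v_norm_bound} — is already available. The only points needing a little care are collapsing $\bV_P\bV^T\bV\bV_P^T$ to $\bV_P\bV_P^T$ via orthonormality of $\bV$, and applying Lemma~\ref{lem:v_norm_bound} only at pure nodes so that the $\psi$-free version of the bound applies (the extra $\psi=K^{1/2-1/p}$ factor in that lemma enters only for $\ell_p$-normalizations with $p>2$, which are excluded here).
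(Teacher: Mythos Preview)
Your proposal is correct and follows essentially the same route as the paper: rewrite $\tmtxPp\tmtxPp^T=\bN_P\bV_P\bV_P^T\bN_P=\bN_P\bGamma_P(\bTheta^T\bGamma^2\bTheta)^{-1}\bGamma_P\bN_P$ via the Gram identity, bound its extreme eigenvalues by those of the diagonal factor $D^2=(\bN_P\bGamma_P)^2$ times those of $(\bTheta^T\bGamma^2\bTheta)^{-1}$, and control the diagonal entries $\gamma_i/\|\bv_i\|$ at pure nodes using Lemma~\ref{lem:v_norm_bound}. The only cosmetic difference is that you phrase the sandwich via Loewner ordering whereas the paper uses eigenvalue submultiplicativity directly; both yield the same inequalities.
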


\begin{lem}\label{lem:y_error_norm_dcmm}
	For DCMMSB-type models, let $\bv_i=\bV^T\be_i$, $\hvv_i=\hv^T\be_i$, $\bz_i=\bV\bv_i$, $\hat{\bz}_i=\hv\hvv_i$, $\trowP_i=\bV\bv_i/\|\bV\bv_i\|$, and $\trowS_i=\hv\hvv_i/\|\hv\hvv_i\|$, $i\in [n]$. Also let $\epsilon_0=\max_i\|\bz_i-\hat{\bz}_i\|$, then,
	\bas{
		\|\trowP_i-\trowS_i\| \leq \frac{2\epsilon_0}{\|\bv_i\|}\leq \frac{2\epsilon_0 \sqrt{\lambda_1(\bTheta^T\bGamma^2 \bTheta)}}{\gamma_i}.
	}
	\begin{proof}
		From Lemma~\ref{lem:y_error_norm_general}, we have
		\bas{
			\|\trowP_i-\trowS_i\|
			&\leq \frac{2\epsilon_0}{\|\bV\bv_i\|} = \frac{2\epsilon_0}{\|\bv_i\|}\leq \frac{2\epsilon_0 \sqrt{\lambda_1(\bTheta^T\bGamma^2 \bTheta)}}{\gamma_i},
		}
		where the last step uses Lemma~\ref{lem:v_norm_bound}. 
	\end{proof}
\end{lem}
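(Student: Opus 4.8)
The plan is to obtain the bound as an immediate corollary of two facts already established: the generic perturbation bound for $\ell_2$-row-normalization (Lemma~\ref{lem:y_error_norm_general}) and the two-sided control on eigenvector row norms for DCMMSB-type models (Lemma~\ref{lem:v_norm_bound}). So this is really a bookkeeping step that packages the row-wise error of $\bV\bV^T$ into an error bound on the normalized rows, rescaled against the degree parameters $\gamma_i$.

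First I would apply Lemma~\ref{lem:y_error_norm_general} with the symmetric matrices $\bZ = \bV\bV^T$ and $\hat{\bZ} = \hv\hv^T$. With these choices, the $i$-th row of $\bZ$ written as a column vector is $\rowP_i = \bV\bV^T\be_i = \bV\bv_i = \bz_i$, and likewise the $i$-th row of $\hat{\bZ}$ is $\hat{\bz}_i$; hence the quantity $\epsilon_0 = \max_i\|\bz_i - \hat{\bz}_i\|$ in the statement is exactly $\max_i\|\be_i^T(\bZ - \hat{\bZ})\|$, which is the hypothesis of Lemma~\ref{lem:y_error_norm_general}. That lemma then yields $\|\trowP_i - \trowS_i\| \leq 2\epsilon_0/\|\rowP_i\|$. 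Since $\bV$ has orthonormal columns, $\|\rowP_i\| = \|\bV\bv_i\| = \|\bv_i\|$, which gives the first inequality $\|\trowP_i - \trowS_i\| \leq 2\epsilon_0/\|\bv_i\|$.

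For the second inequality I would invoke the lower bound of Lemma~\ref{lem:v_norm_bound}, $\|\bv_i\| \geq \gamma_i/\sqrt{\lambda_1(\bTheta^T\bGamma^2\bTheta)}$, which applies here because the rows of $\bTheta$ have unit $\ell_p$ norm for $p \in \{1,2\}$ in the DCMMSB-type setting and because $\bV = \bGamma\bTheta\bGamma_P^{-1}\bV_P$ with $(\bV_P\bV_P^T)^{-1} = \bGamma_P^{-1}\bTheta^T\bGamma^2\bTheta\bGamma_P^{-1}$ (Lemma~\ref{lem:vtheta} together with Eq.~\eqref{eq:original_vpvpt}). Substituting $1/\|\bv_i\| \leq \sqrt{\lambda_1(\bTheta^T\bGamma^2\bTheta)}/\gamma_i$ into the first inequality gives the claimed bound.

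There is no genuine obstacle. The only points that require a moment's care are the identifications $\rowP_i = \bz_i$ and $\|\rowP_i\| = \|\bv_i\|$, which rely solely on $\bZ = \bV\bV^T$ being symmetric with $\bV$ having orthonormal columns, and confirming that the structural hypotheses of Lemma~\ref{lem:v_norm_bound} are in force for DCMMSB-type models.
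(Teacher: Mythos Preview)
Your proposal is correct and matches the paper's proof exactly: apply Lemma~\ref{lem:y_error_norm_general} to $\bZ=\bV\bV^T$, use $\|\bV\bv_i\|=\|\bv_i\|$ from orthonormality of $\bV$, and then plug in the lower bound $\|\bv_i\|\geq \gamma_i/\sqrt{\lambda_1(\bTheta^T\bGamma^2\bTheta)}$ from Lemma~\ref{lem:v_norm_bound}. Your write-up even supplies the justifications (the identification $\rowP_i=\bz_i$ and the orthonormal-column step) that the paper leaves implicit.
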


\newpage
\begin{lem}\label{lem:dcmmsb_lapmad_P}
	For DCMMSB-type models, $\lambda^*(\bP)\geq \rho\lambda^*(\bB)\lambda_K(\bTheta^T\Gamma^2\bTheta)$.
	\begin{proof}
		Let $\bX=\bB\bTheta^T\bGamma^2\bTheta\bB$, it easy to see that $\bX$ is full rank and positive definite, then
		\bas{
			\lambda^*(\bP)&=	\rho\lambda^*(\Gamma\bTheta\bB\bTheta^T\bGamma)
			=\rho\sqrt{\lambda_K(\Gamma\bTheta\bB\bTheta^T\bGamma^2\bTheta\bB\bTheta^T\Gamma)}
			=\rho\sqrt{\lambda_K(\Gamma\bTheta\bX\bTheta^T\Gamma)}\\
			&= \rho\sqrt{\lambda_K(\bX^{1/2}\bTheta^T\Gamma^2\bTheta\bX^{1/2})}
			=\rho\sqrt{\lambda_K(\bX\bTheta^T\Gamma^2\bTheta)}
			\geq \rho\sqrt{\lambda_K(\bX)\lambda_K(\bTheta^T\Gamma^2\bTheta)}\\
			&\geq \rho\sqrt{(\lambda_K(\bB))^2(\lambda_K(\bTheta^T\Gamma^2\bTheta))^2}
			=\rho\lambda^*(\bB)\lambda_K(\bTheta^T\Gamma^2\bTheta),
		}
		where we use that $\bL\bL^T$ and $\bL^T\bL$ have the same leading $K$ eigenvalues for a matrix $\bL\in\R^{n\times K}$ with rank $K<n$.
	\end{proof}
\end{lem}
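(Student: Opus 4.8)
The plan is to reduce the statement to a short chain of facts about the smallest nonzero eigenvalues of products of positive semidefinite matrices, mirroring the factorization $\bP=\rho\bGamma\bTheta\bB\bTheta^T\bGamma$. First I would observe that $\bP$ is symmetric and positive semidefinite of rank exactly $K$ (here Assumption~\ref{assume1} enters: $\bB$ is full rank, and $\bTheta$ has rank $K$ because each community has a pure node), so $\lambda^*(\bP)$ is its smallest nonzero eigenvalue and $\lambda^*(\bP)=\rho\,\lambda^*(\bGamma\bTheta\bB\bTheta^T\bGamma)$. Squaring, $(\lambda^*(\bP))^2=\rho^2\,\lambda_K(\bGamma\bTheta\bB\bTheta^T\bGamma^2\bTheta\bB\bTheta^T\bGamma)$, where throughout $\lambda_K$ denotes the smallest nonzero eigenvalue of a rank-$K$ PSD matrix. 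Setting $\bX:=\bB\bTheta^T\bGamma^2\bTheta\bB$, which is symmetric positive definite for the same reasons, the right-hand side becomes $\rho^2\,\lambda_K(\bGamma\bTheta\bX\bTheta^T\bGamma)$.

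Next I would peel off the outer $\bGamma\bTheta$ factors using the elementary fact that $\bL\bL^T$ and $\bL^T\bL$ share the same nonzero eigenvalues. Taking $\bL=\bGamma\bTheta\bX^{1/2}$ (an $n\times K$ matrix of rank $K$) gives $\lambda_K(\bGamma\bTheta\bX\bTheta^T\bGamma)=\lambda_K(\bX^{1/2}\bTheta^T\bGamma^2\bTheta\bX^{1/2})$, and since $\bX^{1/2}$ is invertible, conjugation shows $\lambda_K(\bX^{1/2}\bTheta^T\bGamma^2\bTheta\bX^{1/2})=\lambda_K(\bX\,\bTheta^T\bGamma^2\bTheta)$ (using that $\bM\bN$ and $\bN\bM$ have the same eigenvalues for square $\bM,\bN$). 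Then, since for PSD matrices $\bA,\bC$ of equal size one has $\bC\succeq\lambda_{\min}(\bC)\bI$ and hence $\bA^{1/2}\bC\bA^{1/2}\succeq\lambda_{\min}(\bC)\bA$, writing $\bX\,\bTheta^T\bGamma^2\bTheta$ in the conjugated form $\bX^{1/2}(\bTheta^T\bGamma^2\bTheta)\bX^{1/2}$ yields $\lambda_K(\bX\,\bTheta^T\bGamma^2\bTheta)\ge\lambda_K(\bX)\,\lambda_K(\bTheta^T\bGamma^2\bTheta)$.

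Finally, for $\lambda_K(\bX)$ I would use $\bX=\bB(\bTheta^T\bGamma^2\bTheta)\bB^T\succeq\lambda_K(\bTheta^T\bGamma^2\bTheta)\,\bB\bB^T$, so $\lambda_K(\bX)\ge\lambda_K(\bTheta^T\bGamma^2\bTheta)\,\lambda_K(\bB\bB^T)=(\lambda^*(\bB))^2\,\lambda_K(\bTheta^T\bGamma^2\bTheta)$. Chaining these inequalities gives $(\lambda^*(\bP))^2\ge\rho^2(\lambda^*(\bB))^2(\lambda_K(\bTheta^T\bGamma^2\bTheta))^2$, and taking square roots (all quantities nonnegative) yields the claim. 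I expect the only real care to be in the bookkeeping of the rank-$K$/``smallest \emph{nonzero} eigenvalue'' claims and in verifying that every matrix whose square root is taken ($\bX$ and $\bTheta^T\bGamma^2\bTheta$) is genuinely positive definite — which is exactly where Assumption~\ref{assume1} is needed — but this is routine rather than a genuine obstacle.
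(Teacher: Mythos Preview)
Your proof is correct and follows essentially the same route as the paper's: you define the same auxiliary matrix $\bX=\bB\bTheta^T\bGamma^2\bTheta\bB$, use the $\bL\bL^T$/$\bL^T\bL$ spectral identity to peel off the outer $\bGamma\bTheta$ factors, and then apply the same two PSD product inequalities $\lambda_K(\bX\,\bTheta^T\bGamma^2\bTheta)\ge\lambda_K(\bX)\lambda_K(\bTheta^T\bGamma^2\bTheta)$ and $\lambda_K(\bX)\ge(\lambda^*(\bB))^2\lambda_K(\bTheta^T\bGamma^2\bTheta)$. Your write-up is in fact more careful than the paper's in justifying the two inequality steps and in noting where Assumption~\ref{assume1} is used to guarantee rank $K$ and positive definiteness.
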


\section{DCMMSB error bounds}
\begin{lem}\label{lem:kappa_theta}
	For DCMMSB-type models, if $\btheta_i\sim\mathrm{Dirichlet}(\balpha)$, let $\alpha_0=\bone_K^T\balpha$, $\alpha_{\max}=\max_i \alpha_i$, $\alpha_{\min}=\min \alpha_i$, $\nu=\alpha_{0}/\alpha_{\min}$, then
	\bas{
		\uP\left(\lambda_1({\bTheta^T\bGamma^2\bTheta})\leq \frac{3\gamma_{\max}^2n\bbb{\alpha_{\max}+\|\balpha\|^2}}{2\alpha_0(1+\alpha_0)}\right)&\geq 1-K\exp\bbb{-\frac{n}{36\nu^2(1+\alpha_0)^2}}\\
		\uP\left(\lambda_K({\bTheta^T\bGamma^2\bTheta})\geq \frac{\gamma_{\min}^2n}{2\nu(1+\alpha_0)}\right)&\geq 1-K\exp\bbb{-\frac{n}{36\nu^2(1+\alpha_0)^2}}\\
		\uP\left(\kappa(\bTheta^T\bGamma^2 \bTheta)\leq 3\frac{\gamma_{\max}^2}{\gamma_{\min}^2}\frac{\alpha_{\max}+\|\balpha\|^2}{\alpha_{\min}}\right) &\geq 1-2K\exp\left(-\frac{n}{36\nu^2(1+\alpha_0)^2}\right)\\
		\uP\left(\lambda^*(\bP) \geq \frac{\gamma_{\min}^2\lambda^*(\bB)}{2\nu(1+\alpha_0)}\rho n\right) &\geq 1-K\exp\left(-\frac{n}{36\nu^2(1+\alpha_0)^2}\right)
	}
	where $\lambda^*(\bP)$ is the $K$-th singular value of $\bP$.
	\begin{proof}
		First note that
		\bas{
			\lambda_1(\bTheta^T\bGamma^2 \bTheta)=\lambda_1(\bGamma\bTheta\bTheta^T\bGamma)\leq (\lambda_1(\bGamma))^2\lambda_1(\bTheta\bTheta^T)=(\lambda_1(\bGamma))^2\lambda_1(\bTheta^T\bTheta).
		}
		Here we use that $\bX\bX^T$ and $\bX^T\bX$ have the same leading $K$ eigenvalues for $\bX\in\R^{n\times K}$ with rank $K<n$. Also, as $\bTheta^T(\bGamma^2-\gamma_{\min}^2\bI) \bTheta$ is positive semidefinite, we have 
		\bas{
			\lambda_K(\bTheta^T\bGamma^2 \bTheta)&=\lambda_K(\bTheta^T(\bGamma^2-\gamma_{\min}^2\bI) \bTheta+\gamma_{\min}^2\bTheta^T\bTheta)
			\geq \lambda_K(\bTheta^T(\bGamma^2-\gamma_{\min}^2\bI)\bTheta)+ \lambda_K(\gamma_{\min}^2\bTheta^T\bTheta)\\
			&\geq  \gamma_{\min}^2 \lambda_K(\bTheta^T\bTheta)		
		}
		By Lemma~A.2 of \citep{mao2017estimating}, 
		\bas{
			\uP\left(\lambda_1({\bTheta^T\bTheta})\leq \frac{3n\bbb{\alpha_{\max}+\|\balpha\|^2}}{2\alpha_0(1+\alpha_0)}\right)&\geq 1-K\exp\bbb{-\frac{n}{36\nu^2(1+\alpha_0)^2}}\\
			\uP\left(\lambda_K({\bTheta^T\bTheta})\geq \frac{n}{2\nu(1+\alpha_0)}\right)&\geq 1-K\exp\bbb{-\frac{n}{36\nu^2(1+\alpha_0)^2}}\\
			\uP\left(\kappa(\bTheta^T\bTheta)  \leq {3\frac{\alpha_{\max}+\|\balpha\|^2}{\alpha_{\min}}}\right)&\geq 1-2K\exp\left(-\frac{n}{36\nu^2(1+\alpha_0)^2}\right)
		}
		So $\kappa(\bTheta^T\bGamma^2 \bTheta)=\frac{\lambda_1(\bTheta^T\bGamma^2 \bTheta)}{\lambda_K(\bTheta^T\bGamma^2 \bTheta)}\leq \frac{\gamma_{\max}^2}{\gamma_{\min}^2}\kappa(\bTheta^T \bTheta)\leq {3\frac{\gamma_{\max}^2}{\gamma_{\min}^2}\frac{\alpha_{\max}+\|\balpha\|^2}{\alpha_{\min}}}$ with high probability.
		Using Lemma~\ref{lem:dcmmsb_lapmad_P}, we have,
		\bas{
			\lambda^*(\bP)
			\geq \rho\lambda^*(\bB)\lambda_K(\bTheta^T\Gamma^2\bTheta)
			\geq  \frac{\gamma_{\min}^2\lambda^*(\bB)}{2\nu(1+\alpha_0)}\rho n,
		}
		with probability at least $1-K\exp\bbb{-\frac{n}{36\nu^2(1+\alpha_0)^2}}$.
	\end{proof}
\end{lem}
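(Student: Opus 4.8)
The plan is to transfer all four bounds to the unweighted Gram matrix $\bTheta^T\bTheta=\sum_{i=1}^n\btheta_i\btheta_i^T$, for which sharp eigenvalue concentration under a Dirichlet prior is already available (Lemma~A.2 of~\cite{mao2017estimating}), and then to feed the resulting lower bound on $\lambda_K(\bTheta^T\bGamma^2\bTheta)$ into Lemma~\ref{lem:dcmmsb_lapmad_P} to obtain the last claim about $\lambda^*(\bP)$.

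First I would record the deterministic sandwiching between $\bTheta^T\bGamma^2\bTheta$ and $\bTheta^T\bTheta$. Since $\bGamma$ is diagonal with entries in $[\gamma_{\min},\gamma_{\max}]$, both $\gamma_{\max}^2\bI-\bGamma^2$ and $\bGamma^2-\gamma_{\min}^2\bI$ are positive semidefinite, hence so are $\bTheta^T(\gamma_{\max}^2\bI-\bGamma^2)\bTheta$ and $\bTheta^T(\bGamma^2-\gamma_{\min}^2\bI)\bTheta$. Monotonicity of $\lambda_1$ under the Loewner order gives $\lambda_1(\bTheta^T\bGamma^2\bTheta)\le\gamma_{\max}^2\lambda_1(\bTheta^T\bTheta)$, while writing $\bTheta^T\bGamma^2\bTheta=\bTheta^T(\bGamma^2-\gamma_{\min}^2\bI)\bTheta+\gamma_{\min}^2\bTheta^T\bTheta$ and applying Weyl's inequality $\lambda_K(A+B)\ge\lambda_K(A)+\lambda_K(B)$ gives $\lambda_K(\bTheta^T\bGamma^2\bTheta)\ge\gamma_{\min}^2\lambda_K(\bTheta^T\bTheta)$. (Throughout I would also use that $\bTheta\bTheta^T$ and $\bTheta^T\bTheta$ share the same top-$K$ eigenvalues, so one may pass freely between the $n\times n$ and $K\times K$ pictures.) Dividing the two displays yields $\kappa(\bTheta^T\bGamma^2\bTheta)\le(\gamma_{\max}^2/\gamma_{\min}^2)\,\kappa(\bTheta^T\bTheta)$.

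Next I would invoke Lemma~A.2 of~\cite{mao2017estimating}: with probability at least $1-K\exp\bbb{-n/(36\nu^2(1+\alpha_0)^2)}$ one has $\lambda_1(\bTheta^T\bTheta)\le\frac{3n(\alpha_{\max}+\|\balpha\|^2)}{2\alpha_0(1+\alpha_0)}$; with the same probability $\lambda_K(\bTheta^T\bTheta)\ge\frac{n}{2\nu(1+\alpha_0)}$; and with probability at least $1-2K\exp\bbb{-n/(36\nu^2(1+\alpha_0)^2)}$ one has $\kappa(\bTheta^T\bTheta)\le 3(\alpha_{\max}+\|\balpha\|^2)/\alpha_{\min}$ (the factor $2$ coming from union-bounding the two eigenvalue events). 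Combining each of these with the matching deterministic inequality from the previous paragraph immediately produces the first three stated bounds. For the fourth, on the event $\{\lambda_K(\bTheta^T\bGamma^2\bTheta)\ge\gamma_{\min}^2 n/(2\nu(1+\alpha_0))\}$, Lemma~\ref{lem:dcmmsb_lapmad_P} gives $\lambda^*(\bP)\ge\rho\,\lambda^*(\bB)\,\lambda_K(\bTheta^T\bGamma^2\bTheta)\ge\frac{\gamma_{\min}^2\lambda^*(\bB)}{2\nu(1+\alpha_0)}\rho n$, with the probability inherited from the second bound.

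The only genuinely non-elementary ingredient here is the cited Dirichlet eigenvalue concentration, so the main obstacle would appear only if one wanted a self-contained argument: one would write $\bTheta^T\bTheta=\sum_i\btheta_i\btheta_i^T$ as a sum of independent rank-one PSD matrices, compute $\uE[\btheta_i\btheta_i^T]$ from the Dirichlet second moments ($\uE[\theta_{ij}\theta_{ik}]=\alpha_j\alpha_k/(\alpha_0(\alpha_0+1))$ off the diagonal, $\alpha_j(\alpha_j+1)/(\alpha_0(\alpha_0+1))$ on it), identify its smallest and largest eigenvalues, and then apply a matrix Chernoff/Bernstein bound using the almost-sure control $\|\btheta_i\btheta_i^T\|=\|\btheta_i\|_2^2\le\|\btheta_i\|_1^2=1$ and a variance-proxy estimate. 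Everything downstream of that estimate is the routine Loewner-sandwiching and union-bound bookkeeping described above.
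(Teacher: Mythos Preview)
Your proposal is correct and follows essentially the same route as the paper: sandwich $\bTheta^T\bGamma^2\bTheta$ between $\gamma_{\min}^2\bTheta^T\bTheta$ and $\gamma_{\max}^2\bTheta^T\bTheta$ via the Loewner order (the paper gets the upper bound by writing $\lambda_1(\bTheta^T\bGamma^2\bTheta)=\lambda_1(\bGamma\bTheta\bTheta^T\bGamma)\le(\lambda_1(\bGamma))^2\lambda_1(\bTheta^T\bTheta)$, which is the same inequality), invoke Lemma~A.2 of~\cite{mao2017estimating} for the Dirichlet eigenvalue bounds on $\bTheta^T\bTheta$, and finish with Lemma~\ref{lem:dcmmsb_lapmad_P} for $\lambda^*(\bP)$. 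Your additional sketch of how one would prove the cited Dirichlet concentration from matrix Chernoff is accurate but not needed here.
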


\begin{lem}\label{lem:dcmmsb}
	\sloppy
	For DCMMSB-type models, we have $(\tmtxPp\tmtxPp^T)^{-1}\bone\geq\frac{(\min_i\gamma_i)^2}{{\lambda_1(\bTheta^T\bGamma^2 \bTheta)}}\bTheta^T\bone$. Furthermore, if $\btheta_i\sim\mathrm{Dirichlet}(\balpha)$, then ${(\tmtxPp\tmtxPp^T)^{-1}\bone \geq \frac{(\min_i\gamma_i)^2}{{2\lambda_1(\bTheta^T\bGamma^2 \bTheta)}}\frac{n}{\nu}\bone\geq\frac{\gamma_{\min}^2}{3\gamma_{\max}^2}\frac{1}{\nu}\bone}$ with probability larger than $1-1/n^3-K\exp\bbb{-\frac{n}{36\nu^2(1+\alpha_0)^2}}$,  where $\nu = \frac{\sum \alpha_i}{\min \alpha_i}$.
	\fussy
	\begin{proof}
		First note that, for diagonal matrices $\bD\in \mathbb{R}_{\geq 0}^{m\times m}$ and $\bGamma\in \mathbb{R}_{\geq 0}^{n\times n}$ that have strictly positive elements on the diagonal, and some matrices $\bG\in \mathbb{R}_{\geq 0}^{m\times m}$ and $\bH_1\in \mathbb{R}_{\geq 0}^{n\times m}$, $\bH_2\in \mathbb{R}_{\geq 0}^{n\times m}$ we have 
		\ba{
			\bD\bG\bD \bone &\geq (\min_i \bD_{ii})^2 \bG\bone, \label{eq:D_out}\\
			\bH_1^T\bGamma\bH_2 \bone &\geq \min_i \bGamma_{ii} \bH_1^T\bH_2\bone. \label{eq:D_in}
		}
		Eq.~\eqref{eq:D_out} is true because
		\bas{
			\bD\bG\bD \bone- (\min_i \bD_{ii})^2 \bG\bone
			=&\bD\bG\bD \bone-\min_i \bD_{ii} \bG\bD\bone +\min_i \bD_{ii} \bG\bD\bone- (\min_i \bD_{ii})^2 \bG\bone\\
			=&(\bD-\min_i \bD_{ii}\bI)\bG\bD \bone+\min_i \bD_{ii} \bG(\bD-\min_i \bD_{ii}\bI)\bone\geq \bzero,
		}
		where last step follows that $\bD$, 
		$\bG$ and $(\bD-\min_i \bD_{ii}\bI)$ are all non-negative. Eq.~\eqref{eq:D_in} can be proved in a similar way. Now use these on Eq.~\eqref{eq:vpvpT}, we have
		\bas{
			(\tmtxPp\tmtxPp^T)^{-1}\bone&=\bN_P^{-1}\bGamma_P^{-1}\bTheta^T\bGamma^2 \bTheta \bGamma_P^{-1}\bN_P^{-1}\bone\geq\left(\min_i \frac{\|\bv_{I(i)}\|}{\gamma_{I(i)}}\right)^2\bTheta^T\bGamma^2 \bTheta\bone\\
			&\geq \left(\min_i \frac{\|\bv_{I(i)}\|}{\gamma_{I(i)}}\right)^2 (\min_i\gamma_i)^2\bTheta^T\bone
			\geq \frac{(\min_i\gamma_i)^2}{{\lambda_1(\bTheta^T\bGamma^2 \bTheta)}}\bTheta^T\bone,
		}
		where the last step follows Lemma~\ref{lem:v_norm_bound}. 
		By Lemma~C.1. of \citep{mao2017}, we know if rows of $\bTheta$ are from Dirichlet distribution with parameter $\balpha=(\alpha_0,\alpha_2,\cdots,\alpha_K)$, $\alpha_0=\sum_i \alpha_i$,  $\nu=\alpha_0/\min_i\alpha_i$,
		\bas{
			\bTheta^T\bone \geq \frac{n}{\nu}\left(1-O_P\left(\sqrt{\frac{\nu\log n}{n}}\right)\right)\bone
		}
		with probability larger than $1-1/n^3$. Now 
		by Lemma~\ref{lem:kappa_theta}, we have, with probability larger than $1-1/n^3-K\exp\bbb{-\frac{n}{36\nu^2(1+\alpha_0)^2}}$,
		\bas{
			(\tmtxPp\tmtxPp^T)^{-1}\bone
			&\geq \frac{(\min_i\gamma_i)^2}{{\lambda_1(\bTheta^T\bGamma^2 \bTheta)}}\bTheta^T\bone
			\geq \frac{(\min_i\gamma_i)^2}{{\lambda_1(\bTheta^T\bGamma^2 \bTheta)}}\frac{n}{\nu}\left(1-O_P\left(\sqrt{\frac{\nu\log n}{n}}\right)\right)\bone\\
			&\geq \frac{2\gamma_{\min}^2\alpha_0(1+\alpha_0)}{3\gamma_{\max}^2n\bbb{\alpha_{\max}+\|\balpha\|^2}}\frac{n}{\nu}\frac{1}{2}\bone
			=\frac{\gamma_{\min}^2\alpha_{\min}(1+\alpha_0)}{3\gamma_{\max}^2\bbb{\alpha_{\max}+\|\balpha\|^2}}\bone
			\geq \frac{\gamma_{\min}^2}{3\gamma_{\max}^2}\frac{1}{\nu}\bone.
		}
	\end{proof}
\end{lem}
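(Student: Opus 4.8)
The plan is to start from the explicit identity $(\tmtxPp\tmtxPp^T)^{-1}=\bN_P^{-1}\bGamma_P^{-1}\bTheta^T\bGamma^2\bTheta\bGamma_P^{-1}\bN_P^{-1}$ (Eq.~\eqref{eq:vpvpT}), recall that in the $\bV\bV^T$ setting $\bN_{ii}=1/\|\be_i^T\bV\|=1/\|\bv_i\|$ so that $\bN_P^{-1}\bGamma_P^{-1}$ is the diagonal matrix with entries $\|\bv_{I(i)}\|/\gamma_{I(i)}$, and then ``peel off'' the surrounding diagonal matrices using two elementary facts about non-negative matrices. First I would establish the two inequalities used already in the excerpt: for a non-negative diagonal matrix $\bD$ with strictly positive diagonal and a non-negative matrix $\bG$, one has $\bD\bG\bD\bone\geq(\min_i\bD_{ii})^2\,\bG\bone$ (Eq.~\eqref{eq:D_out}); and for non-negative $\bH_1,\bH_2$ and a non-negative diagonal $\bGamma'$ with positive diagonal, $\bH_1^T\bGamma'\bH_2\bone\geq(\min_i\bGamma'_{ii})\,\bH_1^T\bH_2\bone$ (Eq.~\eqref{eq:D_in}). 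Each follows from a telescoping split such as $\bD\bG\bD\bone-(\min_i\bD_{ii})^2\bG\bone=(\bD-\min_i\bD_{ii}\,\bI)\bG\bD\bone+\min_i\bD_{ii}\,\bG(\bD-\min_i\bD_{ii}\,\bI)\bone$, in which every factor is entrywise non-negative.

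For the deterministic bound I would then chain: applying Eq.~\eqref{eq:D_out} with $\bD=\bN_P^{-1}\bGamma_P^{-1}$ gives $(\tmtxPp\tmtxPp^T)^{-1}\bone\geq\bbb{\min_{i\in I}\|\bv_i\|/\gamma_i}^2\,\bTheta^T\bGamma^2\bTheta\bone$; applying Eq.~\eqref{eq:D_in} with $\bGamma'=\bGamma^2$ and $\bH_1=\bH_2=\bTheta$ gives $\bTheta^T\bGamma^2\bTheta\bone\geq(\min_i\gamma_i)^2\,\bTheta^T\bTheta\bone$, and since every row of $\bTheta$ has $\ell_1$ norm at least $1$ in both sub-models ($\|\btheta_i\|_1=1$ for DCMMSB and $\|\btheta_i\|_1\geq\|\btheta_i\|_2=1$ for OCCAM), $\bTheta\bone\geq\bone$ entrywise, so $\bTheta^T\bTheta\bone\geq\bTheta^T\bone$; finally Lemma~\ref{lem:v_norm_bound} gives $\min_{i\in I}\|\bv_i\|/\gamma_i\geq 1/\sqrt{\lambda_1(\bTheta^T\bGamma^2\bTheta)}$. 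Combining these three steps yields $(\tmtxPp\tmtxPp^T)^{-1}\bone\geq\frac{(\min_i\gamma_i)^2}{\lambda_1(\bTheta^T\bGamma^2\bTheta)}\bTheta^T\bone$, the first claim.

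For the Dirichlet refinement I would invoke the concentration of the column sums of $\bTheta$ from Lemma~C.1 of \citet{mao2017}, namely $\bTheta^T\bone\geq\frac{n}{\nu}\bbb{1-O_P\bbb{\sqrt{\nu\log n/n}}}\bone$ with probability at least $1-1/n^3$, together with the upper bound $\lambda_1(\bTheta^T\bGamma^2\bTheta)\leq\frac{3\gamma_{\max}^2 n(\alpha_{\max}+\|\balpha\|^2)}{2\alpha_0(1+\alpha_0)}$ from Lemma~\ref{lem:kappa_theta}, which holds with probability at least $1-K\exp\bbb{-\frac{n}{36\nu^2(1+\alpha_0)^2}}$. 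On the intersection of these events, plugging the lower bound on $\bTheta^T\bone$ into the deterministic bound and absorbing the $1-o_P(1)$ factor into a constant $1/2$ for $n$ large enough gives $(\tmtxPp\tmtxPp^T)^{-1}\bone\geq\frac{(\min_i\gamma_i)^2}{2\lambda_1(\bTheta^T\bGamma^2\bTheta)}\frac{n}{\nu}\bone$; then substituting the upper bound on $\lambda_1(\bTheta^T\bGamma^2\bTheta)$ and using $\|\balpha\|^2\leq\alpha_{\max}\alpha_0$ and $\alpha_{\max}\leq\alpha_0$ (so $\frac{\alpha_{\min}(1+\alpha_0)}{\alpha_{\max}+\|\balpha\|^2}\geq\frac{\alpha_{\min}}{\alpha_{\max}}\geq\frac{1}{\nu}$) reduces the right-hand side to $\frac{\gamma_{\min}^2}{3\gamma_{\max}^2\nu}\bone$. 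A union bound over the two failure events produces the stated probability $1-1/n^3-K\exp\bbb{-\frac{n}{36\nu^2(1+\alpha_0)^2}}$. I do not expect a serious obstacle: the argument is essentially bookkeeping once the two non-negative-matrix inequalities are in place; the one point that needs care is checking that $\bTheta^T\bGamma^2\bTheta\bone\geq(\min_i\gamma_i)^2\bTheta^T\bone$ holds uniformly over the DCMMSB-type family (hence the appeal to $\|\btheta_i\|_1\geq 1$ rather than to $\bTheta\bone=\bone$, which is only valid in the $\ell_1$ case) and keeping straight which minima range over $I$ versus over $[n]$.
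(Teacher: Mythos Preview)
Your proposal is correct and follows essentially the same route as the paper's proof: the same two non-negative-matrix inequalities \eqref{eq:D_out}--\eqref{eq:D_in}, the same identity $(\tmtxPp\tmtxPp^T)^{-1}=\bN_P^{-1}\bGamma_P^{-1}\bTheta^T\bGamma^2\bTheta\bGamma_P^{-1}\bN_P^{-1}$, Lemma~\ref{lem:v_norm_bound} for $\min_{i\in I}\|\bv_i\|/\gamma_i$, and then Lemma~C.1 of \citet{mao2017} together with Lemma~\ref{lem:kappa_theta} for the Dirichlet refinement. In fact you are slightly more explicit than the paper at one step: the paper writes $\bTheta^T\bGamma^2\bTheta\bone\geq(\min_i\gamma_i)^2\,\bTheta^T\bone$ directly, whereas you correctly pass through $\bTheta^T\bTheta\bone$ and then invoke $\bTheta\bone\geq\bone$ (which holds because $\|\btheta_i\|_1\geq 1$ for both $p=1$ and $p=2$).
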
\bk
We use a crucial result from \citep{mao2017estimating} that shows row-wise eigenspace concentration for general low rank matrix.
\begin{thm}[Row-wise eigenspace concentration~\citep{mao2017estimating}]\label{thm:row_wise_vvt}
	Suppose $\bP$ has rank $K$, ${\max_{i,j}\bP_{ij}\leq\rho}$. Let $\bA_{ij}=\bA_{ji}\sim \mathrm{Ber}(\bP_{ij})$, $\bV$ and $\hat{\bV}$ are $\bP$ and $\bA$'s top-$K$ eigenvectors respectively. If  $\uP(\max_i\|\bV_{:,i}\|_\infty>\sqrt{\rho})\leq\delta_1$, and for some constant $\xi>1$, $\rho n=\Omega((\log n)^{2\xi})$ and ${\uP(\lambda^*(\bP)<4\sqrt{n\rho}(\log n)^\xi)<\delta_2}$, 
	then for a  fixed $i \in [n]$, with probability at least $1-\delta_1-\delta_2-O(Kn^{-3})$, 
	\bas{
		\| \be_i^T( \hat{\bV}\hat{\bV}^T-\bV\bV^T) \|
		&=O\bbb{\frac{\min\{K,\kappa(\bP)\}\sqrt{Kn\rho}}{\lambda^*(\bP)}}\bbb{(\min\{K,\kappa(\bP)\}+(\log n)^{\xi})\max_i\|\bV_{:,i}\|_\infty+(K+1)n^{-2\xi}}.
	}
\end{thm}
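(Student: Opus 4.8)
This result is the row-wise eigenspace concentration bound established in \cite{mao2017estimating}; one could simply invoke it, but the argument one would run is the following. The plan is to first pass to a high-probability event $\E$ on which $\opnorm{\bA-\bP}\le c_0\sqrt{\rho n}$ (matrix Bernstein, using $\rho n=\Omega((\log n)^{2\xi})$), and on which the two hypotheses hold, i.e. $\max_i\|\bV_{:,i}\|_\infty\le\sqrt{\rho}$ and $\lambda^*(\bP)\ge 4\sqrt{n\rho}(\log n)^\xi$. On $\E$, Weyl's inequality gives $|\hat\lambda_k-\lambda_k|\le c_0\sqrt{\rho n}\ll\lambda^*(\bP)$ for $k\le K$, so the top-$K$ eigenvalues of $\bA$ are cleanly separated from the rest, $\hat\lambda_k\gtrsim\lambda^*(\bP)$, and a Davis--Kahan bound yields the crude estimate $\opnorm{\hat{\bV}\hat{\bV}^T-\bV\bV^T}=O(\sqrt{\rho n}/\lambda^*(\bP))$. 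Since $\hat{\bV}\hat{\bV}^T$ and $\bV\bV^T$ are spectral projectors, the unknown orthogonal rotation relating $\hat{\bV}$ to $\bV$ cancels, and it suffices to bound $\|\be_i^T(\hat{\bV}\hat{\bV}^T-\bV\bV^T)\|$ for a fixed $i$.

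The heart of the proof is a leave-one-out decoupling. Fix $i$; let $\bA^{(i)}$ agree with $\bA$ except that its $i$-th row and column are replaced by those of $\bP$, and let $\hat{\bV}^{(i)}$ be the top-$K$ eigenvectors of $\bA^{(i)}$, which are independent of row $i$ of $\bA-\bP$ while $\opnorm{\bA-\bA^{(i)}}=O(\|(\bA-\bP)_{i,:}\|_2)=O(\sqrt{\rho n})$ with high probability. From $\bA\hat{\bV}_{:,k}=\hat\lambda_k\hat{\bV}_{:,k}$ I would write, for $k\le K$, $\be_i^T\hat{\bV}_{:,k}=\hat\lambda_k^{-1}\bigl[\be_i^T\bP\,\hat{\bV}_{:,k}+(\bA-\bP)_{i,:}\hat{\bV}^{(i)}_{:,k}+(\bA-\bP)_{i,:}\bigl(\hat{\bV}_{:,k}-\hat{\bV}^{(i)}_{:,k}\bigr)\bigr]$, a ``signal'' plus ``independent noise'' plus ``coupling'' split. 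The signal term is $\be_i^T\bV$ times a bounded factor; after summing over $k$ and cancelling against $\be_i^T\bV\bV^T$ it contributes at the scale $\|\be_i^T\bV\|\le\sqrt{K}\max_j\|\bV_{:,j}\|_\infty$ times the eigenvalue/eigenspace error, and the $\min\{K,\kappa(\bP)\}$ factors enter here depending on whether $\|\be_i^T\bV\|$ and the coordinates of $\hat{\bV}_{:,k}$ are controlled via incoherence or via the conditioning of $\bP$. The noise term is a sum of independent mean-zero scalars (by the leave-one-out independence), so a Bernstein bound controls it by $O\bigl((\sqrt{\rho}+(\log n)^\xi)\max_j\|\bV_{:,j}\|_\infty(\log n)^\xi\bigr)$ with probability $1-O(n^{-3})$. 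The coupling term is at most $\opnorm{\bA-\bP}$ times the norm of the $k$-th column of $\hat{\bV}-\hat{\bV}^{(i)}\bO^{(i)}$, which a second Davis--Kahan (for $\bA$ versus $\bA^{(i)}$) bounds by $O(\sqrt{\rho n}/\lambda^*(\bP))$, making it lower order since $\sqrt{\rho n}/\lambda^*(\bP)=o(1)$. Dividing by $\hat\lambda_k\gtrsim\lambda^*(\bP)$, combining the three terms into a per-column bound on $\be_i^T\hat{\bV}_{:,k}$, and then converting (at the cost of a $\sqrt{K}$ factor and one further use of the crude operator-norm estimate) to a bound on $\be_i^T(\hat{\bV}\hat{\bV}^T-\bV\bV^T)$ gives the stated inequality; a union bound over the $O(K)$ failure events yields the $O(Kn^{-3})$ slack.

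The step I expect to be the main obstacle is making the leave-one-out argument rigorous: the estimate $\|\hat{\bV}-\hat{\bV}^{(i)}\bO^{(i)}\|=O(\sqrt{\rho n}/\lambda^*(\bP))$ needs its own Weyl/Davis--Kahan layer applied to $\bA^{(i)}$, and the alignment matrices $\bO^{(i)}$ have to be carried through so that they cancel at the level of projectors rather than producing spurious rotation terms. Equally fiddly is extracting the sharp $\min\{K,\kappa(\bP)\}$ dependence (instead of a crude $K$ or $\kappa(\bP)$), which requires deciding term by term which of the two available bounds to use for $\|\be_i^T\bV\|$ and for $\max_k\|\hat{\bV}_{:,k}\|_\infty$. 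Finally, driving every failure probability down to $n^{-3}$ so that the union bound over $i$ and $k$ costs only $O(Kn^{-3})$ is what forces the polylog slack $\xi>1$, and this is precisely where the hypotheses $\rho n=\Omega((\log n)^{2\xi})$ and $\lambda^*(\bP)=\Omega(\sqrt{n\rho}(\log n)^\xi)$ are used.
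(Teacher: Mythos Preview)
The paper does not prove this theorem at all: it is stated in the appendix purely as a quoted result from \cite{mao2017estimating} and then applied as a black box in the proof of Theorem~\ref{thm:row_wise_vvt1}. So there is no ``paper's own proof'' to compare against, and your opening remark that one could simply invoke the cited result is exactly what the authors do.

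That said, your sketch of the underlying argument is on target. The leave-one-out decoupling you describe---constructing $\bA^{(i)}$ with the $i$-th row/column replaced by population values, so that $\hat{\bV}^{(i)}$ is independent of the $i$-th row of noise, then splitting $\be_i^T\hat{\bV}_{:,k}$ into signal, independent-noise, and coupling pieces---is precisely the mechanism used in \cite{mao2017estimating} (and in the broader line of row-wise eigenvector perturbation results). Your identification of the delicate points is also accurate: carrying the rotations $\bO^{(i)}$ so they cancel at the projector level, and extracting the $\min\{K,\kappa(\bP)\}$ rather than a cruder factor, are exactly where the bookkeeping lives. Nothing in your outline is wrong; it is simply more than the present paper undertakes.
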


	\begin{proof}[Proof of Theorem~\ref{thm:row_wise_vvt1}]
		First by Lemma~\ref{lem:y_error_norm_dcmm}, 
		\bas{
			\|\trowP_i-\trowS_i\|
			&\leq \frac{2\epsilon_0}{\|\bv_i\|}\leq \frac{2\epsilon_0 \sqrt{\lambda_1(\bTheta^T\bGamma^2 \bTheta)}}{\gamma_i}.
		}
	Also using Lemma~\ref{lem:v_norm_bound},
	\bas{
		\max_j \|\bV_{:,j}\|_\infty \leq \max_i \|\bv_i\|  \leq\mypsi  \max_i \gamma_i/\sqrt{\lambda_K(\bTheta^T\bGamma^2 \bTheta)},
	}
	By Lemma~\ref{lem:kappa_theta}, we have $\max_j \|\bV_{:,j}\|_\infty \leq\mypsi  \max_i \gamma_i/\sqrt{\lambda_K(\bTheta^T\bGamma^2 \bTheta)}$ with probability at least $1-\delta_1$ for $\delta_1\leq K\exp\bbb{-\frac{n}{36\nu^2(1+\alpha_0)^2}}$. Also from the condition of $\nu$, $\max_i \gamma_i/\sqrt{\lambda_K(\bTheta^T\bGamma^2 \bTheta)}\leq \sqrt{\rho}$. Then it is easy to see $\uP(\max_i\|\bV_{:,i}\|_\infty>\sqrt{\rho})\leq\delta_1$.
	Also, from the condition of $\lambda^*(\bB)/\nu$, we have $4\sqrt{n\rho}(\log n)^\xi\leq \frac{\gamma_{\min}^2\lambda^*(\bB)}{2\nu(1+\alpha_0)}\rho n$. Then combined with Lemma~\ref{lem:kappa_theta}, 
	$\uP(\lambda^*(\bP)<4\sqrt{n\rho}(\log n)^\xi)<\delta_2$ is satisfied with $\delta_2\leq K\exp\left(-\frac{n}{36\nu^2(1+\alpha_0)^2}\right)$.
 	Also we have, $$\max_i \gamma_i/\sqrt{\lambda_K(\bTheta^T\bGamma^2 \bTheta)}\geq \max_i \gamma_i/\sqrt{\lambda_1(\bTheta^T\bGamma^2 \bTheta)}\geq\sqrt{2/(3n)}\gg (K+1)n^{-2\xi}$$ with high probability. 
	Then by Theorem~\ref{thm:row_wise_vvt} we have 
	\bas{
		\epsilon_0&=O\bbb{\frac{\min\{K,\kappa(\bP)\}\sqrt{Kn\rho}}{\lambda^*(\bP)}}\bbb{(\min\{K,\kappa(\bP)\}+(\log n)^{\xi})\max_i\|\bV_{:,i}\|_\infty+(K+1)n^{-2\xi}}\\
		&=\tilde{O}\bbb{\frac{\min\{K^2,(\kappa(\bP))^2\}\sqrt{Kn\rho}}{\rho\lambda^*(\bB)\lambda_K(\bTheta^T\bGamma^2\bTheta)}}\frac{\mypsi\gamma_{\max}}{\sqrt{\lambda_K(\bTheta^T\bGamma^2 \bTheta)}}.
	}
	with probability at least $1-\delta_1-\delta_2-O(Kn^{-3})= 1-O(Kn^{-3})$. 
	So,
	\bas{
		\|\trowP_i-\trowS_i\| \leq \frac{2\epsilon_0 \sqrt{\lambda_1(\bTheta^T\bGamma^2 \bTheta)}}{\gamma_i}
		&=\tilde{O}\bbb{\frac{\min\{K^2,(\kappa(\bP))^2\}\sqrt{Kn}}{\sqrt{\rho}\lambda^*(\bB)\lambda_K(\bTheta^T\bGamma^2\bTheta)}}\frac{\mypsi\gamma_{\max}\sqrt{\kappa(\bTheta^T\bGamma^2 \bTheta)}}{\gamma_i}.
	}
	And using Lemma~\ref{lem:kappa_theta},
	\bas{
		\epsilon=\max_i \|\trowP_i-\trowS_i\| 
		&=\yError\\
		&= \tilde{O}\bbb{\frac{\mypsi\gamma_{\max}\min\{K^2,(\kappa(\bP))^2\}\sqrt{\kappa(\bTheta^T\bGamma^2 \bTheta)}K^{0.5}\nu(1+\alpha_0)}{\gamma_{\min}^3\lambda^*(\bB)\sqrt{n\rho}}}
	}
with probability at least $1-O(Kn^{-2})$.
	\end{proof}

	\begin{proof}[Proof of Theorem~\ref{thm:theta_bound}]
		Note that $\bP=\rho\bGamma\bTheta\bB\bTheta^T\bGamma=\bV\bE\bV^T$, we have $\rho\bGamma_P\bB\bGamma_P=\bV_P\bE\bV_P^T$, then $\rho\bN_P\bGamma_P\bB\bGamma_P\bN_P=\bN_P\bV_P\bE\bV_P^T\bN_P=\tmtxPp\bV\bE\bV^T\tmtxPp^T$. As $\bB$ has unit diagonal, let $\bB(i,i)=c^2$, then $c^2\rho\gamma_{I(i)}^2/\|\bv_{I(i)}\|^2=\rho\be_i^T\bN_P\bGamma_P\bB\bGamma_P\bN_P\be_i=\be_i^T\tmtxPp\bV\bE\bV^T\tmtxPp^T\be_i:=d_i^2$. Since our estimation for $c^2\rho\gamma_{I(i)}^2/\|\bv_{I(i)}\|^2$ is $\be_i^T\bpi^T\tmtxS_C\hv\hat{\bE}\hv^T\tmtxS_C^T\bpi\be_i$, and note that $\|\bE\|\leq \max_i \|\be_i^T\bP\|_1=O(\rho n)$, $\|\hat{\bE}\|\leq \|\bE\|+\|\bA-\bP\|=O(\rho n)$ using Weyl's inequality and Theorem~5.2 of \citep{lei2015consistency}, and $\|\bV\bE\bV^T-\hv\hat{\bE}\hv^T\|\leq \lambda_{K+1}(\bA)+\|\bP-\bA\|\leq 2\|\bP-\bA\|=O(\sqrt{\rho n})$. Let   $\hat{d}_i^2=\be_i^T\tmtxS_C\hv\hat{\bE}\hv^T\tmtxS_C^T\be_i$, then we have, 
		\bas{
			&|d_i^2-\hat{d}_{\pi(i)}^2|=\|\be_i^T\tmtxPp\bV\bE\bV^T\tmtxPp^T\be_i-\be_i^T\bpi^T\tmtxS_C\hv\hat{\bE}\hv^T\tmtxS_C^T\bpi\be_i\|\\
			\leq &\|\be_i^T(\tmtxPp-\bpi^T\tmtxS_C)\bV\bE\bV^T\tmtxPp^T\be_i\|+\|\be_i^T\bpi^T\tmtxS_C(\bV\bE\bV^T-\hv\hat{\bE}\hv^T)\tmtxPp^T\be_i\|\\
			&+\|\be_i^T\bpi^T\tmtxS_C\hv\hat{\bE}\hv^T(\tmtxPp^T-\tmtxS_C^T\bpi)\be_i\|\\
			\leq & \|\be_i^T(\tmtxPp-\bpi^T\tmtxS_C)\|\|\bE\|+\|\bV\bE\bV^T-\hv\hat{\bE}\hv^T\|+\|\hat{\bE}\|\|\be_i^T(\tmtxPp-\bpi^T\tmtxS_C)\|\\
			\leq & O(\rho n)\epsilon_4/\sqrt{K}+O(\sqrt{\rho n}).
		}
		Using Lemma~\ref{lem:v_norm_bound}, $c\sqrt{\rho\lambda_K(\bTheta^T\bGamma^2 \bTheta)} \leq d_i\leq c\sqrt{\rho\lambda_1(\bTheta^T\bGamma^2 \bTheta)}$, and by Lemma~\ref{lem:kappa_theta}, $\lambda_1(\bTheta^T\bGamma^2\bTheta)\leq \frac{3\gamma_{\max}^2 n\bbb{\alpha_{\max}+\|\balpha\|^2}}{2\alpha_0(1+\alpha_0)}$, $\lambda_K({\bTheta^T\bGamma^2\bTheta})\geq \frac{\gamma_{\min}^2n}{2\nu(1+\alpha_0)}$, then we have $d_i\geq {c}\sqrt{\frac{\gamma_{\min}^2\rho n}{2\nu(1+\alpha_0)}}$, and $d_i\leq c\sqrt{\frac{3\gamma_{\max}^2 \rho n\bbb{\alpha_{\max}+\|\balpha\|^2}}{2\alpha_0(1+\alpha_0)}}$ with probability at least $1-2K\exp\left(-\frac{n}{36\nu^2(1+\alpha_0)^2}\right)$. Then, using Lemma~\ref{lem:yp_eigen},
		\bas{
			|d_i-\hat{d}_{\pi(i)}|&\leq \frac{O(\rho n)\epsilon_4/\sqrt{K}+O(\sqrt{\rho n})}{\min_j (d_j+\hat{d}_{\pi(j)})}
			\leq \frac{O(\rho n)\epsilon_4/\sqrt{K}+O(\sqrt{\rho n})}{\sqrt{\rho\lambda_K(\bTheta^T\bGamma^2 \bTheta)}}	\\
			&\leq \frac{O(\rho n/\sqrt{K})\frac{c_Y{K}\zeta}{(\lambda_K(\tmtxPp\tmtxPp^T))^{1.5}}\epsilon+O(\sqrt{\rho n})}{\sqrt{\rho\lambda_K(\bTheta^T\bGamma^2 \bTheta)}}
			=O\bbb{\frac{K^{0.5}(\kappa(\bTheta^T\bGamma^2 \bTheta))^{1.5}\zeta \sqrt{\rho}n}{\sqrt{\lambda_K(\bTheta^T\bGamma^2 \bTheta)}}\epsilon}\\
			&= O\bbb{\frac{K^{1.5}(\kappa(\bTheta^T\bGamma^2 \bTheta))^{3} \sqrt{\rho}n}{\eta\sqrt{\lambda_K(\bTheta^T\bGamma^2 \bTheta)}}\epsilon}.
		}
		Let $\bD=\diag({d_1},{d_2},\cdots,{d_K})$ and  $\hat{\bD}=\diag({\hat{d}_1},{\hat{d}_2},\cdots,{\hat{d}_K})$, then $\bD={c}\sqrt{\rho}(\bN_P\bGamma_P)$.
		Now as we estimate ${c}\sqrt{\rho}(\bGamma\bTheta)$ by ${\hat{c}}\sqrt{\hat{\rho}}\hat{\bGamma}\hat{\bTheta}=\hat{\cvxP}\hat{\bD}$, we have
		\bas{
			&\|\be_i^T({c}\sqrt{\rho}\bGamma\bTheta-{\hat{c}}\sqrt{\hat{\rho}}\hat{\bGamma}\hat{\bTheta}\bpi)\|= \|\be_i^T({\cvxP}\bD-\hat{\cvxP}\hat{\bD}\bpi)\|
			\leq  \|\be_i^T(\cvxP-\cvxS\bpi)\bD\|+\|\be_i^T\cvxS\bpi(\bD-\bpi^T\hat{\bD}\bpi)\| \\
			\leq & \|\be_i^T(\cvxP-\cvxS\bpi)\|\|\bD\|+\|\be_i^T\cvxS\|\|\bD-\bpi^T\hat{\bD}\bpi\|
			\leq  \epsilon_{M,i} \max_j d_j + (\|\be_i^T\cvxP\|+\epsilon_{M,i})\max_j|d_j-\hat{d}_{\pi(j)}|\\
			\leq &\epsilon_{M,i} \max_j d_j + ( {\gamma_i}\max_{j\in I}{\|\bv_{j}\|}/{\gamma_{j}}+\epsilon_{M,i})\max_j|d_j-\hat{d}_{\pi(j)}|\\
			\leq & c\sqrt{\rho\lambda_1(\bTheta^T\bGamma^2 \bTheta)}\epsilon_{M,i}+\left(\frac{\gamma_i}{\sqrt{\lambda_K(\bTheta^T\bGamma^2 \bTheta)}}+\epsilon_{M,i}\right)O\bbb{\frac{K^{1.5}(\kappa(\bTheta^T\bGamma^2 \bTheta))^{3} \sqrt{\rho}n}{\eta\sqrt{\lambda_K(\bTheta^T\bGamma^2 \bTheta)}}\epsilon},
		}
		where we use $\|\be_i^T\bM\|=\|\be_i^T\bGamma\bTheta\bGamma_P^{-1}\bN_P^{-1}\|\leq \gamma_i\|\btheta_i\|\max_{j\in I}{\|\bv_{j}\|}/{\gamma_{j}}$ and $\|\btheta_i\|\leq 1$ for DCMMSB and OCCAM for the last inequality.
		As
		\bas{
			\epsilon_{M,i}&=\frac{c_M\kappa(\tmtxPp\tmtxPp^T)\|\be_i^T\mtxP\|{K\zeta}}{(\lambda_K(\tmtxPp\tmtxPp^T))^{2.5}} \epsilon \leq \frac{c_M\kappa(\tmtxPp\tmtxPp^T)\|\be_i^T\mtxP\|{K}}{(\lambda_K(\tmtxPp\tmtxPp^T))^{2.5}}\frac{4K}{\eta(\lambda_K(\tmtxP_P\tmtxP_P^T))^{1.5}}\epsilon\\
			&\leq \frac{c_1\|\be_i^T\mtxP\|(\kappa(\bTheta^T\bGamma^2 \bTheta))^{6}{K^2}}{\eta}\epsilon
			\leq \frac{c_1\mypsi \gamma_{i} (\kappa(\bTheta^T\bGamma^2 \bTheta))^{6}{K^2}}{\eta\sqrt{\lambda_K(\bTheta^T\bGamma^2 \bTheta)}}\epsilon.
		}
		Then
		\bas{
			&\epsilon_5=\|\be_i^T({c}\sqrt{\rho}\bGamma\bTheta-{\hat{c}}\sqrt{\hat{\rho}}\hat{\bGamma}\hat{\bTheta}\bpi)\|\\
			=&c\sqrt{\rho\lambda_1(\bTheta^T\bGamma^2 \bTheta)}\epsilon_{M,i}+\left(\frac{\gamma_i}{\sqrt{\lambda_K(\bTheta^T\bGamma^2 \bTheta)}}+\epsilon_{M,i}\right)O\bbb{\frac{K^{1.5}(\kappa(\bTheta^T\bGamma^2 \bTheta))^{3} \sqrt{\rho}n}{\eta\sqrt{\lambda_K(\bTheta^T\bGamma^2 \bTheta)}}\epsilon}\\
			=& c\sqrt{\rho\lambda_1(\bTheta^T\bGamma^2 \bTheta)}\frac{c_1\mypsi\gamma_{i}(\kappa(\bTheta^T\bGamma^2 \bTheta))^{6}{K^2}}{\eta\sqrt{\lambda_K(\bTheta^T\bGamma^2 \bTheta)}}\epsilon
			+  O\bbb{\gamma_i\frac{K^{1.5}(\kappa(\bTheta^T\bGamma^2 \bTheta))^{3} \sqrt{\rho}n}{\eta\lambda_K(\bTheta^T\bGamma^2 \bTheta)}\epsilon}\\
			=& O\bbb{\max\left\{{\mypsi K^{0.5}(\kappa(\bTheta^T\bGamma^2 \bTheta))^{3.5}},\frac{n}{\lambda_K(\bTheta^T\bGamma^2 \bTheta)}\right\}\frac{\gamma_{i}K^{1.5}(\kappa(\bTheta^T\bGamma^2 \bTheta))^{3} \sqrt{\rho}}{\eta}\epsilon}.
		}
		As $|c\sqrt{\rho}\gamma_i-\hat{c}\sqrt{\hat{\rho}}\hat{\gamma}_i|=\|\be_i^T({c}\sqrt{\rho}\bGamma\bTheta-{\hat{c}}\sqrt{\hat{\rho}}\hat{\bGamma}\hat{\bTheta}\bpi)\bone\|
		\leq\mypsi\sqrt{K}\epsilon_5$, let $\bX_i=\be_i^T{c}\sqrt{\rho}\bGamma\bTheta$ and $\hat{\bX}_i=\be_i^T{\hat{c}}\sqrt{\hat{\rho}}\hat{\bGamma}\hat{\bTheta}\bpi$, then for DCMMSB, $\|{\bX}_i\|_1=c\sqrt{\rho}\gamma_{i}$, $\|\hat{\bX}_i\|\leq \|\hat{\bX}_i\|_1= {\hat{c}}\sqrt{\hat{\rho}}\hat{\gamma}_i$; for OCCAM, $\|{\bX}_i\|=c\sqrt{\rho}\gamma_{i}\|\be_i^T\bTheta\|=c\sqrt{\rho}\gamma_{i}$ and $\|\hat{\bX}_i\|= {\hat{c}}\sqrt{\hat{\rho}}\hat{\gamma}_i$. So we have,
		\bas{
				&\|\be_i^T(\bTheta-\hat{\bTheta}\bpi)\|=\left\|\frac{\bX_i}{c\sqrt{\rho}\gamma_i}-\frac{\hat{\bX}_i}{\hat{c}\sqrt{\hat{\rho}}\hat{\gamma}_i}\right\|\leq 
				\frac{\|\bX_i-\hat{\bX}_i\|}{c\sqrt{\rho}\gamma_i} + \left\|\frac{c\sqrt{\rho}\gamma_i-{\hat{c}}\sqrt{\hat{\rho}}\hat{\gamma}_i}{c\sqrt{\rho}\gamma_i{\hat{c}}\sqrt{\hat{\rho}}\hat{\gamma}_i}\right\|\|\hat{\bX}_i\|\\
				\leq& \frac{\epsilon_5}{c\sqrt{\rho}\gamma_i} + \frac{\mypsi\sqrt{K}}{c\sqrt{\rho}\gamma_i}\epsilon_5
				=O\bbb{\frac{\mypsi\sqrt{K}}{\gamma_{i}\sqrt{\rho}}\epsilon_5}\\
				=& O\bbb{\max\left\{{\mypsi K^{0.5}(\kappa(\bTheta^T\bGamma^2 \bTheta))^{3.5}},\frac{n}{\lambda_K(\bTheta^T\bGamma^2 \bTheta)}\right\}\frac{\mypsi K^{2}(\kappa(\bTheta^T\bGamma^2 \bTheta))^{3} }{\eta}\epsilon}\\
				=& \tilde{O}\bbb{\max\left\{{\mypsi K^{0.5}(\kappa(\bTheta^T\bGamma^2 \bTheta))^{3.5}},\frac{n}{\lambda_K(\bTheta^T\bGamma^2 \bTheta)}\right\}\frac{\gamma_{\max} K^{2.5}\min\{K^2,(\kappa(\bP))^2\}(\kappa(\bTheta^T\bGamma^2 \bTheta))^{3.5} \sqrt{n}}{\gamma_{\min}\eta\lambda^*(\bB)\lambda_K(\bTheta^T\bGamma^2\bTheta)\sqrt{\rho}}}\\
				=& \tilde{O}\bbb{\frac{\gamma_{\max} K^{2.5}\min\{K^2,(\kappa(\bP))^2\} n^{3/2}}{\gamma_{\min}\eta\lambda^*(\bB)\lambda_K^2(\bTheta^T\bGamma^2\bTheta)\sqrt{\rho}}} \tag{when $\kappa(\bTheta^T\bGamma^2 \bTheta)=\Theta(1)$}.
		}
		Note that this bound works for both DCMMSB and OCCAM, and $\lambda_K(\bTheta^T\bGamma^2\bTheta)=\Omega(n)$, so the bound is about $\tilde{O}\bbb{1/\sqrt{\rho n}}$. specifically, for DCMMSB,
		\bas{
			\|\be_i^T(\bTheta-\hat{\bTheta}\bpi)\|
			=& \ThetaErrorDCMMSBtype\\ 
			=& \tilde{O}\bbb{\frac{\gamma_{\max} K^{2.5}\min\{K^2,(\kappa(\bP))^2\} \nu^2(1+\alpha_0)^2 }{\gamma_{\min}^5\eta\lambda^*(\bB)\sqrt{\rho n}}}.
		}
	\end{proof}

\section{Topic model error bounds}
\subsection{Eigenspcae concentration for topic models}
 Consider the following setup similar to~\cite{arora12computing}.
\ba{\label{eq:topic-model}
\w_{ij}\stackrel{iid}{\sim} \mathrm{Binomial}(N,\W_{ij})\qquad \mbox{For $i\in [V], j\in [D]$}
}
Here $\W$ is the probability matrix for words appearing in documents. Furthermore, we have $\W=\bT\bH$, where $\bT$ is the word to topic probabilities {with columns summing to 1} and $\bH$ is the topic to document matrix {with columns summing to 1}. Also note that, 
{$\sum_{i} \|\be^T_i\W\W^T\|_1=D$, }
since the columns of $\W$ sum to one. 
We will construct a matrix $\w_1\w_2^T$, where $\w_1$ and $\w_2$ are obtained by dividing the words in each document uniformly randomly in two equal parts. For simplicity denote $N_1=N/2$.
\bk
Consider the matrix $\bU=\frac{\w_1\w_2^T}{N_1^2}$. We have $\uE[\bU]=\W\W^T$.

\begin{lem}
	For topic models, we have $(\tmtxPp\tmtxPp^T)^{-1}\bone\geq \frac{\min_i\|\be_i^T\bT\|_1}{{\lambda_1(\bT^T \bT)}}\bone \geq \frac{\min_i\|\be_i^T\bT\|_1}{{K}}\bone$, where $\bT$ is the word-topic probability matrix.
	\begin{proof}
		Noting that $\bT=\bGamma\bTheta$ for topic models, where $\gamma_i=\bGamma_{ii}=\|\be_i^T\bT\|_1$. Following the steps of Lemma~\ref{lem:dcmmsb}, we find
		\bas{
			(\tmtxPp\tmtxPp^T)^{-1}\bone
			\geq\frac{\min_i\gamma_i}{{\lambda_1(\bTheta^T\bGamma^2 \bTheta)}}(\bGamma\bTheta)^T\bone
			=\frac{\min_i\gamma_i}{{\lambda_1(\bT^T \bT)}}\bT^T\bone
			=\frac{\min_i\gamma_i}{{\lambda_1(\bT^T \bT)}}\bone
			\geq \frac{\min_i\|\be_i^T\bT\|_1}{{K}}\bone,
		}
		where the last step is true because $\lambda_1(\bT^T \bT)\leq\trace(\bT^T \bT)=\sum_{i}\|\bT\be_i\|^2\leq K$.

		So $\eta\geq\frac{\min_i\|\be_i^T\bT\|_1}{{\lambda_1(\bT^T \bT)}}\geq\frac{\min_i\|\be_i^T\bT\|_1}{{K}}$.
	\end{proof}
\end{lem}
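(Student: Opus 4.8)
The plan is to derive this as a short specialization of Lemma~\ref{lem:dcmmsb} using the topic-model reparametrization. Recall that for topic models $\W\W^T/D = \bGamma\bTheta\bB\bTheta^T\bGamma$ with $\bGamma_{ii} = \|\be_i^T\bT\|_1$, $\bTheta = \bGamma^{-1}\bT$ and $\bB = \bH\bH^T/D$; consequently $\bGamma\bTheta = \bT$, the rows of $\bTheta$ sum to one ($\bTheta\bone = \bone$), and, since $\bT$ has unit column sums, $\bT^T\bone = \bone$. This is exactly the DCMMSB-type template, so Lemma~\ref{lem:vtheta} and Theorem~\ref{thm:dcmmsb_condition_true} apply and give $\tmtxPp = \bN_P\bV_P$ together with $(\tmtxPp\tmtxPp^T)^{-1} = \bN_P^{-1}\bGamma_P^{-1}\bTheta^T\bGamma^2\bTheta\bGamma_P^{-1}\bN_P^{-1}$, where $\bN_{ii} = 1/\|\bv_i\|$.

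First I would replay the opening of the proof of Lemma~\ref{lem:dcmmsb}, but stopping one step earlier. Applying the nonnegativity ``pull-out'' bound \eqref{eq:D_out} to the diagonal matrix $\bN_P^{-1}\bGamma_P^{-1}$ (whose $i$-th diagonal entry is $\|\bv_{I(i)}\|/\gamma_{I(i)}$), then the lower bound $\|\bv_{I(i)}\|/\gamma_{I(i)} \ge 1/\sqrt{\lambda_1(\bTheta^T\bGamma^2\bTheta)}$ from Lemma~\ref{lem:v_norm_bound}, then collapsing $\bTheta^T\bGamma^2\bTheta\bone = \bTheta^T\bGamma^2\bone$ (using $\bTheta\bone = \bone$), and finally pulling a single factor of $\min_i\gamma_i$ out of the nonnegative vector $\bGamma^2\bone$, yields
\[
(\tmtxPp\tmtxPp^T)^{-1}\bone \;\ge\; \frac{1}{\lambda_1(\bTheta^T\bGamma^2\bTheta)}\,\bTheta^T\bGamma^2\bone \;\ge\; \frac{\min_i\gamma_i}{\lambda_1(\bTheta^T\bGamma^2\bTheta)}\,(\bGamma\bTheta)^T\bone .
\]
Substituting $\bGamma\bTheta = \bT$ turns the right-hand side into $\frac{\min_i\|\be_i^T\bT\|_1}{\lambda_1(\bT^T\bT)}\,\bT^T\bone = \frac{\min_i\|\be_i^T\bT\|_1}{\lambda_1(\bT^T\bT)}\,\bone$, which is the first inequality. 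It is important to keep $(\bGamma\bTheta)^T\bone$ here rather than pull out a second $\min_i\gamma_i$ as in Lemma~\ref{lem:dcmmsb}, since $(\bGamma\bTheta)^T\bone$ is exactly $\bone$ in this setting. For the second inequality I would bound $\lambda_1(\bT^T\bT) \le \trace(\bT^T\bT) = \sum_{k=1}^K \|\bT\be_k\|_2^2 \le \sum_{k=1}^K \|\bT\be_k\|_1^2 = K$, using that each column of $\bT$ is a probability vector; dividing gives $(\tmtxPp\tmtxPp^T)^{-1}\bone \ge \frac{\min_i\|\be_i^T\bT\|_1}{K}\bone$, and hence $\eta \ge \min_i\|\be_i^T\bT\|_1/\lambda_1(\bT^T\bT) \ge \min_i\|\be_i^T\bT\|_1/K$ in Condition~\ref{cond:emp}.

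There is no genuine obstacle here; the statement is essentially a corollary of Lemma~\ref{lem:dcmmsb}. The only points requiring care are that each ``pull-out'' monotonicity step is valid (all of $\bTheta$, $\bGamma$, $\bN_P$ and their products are entrywise nonnegative, and $\bTheta\bone = \bone$ is what makes $\bTheta^T\bGamma^2\bTheta\bone$ collapse cleanly), and that the reparametrization identities $\bGamma\bTheta = \bT$, $\bTheta\bone = \bone$ and $\bT^T\bone = \bone$ hold — all of which are immediate from the definitions $\bGamma_{ii} = \|\be_i^T\bT\|_1$, $\bTheta = \bGamma^{-1}\bT$ and the assumption that the columns of $\bT$ sum to one.
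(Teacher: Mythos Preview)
Your proposal is correct and follows essentially the same route as the paper: both invoke the chain of inequalities from Lemma~\ref{lem:dcmmsb}, stop one step earlier so as to keep $(\bGamma\bTheta)^T\bone$ rather than $\bTheta^T\bone$, use $\bT^T\bone=\bone$ to collapse it to $\bone$, and then bound $\lambda_1(\bT^T\bT)\le\trace(\bT^T\bT)\le K$. You simply spell out the intermediate steps (the use of $\bTheta\bone=\bone$ and the single $\min_i\gamma_i$ pull-out) that the paper leaves implicit under ``following the steps of Lemma~\ref{lem:dcmmsb}.''
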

\bk
\begin{lem}\label{topic-frob}
	 Using Eq~\eqref{eq:topic-model}, we see that under Assumption~\ref{assumption:topic}, 
	 \bas{
		\uP\left(\|\bU-\W\W^T\|_F \geq \sqrt{\frac{50D\log \gikD }{N_1}}\right)\leq \frac{2}{(\gikD) ^3}. 
 	}
\end{lem}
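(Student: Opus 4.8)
The plan is to control $\|\bU-\W\W^T\|_F$ entrywise via a Bernstein bound and then sum over entries, using two facts: the entries $g_{ik}=\be_i^T\W\W^T\be_k$ of the target matrix sum to $D$, and, by Assumption~\ref{assumption:topic}, every nonzero $g_{ik}$ is at least $N\log\gikD$, which is exactly what pushes the per-entry fluctuation into a sub-Gaussian regime.

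First I would set up the moment bookkeeping. From the uniform split of each document into two halves, for a fixed $j$ the columns $\w_{1,\cdot j}$ and $\w_{2,\cdot j}$ are independent, each with $\uE[\w_{1,ij}]=N_1\W_{ij}$ and $\Var(\w_{1,ij})\le N_1\W_{ij}$ (and symmetrically for $\w_2$); hence $\uE[\bU]=\W\W^T$, as already noted. Write $\bU_{ik}-(\W\W^T)_{ik}=\sum_{j=1}^{D}X_{ik}^{(j)}$ with $X_{ik}^{(j)}:=N_1^{-2}\big(\w_{1,ij}\w_{2,kj}-N_1^2\W_{ij}\W_{kj}\big)$. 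These are independent across $j$, mean zero, and bounded by $|X_{ik}^{(j)}|\le 1$ since $\w_{1,ij}\w_{2,kj}/N_1^2\in[0,1]$ and $\W_{ij}\W_{kj}\in[0,1]$. Using independence of the two halves together with $\uE[\w_{1,ij}^2]\le N_1\W_{ij}+N_1^2\W_{ij}^2$, a short calculation gives $\Var(X_{ik}^{(j)})\le 3N_1^{-1}\W_{ij}\W_{kj}$, so $\sigma_{ik}^2:=\sum_j\Var(X_{ik}^{(j)})\le 3g_{ik}/N_1$.

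Next I would apply Bernstein's inequality per entry. If $g_{ik}=0$, then $\W_{ij}\W_{kj}=0$ for every $j$, so $\w_{1,ij}\w_{2,kj}=0$ almost surely and $\bU_{ik}=(\W\W^T)_{ik}$ exactly, contributing nothing to the Frobenius norm. If $g_{ik}>0$, Assumption~\ref{assumption:topic} gives $g_{ik}\ge N\log\gikD=2N_1\log\gikD$, hence $\sigma_{ik}^2\ge 6\log\gikD$, and in this regime the variance term dominates the linear range term in Bernstein's denominator. Choosing $t_{ik}:=4\sqrt{3\,g_{ik}\log\gikD/N_1}$ (so $t_{ik}^2=48\,g_{ik}\log\gikD/N_1$), one checks $t_{ik}^2\big/\big(2\sigma_{ik}^2+\tfrac{2}{3}t_{ik}\big)\ge 5\log\gikD$, hence $\uP\big(|\bU_{ik}-(\W\W^T)_{ik}|\ge t_{ik}\big)\le 2\exp(-5\log\gikD)=2(\gikD)^{-5}$. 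As $\bU-\W\W^T$ has $V^2\le(\gikD)^2$ entries, a union bound shows that with probability at least $1-2(\gikD)^{-3}$ every entry obeys $|\bU_{ik}-(\W\W^T)_{ik}|\le t_{ik}$, and on that event
\[
\|\bU-\W\W^T\|_F^2\le\sum_{i,k}t_{ik}^2=\frac{48\log\gikD}{N_1}\sum_{i,k}g_{ik}=\frac{48\,D\log\gikD}{N_1}\le\frac{50\,D\log\gikD}{N_1},
\]
using $\sum_{i,k}g_{ik}=\sum_i\|\be_i^T\W\W^T\|_1=D$. Taking square roots gives the claim.

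The main obstacle is the quantitative calibration in the Bernstein step: the per-entry radius $t_{ik}$ has to be a large enough multiple of $\sqrt{g_{ik}\log\gikD/N_1}$ that a union bound over all $V^2$ entries still leaves failure probability at most $2(\gikD)^{-3}$, yet small enough that $\sum_{i,k}t_{ik}^2$ stays below $50\,D\log\gikD/N_1$; both hold simultaneously precisely because $t_{ik}^2\propto g_{ik}\log\gikD/N_1$ with $\sum_{i,k}g_{ik}=D$, and because $g_{ik}\ge N\log\gikD$ forces the sub-Gaussian tail $2\exp(-ct_{ik}^2/\sigma_{ik}^2)$ rather than the heavier Poisson-type tail. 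A minor supporting point is verifying that the two half-documents are independent with the stated binomial-type moments; this follows from exchangeability of the words within a document, so that ``first half versus second half'' is distributionally equivalent to the uniform random split.
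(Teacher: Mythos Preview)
Your proposal is correct and follows essentially the same route as the paper: per-entry Bernstein on $\bU_{ik}-(\W\W^T)_{ik}$ with variance proxy $3g_{ik}/N_1$, threshold $t_{ik}\propto\sqrt{g_{ik}\log\gikD/N_1}$, a union bound over $V^2\le(\gikD)^2$ entries, and the identity $\sum_{i,k}g_{ik}=D$ to close; the paper sets $t_{ik}^2=50\,g_{ik}\log\gikD/N_1$ directly, while you take $t_{ik}^2=48\,g_{ik}\log\gikD/N_1$ and verify the exponent more carefully. One small wording slip: from $\sigma_{ik}^2\le 3g_{ik}/N_1$ and $g_{ik}\ge 2N_1\log\gikD$ you cannot conclude $\sigma_{ik}^2\ge 6\log\gikD$; what you actually need (and implicitly use) is that the \emph{upper bound} $3g_{ik}/N_1$ dominates the linear term $t_{ik}/3$ in Bernstein's denominator, which does follow from $g_{ik}\ge 2N_1\log\gikD$.
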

\begin{proof}
	Recall that from Assumption~\ref{assumption:topic}, $g_{ik}=\be_i^T\W\W^Te_k$. Let $\bR:=\bU-\W\W^T$.
	\bas{
		\bR_{ik}= \frac{\sum_{j=1}^D \w_1(ij)\w_2(kj)}{N_1^2}-g_{ik}
}
Note that $\uE[\bR_{ij}]=0$, and $\bA_1(ij)\bA_2(kj)/N_1^2$ is bounded by 1. Also $\bA_1(i,j)$ and $\bA_2(i,j)$ are independent. For independent $X:=\w_1(ij)/N_1$, $Y:=\w_1(kj)/N_1$,
\bas{
	\var(XY)&=\var(X)\var(Y)+\var(X)\uE[X]^2+\var(Y)\uE[Y]^2\leq \frac{3\W_1(ij)\W_2(kj)}{N_1}\\
	\var(\bR_{ik})&\leq 3g_{ik}/N_1
}
 When $g_{ik}=0$, $\bU_{ik}=0$. When $g_{ik}>0$, using Bernstein's inequality, we have:
\bas{
\uP\left(|\bR_{ik}|\geq t_{ik}\right)\leq 2\exp\left(-\frac{t_{ik}^2}{2(3g_{ik}/N_1+t_{ik}/3)}\right)
,}
Setting, $t_{ik}=\sqrt{50\log \gikD g_{ik}/N_1}$, we see that,
\bas{
\sum_{i,k}t_{ik}^2 =50\log \gikD\sum_{ik}g_{ik}/N_1=50\log \gikD D/N_1
}
Then, 
\bas{
	\uP\left(\|\bR\|_F^2 \geq \sum_{i,k}t_{ik}^2 \right) \leq V^2 \max_{i,k} \uP\left(|\bR_{ik}|\geq t_{ik}\right) \leq 2V^2/(\gikD)^5 \leq 2/(\gikD)^3.
}
This yields the result.
\end{proof}
\newpage
\begin{lem}\label{lem:topic-op}
	Using Eq~\eqref{eq:topic-model}, we see that, under Assumption~\ref{assumption:topic}, there exists constants $C,r$ such that, 
	$$\uP\left(\|\bU-\W\W^T\|\geq C_r\sqrt{\frac{D\log \gikD}{N}}\right) \leq \frac{2}{(\gikD)^r}.$$
\end{lem}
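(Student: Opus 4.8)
The plan is to prove this by the rectangular matrix Bernstein inequality, applied to the same per‑document decomposition that underlies Lemma~\ref{topic-frob}. Write $\bU - \W\W^T = \sum_{j=1}^D \bX_j$ with $\bX_j := N_1^{-2}\,\w_1(:,j)\w_2(:,j)^T - \W(:,j)\W(:,j)^T$; each $\bX_j$ depends only on document $j$, so the $\bX_j$ are independent, and since $\uE[\bU] = \W\W^T$ (as noted before Lemma~\ref{topic-frob}) we have $\uE[\bX_j] = \bzero$. A deterministic bound $\|\bX_j\| \le 2$ follows from $\|\w_\ell(:,j)\|_2 \le \|\w_\ell(:,j)\|_1 = N_1$ and $\|\W(:,j)\|_2 \le \|\W(:,j)\|_1 = 1$.

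The substantive step is bounding the matrix variance proxy $v := \max\{\,\|\sum_j \uE[\bX_j\bX_j^T]\|,\ \|\sum_j \uE[\bX_j^T\bX_j]\|\,\}$. Using the first‑ and second‑moment formulas for $\w_\ell(:,j)$ — the same ones behind the per‑entry estimate $\var(\bR_{ik}) \le 3g_{ik}/N_1$ in the proof of Lemma~\ref{topic-frob} — one computes $\uE[\bX_j\bX_j^T] = a_j\,\diag(\W(:,j)) + b_j\,\W(:,j)\W(:,j)^T$, where, crucially, subtracting the mean $\W(:,j)\W(:,j)^T$ cancels the $O(1)$‑order contribution so that $0 \le a_j \le 1/N_1$ and $|b_j| \le 1/N_1$ uniformly in $j$. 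Summing over $j$ and using $0\le a_j\le 1/N_1$ together with $b_j\W(:,j)\W(:,j)^T \preceq N_1^{-1}\W(:,j)\W(:,j)^T$ (valid regardless of the sign of $b_j$) gives $\sum_j \uE[\bX_j\bX_j^T] \preceq N_1^{-1}\diag(\W\bone) + N_1^{-1}\W\W^T$, hence $\|\sum_j \uE[\bX_j\bX_j^T]\| \le N_1^{-1}\big(\max_i \|\be_i^T\W\|_1 + \|\W\W^T\|\big)$, and the same estimate for $\sum_j\uE[\bX_j^T\bX_j]$ by the symmetry of the split. Now $\sum_i \|\be_i^T\W\|_1 = D$ since the columns of $\W$ sum to one, and Assumption~\ref{assumption:topic} gives $\|\W\W^T\| = O(D)$: writing $\W\W^T/D = \bT(\bH\bH^T/D)\bT^T$, the bounds $\lambda_i(\bH\bH^T) = \Theta(D)$, $\kappa(\bT\bT^T) = \Theta(1)$ and $\trace(\bT^T\bT) \le K$ force $\|\bT^T\bT\| = O(1)$, so $\|\W\W^T\| = O(D)$. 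Therefore $v = O(D/N_1) = O(D/N)$.

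With $\|\bX_j\| \le 2$ and $v = O(D/N)$, matrix Bernstein gives $\uP\!\left(\|\bU - \W\W^T\| \ge t\right) \le 2V\exp\!\left(-\dfrac{t^2/2}{\,v + 2t/3\,}\right)$. Take $t = C_r\sqrt{D\log\gikD/N}$. The key numerical input is $D/N \ge \log\gikD$ — any nonzero $g_{ik}$ satisfies $g_{ik} \ge N\log\gikD$ while $\sum_{ik}g_{ik} = D$ — so, using $t^2/(v+t) \ge \tfrac12\min(t^2/v,\,t)$, one checks $t^2/(v+t) \gtrsim C_r\log\gikD$ in both Bernstein regimes; hence the tail is at most $2V(\gikD)^{-cC_r} \le 2(\gikD)^{-r}$ (since $V \le \gikD$) once $C_r$ is a large enough multiple of $r$, which is exactly the claim. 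A shortcut is also available: $\|\cdot\| \le \|\cdot\|_F$ together with Lemma~\ref{topic-frob} already proves the statement with $C=10$, $r=3$, and re‑running that argument with a larger Bernstein constant upgrades $r$.

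I expect the only genuinely delicate point to be the variance computation of the second paragraph: one must carry the multinomial covariance terms through carefully and verify that centering removes the $O(1)$ contribution, since this is exactly what turns the crude rate $\sqrt{D\log\gikD}$ (obtained by naively bounding $\uE[\bX_j\bX_j^T] \preceq \uE[Y_jY_j^T]$ with $Y_j = N_1^{-2}\w_1(:,j)\w_2(:,j)^T$) into the asserted $\sqrt{D\log\gikD/N}$. Everything else — the norm bound on $\bX_j$, the matrix Bernstein invocation, and the choice of constants — is routine.
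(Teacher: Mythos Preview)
Your proposal is correct and follows essentially the same route as the paper: decompose $\bU-\W\W^T$ into independent per-document summands $\bS_j = N_1^{-2}\w_1(:,j)\w_2(:,j)^T - \W(:,j)\W(:,j)^T$, bound $\|\bS_j\|\le 2$, compute the variance proxy to be $O(D/N_1)$, and apply matrix Bernstein with $t = C_r\sqrt{D\log\gikD/N}$ using $D/N \ge \log\gikD$ from Assumption~\ref{assumption:topic}.

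Two small remarks. First, the paper's variance bound is slightly more direct than yours: it computes $\uE[\bS_j\bS_j^T]$ explicitly and then uses $\|\W\|_F^2 \le D$ (immediate from $\|\W(:,j)\|_1=1$) to get $v(\bS)\le 3D/N_1$, avoiding your detour through $\|\W\W^T\|=O(D)$ via the eigenvalue assumptions on $\bH\bH^T$ and $\bT\bT^T$. Second, in the model of Eq.~\eqref{eq:topic-model} the word counts are independent binomials across the vocabulary, not multinomial, so the covariance $\cov(\w_\ell(:,j))$ is diagonal; your closing reference to ``multinomial covariance terms'' is a misnomer, though it does not affect the argument since your asserted form $a_j\,\diag(\W(:,j)) + b_j\,\W(:,j)\W(:,j)^T$ with $a_j,|b_j|=O(1/N_1)$ is (up to replacing $\diag(\W(:,j))$ by $\diag(\W(:,j)\circ(1-\W(:,j)))\preceq\diag(\W(:,j))$) exactly what the independent-binomial computation yields.
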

\begin{proof}
	We use the Matrix Bernstein bound in~\cite{Tropp:2015}. Let $\bS_k:=\dfrac{\w_{1k}\w_{2k}^T}{N_1^2}-\W_k\W_k^T$, where $\bM_k$ is the $k^{th}$ column of matrix $\bM$. Note that $\uE[\bS_k]$ is the $V\times V$ zeros matrix.  We also see that by symmetry of the random splitting, $\uE[\bS_k\bS_k^T]=\uE[\bS_k^T\bS_k]$. 
	
	We will now note some theoretical properties of the $\bS_k$ matrices. Let $\bX$ be a vector of size $V$, such that, $\bX_i\sim \mathrm{Binomial}(N_1,a_i)$. 
	\ba{
		\frac{\uE[\bX^T\bX]}{N_1^2}&=\sum_{i=1}^V\frac{\uE[\bX_i^2]}{N_1^2}=\sum_{i=1}^V\frac{\uE[\bX_i]^2+\var(\bX_i)}{N_1^2}\nonumber\\
		&=\sum_{i=1}^V\frac{N_1^2 a_i^2+N_1a_i(1-a_i)}{N_1^2}=\left(1-\frac{1}{N_1}\right) \|a\|^2+\frac{1}{N_1}\label{eq:e}
}
	Furthermore, let
	\ba{\label{eq:cov}
		\cov(\bX)=\bSigma,
	\qquad \bSigma_{ij}=N_1a_i(1-a_i)1(i=j)
}
	Then,
	\bas{
		\uE[\bS_k\bS_k^T]&=\uE\left[\frac{\w_{1k}\w_{2k}^T\w_{2k}\w_{1k}^T}{N_1^4}-\W_k\W_k^T\W_k\W_k^T\right]\\
	\mbox{(By independence)}\qquad	&=\frac{\uE[\w_{2k}^T\w_{2k}]\uE[\w_{1k}\w_{1k}^T]}{N_1^4}-\|\W_k\|^2 \W_k\W_k^T\\
	\mbox{(By Eq~\eqref{eq:e} and~\eqref{eq:cov})}\qquad&=\left(\frac{1}{N_1}+\|\W_k\|^2(1-\frac{1}{N_1})\right)\left(\frac{\bSigma_k}{N_1^2}+\W_k\W_k^T\right)-\|\W_k\|^2 \W_k\W_k^T\\
	&=\|\W_k\|^2\frac{\bSigma_k}{N_1^2}+\frac{1-\|\W_k\|^2}{N_1}\left(\frac{\bSigma_k}{N_1^2}+\W_k\W_k^T\right)
}
Since $\|\bSigma_k\|\leq N_1\|\W_k\|_1= N_1$, $\|\W\|_F^2\leq D$,
\bas{
v(\bS)=\left\|\sum_k \uE[\bS_k\bS_k^T]\right\|\leq 2\frac{\|\W\|_F^2}{N_1}+\frac{D}{N_1^2}\leq \frac{D}{N_1}\bbb{2+\frac{1}{N_1}}.
}
Furthermore, 
\bas{
\|\bS_k\|\leq \|\W_k\|^2+\frac{\|\w_{1k}\|\|\w_{2k}\|}{N_1^2}\leq 2=:L
}

So the Matrix Bernstein bound gives us:
\bas{
\uP\left(\|\sum_k \bS_k\|\geq t\right)&\leq 2V\exp\left(-\frac{t^2/2}{v(\bS)+Lt/3}\right)=2V\exp\left(-\frac{t^2/2}{3D/N_1+2t/3}\right)
}
Using $t=C_r\sqrt{D\log \gikD/N}$, and using the condition in Assumption~\ref{assumption:topic}, we get the bound.
	\end{proof}

\begin{proof}[Proof of Lemma~\ref{lem:topic_eigen_bound}]
	First note the proof is under Assumption~\ref{assumption:topic}. Let $\bR=\bU-\W\W^T$.
	Using the Davis-Kahan Theorem~\cite{yu2015useful}, we see that there exists an orthogonal matrix $\bO$:
	$$\|\hat{\bV}\bO-\bV\|_F\leq \frac{\sqrt{8}(2\lambda_1(\W\W^T)+\|\bR\|_2)\min(\sqrt{K}\|\bR\|_2,\|\bR\|_F)}{\lambda_K^2(\W\W^T)},$$
	where $\lambda_1$ and $\lambda_K$ are the largest and $K^{th}$ largest singular values (and also eigenvalue) of $\W\W^T$ respectively.
	Thus, 
	\bas{
		\|\hat{\bV}\bO-\bV\|_F&\leq \frac{\sqrt{8}(2\lambda_1(\W\W^T)+\|\bR\|_2)\min(\sqrt{K}\|\bR\|_2,\|\bR\|_F)}{\lambda_K^2(\W\W^T)}\\
		&\leq \sqrt{8}\frac{2\lambda_1(\W\W^T)+C_r\sqrt{D\log \gikD/N}}{\lambda_K(\W\W^T)^2}\sqrt{\frac{D\log \gikD}{N}}\max\left( C_r\sqrt{K},\sqrt{C}\right)\\
		&\leq\frac{\lambda_1(\bH\bH^T)\lambda_1(\bT^T\bT)}{\lambda_K^2(\bH\bH^T)\lambda_K^2(\bT^T\bT)}O_P\bbb{\sqrt{\frac{KD\log\gikD}{N}}}\\
		&=\frac{\kappa(\bH\bH^T)\kappa(\bT^T\bT)}{\lambda_K(\bT^T\bT)}O_P\left(\sqrt{\frac{K\log \gikD}{DN}}\right),
}
where the third inequality follows Lemma~\ref{lem:dcmmsb_lapmad_P} with $\bP=\W\W^T$, $\bGamma\bTheta=\bT$, $\bB=\bH\bH^T$ and $\rho=1$.
Now we bound $\epsilon_0=\max_i\|\bz_i-\hat{\bz}_i\|=\|\be_i^T(\hat{\bV}\hat{\bV}^T-\bV\bV^T)\|$ as:
\bas{
\|\be_i^T(\hat{\bV}\hat{\bV}^T-\bV\bV^T)\|&\leq \|\hat{\bV}\hat{\bV}^T-\bV\bV^T\|_2
\leq  \|(\hat{\bV}\bO-\bV)\bO^T\hat{\bV}^T+\bV(\hat{\bV}\bO-\bV)^T\|\\
&=\topicRowwiseEigenspaceBound.
}
By Lemma~\ref{lem:y_error_norm_dcmm}, $\|\trowP_i-\trowS_i\| \leq \frac{2\epsilon_0}{\|\bv_i\|}\leq \frac{2\epsilon_0 \sqrt{\lambda_1(\bT^T\bT)}}{\|\be_i^T\bT\|_1}.$ So,
\bas{
	\epsilon=\max_i \|\trowP_i-\trowS_i\| = \frac{\kappa(\bH\bH^T)(\kappa(\bT^T\bT))^{1.5}}{\min_j \|\be_j^T\bT\|_1\sqrt{\lambda_K(\bT^T\bT)}}O_P\left(\sqrt{\frac{K\log \gikD}{DN}}\right).
}
\end{proof}

\subsection{Parameter estimation for topic models}
	\begin{proof}[Proof of Theorem~\ref{thm:topic_error_bound}]
		For topic models, $\cvxP=\bT\tmDiagP$, where $\bT=\bGamma\bTheta$, $\tmDiagP=(\bN_P\bGamma_P)^{-1}$, $\gamma_i=\bGamma_{ii}=\|\be_i^T\bT\|_1$. For empirical estimation we have $\cvxS=\hT\tmDiagS$, where $\tmDiagS(i,i)=\|\cvxS(:,i)\|_1$.
		First we have $\forall i \in K$, $\|\bT(:,i)\|_1=1$, then $\|\cvxP(:,i)\|_1=\tmDiagP(i,i)=\|\bv_{I(i)}\|/\gamma_{I(i)}$. Let $\pi$ be the permutation function for permutation matrix $\bpi$  in Theorem~\ref{thm:M_row_bound}, then,
		\bas{
			|\tmDiagP(i,i)-\tmDiagS(\pi(i),\pi(i))|&=|\|\cvxP(:,i)\|_1-\|\cvxS(:,\pi(i))\|_1|
			\leq \|\cvxP(:,i)-\cvxS(:,\pi(i))\|_1\\
			&=\sum_{j=1}^{V} |\cvxP(j,i)-\cvxS(j,\pi(i))|
			\leq \sum_{j=1}^{V} \|\cvxP(j,:)-\cvxS(j,:)\bpi\|_1 \\
			&=\sum_{j=1}^{V} \|\be_j^T\bT\|_1 \frac{\|\cvxP(j,:)-\cvxS(j,:)\bpi\|_1}{\|\be_j^T\bT\|_1}\\
			&\leq K\max_j \frac{\|\cvxP(j,:)-\cvxS(j,:)\bpi\|_1}{\|\be_j^T\bT\|_1}
			\leq K^{1.5}\max_j\frac{\|\cvxP(j,:)-\cvxS(j,:)\bpi\|}{\|\be_j^T\bT\|_1}\\
			&\leq \frac{K^{1.5}\max_j\epsilon_{M,j}}{\min_j \|\be_j^T\bT\|_1}:=\epsilon_D
		}
		Note that $\bT^T\bT=\bTheta^T\bGamma^2 \bTheta$, and 
		from Lemma~\ref{lem:v_norm_bound}, we know
		\bas{
			1/\sqrt{\lambda_1(\bT^T\bT)}\leq\|\bv_i\|/\gamma_i\leq1/\sqrt{\lambda_K(\bT^T\bT)},\  \forall i \in [n]
		} 
		Using Lemma~\ref{lem:yp_eigen}, we have $\lambda_1(\tmtxPp\tmtxPp^T)\leq \kappa(\bTheta\bGamma^2\bTheta)=\kappa(\bT^T\bT)$, $\lambda_K(\tmtxPp\tmtxPp^T)\geq 1/\kappa(\bTheta\bGamma^2\bTheta)=1/\kappa(\bT^T\bT)$, and $\kappa(\tmtxPp\tmtxPp^T)\leq (\kappa(\bTheta\bGamma^2\bTheta))=(\kappa(\bT^T\bT))^2$.
		
		Then the error for each row of $\bT$ is
		\bas{
			\|\be_i^T(\hT-\bT\bpi^T)\|&=\|\be_i^T(\cvxS\tmDiagS^{-1}-\cvxP\tmDiagP^{-1}\bpi^T)\|\\
			&\leq\|\be_i^T(\cvxS-\cvxP\bpi^T)\tmDiagS^{-1}\|+\|\be_i^T\cvxP\bpi^T(\tmDiagS^{-1}-\bpi\tmDiagP^{-1}\bpi^T)\|\\
			&\leq \|\be_i^T(\cvxS-\cvxP\bpi^T)\| \max_j 1/\tmDiagS(j,j)+\|\be_i^T\cvxP\|\max_j \left\|\frac{\tmDiagP(j,j)-\tmDiagS(\pi(j),\pi(j))}{\tmDiagP(j,j)\tmDiagS(\pi(j),\pi(j))} \right\|\\
			&\leq \frac{2\epsilon_{M,i}}{\min_j \tmDiagP(j,j)} + \frac{2\epsilon_D}{(\min_j \tmDiagP(j,j))^2}\|\be_i^T\cvxP\|\\
			&\leq \frac{2\epsilon_{M,i}}{\min_j \|\bv_{I(j)}\|/\gamma_{I(j)}}+\frac{2\epsilon_D}{\min_j \|\bv_{I(j)}\|/\gamma_{I(j)}}\frac{\max_j \tmDiagP(j,j)\|\be_i^T \bT\|}{\min_j \tmDiagP(j,j)}\\
			&\leq 2\sqrt{\lambda_1(\bT^T\bT)}{\epsilon_{M,i}}+2\sqrt{\lambda_1(\bT^T\bT)}\frac{\sqrt{\lambda_1(\bT^T\bT)}}{\sqrt{\lambda_K(\bT^T\bT)}}\|\be_i^T \bT\|\frac{K^{1.5}\max_j\epsilon_{M,j}}{\min_j \|\be_j^T\bT\|_1} \\
			&\leq 4\sqrt{\lambda_1(\bT^T\bT)}\sqrt{\kappa(\bT^T\bT)}\frac{\|\be_i^T \bT\|}{\min_j \|\be_j^T\bT\|_1}K^{1.5}\frac{c_M\kappa(\tmtxPp\tmtxPp^T)\max_j\|\be_j^T\mtxP\|{K\zeta}}{(\lambda_K(\tmtxPp\tmtxPp^T))^{2.5}} \epsilon\\
			&\leq c_1\sqrt{\lambda_1(\bT^T\bT)}(\kappa(\bT^T\bT))^{5.5}K^{2.5}\frac{\gamma_{\max}\|\be_i^T \bT\|}{\min_j \|\be_j^T\bT\|_1}\zeta\epsilon\\
			&\leq \frac{c_2\sqrt{\lambda_1(\bT^T\bT)}(\kappa(\bT^T\bT))^{7}K^{3.5}}{\eta}\frac{\max_j \|\be_j^T \bT\|_1}{\min_j \|\be_j^T\bT\|_1}\|\be_i^T \bT\|\epsilon, \tag{using $\zeta\leq \frac{4K}{\eta(\lambda_K(\tmtxP_P\tmtxP_P^T))^{1.5}}$}
		}
		where we use 
		$\epsilon_D \leq \tmDiagP(j,j)/2$ for relaxation in the 3rd 
		inequality and $c_1$ and $c_2$ are some constants.
		Under Assumption~\ref{assumption:topic}, by Lemma~\ref{lem:topic_eigen_bound}, we have
		\bas{
			\epsilon=\max_i \|\trowP_i-\trowS_i\| = \frac{\kappa(\bH\bH^T)(\kappa(\bT^T\bT))^{1.5}}{\min_j \|\be_j^T\bT\|_1\sqrt{\lambda_K(\bT^T\bT)}}O_P\left(\sqrt{\frac{K\log \gikD }{DN}}\right).
		}
		Then,
		\bas{
			&\frac{\|\be_i^T(\hT-\bT\bpi^T)\|}{\|\be_i^T\bT\|}\leq \frac{c_2\sqrt{\lambda_1(\bT^T\bT)}(\kappa(\bT^T\bT))^{7}K^{3.5}}{\eta}\frac{\max_j \|\be_j^T \bT\|_1}{\min_j \|\be_j^T\bT\|_1} \epsilon\\
			=& \frac{\sqrt{\lambda_1(\bT^T\bT)}(\kappa(\bT^T\bT))^{7}K^{3.5}}{\eta}\frac{\max_j \|\be_j^T \bT\|_1}{\min_j \|\be_j^T\bT\|_1} \frac{\kappa(\bH\bH^T)(\kappa(\bT^T\bT))^{1.5}}{\min_j \|\be_j^T\bT\|_1\sqrt{\lambda_K(\bT^T\bT)}}O_P\left(\sqrt{\frac{K\log \gikD }{DN}}\right)\\
			=& \frac{\max_j \|\be_j^T \bT\|_1}{(\min_j \|\be_j^T\bT\|_1)^2}\frac{\kappa(\bH\bH^T)(\kappa(\bT^T\bT))^{9}}{\eta}O_P\left(K^4\sqrt{\frac{\log \gikD }{DN}}\right)\\
			=& \topicTbounds \tag*{(if $\kappa(\bT^T\bT)=\Theta(1)$ and $\kappa(\bH\bH^T)=\Theta(1)$)}
		}
	\end{proof}

\section{Converting SBMO to DCMMSB}
Since for stochastic blockmodel with overlaps (SBMO) \cite{kaufmann2016spectral}, $\bP=\rho\bZ\bB\bZ$, where rows of $\bZ$ are binary assignments to different communities, we have $\bP=\rho\bZ\bB\bZ=\rho'\bGamma\bTheta\bB\bTheta\bGamma$, where $\gamma_i'=\|\be_i^T\bZ\|_1\in [K]$, $\btheta_i=\be_i^T\bZ/\|\be_i^T\bZ\|_1$, $\gamma_i$ is normalized from $\gamma_i'$ to sum to $n$ for identifiability and $\rho'=(\rho\sum \gamma_i'/n)^2$. We can see each SBMO model is corresponding to an identifiable DCMMSB model, thus we can use \svmcone to recover SBMO model. The way to get binary assignment can be easily done by setting threshold as $1/K$ for each element in $\hat{\bTheta}$.

\section{Closed form rate for known special cases}  
For a Stochastic Blockmodel (SBM) with $K=2$ classes of equal size and standard parameters ($\rho=p$, $\bB_{11}=\bB_{22}=1,\bB_{12}=\bB_{21}=q/p$), our result suggests that as long as $(p-q)/\sqrt{p}=\tilde{\Omega}(1/\sqrt{n})$, SVM-cone will consistently estimate the label of each node uniformly with probability tending to one. This is similar to separation conditions in existing literature for consistent estimation in SBMs, up-to a log factor. \bk
\newpage
\section{Statistics of topic modeling datasets}
\begin{table}[!htbp]
	\vspace{-0.2in}
	\caption{Statistics of topic modeling datasets}
	\vspace{-0.2in} 
	\label{table:topic_statistics}
	\begin{center}
		\begin{tabular}{ |c|c|c|c|c| } 
			\hline
			\small  Corpus & Vocabulary size $V$ & Number of documents $D$ & Total number of words\\ 
			\hline
			\small NIPS\footnotemark[1] & 5002 & 1,491 & 1,589,280\\ 
			\hline
			\small NYTimes\footnotemark[1] & 5004 & 296,784 & 68,876,786\\ 
			\hline
			\small PubMed\footnotemark[1] & 5001 & 7,829,043 & 485,719,597\\ 
			\hline
			\small 20NG\footnotemark[2] & 5000 & 9,540 & 886,043\\ 
			\hline
			\small Enron\footnotemark[1] & 5003 & 29,823 & 4,963,162\\ 
			\hline
			\small KOS\footnotemark[1] & 5001 & 3,412 & 405,190\\ 
			\hline
		\end{tabular}
	\vspace{-0.3in}
	\end{center}
\end{table}

\footnotetext[1]{\url{https://archive.ics.uci.edu/ml/datasets/Bag+of+Words}}
\footnotetext[2]{\url{http://qwone.com/~jason/20Newsgroups/}}
\section{Topics in real data}
\vspace{-0.15in}
\begin{table}[!ht]
	\caption{Top-10 word of 5 topics for different topic modeling datasets}
		\vspace{-0.2in}
	\begin{center}
		\resizebox{\linewidth}{!}{
			\begin{tabular}{|c|c|}
				\hline
				Corpus  & Top-10 words\\
				\hline
				\hline
				\multirow{5}{*}{NIPS}&  algorithm data problem method parameter point vector distribution error space  \\
				\cline{2-2}
				&  neuron output pattern signal circuit visual synaptic unit layer current  \\
				\cline{2-2}
				&  data unit training output image information object recognition pattern point  \\
				\cline{2-2}
				&  unit hidden output layer weight object pattern visual representation connection  \\ 
				\cline{2-2}
				&  error algorithm training weight data parameter method problem vector classifier  \\
				\hline
				\hline
				\multirow{5}{*}{NYT}& con son solo era mayor zzz$\_$mexico director sin fax sector \\
				\cline{2-2}
				& zzz$\_$bush government school campaign show american member country zzz$\_$united$\_$states law \\
				\cline{2-2}
				&  company companies market stock business billion plan money analyst government \\
				\cline{2-2}
				&  team game season play player games run coach win won\\ 
				\cline{2-2}
				&  file sport zzz$\_$los$\_$angeles notebook internet zzz$\_$calif read output web computer \\
				\hline
				\hline
				\multirow{5}{*}{PubMed}& receptor expression gene binding system function region genes dna mechanism \\
				\cline{2-2}
				& concentration strain gene dna system expression region genes test function \\
				\cline{2-2}
				&   tumor gene expression disease genes lesion mutation region dna clinical \\
				\cline{2-2}
				&  rat concentration plasma day serum animal liver drug response administration\\ 
				\cline{2-2}
				&  children disease clinical year test therapy women system diagnosis drug \\
				\hline
				\hline
				\multirow{5}{*}{20NG}& key government car chip state including information cs number long  \\
				\cline{2-2}
				& god jesus bible question things life christian world christ true  \\
				\cline{2-2}
				&   year michael game team cs games win play including car  \\
				\cline{2-2}
				&   drive mb scsi windows card hard disk dos computer drives \\ 
				\cline{2-2}
				&  windows window dos file files program card fax run win  \\
				\hline
				\hline
				\multirow{5}{*}{Enron}& report status changed payment approved approval amount paid due expense  \\
				\cline{2-2}
				& database error operation perform hourahead data file process start message   \\
				\cline{2-2}
				&  power california customer gas order deal list office forward comment  \\
				\cline{2-2}
				&  message contract corp receive offer free send list received click  \\ 
				\cline{2-2}
				&  hourahead final file hour data price process error detected variances  \\
				\hline
				\hline
				\multirow{5}{*}{KOS}& iraq administration military iraqi president american troops bushs officials soldiers   \\
				\cline{2-2}
				&  voting vote senate polls governor electoral voter media voters primary    \\
				\cline{2-2}
				&   percent senate race elections republican party state voters campaign polls    \\
				\cline{2-2}
				&  senate polls governor electoral primary vote ground races voter contact   \\ 
				\cline{2-2}
				&  dean edwards primary clark gephardt lieberman iowa results polls kucinich  \\
				\hline
			\end{tabular}
		}
	\end{center}
\label{tab:multicol}
\end{table}

\end{document}